  \let\Cref\crtCref
  \let\cref\crtcref
\DeclareDocumentCommand\rootedtree{o}{\Forest{rooted tree [#1]}}
\DeclareDocumentCommand\bigrootedtree{o}{\Forest{big rooted tree [#1]}}
\author{
  Matias D. Cattaneo\thanks{Authors are in alphabetic order. We thank Boris Hanin for insighful comments and discussions. Cattaneo gratefully acknowledges financial support from the National Science Foundation through grants SES-2019432 and SES-2241575.} \\
  Princeton University\\
  \texttt{cattaneo@princeton.edu} \\
  \and
  Boris Shigida\footnotemark[1] \\
  Princeton University\\
  \texttt{bs1624@princeton.edu} \\
}
\title{Modified Loss of Momentum Gradient Descent:\\Fine-Grained Analysis}
\begin{document}

\maketitle
\begin{center}
    \today
\end{center}

\bigskip

\begin{abstract}
    We analyze gradient descent with \citet{polyak1964some} heavy-ball momentum (HB) whose fixed momentum parameter $\beta \in (0, 1)$ provides exponential decay of memory. Building on \citet{kovachki2021continuous}, we prove that on an exponentially attractive invariant manifold the algorithm is exactly plain gradient descent with a modified loss, provided that the step size $h$ is small enough. Although the modified loss does not admit a closed-form expression, we describe it with arbitrary precision and prove global (finite ``time'' horizon) approximation bounds $O(h^{\ord})$ for any finite order $\ord \geq 2$. We then conduct a fine-grained analysis of the combinatorics underlying the memoryless approximations of HB, in particular, finding a rich family of polynomials in $\beta$ hidden inside which contains Eulerian and Narayana polynomials. We derive continuous modified equations of arbitrary approximation order (with rigorous bounds) and the principal flow that approximates the HB dynamics, generalizing \citet{rosca2023on}. Approximation theorems cover both full-batch and mini-batch HB. Our theoretical results shed new light on the main features of gradient descent with heavy-ball momentum, and outline a road-map for similar analysis of other optimization algorithms.
    \end{abstract}

\clearpage
\tableofcontents

\clearpage
\section{Introduction}

Gradient descent with \citet{polyak1964some} heavy-ball momentum (HB) is a well-known optimization algorithm used in practice. Given a loss function $L(\btheta): \mathbb{R}^d \mapsto \mathbb{R}$, and initial conditions $\btheta^{(0)} \in \mathbb{R}^d$ and $\bv^{(0)} \in \mathbb{R}^d$, its full-batch iteration is
\begin{equation}\label{eq:hb}
  \begin{aligned}
    &\begin{pmatrix}
      \btheta^{(n + 1)}\\
      \bv^{(n + 1)}
    \end{pmatrix} = \begin{pmatrix}
      \btheta^{(n)} - h \nabla L(\btheta^{(n)}) + h \beta \bv^{(n)}\\
      \beta \bv^{(n)} - \nabla L(\btheta^{(n)})
    \end{pmatrix},\quad n \in \mathbb{Z}_{\geq 0},
  \end{aligned}
\end{equation}
or, more compactly,
\begin{equation}\label{eq:hb-iter-compact}
\btheta^{(n + 1)} = \btheta^{(n)} - h \sum_{k = 0}^n \beta^k \nabla L(\btheta^{(n - k)}) + h \beta^{n + 1} \bv^{(0)},
\end{equation}
where $\beta \in (0, 1)$ is a momentum (tuning) parameter, which we treat as a fixed constant throughout this article. The next iterate $\btheta^{(n + 1)}$ depends on the whole history $\crl{\btheta^{(s)}}_{s = 0}^n$, rather than just the current iterate $\btheta^{(n)}$, which we interpret as having ``memory''. This algorithm or its variants, often referred to as gradient descent (GD) with momentum, are widely used in modern machine learning \citep{goodfellow2016deep,prince2023understanding,krizhevsky2012imagenetclassificationdeep,sutskever2013importanceinitializationmomentum,he2016deep,xie2017aggregated,he2019bag}. A mini-batch version of this algorithm is often used when training large-scale deep learning models.

This article studies the theoretical properties of HB, considering both full-batch and mini-batch implementations, and establishes three main results. First, we prove that on an exponentially attractive invariant manifold the algorithm is exactly plain gradient descent with a modified loss, provided that the step size $h$ is small enough. Second, we describe the modified loss with arbitrary precision and prove global (finite ``time'' horizon) approximation bounds $O(h^{\ord})$ for any finite order $\ord \geq 2$. Finally, we derive continuous modified equations of arbitrary approximation order (with rigorous bounds) and the principal flow that approximates the optimization dynamics of the algorithm. Our theoretical results not only shed new light on the main properties of HB, but also outline a road-map for similar analysis of other popular optimization algorithms.

\subsection{HB is GD on an Invariant Manifold}\label{sec:intro-hb-is-gd-on-manifold}

\Citet{kovachki2021continuous} proved that there exists a function $\bg_h(\btheta)$ such that $\crl{(\btheta, \bv): \bv = - (1 - \beta)^{-1} \nabla L(\btheta) + h \boldsymbol{g}_h(\btheta)}$
is an exponentially attractive invariant manifold. This perspective draws on the theory of attractive invariant manifolds \citep{fenichel1971persistence,hirsch1970invariant,wiggins2013normally}. By definition, on this manifold
\begin{equation*}
  \btheta^{(n + 1)} = \btheta^{(n)} - h \nabla L(\btheta^{(n)}) + h \beta \prn[\bigg]{- \frac{\nabla L(\btheta^{(n)})}{1 - \beta} + h \boldsymbol{g}_h(\btheta^{(n)})},
\end{equation*}
and therefore we obtain an algorithm representation without memory
\begin{equation}\label{eq:qcUUpn}
  \btheta^{(n + 1)} = \btheta^{(n)} - h \frac{1}{1 - \beta} \nabla L(\btheta^{(n)}) + h^2 \beta \boldsymbol{g}_h(\btheta^{(n)}).
\end{equation}

Leveraging this insight, we further show that $\boldsymbol{g}_h(\cdot)$ has a particular structure that makes \eqref{eq:qcUUpn} plain gradient descent.

\begin{contribution}\label{contrib:manifold}
    \cref{th:manifold} establishes that $\boldsymbol{g}_h$ is a gradient, given by $\boldsymbol{g}_h := \nabla G_h$ with $G_h$ implicitly defined as (the anti-derivative of) the solution of a fixed point equation. This implies that on an exponentially attractive invariant manifold, HB is plain gradient descent:
    \begin{equation*}
        \btheta^{(n + 1)} = \btheta^{(n)} - \frac{h}{1 - \beta} \nabla \crl[\big]{\underbrace{L - h \beta (1 - \beta) G_h}_{\text{modified loss}}}(\btheta^{(n)}).
    \end{equation*}
\end{contribution}

We then conduct a fine-grained analysis of how the loss is modified by memory. From the definition of $\boldsymbol{g}_h$, it is possible to write a fixed point equation that gives a formal power series expansion for $\boldsymbol{g}_h$ in $h$ (\cref{sec:connection-sol-fixed-point-eq}). Furthermore, there are two wishes that we would like to fulfill.
First, we want to have precise approximation guarantees rather than just a formal series expansion.
Second, it is practically useful to cover mini-batch HB, where the loss function can be different at each iteration, rather than just full-batch.
To do that, we use a different approach.

\subsection{Finite-Order Memoryless Approximation}\label{sec:intro-memoryless-approximation}

Consider the mini-batch version of \eqref{eq:hb-iter-compact} with the typical setting $\bv^{(0)} = \boldsymbol{0}$:
\begin{equation}\label{eq:hb-iter-minibatch}
\btheta^{(n + 1)} = \btheta^{(n)} - h \sum_{k = 0}^n \beta^k \nabla L^{(n - k)}(\btheta^{(n - k)}),
\end{equation}
where $\crl{L^{(s)}}_{s \in \mathbb{Z}_{\geq 0}}$ are mini-batch losses.
In practice, $L^{(s)}$ is the loss function obtained by taking the $s$th mini-batch of samples.
(For now, we are agnostic to how exactly the samples are batched.)
Mini-batch training usually achieves higher test accuracies \citep{keskar2017on,masters2018revisiting,smith2020generalizationbenefitnoise},
and is therefore widely used in practice \citep{bottou2018optimization,prince2023understanding}.
\citet{cattaneo2025howmemoryoptimization} introduced a technique for converting a numerical optimization method with decaying memory into a memoryless one up to $O(h^2)$ error, that is, a second-order approximation. This paper generalizes their technique and proves an approximation bound with any desired order.

\begin{contribution}\label{contrib:approx}
    For any approximation order $\ordDef \in \mathbb{Z}_{\geq 2}$, \cref{th:approx} establishes a memoryless approximation
    \begin{equation}\label{eq:memoryless-approx-contrib}
        \tilde{\btheta}^{(n + 1)} = \tilde{\btheta}^{(n)} + \sum_{j = 1}^{\ord} h^j \nomemcoef{j}{n}(\tilde{\btheta}^{(n)})
    \end{equation}
    with the global guarantee
    \begin{equation}\label{eq:global-guarantee-contrib}
        \sup_{n \in \range{0}{\lfloor T / h \rfloor}} \norm{\btheta^{(n)} - \tilde{\btheta}^{(n)}} = O(h^{\ord}),
    \end{equation}
    where $T$ is any ``time'' horizon. Furthermore, \cref{th:nomem-coef} establishes a tractable form of the memoryless iteration coefficients $\nomemcoef{j}{n}(\btheta)$ involving a sum over unlabeled rooted trees with $j$ vertices.
\end{contribution}

This contribution provides a higher-order, more detailed understanding of HB and its memoryless representation, which we leverage to obtain new insights on the implicit regularization and dynamics of the algorithm. For example,
\begin{equation*}
    \nomemcoef{1}{n}(\btheta) = - \frac{1}{1 - \beta} \nabla L^{(n)}(\btheta), \quad \nomemcoef{2}{n}(\btheta) = - \beta \sum_{b = 0}^{n - 1} \beta^{b} \sum_{l' = 1}^{b + 1} \sum_{b' = 0}^{n - l'} \beta^{b'} \nabla^2 L^{(n - 1 - b)}(\btheta) \nabla L^{(n - l' - b')}(\btheta).
\end{equation*}
In the mini-batch case, such an expression can be insightful after averaging over permutations of samples \citep{smith2021on,beneventano2024trajectoriessgdwithout,cattaneo2025howmemoryoptimization}. We will illustrate it as follows. Assume that there are $n + 1$ batches in an epoch, with each batch consisting of $\batchsizeDef$ samples, and the $k$th mini-batch loss is given by
\begin{equation*}
    L^{(k)}(\btheta) = \frac{1}{\batchsize} \sum_{r = k \batchsize + 1}^{k \batchsize + \batchsize} \ell_{\pi(r)}(\btheta),\quad k \in \range{0}{n},
\end{equation*}
where $\crl{\ell_s}_{s = 1}^{(n + 1) \batchsize}$ are per-sample losses and $\pi$ is a random permutation of $\range{1}{(n + 1) \batchsize}$, distributed uniformly over all $((n + 1) \batchsize)!$ such permutations. This corresponds to sampling without replacement as common in practice. Denote $L(\btheta) = [(n + 1) \batchsize]^{-1} \sum_{s = 1}^{(n + 1) \batchsize} \ell_s(\btheta)$ and let
\begin{equation}\label{eq:empirical-covariance}
    \Sigma_{i j} := \frac{1}{(n + 1) \batchsize} \sum_{p = 1}^{(n + 1) \batchsize} \partial_i (\ell_p - L) \partial_j (\ell_p - L)
\end{equation}
be the elements of the empirical covariance matrix $\mSigma(\btheta)$ of per-sample gradients.
Then
\begin{align*}
  &\Eletpi \prn[\big]{\nomemcoef{1}{n}(\btheta) + h \nomemcoef{2}{n}(\btheta)}\\
  &\quad = - \frac{1}{1 - \beta} \nabla \prn[\bigg]{
    L(\btheta) + \underbrace{h \frac{\beta + o_n(1)}{2 (1 - \beta)^2} \norm{\nabla L(\btheta)}^2}_{\text{regularization by memory}} + \underbrace{h \frac{\beta + o_n(1)}{2 (1 - \beta) (1 + \beta)} \frac{\trace \mSigma(\btheta)}{\batchsize}}_{\text{regularization by stochasticity}}
    },
\end{align*}
where $\Eletpi$ denotes the expectation over $\pi$, $o_n(1)$ are terms that go to zero as $n \to \infty$ (for fixed $\beta$) regardless of $\batchsize \in \range{1}{n + 1}$ (\cref{lem:SwTWil}).

In the full-batch case $L^{(s)} \equiv L$, for example,
\begin{gather*}
  \nomemcoef{1}{n}(\btheta) = - \frac{1}{1 - \beta} \nabla L(\btheta), \quad \nomemcoef{2}{n}(\btheta) = - \frac{\beta + o_n(1)}{(1 - \beta)^3} \nabla^2 L(\btheta) \nabla L(\btheta),\\
  \begin{aligned}
    \nomemcoef{3}{n}(\btheta) = &- \frac{\beta (1 + \beta) + o_n(1)}{(1 - \beta)^5} \nabla^2 L(\btheta) \nabla^2 L(\btheta) \nabla L(\btheta)\\
                                &- \frac{\beta (1 + \beta) + o_n(1)}{2 (1 - \beta)^5} \nabla^3 L(\btheta) \brk[\big]{\nabla L(\btheta), \nabla L(\btheta)},
  \end{aligned}
\end{gather*}
so the memoryless approximation of order $\ord = 3$ is approximately
\begin{equation*}
  \tilde{\btheta}^{(n + 1)} = \tilde{\btheta}^{(n)}
  - \frac{h}{1 - \beta} \nabla \underbrace{\prn[\bigg]{
    L + \frac{h \beta}{2 (1 - \beta)^2} \norm{\nabla L}^2
    + \frac{h^2 \beta (1 + \beta)}{4 (1 - \beta)^4} \prn[\big]{\nabla \norm{\nabla L}^2 \cdot \nabla L}}}_{\text{modified loss up to $O(h^3)$}}(\tilde{\btheta}^{(n)}),
\end{equation*}
where the dot denotes the scalar product. We see two ``implicit regularization'' terms added to the loss by memory: the rescaled squared gradient norm and the rescaled directional derivative of $\norm{\nabla L}^2$ along $\nabla L$.

In both full-batch and mini-batch cases, analyzing the combinatorics of $\nomemcoef{j}{n}(\btheta)$ leads to interesting findings. First, in the limit $n \to \infty$ and after multiplying by a power of $1 - \beta$, we investigate the form of the coefficients accompanying the high-order loss derivatives in $\nomemcoef{j}{n}(\btheta)$ and uncover a rich family of polynomials in $\beta$: in particular, we prove that it contains Eulerian and Narayana polynomials (\cref{sec:comments-on-combinatorics}). Second, using the natural heuristic that $h$ multiplied by a high-order derivative (higher than two) of the loss is small, but $h$ times the Hessian ($h \nabla^2 L$) does not have to be small mid-training \citep{cohen2021gradient,andreyev2025edgestochasticstability}, we can ask what the ``principal'' part of \eqref{eq:memoryless-approx-contrib} looks like, that is, after neglecting the derivatives of the loss of order higher than two (always multiplied by some positive power of $h$). This leads to what we can call the \textit{principal iteration}
\begin{equation*}
    \tilde{\btheta}^{(n + 1)} = \tilde{\btheta}^{(n)} - h \sigma_{\beta}\prn[\big]{h \nabla^2 L\prn[\big]{\tilde{\btheta}^{(n)}}} \nabla L\prn[\big]{\btheta^{(n)}} + \np,
\end{equation*}
after taking $\ord$ formally to infinity, where $\sigma_{\beta}(\cdot)$ is a power series expansion (in powers of $z$) of
\begin{equation*}
    \sigma_{\beta}(z) = \frac{2}{1 - \beta + z + \sqrt{(1 - \beta - z)^2 - 4 \beta z}}
\end{equation*}
(\cref{cor:principal-coef}),
and $\np$ means ``non-principal terms'' (with $\nabla^3 L$ etc.).
The term ``principal iteration'' comes from the analogous term \textit{principal flow} coined in \citet{rosca2023on} for continuous modified equations of plain GD. Third, we combine our framework with continuous approximations (discussed next) to derive principal flow for HB (\cref{cor:principal-flow}). See \citet{rosca2023on} for a discussion of the importance of such (complex) flows: they capture oscillatory behavior and divergence, in contrast to standard continuous approximations. We provide an illustration in \cref{sec:principle-flow}.

In the case of a quadratic loss, the principal iteration and principal flow are both exact (if HB is initialized on the manifold introduced above; see \cref{sec:principle-flow}). To illustrate the implicit regularization effect further, consider the example of least-squares regression. Let $\bX \in \mathbb{R}^{N \times d}$, $\by \in \mathbb{R}^N$ and
\begin{equation*}
L(\btheta) = \frac{1}{2 N} \norm{\bX \btheta - \by}^2.
\end{equation*}
Additionally, let $\btheta^{\star}$ satisfy the normal equations $\bX\trans \bX \btheta^{\star} = \bX\trans \by$; in particular, $L(\btheta) = \frac{1}{2 N} (\btheta - \btheta^{\star})\trans \bX\trans \bX (\btheta - \btheta^{\star}) + C$ (where $C$ does not depend on $\btheta$). Then, the memoryless iteration is the gradient descent
\begin{equation*}
\tilde{\btheta}^{(n + 1)} = \tilde{\btheta}^{(n)} - h \nabla \tilde{L}\prn[\big]{\tilde{\btheta}^{(n)}}\quad\text{with}\quad\tilde{L}(\btheta) = \frac{1}{2 N} (\btheta - \btheta^{\star})\trans \sigma_{\beta}\prn[\bigg]{\frac{h}{N} \bX\trans \bX} \bX\trans \bX (\btheta - \btheta^{\star}).
\end{equation*}
Introducing $\sigma_{\beta}\prn[\big]{\frac{h}{N} \bX\trans \bX}$ rescales the learning rate (by $(1 - \beta)^{-1}$) and applies a spectral filter to the Hessian.

\subsection{Modified Equations}

The discrete iteration without memory \eqref{eq:memoryless-approx-contrib} has the following advantage over HB: it only has one evolving variable $\btheta^{(n)} \in \mathbb{R}^d$, and it
can be approximated by a continuous ODE solution without sacrificing anything in terms of the approximation guarantee. (Enlarging the phase space by treating HB as an evolution of $(\btheta^{(n)}, \bv^{(n)}) \in \mathbb{R}^{2 d}$ or, equivalently, considering higher-order ODEs does not lead to an improvement in the approximation guarantee \citep{kovachki2021continuous}.) Then, finding continuous trajectories closely tracking discrete numerical iterations allows to analyze qualitatively the finite-$h$ behavior of the iteration. This approach, widely used in numerical analysis and machine learning, is called the method of modified equations, backward error analysis (BEA), or high-resolution ODE approximation: it has been employed and studied for many decades, e.\,g. in works
\citep{wilkinson1960error,warming1974modified,griffiths1986scope,feng1991formal,beyn1991numerical,eirola1993aspects,Hairer1994Backward,calvo1994modified,reich1999backward,chartier2007numerical,shi2022understanding,farazmand2020multiscale,kovachki2021continuous} and many others; textbook references include \citet{stuart1998dynamical,hairer2006}.
Recently, this method was used in machine learning for finding implicit biases of gradient-based algorithms
\citep{barrett2021implicit,smith2021on,miyagawa2022toward,ghosh2023implicit,pmlr-v235-cattaneo24a,rosca2023on}. For us, it is just a corollary of the approximation \eqref{eq:memoryless-approx-contrib}.

\begin{contribution}\label{contrib:modified-eq}
    For any approximation order $\ord \in \mathbb{Z}_{\geq 2}$, \cref{cor:modified-eq} finds a set of functions $\crl{\beacoef{j}{n}(\btheta): j \in \range{1}{\ord}, n \in \mathbb{Z}_{\geq 0}}$ such that the (unique) continuous solution to the piecewise ODE
    \begin{equation*}
    \dot{\btheta}(t) = \sum_{j = 1}^{\ord} h^{j - 1} \beacoef{j}{n}(\btheta(t)), \quad t \in [n h, (n + 1)h],
    \end{equation*}
    with an appropriate initial condition, has a global approximation guarantee
    \begin{equation}\label{eq:ode-approx-contrib}
    \sup_{n \in \range{0}{\lfloor T / h \rfloor}} \norm{\btheta^{(n)} - \btheta(n h)} = O(h^{\ord}),
    \end{equation}
    where $T$ is any time horizon.
\end{contribution}

The ODE is defined piecewise because of the dependence of $\nomemcoef{j}{n}(\cdot)$ on $n$, and the solution will be continuous but not necessarily smooth. Nonetheless, in the full-batch case, the ODEs corresponding to neighboring pieces are the same up to an exponentially decaying error: we do not lose any information compared to having an ODE only on the attractive manifold but globally defined, as done for $\ord = 2$ in \citet{kovachki2021continuous}. The idea of using piecewise ODEs in this setting is due to \citet{ghosh2023implicit}, who derived the modified equations for $\ord = 2$.

In contrast to the discrete case, the right-hand side of the modified equation does \textit{not} become a gradient for large $n$, even for full-batch HB. Therefore, HB does \textit{not} approximately follow a smooth gradient flow trajectory with a modified loss (with approximation order higher than $h^2$),
as we make explicit in \cref{rem:not-gf}.

\subsection{Organization}

The paper continues as follows. In an attempt to make the paper self-contained, we introduce the relevant mathematical concepts and notation (e.\,g., labeled and unlabeled rooted trees, their symmetry coefficients, marking vertices to split the tree into a forest) in \cref{sec:Technical Background and Notation}.
\cref{sec:full-batch-hb-is-gd} states the theorem and briefly outlines the argument corresponding to \cref{contrib:manifold}. \Cref{sec:mini-batch-hb-approx-theorem} is devoted to the main approximation theorem corresponding to \cref{contrib:approx}, removing memory from HB and controlling the error while doing so. In particular, we introduce a somewhat complicated-looking recursive definition for the memoryless iteration coefficients $\nomemcoef{j}{n}(\btheta)$. In \cref{sec:analyzing-coef}, we analyze the form of these coefficients, and prove that a much simpler characterization holds which involves the sum over rooted trees. We use the notation introduced to solve a fixed-point equation for $\bg_h$ as a series over rooted trees. \Cref{sec:corollaries} reports corollaries and implications of our main theorems, including an approximation theorem for the modified equation corresponding to \cref{contrib:modified-eq}, a study of the polynomials arising in the form of $\nomemcoef{j}{n}(\btheta)$, a formalization of the principal iteration, and a discussion of principal flow. All proof strategies are explained in the main text, and omitted technical details are provided in the Appendix.

\subsection{Technical Background and Notation}\label{sec:Technical Background and Notation}

We start with well-known definitions of labeled and unlabeled rooted trees. See, for example, \citet{faris2021rootedtreegraphs,Kerber1999} for details. For a fixed non-empty finite set $V$ and a point $r \in V$, a \textit{labeled rooted tree} $\tau$ with root $r$ is a pair $(G, r)$, where $G = (V, E)$ is a connected acyclic graph with $V$ as the vertex set (where $E$ is the edge set; $\abs{E} = \abs{V} - 1$). Taking a different point of view, $\tau$ is a function $\tau\colon V \setminus \crl{r} \to V$ with no non-empty invariant subset (mapping a vertex to its parent). Let $\mathcal{A}[V]$ denote all labeled trees with vertex set $V$. We will write $\abs{\tau} := \abs{V}$ for the number of vertices.
It will also be convenient to use the notation $\mathcal{A} := \bigcup_{m = 1}^{\infty} \mathcal{A}\range{1}{m}$ and $\mathcal{A}_{\varnothing} := \crl{\varnothing} \cup \mathcal{A}$.

Two labeled rooted trees $\tau\colon V \setminus \crl{r} \to V$ and $\tau'\colon V' \setminus \crl{r'} \to V'$ are called isomorphic if there is a bijection $\mathcal{b}\colon V \to V'$ such that $\mathcal{b}(r) = r'$ and $\tau' = \mathcal{b} \circ \tau \circ \left.\mathcal{b}^{-1}\right|_{V' \setminus \crl{r'}}$. An unlabeled rooted tree with $m$ vertices is an object that corresponds to a class of isomorphic labeled rooted trees with $m$ vertices. Specifically, we can fix a canonical set of $m$ elements $\range{1}{m}$ and see an \textit{unlabeled rooted tree} as an orbit of the permutation group $S\range{1}{m}$ acting on $\mathcal{A}\range{1}{m}$: the action of $\pi \in S\range{1}{m}$ on $\tau \in \mathcal{A}\range{1}{m}$ is $\pi \tau = \pi \circ \tau \circ \left.\pi^{-1}\right|_{\range{1}{m} \setminus \crl{\pi(r)}}$.
The set of unlabeled rooted trees with $m$ vertices will be denoted $\tilde{\mathcal{A}}[m]$, and the set of unlabeled rooted trees with any number of vertices by $\tilde{\mathcal{A}} := \bigcup_{m = 1}^{\infty} \tilde{\mathcal{A}}[m]$; in addition, put $\tilde{\mathcal{A}}_{\varnothing} := \crl{\varnothing} \cup \tilde{\mathcal{A}}$. For example, the following orbit (consisting of three labeled rooted trees) is the unlabeled rooted tree $\rootedtree[[][]]$:
\begin{equation}\label{eq:kljqh}
\bigrootedtree[1[2][3]]\quad \bigrootedtree[2[1][3]]\quad \bigrootedtree[3[1][2]]
\end{equation}
For any such orbit, the order of the stabilizer group of each element is the same. (The stabilizer group is the subgroup of permutations leaving a labeled rooted tree intact. For example, the first tree in \eqref{eq:kljqh} has stabilizer group $\crl{\text{id}, 2 \leftrightarrow 3}$.) Hence, for an unlabeled rooted tree $\tau$ we can define the \textit{symmetry coefficient} $\sigma(\tau)$ as the order of the stabilizer group of each element. By the well-known theorem, the length of the orbit is the order of the group $m!$ divided by the order of each stabilizer group $\sigma(\tau)$; so, there are $m! / \sigma(\tau)$ labeled rooted trees (on a fixed vertex set of size $m$) corresponding to an unlabeled rooted tree $\tau$.

Whenever there is no chance of confusion, we will use the same terms and symbols when working with labeled and unlabeled rooted trees, for example, we will write $\sigma(\tau)$ regardless of whether $\tau \in \mathcal{A}$ or $\tau \in \tilde{\mathcal{A}}$ (because each labeled rooted tree has a corresponding unlabeled rooted tree); we will say an unlabeled rooted tree $\tau \in \tilde{\mathcal{A}}[m]$ ``has $\abs{\tau} = m$ vertices'' even though there is no such thing as a vertex of an unlabeled rooted tree, etc.

Since the ordering of subtrees does not matter, there is a one-to-one correspondence between $\tau \in \tilde{\mathcal{A}}[m]$ and a multiset $[\tau_1, \ldots, \tau_{\ell}]$ of unlabeled rooted trees whose sum of vertices is $m - 1$ (the subtrees rooted at the children of $\tau$'s root). We will sometimes write simply $\tau = [\tau_1, \ldots, \tau_{\ell}]$ to reflect this fact. Fixing some canonical ordering in each set $\tilde{\mathcal{A}}[s]$, we will denote by $\crl[\big]{\mu_1^s(\tau), \ldots, \mu_{\abs{\tilde{\mathcal{A}}[s]}}^s(\tau)}_{s = 1}^{m - 1}$ the \textit{multiplicities} of such subtrees: this means that the first unlabeled subtree with $s$ vertices appears $\mu_1^s(\tau)$ times, the second one appears $\mu_2^s$ times, and so on. In particular, $\sum_{s = 1}^{m - 1} \prn[\big]{\mu_1^s(\tau) + \ldots + \mu_{\abs{\tilde{\mathcal{A}}[s]}}^s(\tau)} = \ell$. It is a standard fact that
\begin{equation}\label{eq:recursive-formula-for-sym-coef}
\sigma(\tau) = \sigma(\tau_1) \ldots \sigma(\tau_{\ell}) \prod_{s = 1}^{m - 1} \prod_{t = 1}^{\abs{\tilde{\mathcal{A}}[s]}} \mu_t^s(\tau)!.
\end{equation}

In addition, it will be convenient to call $\mathcal{c}_m \in \tilde{\mathcal{A}}[m]$ the \textit{chain} with $m$ vertices if either $m = 1$ or it is the (unique) unlabeled rooted tree corresponding to any element of $\mathcal{A}\range{1}{m}$ whose root has degree 1 and all other vertex degrees do not exceed 2.

\paragraph{Marking Vertices (via Admissible Cuts).}

For a labeled rooted tree $\tau \in \mathcal{A}[V]$, define a \textit{marking} $\mrk$ of that tree as a subset of non-root vertices of $\tau$ (interpreted as \textit{marked}) such that if $v \in V$ is marked, then no other vertices in the subtree of $v$ are marked. In other words, the marked vertices are the upper (farther from the root) vertices of an admissible cut (e.\,g. \citet{Bai2024PostGroups}),
or the roots of the forest obtained after removing a subtree with the same root
\citep{faris2021rootedtreegraphs,chartier2010algebraic}. Hence, marking $\abs{\mrk}$ vertices is the same as selecting $\abs{\mrk}$ disjoint subtrees $\tau_1, \ldots, \tau_{\abs{\mrk}}$ (rooted at those vertices), and it splits the tree $\tau$ into $\abs{\mrk} + 1$ labeled rooted trees $\tau_0^{\mrk} = \tau \setminus (\tau_1^{\mrk} \cup \ldots \cup \tau^{\mrk}_{\abs{\mrk}})$, $\tau_1^{\mrk}, \ldots, \tau^{\mrk}_{\abs{\mrk}}$. So, we will sometimes write $\mrk = (\tau_0^{\mrk}, \crl{\tau_1^{\mrk}, \ldots, \tau^{\mrk}_{\abs{\mrk}}})$ to reflect this fact. Denote $\mrkset_{\tau, i}$ the set of all markings of $\tau$ with $\abs{\mrk} = i$ marked vertices, and $\mrkset_{\tau} := \bigcup_{i = 0}^{\abs{V} - 1} \mrkset_{\tau, i}$ the set of all markings of $\tau$.

\paragraph{Other Notation.}

We denote by $L(\btheta)$ the full-batch loss and by $L^{(s)}(\btheta)$ the $s$th mini-batch loss, where $\btheta \in \mathbb{R}^d$ is the evolving parameter of fixed dimension $d$. We denote by $\nabla^k$ the $k$th derivative tensor, for example, $\nabla^3 L(\btheta) \brk[\big]{\nabla L(\btheta), \nabla L(\btheta)}$ is a vector whose $j$th component is
$\sum_{i, l = 1}^d \partial_{j i l} L(\btheta) \partial_i L(\btheta) \partial_l L(\btheta)$. The notation for the norm $\norm{\cdot}$ without indices will always mean the Euclidean (operator) norm. The set of integers no smaller than some $k$ will be denoted by $\mathbb{Z}_{\geq k} := [k, + \infty) \cap \mathbb{Z}$, and the set of integers between $a$ and $b$ inclusive by $\range{a}{b} := [a, b] \cap \mathbb{Z}$. A sum over an empty set is be definition $0$, and a product is $1$. \Cref{sec:mini-batch-hb-approx-theorem} defines the set $\mathcal{K}_{i, l}$, the memoryless iteration coefficients $\nomemcoef{j}{n}(\btheta)$, the history coefficients $\histcoef{m}{n}{a}(\btheta)$; \cref{sec:modified-equation} defines the backward error analysis coefficients $\beacoef{j}{n}(\btheta)$. For any operator $\ba_{n, h}$ such that $\norm{\ba_{n, h}}$ is defined, we write $\ba_{n, h} = O(g)$ if $\sup_{n, h} \norm{\ba_{n, h}} \leq C g$ with some constant $C$, where the supremum is over the set of admissible $n$ and $h$ which is clear from context.

\section{Full-Batch HB is (Almost) GD with a Modified Loss}\label{sec:full-batch-hb-is-gd}

\Citet{kovachki2021continuous} proved that for full-batch HB \eqref{eq:hb} there exists a function $\boldsymbol{g}_h(\btheta)$ such that $\crl{(\btheta, \bv): \bv = - (1 - \beta)^{-1} \nabla L(\btheta) + h \boldsymbol{g}_h(\btheta)}$
is an invariant manifold, which means
\begin{equation*}
\bv^{(n)} = - \frac{\nabla L(\btheta^{(n)})}{1 - \beta} + h \boldsymbol{g}_h(\btheta^{(n)}) \quad \Rightarrow \quad \bv^{(n + 1)} = - \frac{\nabla L(\btheta^{(n + 1)})}{1 - \beta} + h \boldsymbol{g}_h(\btheta^{(n + 1)}).
\end{equation*}

On this invariant manifold, on the one hand,
\begin{align*}
  \bv^{(n + 1)} &= - (1 - \beta)^{-1} \nabla L(\btheta^{(n + 1)}) + h \boldsymbol{g}_h(\btheta^{(n + 1)})\\
                &= - \frac{1}{1 - \beta} \nabla L \prn[\bigg]{\btheta^{(n)} - \frac{h}{1 - \beta} \nabla L(\btheta^{(n)}) + h^2 \beta \boldsymbol{g}_h(\btheta^{(n)})}\\
  &\quad + h \boldsymbol{g}_h \prn[\bigg]{\btheta^{(n)} - \frac{h}{1 - \beta} \nabla L(\btheta^{(n)}) + h^2 \beta \boldsymbol{g}_h(\btheta^{(n)})},
\end{align*}
while, on the other hand,
\begin{align*}
  \bv^{(n + 1)} &= \beta \bv^{(n)} - \nabla L(\btheta^{(n)}) = - \frac{\beta}{1 - \beta} \nabla L(\btheta^{(n)}) + h \beta \boldsymbol{g}_h(\btheta^{(n)}) - \nabla L(\btheta^{(n)})\\
  &= - \frac{1}{1 - \beta} \nabla L(\btheta^{(n)}) + h \beta \boldsymbol{g}_h(\btheta^{(n)}).
\end{align*}
This must hold for any $\btheta^{(n)}$, so $\boldsymbol{g}_h(\cdot)$ must satisfy
\begin{equation}\label{eq:uruw}
\begin{aligned}
  &- \frac{1}{1 - \beta} \nabla L \prn[\bigg]{\btheta - \frac{h}{1 - \beta} \nabla L(\btheta) + h^2 \beta \boldsymbol{g}_h(\btheta)} + h \boldsymbol{g}_h \prn[\bigg]{\btheta - \frac{h}{1 - \beta} \nabla L(\btheta) + h^2 \beta \boldsymbol{g}_h(\btheta)}\\
  &\quad = - \frac{1}{1 - \beta} \nabla L(\btheta) + h \beta \boldsymbol{g}_h(\btheta),\quad \btheta \in \mathbb{R}^d.
\end{aligned}
\end{equation}

To prove that such a function exists, they define a mapping $T\colon \Gamma \to \Gamma$, where $\Gamma$ is an appropriately chosen closed subset of $C(\mathbb{R}^d, \mathbb{R}^d)$ with the usual sup-norm, by
\begin{equation}\label{eq:def-of-t}
T \boldsymbol{g}(\bzeta) = \frac{1}{h (1 - \beta)} \crl[\big]{\nabla L(\bzeta) - \nabla L(\btheta)} + \beta \boldsymbol{g}(\btheta),
\end{equation}
where $\btheta \leftrightarrow \bzeta = \btheta - h (1 - \beta)^{-1} \nabla L(\btheta) + h^2 \beta \boldsymbol{g}(\btheta)$ is a bijection between $\mathbb{R}^d$ and itself for any $\boldsymbol{g} \in \Gamma$ (this fact is \cref{lem:ljrhh}). A fixed point of such a mapping $T$ would satisfy~\eqref{eq:uruw}. They prove that $T$ is a contraction on $\Gamma$, allowing to apply the contracting mapping principle.

We follow the same conceptual approach, but apply some technical tweaks (different metric, different $\Gamma$), to prove that the additional requirement $\bg_h \in C^1(\mathbb{R}^d, \mathbb{R}^d)$ with symmetric $\nabla \bg_h$ is satisfiable as well, which implies that such $\bg_h$ is a gradient. Specifically, define the set
\begin{equation*}
    \Gamma := \crl[\big]{\boldsymbol{g} \in C^1(\mathbb{R}^d, \mathbb{R}^d): \sup_{\btheta \in \mathbb{R}^d} \norm{\boldsymbol{g}} \leq \gamma, \sup_{\btheta \in \mathbb{R}^d} \norm{\nabla \boldsymbol{g}} \leq \delta, \nabla \boldsymbol{g}\text{ is symmetric and $\lambda$-Lipschitz}},
\end{equation*}
with the norm
\begin{equation}\label{eq:gamma-norm}
    \| \boldsymbol{g} \|_{\Gamma} := \sup_{\btheta \in \mathbb{R}^d} \| \boldsymbol{g}(\btheta) \| + \sup_{\btheta \in \mathbb{R}^d} \| \nabla \boldsymbol{g}(\btheta) \|,
\end{equation}
where both the vector norm and the matrix norm in the right-hand side are Euclidean. The space $\crl[\big]{\boldsymbol{g} \in C^1(\mathbb{R}^d, \mathbb{R}^d): \| \boldsymbol{g} \|_{\Gamma} < \infty}$ with norm \eqref{eq:gamma-norm} is a Banach space, and $\Gamma$ is a closed subset of it; therefore $\Gamma$ can be seen as a complete metric space with the metric induced by the norm $\| \cdot \|_{\Gamma}$. The constants $\gamma$, $\delta$, $\lambda$ are chosen in \cref{sec:manifold-details}.

The assumptions of the theorem are essentially the same as in \citet{kovachki2021continuous}, except we (naturally) need one more derivative of the loss to be Lipschitz.

\begin{theorem}\label{th:manifold}
    Assume $L(\cdot) \in C^3(\mathbb{R}^d, \mathbb{R})$ with constants $D_1, D_2, D_3, D_4$ such that
    \begin{equation*}
        \sup_{\btheta} \norm{\nabla^j L(\btheta)} \leq D_j\text{ for $j \in \range{1}{3}$}, \quad \sup_{\btheta_1 \neq \btheta_2} \frac{\norm{\nabla^3 L(\btheta_1) - \nabla^3 L(\btheta_2)}}{\norm{\btheta_1 - \btheta_2}} \leq D_4,
    \end{equation*}
    where the norms are Euclidean operator norms. Then, if $h$ is small enough, there exists a unique $\bg_h(\cdot) \in \Gamma$ satisfying \eqref{eq:uruw}, and the following exponential attractivity property holds:
    \begin{equation*}
        \norm[\bigg]{\bv^{(n)} + \frac{1}{1 - \beta} \nabla L(\btheta^{(n)}) - h \bg_h(\btheta^{(n)})} \leq (\beta + h^2 \beta \delta)^n \norm[\bigg]{\bv^{(0)} + \frac{1}{1 - \beta} \nabla L(\btheta^{(0)}) - h \bg_h(\btheta^{(0)})}.
    \end{equation*}
\end{theorem}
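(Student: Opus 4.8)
The plan is to obtain $\bg_{h}$ as the unique fixed point of the operator $T$ of \eqref{eq:def-of-t}, viewed now as a self-map of the complete metric space $(\Gamma,\|\cdot\|_{\Gamma})$ with the norm \eqref{eq:gamma-norm}, and then to read the attractivity estimate off an explicit one-step recursion. Conceptually this follows \citet{kovachki2021continuous}; the genuinely new point is that $\Gamma$ now also constrains $\nabla\bg$ (bounded by $\delta$, symmetric, $\lambda$-Lipschitz) and the metric controls $\nabla\bg$, so the fixed point is automatically $C^{1}$ with symmetric Jacobian and hence --- $\mathbb{R}^{d}$ being simply connected --- a gradient. Throughout I would write $\bzeta_{\bg}(\btheta):=\btheta-\frac{h}{1-\beta}\nabla L(\btheta)+h^{2}\beta\bg(\btheta)$, let $\btheta_{\bg}:=\bzeta_{\bg}^{-1}$, and set $J(\bzeta):=\nabla\btheta_{\bg}(\bzeta)=\bigl(\nabla\bzeta_{\bg}(\btheta_{\bg}(\bzeta))\bigr)^{-1}$, so that $I-J^{-1}=\frac{h}{1-\beta}\nabla^{2}L(\btheta_{\bg}(\bzeta))-h^{2}\beta\nabla\bg(\btheta_{\bg}(\bzeta))$, which is symmetric whenever $\nabla\bg$ is.

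The first step is \cref{lem:ljrhh}: for $h$ small and every $\bg\in\Gamma$, $\bzeta_{\bg}$ is a $C^{1}$-diffeomorphism of $\mathbb{R}^{d}$ (a uniformly contractive perturbation of the identity), and consequently $\btheta_{\bg}(\bzeta)-\bzeta$, $I-J$ and $I-J^{-1}$ are all $O(h)$ with $O(h)$ Lipschitz constants. I would then check $T(\Gamma)\subseteq\Gamma$ property by property. Differentiating $T\bg(\bzeta)=\frac{1}{h(1-\beta)}\{\nabla L(\bzeta)-\nabla L(\btheta_{\bg}(\bzeta))\}+\beta\bg(\btheta_{\bg}(\bzeta))$, the dangerous prefactor $h^{-1}$ is compensated by $\bzeta-\btheta_{\bg}(\bzeta)=O(h)$ and $I-J=O(h)$, so the sup-norm bounds on $\|T\bg\|$ and $\|\nabla(T\bg)\|$ close up for $\gamma,\delta$ chosen large enough (in terms of the $D_{i}$ and $\beta$) and $h$ small; these are routine. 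The crux is symmetry of $\nabla_{\bzeta}(T\bg)$: the differentiated expression is $\frac{1}{h(1-\beta)}\{\nabla^{2}L(\bzeta)-\nabla^{2}L(\btheta_{\bg}(\bzeta))J\}+\beta\nabla\bg(\btheta_{\bg}(\bzeta))J$, and the troublesome cross term collapses via
\[
\Bigl(-\frac{1}{h(1-\beta)}\nabla^{2}L(\btheta_{\bg}(\bzeta))+\beta\nabla\bg(\btheta_{\bg}(\bzeta))\Bigr)J=-\frac{1}{h^{2}}(I-J^{-1})J=\frac{1}{h^{2}}(I-J),
\]
leaving $\nabla_{\bzeta}(T\bg)=\frac{1}{h(1-\beta)}\nabla^{2}L(\bzeta)+\frac{1}{h^{2}}(I-J)$, symmetric because $\nabla^{2}L$ and $J$ are; equivalently, $T\bg$ is itself a gradient whenever $\bg$ is. The remaining property, $\lambda$-Lipschitzness of $\nabla(T\bg)$, is where the extra $D_{4}$ hypothesis is used: the naive Lipschitz constant of $\frac{1}{h(1-\beta)}\nabla^{2}L(\bzeta)$ blows up like $h^{-1}$, so one must keep the grouped form and exploit that $\nabla^{2}L(\bzeta)-\nabla^{2}L(\btheta_{\bg}(\bzeta))J$ is $O(h)$ with $O(h)$ Lipschitz constant (its $\bzeta$-derivative being controlled through $D_{3}$ and $D_{4}$), which renders the Lipschitz constant of $\nabla(T\bg)$ finite and $\le\lambda$ for $\lambda$ large and $h$ small.

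For the contraction, I would take $\bg_{1},\bg_{2}\in\Gamma$ with associated inverse maps $\btheta_{1}:=\bzeta_{\bg_{1}}^{-1}$, $\btheta_{2}:=\bzeta_{\bg_{2}}^{-1}$ and Jacobians $J_{1},J_{2}$; subtracting $\bzeta=\bzeta_{\bg_{i}}(\btheta_{i}(\bzeta))$ and using the Lipschitz bounds built into $\Gamma$ gives $\|\btheta_{1}(\bzeta)-\btheta_{2}(\bzeta)\|$ and $\|J_{1}(\bzeta)-J_{2}(\bzeta)\|$ both $O(h^{2})\|\bg_{1}-\bg_{2}\|_{\Gamma}$. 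Feeding this into $T$ and into the grouped form of $\nabla(T\bg)$, the terms carrying $h^{-1}$ contribute $O(h)\|\bg_{1}-\bg_{2}\|_{\Gamma}$, while the two ``clean'' terms contribute $\beta\|\bg_{1}-\bg_{2}\|_{\infty}$ (from $\beta\bg(\btheta_{\bg}(\bzeta))$) and $\beta\|\nabla\bg_{1}-\nabla\bg_{2}\|_{\infty}$ (from $\beta\nabla\bg(\btheta_{\bg}(\bzeta))J$); it is essential to keep these two sup-norm pieces separate so that they combine to exactly $\beta\|\bg_{1}-\bg_{2}\|_{\Gamma}$ rather than $2\beta\|\bg_{1}-\bg_{2}\|_{\Gamma}$. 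Hence $\|T\bg_{1}-T\bg_{2}\|_{\Gamma}\le(\beta+Ch)\|\bg_{1}-\bg_{2}\|_{\Gamma}$, a contraction for $h<(1-\beta)/C$. Banach's theorem gives a unique $\bg_{h}\in\Gamma$; unwinding \eqref{eq:def-of-t} shows its fixed-point equation is exactly \eqref{eq:uruw}, and symmetry of $\nabla\bg_{h}$ on the simply connected $\mathbb{R}^{d}$ yields $\bg_{h}=\nabla G_{h}$ with $G_{h}(\btheta)=\int_{0}^{1}\bg_{h}(t\btheta)\cdot\btheta\,dt$.

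Finally, for the attractivity bound, I would set $\boldsymbol{e}^{(n)}:=\bv^{(n)}+\frac{1}{1-\beta}\nabla L(\btheta^{(n)})-h\bg_{h}(\btheta^{(n)})$ and $\boldsymbol{F}:=-\frac{1}{1-\beta}\nabla L+h\bg_{h}$. Putting $\bv^{(n)}=\boldsymbol{F}(\btheta^{(n)})+\boldsymbol{e}^{(n)}$ into \eqref{eq:hb} and using \eqref{eq:uruw} in the form $\boldsymbol{F}(\bzeta_{\bg_{h}}(\btheta))=\beta\boldsymbol{F}(\btheta)-\nabla L(\btheta)$ gives $\btheta^{(n+1)}=\bzeta_{\bg_{h}}(\btheta^{(n)})+h\beta\boldsymbol{e}^{(n)}$ and $\bv^{(n+1)}=\boldsymbol{F}(\bzeta_{\bg_{h}}(\btheta^{(n)}))+\beta\boldsymbol{e}^{(n)}$, whence
\[
\boldsymbol{e}^{(n+1)}=\Bigl(\beta I-h\beta\!\int_{0}^{1}\!\nabla\boldsymbol{F}\bigl(\bzeta_{\bg_{h}}(\btheta^{(n)})+s\,h\beta\boldsymbol{e}^{(n)}\bigr)\,ds\Bigr)\boldsymbol{e}^{(n)},
\]
and bounding $\nabla\boldsymbol{F}=-\frac{1}{1-\beta}\nabla^{2}L+h\nabla\bg_{h}$ uniformly by the constants attached to $\Gamma$ delivers the claimed geometric contraction of $\|\boldsymbol{e}^{(n)}\|$; iterating completes the proof. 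I expect the main obstacle to be the symmetry step --- that $T$ maps the constraint ``$\nabla\bg$ symmetric'' to itself --- together with the fact that, because of the $h^{-1}$ in \eqref{eq:def-of-t}, every estimate (boundedness, symmetry, the $\lambda$-Lipschitz bound, the contraction) closes up only after grouping terms and invoking $\bzeta_{\bg}(\btheta)-\btheta=O(h)$ and $I-J=O(h)$, never by a term-by-term triangle inequality.
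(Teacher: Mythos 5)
Your overall strategy coincides with the paper's: you exhibit $\bg_h$ as the unique Banach fixed point of $T$ on the complete metric space $(\Gamma,\|\cdot\|_{\Gamma})$, verifying $T(\Gamma)\subseteq\Gamma$ property by property, with the key novelty (beyond \citet{kovachki2021continuous}) being that the constraints on $\nabla\bg$ --- boundedness, symmetry, $\lambda$-Lipschitzness --- are preserved by $T$. Your collapsed expression $\nabla_{\bzeta}(T\bg) = \frac{1}{h(1-\beta)}\nabla^2 L(\bzeta) + \frac{1}{h^2}(I - J)$, with $J$ the Jacobian of the inverse map $\bzeta\mapsto\btheta$, is a cleaner packaging of the same algebraic cancellation that the paper carries out to reach display \eqref{eq:derivative-of-t}; your contraction bookkeeping, in which the two ``clean'' pieces furnish $\beta\|\bg_1-\bg_2\|_{\Gamma}$ and the $h^{-1}$ terms contribute $O(h)\|\bg_1-\bg_2\|_{\Gamma}$, mirrors the paper's $\alpha_1,\tilde\alpha_1,\tilde\alpha_2$.

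The one step to re-examine is attractivity. Your recursion $\boldsymbol{e}^{(n+1)} = (\beta I - h\beta\int_0^1\nabla\boldsymbol{F})\,\boldsymbol{e}^{(n)}$ with $\nabla\boldsymbol{F} = -\frac{1}{1-\beta}\nabla^2 L + h\nabla\bg_h$ gives
\[
\|\boldsymbol{e}^{(n+1)}\| \le \beta\Bigl(1 + h\bigl(\frac{D_2}{1-\beta} + h\delta\bigr)\Bigr)\|\boldsymbol{e}^{(n)}\|,
\]
which is an $O(h)$ excess over $\beta$, not the $O(h^2)$ excess $\beta + h^2\beta\delta$ that the theorem states; the $h\beta D_2/(1-\beta)$ contribution does not cancel under a mean-value bound on $\nabla\boldsymbol{F}$, so ``delivers the claimed geometric contraction'' is an over-claim as written. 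For what it is worth, the one-dimensional quadratic example in \cref{sec:principle-flow} has off-manifold eigenvalue $\lambda_- = \beta + \frac{h\beta}{1-\beta} + O(h^2)$, i.e., precisely the $O(h)$ term your derivation produces, and the paper's own proof of this step simply defers to \citet{kovachki2021continuous}, Theorem~5, which only asserts some rate $\rho\in(\beta,1)$. So the discrepancy appears to lie in the theorem's stated constant rather than in a flaw of your argument; everything else in the proposal is sound and essentially identical to the paper's proof.
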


The proof is a bounding argument very heavy on long equations, so it is moved to \cref{sec:manifold-details}. \Cref{lem:t-maps-from-gamma-to-gamma} proves that indeed $T$ maps $\Gamma$ to itself, and \cref{lem:contraction-of-t} proves that $T$ is a contraction on $\Gamma$. Since $\Gamma$ is a complete metric space with the metric $\norm{\bg_1 - \bg_2}_{\Gamma}$, the contraction mapping principle implies that there is a unique fixed point $\bg_h \in \Gamma$ of the operator $T$, i.e., $T \bg_h = \bg_h$. Exponential attractivity is tackled in \cref{lem:exponential-attractivity}.

\section{Mini-Batch HB: Approximation Theorem}\label{sec:mini-batch-hb-approx-theorem}

\citet{cattaneo2025howmemoryoptimization} introduced a method for removing memory in a class of numerical optimization algorithms with decaying memory, which can be applied to HB up to $O(h^2)$. Following their idea, we prove the approximation of HB by a memoryless iteration with the global error bound $O(h^{\ord})$ where $\ord \in \mathbb{Z}_{\geq 2}$. Considering higher-order approximations requires additional notation and technical work. Denoting
\begin{equation*}
\mathcal{K}_{i, l} = \crl[\big]{(k_0, \ldots, k_l) \in \mathbb{Z}_{\geq 0}^{l + 1}: k_0 + \ldots + k_l = i, k_1 + \ldots + k_l = l},
\end{equation*}
define the \textit{memoryless iteration coefficients}
\begin{equation}\label{eq:nomem-coef-def}
  \begin{aligned}
    \nomemcoef{1}{n}(\btheta) &= - \sum_{k = 0}^n \beta^k \nabla L^{(n - k)}(\btheta),\\
    \nomemcoef{m}{n}(\btheta) &= - \sum_{k = 1}^n \beta^k \sum_{\substack{i, l \geq 0\\i + l = m - 1}} \sum_{(i_0, \ldots, i_l) \in \mathcal{K}_{i, l}} \frac{1}{i_0! \ldots i_l!}\\
    &\qquad \times \nabla^{i + 1} L^{(n - k)}(\btheta) \prn[\bigg]{\underbrace{\histcoef{1}{n}{k}(\btheta)}_{\text{$i_0$ times}}, \ldots, \underbrace{\histcoef{l + 1}{n}{k}(\btheta)}_{\text{$i_l$ times}}},\quad m \geq 2,
  \end{aligned}
\end{equation}
where the \textit{history terms} satisfy the iteration
\begin{equation}\label{eq:jfjl}
\histcoef{m}{n}{a}(\btheta) = - \sum_{s = 1}^a \sum_{\substack{j \geq 1, i, l \geq 0\\i + j + l = m}} \sum_{(k_0, \ldots, k_l) \in \mathcal{K}_{i, l}} \frac{1}{k_0! \ldots k_l!} \nabla^i \nomemcoef{j}{n - s}(\btheta) \prn[\bigg]{\underbrace{\histcoef{1}{n}{s}(\btheta)}_{\text{$k_0$ times}}, \ldots, \underbrace{\histcoef{l + 1}{n}{s}(\btheta)}_{\text{$k_l$ times}}},
\end{equation}
with $a \in \range{1}{n}$, and $m \in \mathbb{Z}_{\geq 1}$.

Although this recursion may look complex, we will use it as the main definition for $\crl{\nomemcoef{m}{n}(\btheta)}$ because it is convenient for proving \cref{th:approx}. It involves taking high-order derivatives of recursively defined quantities, specifically $\nabla^i \nomemcoef{j}{n - s}$ in \eqref{eq:jfjl}, and for this reason it is hard to analyze. We provide a different, more natural, form in \cref{sec:analyzing-coef}, where only the loss function is differentiated.

\begin{theorem}[Approximation by a memoryless iteration]\label{th:approx}
  Let a family of loss functions $\crl{L^{(n)}}_{n \geq 0}$ be defined on an open convex domain $\mathcal{D}$ in $\mathbb{R}^d$, and assume each loss function $L^{(n)}\colon \mathcal{D} \to \mathbb{R}$ is $2 \ord$-times continuously differentiable with bounded uniformly in $n$ derivatives up to order $2 \ord$, where $\ord \in \mathbb{Z}_{\geq 2}$. Let
$\crl[\big]{\btheta^{(n)}}_{n = 0}^{\infty} \subset \mathcal{D}$ be the HB iteration \eqref{eq:hb-iter-minibatch}
with initial condition $\btheta^{(0)} \in \mathcal{D}$,
and $\crl[\big]{\tilde{\btheta}^{(n)}}_{n = 0}^{\infty} \subset \mathcal{D}$ the iteration
\eqref{eq:memoryless-approx-contrib} with the same initial condition $\tilde{\btheta}^{(0)} = \btheta^{(0)}$. Let $T > 0$ be a fixed ``time'' horizon. Then, for each $h \in (0, 1 / 2)$,
\begin{equation}\label{eq:onestep-error-bound}
  \sup_{n \in \range{0}{\lfloor T / h \rfloor}} \norm[\bigg]{\tilde{\btheta}^{(n + 1)} - \tilde{\btheta}^{(n)} + h \sum_{k = 0}^n \beta^k \nabla L^{(n - k)}\prn[\big]{\tilde{\btheta}^{(n - k)}}} \leq \oC{clocal} h^{\ord + 1}
\end{equation}
and, as a consequence,
\begin{equation}\label{eq:global-error-bound}
\sup_{n \in \range{0}{\lfloor T / h \rfloor}} \norm[\big]{\btheta^{(n)} - \tilde{\btheta}^{(n)}} \leq \oC{cglobal} h^{\ord},
\end{equation}
where $\nC{clocal}$ and $\nC{cglobal}$ are some constants depending on $T$.
\end{theorem}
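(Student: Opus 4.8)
The plan is to prove the one-step (local) bound \eqref{eq:onestep-error-bound} first, then bootstrap it into the global bound \eqref{eq:global-error-bound} by a discrete Grönwall-type argument. The local bound is the heart of the matter: we must show that if we run the memoryless iteration $\tilde{\btheta}^{(n+1)} = \tilde{\btheta}^{(n)} + \sum_{j=1}^{\ord} h^j \nomemcoef{j}{n}(\tilde{\btheta}^{(n)})$, then plugging the resulting trajectory into the genuine HB update residual $\tilde{\btheta}^{(n+1)} - \tilde{\btheta}^{(n)} + h \sum_{k=0}^n \beta^k \nabla L^{(n-k)}(\tilde{\btheta}^{(n-k)})$ leaves something of size $O(h^{\ord+1})$. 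I would proceed as follows. The summand $\nabla L^{(n-k)}(\tilde{\btheta}^{(n-k)})$ must be re-expressed in terms of $\tilde{\btheta}^{(n)}$ rather than the past iterate $\tilde{\btheta}^{(n-k)}$. Writing $\tilde{\btheta}^{(n-k)} = \tilde{\btheta}^{(n)} - (\tilde{\btheta}^{(n)} - \tilde{\btheta}^{(n-k)})$ and telescoping the increments, the displacement $\tilde{\btheta}^{(n)} - \tilde{\btheta}^{(n-k)} = -\sum_{s=1}^{k}\sum_{j=1}^{\ord} h^j \nomemcoef{j}{n-s}(\tilde{\btheta}^{(n-s)})$ is $O(h)$; a Taylor expansion of $\nabla L^{(n-k)}$ about $\tilde{\btheta}^{(n)}$ to order $\ord$ (using that the losses are $2\ord$-times differentiable with uniformly bounded derivatives, so all Taylor remainders are controlled) converts everything into a polynomial in $h$ with coefficients built from $\nabla^{i+1} L^{(n-k)}$ evaluated at $\tilde{\btheta}^{(n)}$ contracted against products of past increments. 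The history coefficients $\histcoef{m}{n}{k}(\btheta)$ defined in \eqref{eq:jfjl} are engineered precisely so that $\tilde{\btheta}^{(n)} - \tilde{\btheta}^{(n-k)} = \sum_{m\ge 1} h^m \histcoef{m}{n}{k}(\tilde{\btheta}^{(n)}) + O(h^{\ord+1})$ — this identity itself needs an induction on $k$, again using Taylor expansion to re-anchor $\nomemcoef{j}{n-s}(\tilde{\btheta}^{(n-s)})$ at $\tilde{\btheta}^{(n)}$, which is the source of the $\nabla^i \nomemcoef{j}{n-s}$ terms in \eqref{eq:jfjl} and the multinomial combinatorics encoded by $\mathcal{K}_{i,l}$ and the $1/k_0!\cdots k_l!$ factors. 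Once that substitution is made, collecting powers of $h$ and matching against the definition \eqref{eq:nomem-coef-def} of $\nomemcoef{m}{n}$ shows that the coefficient of $h^m$ in the residual vanishes for every $m \le \ord$ — this is a bookkeeping verification that the recursion was set up correctly — leaving only the $O(h^{\ord+1})$ tail from the Taylor remainders and the truncation of the $\histcoef{}{}{}$-series. Uniformity in $n$ over $\range{0}{\lfloor T/h\rfloor}$ comes from the geometric factor $\beta^k$ in the HB sum, which makes $\sum_{k=0}^{n}\beta^k$ bounded by $(1-\beta)^{-1}$ uniformly, together with the a priori uniform bounds on the loss derivatives on $\mathcal{D}$.

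For the global bound, set $\be^{(n)} := \btheta^{(n)} - \tilde{\btheta}^{(n)}$ and subtract the true HB update \eqref{eq:hb-iter-minibatch} from the (approximately satisfied) memoryless one to obtain $\be^{(n+1)} = \be^{(n)} - h\sum_{k=0}^n \beta^k\bigl(\nabla L^{(n-k)}(\btheta^{(n-k)}) - \nabla L^{(n-k)}(\tilde{\btheta}^{(n-k)})\bigr) + \br^{(n)}$ with $\norm{\br^{(n)}} \le \oC{clocal} h^{\ord+1}$ by \eqref{eq:onestep-error-bound}. Using the uniform Lipschitz bound on $\nabla L^{(n-k)}$, the increment of the error is controlled by $\norm{\be^{(n+1)} - \be^{(n)}} \le h D_2 \sum_{k=0}^n \beta^k \norm{\be^{(n-k)}} + \oC{clocal}h^{\ord+1}$. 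Summing (and exchanging the order of summation, again exploiting $\sum_k \beta^k \le (1-\beta)^{-1}$) gives $\norm{\be^{(n)}} \le \frac{h D_2}{1-\beta}\sum_{m=0}^{n-1}\norm{\be^{(m)}} + n\,\oC{clocal}h^{\ord+1}$. Since $n \le T/h$, the inhomogeneous term is $\le T\,\oC{clocal}h^{\ord}$, and the discrete Grönwall inequality yields $\norm{\be^{(n)}} \le T\,\oC{clocal}h^{\ord}\exp\!\bigl(\tfrac{D_2}{1-\beta}T\bigr) =: \oC{cglobal}h^{\ord}$, uniformly over $n \in \range{0}{\lfloor T/h\rfloor}$. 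One subtlety: the argument requires all iterates $\btheta^{(n)}$ and $\tilde{\btheta}^{(n)}$ to stay in $\mathcal{D}$ so that the derivative bounds apply; on a compact time horizon this is standard — the memoryless iterates move by $O(h)$ per step hence stay within $O(T)$ of the start, and the error bound then keeps $\btheta^{(n)}$ close to them, so one runs the estimate on a slightly shrunk sub-domain or assumes $\mathcal{D}$ large enough as the statement implicitly does (or uses a stopping-time/bootstrap to make this rigorous).

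The main obstacle, and where the bulk of the technical work lies, is the combinatorial identity verifying that the $h$-expansion of the residual truly cancels through order $\ord$ — i.e.\ that the interlocking recursions \eqref{eq:nomem-coef-def} and \eqref{eq:jfjl} are the unique solution of the cancellation equations. The difficulty is twofold: (i) the recursion for $\histcoef{m}{n}{a}$ involves derivatives $\nabla^i \nomemcoef{j}{n-s}$ of the already-recursively-defined memoryless coefficients, so a naive induction on $m$ must carry along bounds on all derivatives up to order $\ord - m$ of each $\nomemcoef{j}{\cdot}$ (this is why $2\ord$ derivatives of the loss are needed, not just $\ord$); and (ii) the composition-of-Taylor-series bookkeeping — expanding $\nabla L^{(n-k)}\bigl(\tilde\btheta^{(n)} + (\text{series in }h)\bigr)$ and re-expanding the inner series — produces exactly the Faà di Bruno / multinomial structure that the sets $\mathcal{K}_{i,l}$ and the factorials $1/(i_0!\cdots i_l!)$ are designed to match, and checking this match cleanly requires care. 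I would organize this as a lemma (proved by a double induction, outer on the order $m$ and inner on the history depth $a$) isolating the algebraic identity from the analytic remainder bounds, and defer the lengthy manipulations to an appendix, stating in the main text only the resulting clean estimate \eqref{eq:onestep-error-bound}.
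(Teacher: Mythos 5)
Your strategy matches the paper's: Taylor-expand $\nabla L^{(n-k)}$ around $\tilde{\btheta}^{(n)}$, re-express the displacement via a recursively defined history expansion, verify by combinatorial bookkeeping that the coefficients of $h^j$ ($j\le\ord$) cancel against the memoryless coefficients, and then run a discrete Gr\"onwall argument. The observations about why $2\ord$ derivatives are needed and about the double induction are also on point. However, there is a genuine gap in the error analysis, and it is not cosmetic.

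You state that the displacement $\tilde{\btheta}^{(n)}-\tilde{\btheta}^{(n-k)}$ ``is $O(h)$'' and that the history expansion has remainder $O(h^{\ord+1})$. Both claims are false for $k$ of order $T/h$, and they cannot be made uniform in $k$. The displacement is $O(kh)$, which for $k\sim T/h$ is $O(T)=O(1)$ — not small — so the Taylor remainder of $\nabla L^{(n-k)}$ expanded to order $\ord$ is $O((kh)^{\ord})=O(k^{\ord}h^{\ord})$, not $O(h^{\ord+1})$. Correspondingly, the history coefficients themselves grow polynomially: $\histcoef{m}{n}{k}(\btheta)=O(k^m)$, so the truncated history expansion has remainder $O(k^{\ord}h^{\ord})$, not $O(h^{\ord+1})$. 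The argument survives only because these polynomial-in-$k$ terms are then summed against $\beta^k$, and
\begin{equation*}
\sum_{k=0}^{n}\beta^{k}\,k^{\ord}h^{\ord}=O(h^{\ord}),
\end{equation*}
which is a much stronger use of the decaying memory than the bound $\sum_{k}\beta^{k}\le(1-\beta)^{-1}$ that you invoke. In other words, the geometric factor is not merely a uniformity-of-constants device; it is what rescues an expansion whose per-$k$ remainders are $O(1)$ in the worst case. Without tracking the $k$-dependence through every bound (as the paper does via the $O(a^m)$ estimate on $\histcoef{m}{n}{a}$ and the $O(k^{r}h^{r})$ remainder in the history-expansion lemma) and then applying the polynomial-times-geometric summation, the local bound \eqref{eq:onestep-error-bound} does not follow. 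Your description would need to be revised to carry the $k$-dependence throughout and to invoke $\sum_{k}\beta^{k}k^{p}<\infty$ at the final step.

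Two smaller issues: the sign of the history expansion is reversed (it should be $\tilde{\btheta}^{(n-k)}-\tilde{\btheta}^{(n)}=\sum_{m}h^{m}\histcoef{m}{n}{k}+\cdots$, consistent with the definition \eqref{eq:jfjl}), and the history expansion is truncated — it is not an equality up to $O(h^{\ord+1})$ of an infinite sum, but an expansion up to order $r-1$ with remainder $O(k^{r}h^{r})$. The global Gr\"onwall step and the remark about iterates remaining in $\mathcal{D}$ are fine.
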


To build intuition we offer a sketch of out proof next; the full argument can be found in \cref{sec:proof-of-approx-th}.

\subsection{Proof Sketch}

By definition, \eqref{eq:onestep-error-bound} will follow from the following bound on the error introduced by removing memory:
\begin{equation}\label{eq:kwPSbq}
- \sum_{k = 0}^n \beta^k \nabla L^{(n - k)}\prn[\big]{\tilde{\btheta}^{(n - k)}} \overset{?}{=} \sum_{m = 0}^{\ord - 1} h^{m} \nomemcoef{m + 1}{n}\prn[\big]{\tilde{\btheta}^{(n)}} + O(h^{\ord}),
\end{equation}
that is, our task is to rewrite the left-hand side in such a way that instead of a function of all previous $\tilde{\btheta}^{(n - k)}$ it becomes just a function of $\tilde{\btheta}^{(n)}$.

It is shown by induction (\cref{lem:liuyf}) that $\nomemcoef{j}{n}(\btheta) = O(1)$ which implies $\tilde{\btheta}^{(n - k)} - \tilde{\btheta}^{(n)} = O(k h)$ (\cref{lem:qVNejQ}). This allows to claim that the remainder is $\text{Rem}^{(n - k)} = O(k^{\ord} h^{\ord})$ in the Taylor expansion
\begin{equation}\label{eq:lqyfdV}
\begin{aligned}
  \nabla L^{(n - k)}\prn[\big]{\tilde{\btheta}^{(n - k)}} = &\sum_{i = 0}^{\ord - 1} \frac{1}{i!} \nabla^{i + 1} L^{(n - k)}\prn[\big]{\tilde{\btheta}^{(n)}} \prn[\big]{\underbrace{\tilde{\btheta}^{(n - k)} - \tilde{\btheta}^{(n)}, \ldots, \tilde{\btheta}^{(n - k)} - \tilde{\btheta}^{(n)}}_{\text{$i$ times}}}\\
  &+ \text{Rem}^{(n - k)},
\end{aligned}
\end{equation}
where
\begin{multline*}
  \text{Rem}^{(n - k)} = \frac{1}{(\ord - 1) !} \int_0^1 (1 - t)^{\ord - 1}\\
  \times \nabla^{\ord + 1} L^{(n - k)}\prn[\big]{\tilde{\btheta}^{(n)} + t \prn[\big]{\tilde{\btheta}^{(n - k)} - \tilde{\btheta}^{(n)}}} \prn[\big]{\underbrace{\tilde{\btheta}^{(n - k)} - \tilde{\btheta}^{(n)}, \ldots, \tilde{\btheta}^{(n - k)} - \tilde{\btheta}^{(n)}}_{\text{$\ord$ times}}}\,\mathrm{d} t.
\end{multline*}
This is partial progress: we rewrote $\nabla L^{(n - k)}\prn[\big]{\tilde{\btheta}^{(n - k)}}$ in such a way that it depends only multi-linearly on (copies of) $\tilde{\btheta}^{(n - k)} - \tilde{\btheta}^{(n)}$, whereas all loss derivatives are evaluated at the current iterate $\tilde{\btheta}^{(n)}$. Now we need to express $\tilde{\btheta}^{(n - k)}$ through $\tilde{\btheta}^{(n)}$. This is done in the following lemma, which clarifies the meaning of the history terms $\histcoef{m}{n}{k}(\btheta)$:
\begin{lemma}\label{lem:history-expansion}
We have for $r \in \range{1}{\ord}$, $k \in \range{1}{n}$
  \begin{equation}\label{eq:lkjru}
    \tilde{\btheta}^{(n - k)} = \tilde{\btheta}^{(n)} + \sum_{m = 1}^{r - 1} h^m \histcoef{m}{n}{k}\prn[\big]{\tilde{\btheta}^{(n)}} + O(k^r h^r).
  \end{equation}
\end{lemma}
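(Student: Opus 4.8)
The plan is to prove \eqref{eq:lkjru} by induction on $r \in \range{1}{\ord}$, simultaneously for all admissible $k$ and $n$. The starting point is the telescoping identity coming from \eqref{eq:memoryless-approx-contrib}: writing $\Delta_s := \tilde{\btheta}^{(n - s)} - \tilde{\btheta}^{(n)}$ and summing the one-step increments $\tilde{\btheta}^{(n - s + 1)} - \tilde{\btheta}^{(n - s)} = \sum_{j = 1}^{\ord} h^j \nomemcoef{j}{n - s}(\tilde{\btheta}^{(n - s)})$ over $s \in \range{1}{k}$,
\begin{equation*}
  \Delta_k = - \sum_{s = 1}^k \sum_{j = 1}^{\ord} h^j \nomemcoef{j}{n - s}\prn[\big]{\tilde{\btheta}^{(n)} + \Delta_s}.
\end{equation*}
I will use throughout that $\norm{\Delta_s} = O(s h)$ (\cref{lem:qVNejQ}); that $\nomemcoef{j}{n}$ and its derivatives up to order $\ord$ are $O(1)$ uniformly in $n$ (\cref{lem:liuyf} together with the uniform $C^{2\ord}$ bounds on the losses, which is what keeps those derivatives defined and bounded); the elementary facts $\sum_{s = 1}^k s^a = O(k^{a + 1})$ and $k h \leq T$; and the growth bound $\histcoef{m}{n}{a}(\cdot) = O(a^m)$, which follows from \eqref{eq:jfjl} and $\nomemcoef{j}{n} = O(1)$ by induction on $m$. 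The base case $r = 1$ has an empty sum on the right of \eqref{eq:lkjru} and merely asserts $\Delta_k = O(k h)$, that is, it \emph{is} \cref{lem:qVNejQ}.

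For the inductive step $r \to r + 1$ (where $r + 1 \leq \ord$), the hypothesis gives $\Delta_s = \sum_{m = 1}^{r - 1} h^m \histcoef{m}{n}{s}(\tilde{\btheta}^{(n)}) + O(s^r h^r)$ for every $s$. In the telescoping identity I Taylor-expand each $\nomemcoef{j}{n - s}(\tilde{\btheta}^{(n)} + \Delta_s)$ about $\tilde{\btheta}^{(n)}$ up to the $(r - 1)$-st derivative, with integral remainder of size $O(\norm{\Delta_s}^r) = O(s^r h^r)$, then substitute the inductive expansion of $\Delta_s$ into each tensor slot and expand multilinearly. This produces a finite collection of \emph{principal monomials}
\begin{equation*}
  h^{j + m_1' + \cdots + m_i'}\, \nabla^i \nomemcoef{j}{n - s}(\tilde{\btheta}^{(n)}) \prn[\big]{\histcoef{m_1'}{n}{s}(\tilde{\btheta}^{(n)}), \ldots, \histcoef{m_i'}{n}{s}(\tilde{\btheta}^{(n)})}, \quad m_p' \geq 1,
\end{equation*}
together with three kinds of error terms: the Taylor remainders; the substitution remainders (a tensor slot filled by the $O(s^r h^r)$ tail of $\Delta_s$); and the principal monomials whose total order $j + m_1' + \cdots + m_i'$ exceeds $r$.

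The step then splits into error control and a combinatorial identity. For the former, each error term, after summing over $s \in \range{1}{k}$ and over the finitely many indices involved, has the shape $O(h^\alpha k^\beta)$ with $\alpha \geq r + 1$ and $\alpha \geq \beta$; since $k \geq 1$, $h \leq 1/2$ and $k h \leq T$, every such term is $O(h^{r + 1} k^{r + 1})$ — write $h^\alpha k^\beta = h^{r + 1} k^{r + 1} \cdot h^{\alpha - (r + 1)} k^{\beta - (r + 1)}$ and bound the trailing factors by a constant depending only on $r$, $\ord$, $T$, using $k^{\beta - (r+1)} \leq 1$ when $\beta \leq r + 1$ and $(k h)^{\beta - (r+1)} \leq T^{\beta - (r+1)}$ together with $h^{\alpha - \beta} \leq 1$ when $\beta > r + 1$. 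For the latter, I collect the remaining principal monomials by their total order $m \leq r$; grouping the tuple $(m_1', \ldots, m_i')$ by its distinct values and counting ordered arrangements with a multinomial coefficient rewrites the order-$m$ part of the sum as exactly $h^m$ times the right-hand side of the recursion \eqref{eq:jfjl} defining $\histcoef{m}{n}{k}(\tilde{\btheta}^{(n)})$. This is precisely the role of $\mathcal{K}_{i, l}$: it records the ways of filling the $i$ derivative slots of $\nabla^i \nomemcoef{j}{n - s}$ by history terms $\histcoef{1}{n}{s}, \ldots, \histcoef{l + 1}{n}{s}$ so that the resulting monomial has total order $m = i + j + l$. (Only history terms of order at most $m - 1 \leq r - 1$ ever enter, which are exactly those supplied by the inductive hypothesis.) Hence the order-$m$ part of the collected expansion equals $h^m \histcoef{m}{n}{k}(\tilde{\btheta}^{(n)})$ for every $m \leq r$, and with the error bound this is \eqref{eq:lkjru} at order $r + 1$.

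I expect the combinatorial identity to be the main obstacle: one must match the multilinear/multinomial expansion of the telescoping identity against \eqref{eq:jfjl} term by term — reconciling the derivative order $\nabla^i$, the admissible history-term orders, and the multinomial factors with the defining constraints of $\mathcal{K}_{i, l}$ — while keeping the remainder accounting honest. The delicate point in the accounting is that the individual increments $\Delta_s$ need not be small (only $\norm{\Delta_s} = O(s h) = O(T)$): it is the extra power of $h$ carried by each history term, not smallness of $\Delta_s$, that makes the expansion usable, which is why every Taylor expansion is truncated at a finite order tied to $r$. Since $\nomemcoef{j}{n}$ is itself assembled from up to roughly $\ord$ derivatives of the losses and is here differentiated $r - 1 \leq \ord - 1$ further times, this truncation is also what dictates the $2\ord$-differentiability hypothesis.
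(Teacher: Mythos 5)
Your proof is correct and follows essentially the same route as the paper's: telescope the memoryless iteration, Taylor-expand each $\nomemcoef{j}{n-s}$ about $\tilde{\btheta}^{(n)}$, substitute the inductive history expansion into the tensor slots, and match the resulting multilinear monomials against the recursion \eqref{eq:jfjl} via $\mathcal{K}_{i,l}$, using \cref{lem:liuyf}, \cref{lem:qVNejQ}, and $kh \leq T$ to control remainders. The one bookkeeping difference is that the paper uses strong induction with a degree-tiered truncation — expanding $\nomemcoef{j}{n-s}$ only to Taylor order $r - 1 - j$ and invoking the history expansion at level $r - j$, so each term carries exactly the remainder it needs — whereas you use simple induction with a uniform Taylor order $r-1$ and the level-$r$ history expansion for every $j$, then discard over-order monomials into the error afterward. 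Your version is valid (the level-$r$ statement dominates the lower ones because $h^m \histcoef{m}{n}{s} = O(s^m h^m)$ and $sh \leq T$), and your $\alpha \geq \beta$, $\alpha \geq r+1$ accounting for the error exponents is the right check; the paper's tiered version is marginally tighter and avoids the separate ``discard the over-order monomials'' step, but the two are equivalent after the error analysis.
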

The proof is by induction and given in \cref{sec:proof-of-approx-th}. Inserting this history expansion \eqref{eq:lkjru} into the main term in the right-hand side of \eqref{eq:lqyfdV} and carefully keeping track of the error gives
\begin{align*}
  &\nabla L^{(n - k)}\prn[\big]{\tilde{\btheta}^{(n - k)}}\\
  &\quad= \sum_{m = 0}^{\ord - 1} h^m \sum_{\substack{i, l \geq 0\\i + l = m}} \sum_{(i_0, \ldots, i_l) \in \mathcal{K}_{i, l}} \frac{1}{i_0! \ldots i_l!} \nabla^{i + 1} L^{(n - k)}\prn[\big]{\tilde{\btheta}^{(n)}} \prn[\bigg]{\underbrace{\histcoef{1}{n}{k}\prn[\big]{\tilde{\btheta}^{(n)}}}_{\text{$i_0$ times}}, \ldots, \underbrace{\histcoef{l + 1}{n}{k}\prn[\big]{\tilde{\btheta}^{(n)}}}_{\text{$i_l$ times}}}\\
  &\qquad + O(k^{\ord} h^{\ord}).
\end{align*}

It is left to sum this over $k$ with an exponentially decaying weight $\beta^k$.
The error $O(k^{\ord} h^{\ord})$ is polynomial in $k$ but it is not a problem because it will turn into $\sum_{k = 0}^n \beta^k O(k^{\ord} h^{\ord}) = O(h^{\ord})$ by exponential summation. The coefficients $\nomemcoef{m}{n}\prn[\big]{\btheta}$ are defined in such a way that the result is exactly \eqref{eq:kwPSbq} that we need, proving the local error bound \eqref{eq:onestep-error-bound}. A standard argument (\cref{lem:global-error-bound}) for converting a local error bound to a global error bound gives~\eqref{eq:global-error-bound}.

\section{Analyzing the Form of Memoryless Iteration Coefficients}\label{sec:analyzing-coef}

Let us write the first few terms $\nomemcoef{m}{n}(\btheta)$ from~\eqref{eq:nomem-coef-def} and describe the pattern that can be used to generate them. To declutter notation, we will omit the argument $\btheta$ since it will be fixed. By definition, the first memoryless iteration coefficient is just the exponential average of past gradients:
\begin{equation*}
\nomemcoef{1}{n} = - \sum_{b = 0}^n \beta^b \nabla L^{(n - b)}.
\end{equation*}

Using the definition, we can also write the following expression for $\nomemcoef{2}{n}$:
\begin{equation*}
\nomemcoef{2}{n} = - \beta \sum_{b = 0}^{n - 1} \beta^{b} \nabla^2 L^{(n - 1 - b)} \sum_{l' = 1}^{b + 1} \sum_{b' = 0}^{n - l'} \beta^{b'} \nabla L^{(n - l' - b')}.
\end{equation*}
This triple sum on the right can be generated as follows.
Write the rooted tree consisting of two nodes $(1, 2)$ with corresponding parents $(\varnothing, 1)$. Let us introduce a variable $l$ with value $1$, which we will call the memory distance variable. Corresponding to the root $1$, write the symbolic ``sum''
\begin{equation}\label{eq:lkjlkfjy}
\sum_{b = 0}^{n - l} \beta^b \nabla^2 L^{(n - l - b)} \sum_{l' = 1}^{b + 1} \square = \sum_{b = 0}^{n - 1} \beta^b \nabla^2 L^{(n - 1 - b)} \sum_{l' = 1}^{b + 1} \square
\end{equation}
and call $l'$ the new memory distance variable (it is now being summed over, so it does not have a fixed value). We write the second derivative tensor $\nabla^2 L$ (matrix in this case) because the number of children is $2$; later, the order of the derivative will be $\ell + 1$ where $\ell$ is the number of children.
Let us go down the tree and consider node $2$. Replace $\square$ in \eqref{eq:lkjlkfjy} with the corresponding expression:
\begin{equation*}
\sum_{b' = 0}^{n - l'} \beta^{b'} \nabla L^{(n - l' - b')}.
\end{equation*}
Again, the order of the derivative $\nabla L$ is equal to the number of children (zero) plus one. The upper limit of the sum is $n$ minus the current memory distance variable. We do not write a trailing sum at the end like in \eqref{eq:lkjlkfjy} because there are no children.

We will further illustrate this process by looking at $\nomemcoef{3}{n}$. Using $\eqref{eq:nomem-coef-def}$ again and after some algebra, we can write
\begin{align}
  \nomemcoef{3}{n} = &- \beta \sum_{b = 0}^{n - 1} \beta^b \nabla^2 L^{(n - 1 - b)} \sum_{l' = 1}^{b + 1} \sum_{b' = 0}^{n - l'} \beta^{b'} \nabla^2 L^{(n - l' - b')} \sum_{l'' = 1}^{l' + b'} \sum_{b'' = 0}^{n - l''} \beta^{b''} \nabla L^{(n - l'' - b'')}\label{eq:nomemcoef3first}\\
  &- \frac{\beta}{2} \sum_{b = 0}^{n - 1} \beta^b \nabla^3 L^{(n - b - 1)} \brk[\bigg]{\sum_{l' = 1}^{b + 1} \sum_{b' = 0}^{n - l'} \beta^{b'} \nabla L^{(n - l' - b')}, \sum_{l' = 1}^{b + 1} \sum_{b' = 0}^{n - l'} \beta^{b'} \nabla L^{(n - l' - b')}}.\label{eq:nomemcoef3second}
\end{align}

There are two non-isomorphic rooted trees with $3$ vertices. The first one is a ``chain'': the nodes $(1, 2, 3)$ have corresponding parents $(\varnothing, 1, 2)$. Let us now describe a sum that corresponds to this tree. As previously, the current memory distance is $l = 1$, and the root $1$ (having one child) generates a symbolic expression
\begin{equation*}
\sum_{b = 0}^{n - 1} \beta^b \nabla^2 L^{(n - l - b)} \sum_{l' = 1}^{b + 1} \square = \sum_{b = 0}^{n - 1} \beta^b \nabla^2 L^{(n - 1 - b)} \sum_{l' = 1}^{b + 1} \square.
\end{equation*}
The current memory distance variable is now $l'$ (with no fixed value).
Next comes node $2$ with one child, generating a symbolic expression
\begin{equation*}
\sum_{b' = 0}^{n - l'} \beta^{b'} \nabla^2 L^{(n - l' - b')} \sum_{l'' = 1}^{l' + b'} \square.
\end{equation*}
The current memory distance is $l''$ (with no fixed value). Finally, node $3$ has no children, so it closes the sum with the expression
\begin{equation*}
\sum_{b'' = 0}^{n - l''} \beta^{b''} \nabla L^{(n - l'' - b'')}.
\end{equation*}
Up to coefficient $-\beta$ in front, we have obtained \eqref{eq:nomemcoef3first}.

The second rooted tree with $3$ vertices is the tree consisting of nodes $(1, 2, 3)$ with corresponding parents $(\varnothing, 1, 1)$ (root with two children). The initial memory distance variable is denoted as $l$ and has value $1$. The root has two children, so the order of the derivative corresponding to it will be $3$, and we will write two $\square$ signs corresponding to two subtrees:
\begin{equation*}
\sum_{b = 0}^{n - l} \beta^b \nabla^3 L^{(n - b - 1)} \brk[\bigg]{\sum_{l' = 1}^{b + 1} \square, \sum_{l' = 1}^{b + 1} \square}.
\end{equation*}
Then we replace the first $\square$ with the expression corresponding to node $2$ (with initial memory variable $l'$):
\begin{equation*}
\sum_{b' = 0}^{n - l'} \beta^{b'} \nabla L^{(n - l' - b')},
\end{equation*}
and the second $\square$ with the same expression corresponding to node $3$. Up to coefficient $- \beta / 2$ in front, we have obtained \eqref{eq:nomemcoef3second}. The reason for the division by $2$ is that this tree has a symmetry coefficient $2$.

This consideration of special cases highlights a pattern in how the memoryless iteration coefficients are structured. The following result is a formalization of this pattern.

\begin{theorem}[The form of memoryless iteration coefficients]
  \label{th:nomem-coef}
  For $m \geq 2$, the memoryless iteration coefficient $\nomemcoef{m}{n}$ is equal (up to a coefficient $- \beta$) to a sum over the set $\tilde{\mathcal{A}}[m]$ of unlabeled rooted trees with $m$ vertices:
\begin{equation*}
\nomemcoef{m}{n} = - \beta \sum_{\tau \in \tilde{\mathcal{A}}[m]} \frac{1}{\sigma(\tau)} \E{\tau}{1}{n},
\end{equation*}
where $\sigma(\tau)$ is the symmetry coefficient of the tree (background in \cref{sec:Technical Background and Notation}).
The expression $\EDef{\tau}{l}{n} := \E{\tau}{l}{n}(\beta)$, depending on the iteration number $n$ and memory distance variable $l$, is defined recursively by
\begin{equation}\label{eq:e-tau-def}
\E{\tau}{l}{n} = \sum_{b = 0}^{n - l} \beta^b \nabla^{\ell + 1} L^{(n - l - b)} \brk[\bigg]{\sum_{l_1 = 1}^{l + b} \E{\tau_1}{l_1}{n}, \ldots, \sum_{l_{\ell} = 1}^{l + b} \E{\tau_{\ell}}{l_{\ell}}{n}}, \quad l \in \range{1}{n},
\end{equation}
where $(\tau_1, \ldots, \tau_{\ell})$ are the subtrees rooted at the children of the root of $\tau$ (with $\ell \in \mathbb{Z}_{\geq 0}$). In particular, $\E{\rootedtree[]}{l}{n} = \sum_{b = 0}^{n - l} \beta^b \nabla L^{(n - l - b)}$.
\end{theorem}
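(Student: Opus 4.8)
The plan is to argue by strong induction on $m$, proving \cref{th:nomem-coef} jointly with a companion formula for the history terms: for all $n$ and all $a \in \range{1}{n}$,
\[
\histcoef{m}{n}{a} = \sum_{\tau \in \tilde{\mathcal{A}}[m]} \frac{1}{\sigma(\tau)} \sum_{l = 1}^{a} \E{\tau}{l}{n}, \qquad m \geq 1 .
\]
For $m = 1$ this is immediate from \eqref{eq:jfjl}: $\histcoef{1}{n}{a} = - \sum_{s = 1}^{a} \nomemcoef{1}{n - s} = \sum_{s = 1}^{a} \sum_{b = 0}^{n - s} \beta^b \nabla L^{(n - s - b)} = \sum_{l = 1}^{a} \E{\rootedtree[]}{l}{n}$, and the explicit computations of $\nomemcoef{2}{n}$ and $\nomemcoef{3}{n}$ in the main text are further base-case checks. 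The induction is organized around the fact that both recursions are triangular in $m$: the constraints defining $\mathcal{K}_{i, l}$ force every history term appearing in \eqref{eq:nomem-coef-def} and in \eqref{eq:jfjl} to have index $< m$, while the memoryless coefficients $\nomemcoef{j}{n - s}$ appearing in \eqref{eq:jfjl} have $j \leq m$. So at step $m$ I would first derive the $\nomemcoef{m}{n}$ formula from \eqref{eq:nomem-coef-def} and the companion formula at orders $< m$, and then the companion formula at order $m$ from \eqref{eq:jfjl}, the $\nomemcoef{}$-formula at orders $\leq m$, and the companion formula at orders $< m$.

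For the $\nomemcoef{m}{n}$ step, substitute $\histcoef{j}{n}{k} = \sum_{\tau' \in \tilde{\mathcal{A}}[j]} \sigma(\tau')^{-1} \sum_{l = 1}^{k} \E{\tau'}{l}{n}$ into \eqref{eq:nomem-coef-def}. Using symmetry of the loss tensors, every summand has the shape $\nabla^{\ell + 1} L^{(n - k)}$ ($\ell$ equal to the number of tensor slots) applied to arguments of the form $\sum_{l_r = 1}^{k} \E{\tau_r}{l_r}{n}$, one per child. Regrouping the ordered choices $(\tau_1, \dots, \tau_\ell)$ into multisets and invoking the recursive identity \eqref{eq:recursive-formula-for-sym-coef}, the multinomial prefactor $1/(i_0! \cdots i_l!)$ together with the subtree symmetry coefficients collapses to $1/\sigma(\tau)$, where $\tau$ is the tree obtained by hanging those subtrees under a fresh root. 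Finally the substitution $k = b + 1$ converts $\sum_{k = 1}^{n} \beta^{k}$ with loss argument $n - k$ and child ranges $[1, k]$ into $\beta \sum_{b = 0}^{n - 1} \beta^{b}$ with loss argument $n - 1 - b$ and child ranges $[1, 1 + b]$ --- exactly \eqref{eq:e-tau-def} at $l = 1$. Thus $\nomemcoef{m}{n} = - \beta \sum_{\tau \in \tilde{\mathcal{A}}[m]} \sigma(\tau)^{-1} \E{\tau}{1}{n}$, with the leading $\beta$ and the index shifts all falling out of this reindexing.

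The companion step is the delicate one, and the part I expect to be the main obstacle, for two reasons. First, \eqref{eq:jfjl} involves the derivatives $\nabla^i \nomemcoef{j}{n - s}$, so deeper trees must be created by grafting: I would prove a differentiation lemma stating that $\nabla \E{\tau}{l}{n}$, regarded as a function of $\btheta$ through the loss tensors, equals the sum over all trees obtained from $\tau$ by attaching one new child-slot at a vertex (with the correct multiplicities), and iterating this identifies $\nabla^i \nomemcoef{j}{n - s}$ contracted against the history-term arguments of \eqref{eq:jfjl} with a sum over trees of size $m$, the prefactors $1/(k_0! \cdots k_l!)$ supplying precisely the normalization that, again via \eqref{eq:recursive-formula-for-sym-coef}, reassembles into $1/\sigma(\tau)$. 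Second, one must handle the index shift $n \to n - s$: $\E{\tau}{1}{n - s}$ differs from $\E{\tau}{1 + s}{n}$ by lower-order tree expressions with truncated root ranges --- e.g. $\E{\rootedtree[[]]}{1}{n - s} = \E{\rootedtree[[]]}{1 + s}{n} - \sum_{b} \beta^{b} \nabla^2 L^{(n - s - 1 - b)}[\histcoef{1}{n}{s}]$ --- and these corrections telescope against the explicit $\histcoef{j'}{n}{s}$ contributions in \eqref{eq:jfjl}, so that after resummation everything is expressed at index $n$ with the root memory distance summed over $[1, a]$. The simultaneous matching of (i) the grafting combinatorics produced by $\nabla^i$, (ii) the symmetry-coefficient cancellations, and (iii) these index-shift corrections is the crux; exponential summation over $k$ and $s$ and the remaining reindexing are routine. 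The closing identity $\E{\rootedtree[]}{l}{n} = \sum_{b = 0}^{n - l} \beta^b \nabla L^{(n - l - b)}$ is \eqref{eq:e-tau-def} with $\ell = 0$ and the empty-product convention.
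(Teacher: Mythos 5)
Your proposal is correct and follows essentially the same route as the paper's proof: a joint strong induction on $m$ over the pair of claims $\nomemcoef{m}{n} = -\beta\sum_{\tau}\sigma(\tau)^{-1}\E{\tau}{1}{n}$ and $\histcoef{m}{n}{a} = \sum_{\tau}\sigma(\tau)^{-1}\sum_{l=1}^a\E{\tau}{l}{n}$, with the triangular structure you observe in \eqref{eq:nomem-coef-def}--\eqref{eq:jfjl} making the induction well-founded, the $\nomemcoef{m}{n}$ step reducing to a multinomial-versus-$\sigma(\tau)$ bookkeeping via \eqref{eq:recursive-formula-for-sym-coef}, and the history-term step hinging on exactly the two ingredients you flag as the crux. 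Your ``differentiation lemma'' (derivatives of $\E{\tau_0}{a}{n-l}$ contracted against tree arguments enumerate all ways to graft those subtrees onto $\tau_0$, with the $1/(k_0!\cdots k_l!)$ factors recombining into $1/\sigma(\tau)$) is precisely what the paper encodes in \cref{lem:kjjhhag} via a counting over set partitions of $\range{1}{m}$; and your ``index-shift telescoping'' identity relating $\E{\tau}{1}{n-s}$ to $\E{\tau}{1+s}{n}$ plus lower-order corrections is exactly \cref{lem:sum-over-markings-of-one-tree}, stated there in the language of markings (admissible cuts). The specific example you compute for $\rootedtree[[]]$ is one instance of that lemma, and the ``corrections telescope'' observation is formalized by the identity \eqref{eq:nJdufq} together with those two lemmas producing \eqref{eq:aux-fact-in-nonmem-coef}.
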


The $\E{\tau}{l}{n}$ expression is an exact translation of our informal observations above into the mathematical language. For example,
\begin{equation*}
\E{\rootedtree[[]]}{l}{n} = \sum_{b = 0}^{n - l} \beta^b \nabla^2 L^{(n - l - b)} \sum_{l_1 = 1}^{l + b} \sum_{b_1 = 0}^{n - l_1} \beta^{b_1} \nabla L^{(n - l_1 - b_1)}.
\end{equation*}
We outline the main ideas underlying the proof next, but the formal details are deferred to \cref{sec:omitted-details-nomem-coef}.

\subsection{Proof Sketch}\label{sec:memoryless-form-prelim}

Before sketching the argument for establishing \cref{th:nomem-coef}, we introduce some preliminary results. We employ the notation and concepts from \cref{sec:Technical Background and Notation}.

\subsubsection{Auxiliary Quantity Related to \eqref{eq:e-tau-def}}\label{sec:auxiliary-quantity-e-tau}

Fix a labeled rooted tree $\tau$ in $\mathcal{A}\range{1}{m}$, and choose a marking $\mrk \in \mrkset_{\tau}$, where the marked vertices are $v_1, \ldots, v_{\abs{\mrk}}$ with corresponding subtrees $\tau^{\mathcal{m}}_1, \ldots, \tau^{\mathcal{m}}_{\abs{\mrk}}$ and the remaining subtree $\tau_0^{\mrk}$.
Let $p_1, \ldots, p_{\abs{\mrk}}$ be the parents of the marked vertices (not necessarily distinct). Consider the derivative of the loss corresponding to $p_1$ in the symbolic expression for $\E{\tau_0^{\mrk}}{a}{n - l}$. Add one to the order of the derivative and add $\sum_{l' = 1}^l \E{\tau_1^{\mrk}}{l'}{n}$ as an argument. Continue this process (possibly increasing the order of the same derivative more than once) until all marked vertices are processed. We will denote the resulting expression by
\begin{equation*}
\EmarkDef{\tau}{\mrk}{a}{n - l \to n}.
\end{equation*}

For example, consider the tree $\tau$ consisting of the root and two leaves, with one of the leaves marked:
\begin{equation*}
\bigrootedtree[1[2][3,text=red]]
\end{equation*}

Consider the loss derivative corresponding to vertex $p_1 = 1$ in the symbolic expression
\begin{equation*}
\E{\rootedtree[[]]}{a}{n - l} = \sum_{b = 0}^{n - l - a} \beta^b \underbrace{\nabla^2 L^{(n - l - b - a)}} \sum_{l_1 = 1}^{a + b} \sum_{b_1 = 0}^{n - l - l_1} \beta^{b_1} \nabla L^{(n - l - l_1 - b_1)}.
\end{equation*}
Increase the order of the derivative and insert $\sum_{l' = 1}^l \E{\rootedtree[]}{l'}{n}$, giving
\begin{equation*}
\sum_{b = 0}^{n - l - a} \beta^b \nabla^3 L^{(n - l - b - a)} \brk[\bigg]{ \sum_{l_1 = 1}^{a + b} \sum_{b_1 = 0}^{n - l - l_1} \beta^{b_1} \nabla L^{(n - l - l_1 - b_1)}, \sum_{l' = 1}^l \E{\rootedtree[]}{l'}{n}}.
\end{equation*}

\subsubsection{Useful Properties of $\E{\tau}{l}{n}$}

We give two lemmas about $\E{\tau}{l}{n}$ that we will use in the argument. Both lemmas are proven in \cref{sec:omitted-details-nomem-coef}. The following important fact is the reason why the induction step in the main argument goes through.

\begin{lemma}\label{lem:sum-over-markings-of-one-tree}
Let $m \geq 2$. For any $\tau \in \mathcal{A}\range{1}{m}$ we have
\begin{equation}\label{eq:sum-over-markings-of-one-tree}
\E{\tau}{l + a}{n} = \sum_{\mrk \in \mrkset_{\tau}} \Emark{\tau}{\mrk}{a}{n - l \to n}.
\end{equation}
\end{lemma}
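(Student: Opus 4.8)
The plan is to prove \eqref{eq:sum-over-markings-of-one-tree} by induction on the number of vertices $m = \abs{\tau}$, unwinding the recursive definition \eqref{eq:e-tau-def} of $\E{\tau}{l}{n}$ one level at a time. The base case $m = 1$ (or the first genuine case $m = 2$, since the statement requires $m \geq 2$) is a direct computation: write out $\E{\tau}{l + a}{n}$ using \eqref{eq:e-tau-def} and compare with the two markings of the tree — the empty marking $\mrk = (\tau, \varnothing)$ and the one where the unique non-root vertex is marked — checking that $\Emark{\tau}{\mrk}{a}{n - l \to n}$ for these two markings sum to the desired expression. This is where one pins down exactly how the shift $n - l \to n$ interacts with the summation bounds $\sum_{l_1 = 1}^{l+b}$ in \eqref{eq:e-tau-def} versus $\sum_{l' = 1}^{l}$ in the definition of $\Emark{\cdot}{\cdot}{\cdot}{\cdot}$ from \cref{sec:auxiliary-quantity-e-tau}.

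For the inductive step, let $\tau = [\tau_1, \ldots, \tau_\ell]$ with root having children-subtrees $\tau_1, \ldots, \tau_\ell$. Apply \eqref{eq:e-tau-def} to expand $\E{\tau}{l+a}{n}$: this produces an outer sum $\sum_{b=0}^{n - l - a} \beta^b \nabla^{\ell+1} L^{(n-l-b-a)}$ applied to the $\ell$-tuple of inner sums $\sum_{l_j = 1}^{l + a + b} \E{\tau_j}{l_j}{n}$. The key algebraic move is to split each inner summation range $\range{1}{l+a+b}$ as $\range{1}{l} \sqcup \range{l+1}{l+a+b}$; on the second piece, reindex $l_j = l + l_j'$ with $l_j' \in \range{1}{a+b}$ and recognize $\sum_{l_j'=1}^{a+b}\E{\tau_j}{l+l_j'}{n}$ — but $a+b$ here plays the role that lets us apply the induction hypothesis to $\tau_j$ (which has fewer vertices) with parameters $(l, a) \rightsquigarrow (l, b)$-type bookkeeping, turning it into $\sum_{\mrk_j \in \mrkset_{\tau_j}} \Emark{\tau_j}{\mrk_j}{b}{n-l\to n}$. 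Expanding the product over $j = 1, \ldots, \ell$ of the two-term splits (``stay on $\range{1}{l}$'' = mark the child's root and attach $\sum_{l'=1}^l\E{\tau_j}{l'}{n}$ to the root's derivative; ``go to $\range{l+1}{\cdot}$'' = recurse into $\tau_j$) generates exactly a sum over all markings $\mrk$ of $\tau$: a marking of $\tau$ is precisely a choice, for each child subtree, of either marking that child's root outright, or descending and choosing a marking of that subtree not touching its root. Matching the combinatorial data on both sides — the derivative order at the root becoming $\ell + 1 + \#\{\text{children marked at their root}\}$, the arguments $\sum_{l'=1}^l \E{\tau_j}{l'}{n}$ appended for each such child, and the residual $\Emark{\tau_j}{\mrk_j}{b}{n-l\to n}$ factors — reconstitutes $\sum_{\mrk \in \mrkset_\tau} \Emark{\tau}{\mrk}{a}{n-l\to n}$ by the very construction of $\Emark{\tau}{\mrk}{a}{n-l\to n}$ in \cref{sec:auxiliary-quantity-e-tau}.

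The main obstacle I anticipate is purely notational/bookkeeping rather than conceptual: carefully tracking how the ``shift parameter'' in $\Emark{\tau}{\mrk}{a}{n-l\to n}$ propagates when one descends into a child subtree — i.e., verifying that the induction hypothesis applied to $\tau_j$ really is invoked with the correct arguments so that the nested summation bounds match after reindexing. In particular one must check that when a child's root is \emph{not} marked, the bound $\sum_{l_j=l+1}^{l+a+b}$ after the substitution $l_j = l + l_j'$ gives $l_j' \in \range{1}{a+b}$, and that $a + b$ is exactly the ``$l + a$''-analogue needed to feed the hypothesis at the lower level (here $b$ is the new ``$a$'' and the old accumulated shift becomes the new ``$l$''); and that when the root \emph{is} marked, the leftover bound $\sum_{l_j=1}^{l}$ matches the $\sum_{l'=1}^{l}$ prescription in the definition of $\Emark{\cdot}{\cdot}{\cdot}{\cdot}$. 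A secondary subtlety is ensuring the correspondence between markings of $\tau$ and tuples of (child-marking-or-mark-the-child's-root) decisions is genuinely a bijection, which follows from the defining property of markings (no two marked vertices on a root-to-leaf path), so that summing over all such tuples is the same as summing over $\mrkset_\tau$ with no double-counting or omissions.
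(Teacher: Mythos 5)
Your proposal is correct and takes essentially the same approach as the paper's proof — same induction on $m$, same range split $\range{1}{l + a + b} = \range{1}{l} \sqcup \range{l + 1}{l + a + b}$, same reindexing, same correspondence between markings of $\tau$ and per-child decisions (mark the child's root, or recurse into its subtree), and same application of the induction hypothesis to the child subtrees — just run in the opposite direction: the paper starts from $\sum_{\mrk \in \mrkset_{\tau}} \Emark{\tau}{\mrk}{a}{n - l \to n}$, splits on $v_1 \in \mrk$ versus $v_1 \notin \mrk$, applies the hypothesis to $\tau_1$, recombines via $\sum_{l' = 1}^{a + b} \E{\tau_1}{l + l'}{n} + \sum_{l' = 1}^l \E{\tau_1}{l'}{n} = \sum_{l' = 1}^{l + a + b} \E{\tau_1}{l'}{n}$, and iterates over $v_2, \ldots, v_{\ell}$, whereas you start from $\E{\tau}{l + a}{n}$ and expand all children simultaneously in a product. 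One small slip worth fixing: the derivative order at the root is $\ell + 1$ on both sides regardless of the marking (removing a marked child from $\tau_0^{\mrk}$ decreases the order by one in $\E{\tau_0^{\mrk}}{a}{n - l}$, and re-attaching $\sum_{l' = 1}^l \E{\tau_j}{l'}{n}$ as an argument restores it), not ``$\ell + 1 + \#\{\text{children marked at their root}\}$'' — but this misstatement is cosmetic and does not affect the validity of your argument.
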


The reason why we invoke marked trees is that differentiation naturally creates such trees. The following \lcnamecref{lem:kjjhhag} establishes a connection between the sum over marked trees in \eqref{eq:sum-over-markings-of-one-tree} and the high-order derivative tensor which arises in the main argument for \cref{th:nomem-coef}.

\begin{lemma}\label{lem:kjjhhag}
  We have for $m \geq 2$
\begin{align*}
  &\sum_{i = 0}^{m - 1} \sum_{j = 1}^{m - i} \sum_{(k_0, \ldots, k_{m - j - i}) \in \mathcal{K}_{i, m - j - i}} \frac{1}{k_0! \ldots k_{m - j - i}!} \sum_{\tau_0 \in \tilde{\mathcal{A}}[j]} \frac{1}{\sigma(\tau_0)} \\
  &\qquad \times \nabla^i \E{\tau_0}{a}{n - l} \brk[\bigg]{
  \underbrace{\sum_{\tau \in \tilde{\mathcal{A}}[1]}\frac{1}{\sigma(\tau)} \sum_{l' = 1}^l \E{\tau}{l'}{n}}_{\text{$k_0$ times}},
  \ldots,
  \underbrace{\sum_{\tau \in \tilde{\mathcal{A}}[m - j - i + 1]}\frac{1}{\sigma(\tau)} \sum_{l' = 1}^l \E{\tau}{l'}{n}}_{\text{$k_{m - j - i}$ times}}
  } = \sum_{\tau \in \tilde{\mathcal{A}}[m]} \frac{1}{\sigma(\tau)} \E{\tau}{l + a}{n}.
\end{align*}
\end{lemma}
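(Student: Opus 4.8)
The plan is to prove the identity by induction on $m$, using Lemma \ref{lem:sum-over-markings-of-one-tree} as the engine that converts the left-hand side into a sum over marked labeled trees, and then repackaging that sum as the right-hand side. The cleanest route is to pass from unlabeled to labeled trees throughout: since there are $m!/\sigma(\tau)$ labeled trees on $\range{1}{m}$ in the isomorphism class of an unlabeled $\tau$, each factor $\frac{1}{\sigma(\tau)}$ on both sides can be rewritten as $\frac{1}{m!}$ times a sum over labeled representatives (being careful that $\E{\tau}{l}{n}$ depends only on the isomorphism class, so the labeled sum is just a multiplicity count). After this translation the right-hand side is $\frac{1}{m!}\sum_{\tau\in\mathcal{A}\range{1}{m}}\E{\tau}{l+a}{n}$, and by Lemma \ref{lem:sum-over-markings-of-one-tree} this equals $\frac{1}{m!}\sum_{\tau\in\mathcal{A}\range{1}{m}}\sum_{\mrk\in\mrkset_\tau}\Emark{\tau}{\mrk}{a}{n-l\to n}$.

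Next I would organize the double sum on the right by the structure a marking imposes. A marking $\mrk=(\tau_0^\mrk,\{\tau_1^\mrk,\dots,\tau_{|\mrk|}^\mrk\})$ splits the labeled tree into a "base" subtree $\tau_0^\mrk$ containing the root and a collection of subtrees hanging off it at the marked vertices; the quantity $\Emark{\tau}{\mrk}{a}{n-l\to n}$ is, by its very definition in \S\ref{sec:auxiliary-quantity-e-tau}, obtained from $\E{\tau_0^\mrk}{a}{n-l}$ by raising the derivative order at each parent of a marked vertex and inserting the corresponding $\sum_{l'=1}^l\E{\tau_t^\mrk}{l'}{n}$. So I would re-sum: first choose the base tree $\tau_0$ (say on $j$ vertices), then choose how many marked subtrees attach and at which vertices of $\tau_0$, then choose the subtrees themselves. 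Summing over which vertex of $\tau_0$ each marked subtree attaches to, together with the $\frac{1}{i!}$-type multinomial bookkeeping coming from $\nabla^i$ acting symmetrically, is exactly what produces the $\mathcal{K}_{i,m-j-i}$ index set and the coefficients $\frac{1}{k_0!\cdots k_{m-j-i}!}$: $i$ is the total number of marked subtrees, $j$ the size of the base, and the $k_t$'s record how the marked subtrees distribute among the slots created at the base's vertices. Each "bundle" of subtrees summed over all unlabeled shapes with a given vertex count $s$ contributes $\sum_{\tau\in\tilde{\mathcal{A}}[s+1]}\frac{1}{\sigma(\tau)}\sum_{l'=1}^l\E{\tau}{l'}{n}$ after collapsing labeled representatives back to unlabeled ones — which is precisely the bracketed factors on the left-hand side.

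The remaining gap is the inductive part: inside $\E{\tau_0}{a}{n-l}$ there is still a nested occurrence of the same structure one level down (the children of $\tau_0$'s root each carry their own subtree, summed with their own memory-distance variable), so after peeling off one layer of the recursion \eqref{eq:e-tau-def} one must apply the induction hypothesis to each child subtree before the expressions on the two sides literally match. I expect the bookkeeping of the derivative order — showing that raising the order of $\nabla^{\ell+1}$ at the root of $\tau_0$ once per attached marked subtree, plus the pre-existing $\ell$ children, reassembles into $\nabla^{i+1}$ with the multinomial weights of $\mathcal{K}_{i,\ell}$ — to be the main obstacle, since it is a genuinely multi-index combinatorial identity and it has to interlock correctly with the $\sum_{l_1=1}^{l+b}$-style memory-distance sums so that the shifts $n-l\to n$ are consistent. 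Once that layer-by-layer matching is verified, the base case $m=2$ is a direct computation (the only tree is $\rootedtree[[]]$, the only markings are the empty one and the one marking the single leaf), and the induction closes.
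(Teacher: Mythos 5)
Your opening moves are right: pass to labeled trees using $m!/\sigma(\tau)$, use \cref{lem:sum-over-markings-of-one-tree} to rewrite the right-hand side as $\frac{1}{m!}\sum_{\tau\in\mathcal{A}\range{1}{m}}\sum_{\mrk\in\mrkset_\tau}\Emark{\tau}{\mrk}{a}{n-l\to n}$, and recognize that the $\mathcal{K}_{i,m-j-i}$ indices and factorials come from a partition-counting argument. These pieces all appear in the paper's proof.

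However, the proposed induction on $m$ is unnecessary, and the place you flag as "the remaining gap" is actually a misreading of the mechanism. You write that the marked subtrees attach "at the root of $\tau_0$" and that one must therefore peel off layers of the recursion \eqref{eq:e-tau-def} and apply an induction hypothesis to the children. Neither is the case. A marked subtree in a marking $\mrk$ can hang off \emph{any} vertex of the base tree $\tau_0^{\mrk}$, not only the root; and the reason $\nabla^i\E{\tau_0}{a}{n-l}\brk{\cdots}$ on the left captures all such attachments at once is purely the product rule. The quantity $\E{\tau_0}{a}{n-l}$ is, term by term, a nested composition of the factors $\nabla^{(\cdot)}L^{(\cdot)}$, one per vertex of $\tau_0$; applying $\nabla^i$ and inserting $i$ arguments expands, via Leibniz/Fa\`a di Bruno, into a sum over all ways to distribute the $i$ insertions among those $|V_0|$ factors — equivalently, a sum over all maps from $\{v_1,\ldots,v_i\}$ to $V_0$, i.e.\ all parent assignments. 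That single observation, together with the fact that every marked labeled tree on $\range{1}{m}$ arises exactly once from a unique choice of (partition, trees $\tau_0,\ldots,\tau_i$, parent assignment), turns the left-hand side directly into $\sum_{\tau\in\mathcal{A}\range{1}{m}}\sum_{\mrk\in\mrkset_\tau}\Emark{\tau}{\mrk}{a}{n-l\to n}$ with no induction on $m$ and no descent into the subtree recursion. The only remaining work is the routine bookkeeping of how many ordered set-partitions realize a given vertex-count profile $(k_0,\ldots,k_{m-j-i})$, which produces the multinomial coefficients; the memory-distance shifts $n-l\to n$ are already built into the definitions and require no further matching.

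So: right tools, right translation to labeled trees, but the wrong structural picture of where the $i$ insertions go, and an induction that the argument does not need. Replace the induction with the observation that $\nabla^i$ enumerates parent assignments over \emph{all} vertices of $\tau_0$ and the proof closes in one pass.
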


\begin{remark}
Note that from the definition of $\E{\tau}{l}{n}$,
\begin{equation}\label{eq:nJdufq}
  \nabla^{\ell + 1} L^{(n - l)}
  \brk[\bigg]{
    \sum_{l' = 1}^l \E{\tau_1}{l'}{n},
    \ldots,
    \sum_{l' = 1}^l \E{\tau_\ell}{l'}{n}
  } + \beta \E{\tau}{l + 1}{n} = \E{\tau}{l}{n}.
\end{equation}

Replacing $\E{\tau}{l + 1}{n}$ with $\sum_{\mrk \in \mrkset_{\tau}} \Emark{\tau}{\mrk}{1}{n - l \to n}$ by \cref{lem:sum-over-markings-of-one-tree} and setting $l = 1$, we get the following alternative recursion:
\begin{equation*}
\E{\tau}{1}{n} = \nabla^{\ell + 1} L^{(n - 1)} \brk[\big]{\E{\tau_1}{1}{n}, \ldots, \E{\tau_{\ell}}{1}{n}} + \beta \sum_{\mrk \in \mrkset_{\tau}} \Emark{\tau}{\mrk}{1}{n - 1 \to n}.
\end{equation*}
The advantage of this form is that the memory distance variable is always $1$, but the disadvantage is that the right-hand side contains a sum over all markings of $\tau$.
\end{remark}

\subsubsection{Proof Sketch of \cref{th:nomem-coef}}

The strategy is to prove the following two statements simultaneously by induction over $m \geq 2$:
\begin{align}
  \nomemcoef{m}{n} &= - \beta \sum_{\tau \in \tilde{\mathcal{A}}[m]} \frac{1}{\sigma(\tau)} \E{\tau}{1}{n},\quad\text{and}
                     \label{eq:nomemcoef-m-n-ind}
  \\
  \histcoef{m}{n}{k} &= \sum_{l = 1}^k \sum_{\tau \in \tilde{\mathcal{A}}[m]} \frac{1}{\sigma(\tau)} \E{\tau}{l}{n}.\label{eq:histcoef-m-n-k-ind}
\end{align}

For $m = 2$, they are already verified above. Note also that the second statement holds for $m = 1$ as well:
\begin{equation*}
\histcoef{1}{n}{k} = \sum_{l = 1}^k \E{\rootedtree[]}{l}{n}.
\end{equation*}

By definition,
\begin{align*}
  &\nomemcoef{m}{n}
  =
  - \beta \sum_{b = 0}^{n - 1} \beta^b \sum_{\ell = 0}^{m - 1} \sum_{(i_0, \ldots, i_{m - 1 - \ell}) \in \mathcal{K}_{\ell, m - 1 - \ell}} \frac{1}{i_0! \ldots i_{m - 1 - \ell}!} \times\\
  &\qquad \qquad \times \nabla^{\ell + 1} L^{(n - 1 - b)} \prn[\bigg]{\underbrace{\histcoef{1}{n}{b + 1}}_{\text{$i_0$ times}}, \ldots, \underbrace{\histcoef{m - \ell}{n}{b + 1}}_{\text{$i_{m - 1 - \ell}$ times}}}
\end{align*}
Insert the induction hypothesis (recall that \eqref{eq:histcoef-m-n-k-ind} holds for $m = 1$ too):
\begin{align*}
  &\nomemcoef{m}{n}
  =
  - \beta \sum_{b = 0}^{n - 1} \beta^b \sum_{\ell = 0}^{m - 1} \sum_{(i_0, \ldots, i_{m - 1 - \ell}) \in \mathcal{K}_{\ell, m - 1 - \ell}} \frac{1}{i_0! \ldots i_{m - 1 - \ell}!}\\
  &\qquad \times \nabla^{\ell + 1} L^{(n - 1 - b)} \prn[\bigg]{\underbrace{\sum_{l = 1}^{b + 1} \sum_{\tau \in \tilde{\mathcal{A}}[1]} \frac{1}{\sigma(\tau)} \E{\tau}{l}{n}}_{\text{$i_0$ times}}, \ldots, \underbrace{\sum_{l = 1}^{b + 1} \sum_{\tau \in \tilde{\mathcal{A}}[m - \ell]} \frac{1}{\sigma(\tau)} \E{\tau}{l}{n}}_{\text{$i_{m - 1 - \ell}$ times}}}
\end{align*}
Careful rearrangement is used to simplify this to
\begin{equation*}
  \nomemcoef{m}{n} = - \beta \sum_{b = 0}^{n - 1} \beta^b \sum_{\ell = 0}^{m - 1}
  \sum_{\substack{\tau = [\tau_1, \ldots, \tau_{\ell}] \in \tilde{\mathcal{A}}[m]}} \frac{1}{\sigma(\tau)} \nabla^{\ell + 1} L^{(n - 1 - b)} \prn[\bigg]{\sum_{l = 1}^{b + 1} \E{\tau_1}{l}{n}, \ldots, \sum_{l = 1}^{b + 1} \E{\tau_{\ell}}{l}{n}},
\end{equation*}
and that is, by definition, equal to $- \beta \sum_{\tau \in \tilde{\mathcal{A}}[m]} \frac{1}{\sigma(\tau)} \E{\tau}{1}{n}$. Hence, under the induction hypothesis for smaller $m$, \eqref{eq:nomemcoef-m-n-ind} holds.

By definition of the history terms in \eqref{eq:jfjl} and the induction hypothesis, to prove \eqref{eq:histcoef-m-n-k-ind} and complete the induction step, it is enough to show that
\begin{equation}\label{eq:aux-fact-in-nonmem-coef}
\begin{aligned}
  &-
  \sum_{j = 1}^m \sum_{i = 0}^{m - j}
  \sum_{(k_0, \ldots, k_{m - i - j}) \in \mathcal{K}_{i, m - i - j}} \frac{1}{k_0! \ldots k_{m - i - j}!}\\
  &\qquad \times \nabla^i \nomemcoef{j}{n - l}
  \brk[\bigg]{
    \underbrace{
      \sum_{l' = 1}^l \sum_{\tau \in \tilde{\mathcal{A}}[1]} \frac{1}{\sigma(\tau)} \E{\tau}{l'}{n}
    }_{\text{$k_0$ times}},
    \ldots,
    \underbrace{
      \sum_{l' = 1}^l \sum_{\tau \in \tilde{\mathcal{A}}[m - i - j + 1]} \frac{1}{\sigma(\tau)} \E{\tau}{l'}{n}
    }_{\text{$k_{m - i - j}$ times}}
  }
  \overset{?}{=} \sum_{\tau \in \tilde{\mathcal{A}}[m]} \frac{1}{\sigma(\tau)} \E{\tau}{l}{n}.
\end{aligned}
\end{equation}

By the induction hypothesis and \eqref{eq:nomemcoef-m-n-ind} already proven, we can replace $\nabla^i \nomemcoef{j}{n - l}$ in the left-hand side of \eqref{eq:aux-fact-in-nonmem-coef} by $- \beta \nabla^i \sum_{\tau_0 \in \tilde{\mathcal{A}}[j]} \frac{1}{\sigma(\tau_0)} \E{\tau_0}{1}{n - l}$. The result will involve precisely the big sum that we saw in \cref{lem:kjjhhag}, applying which we will simplify the left-hand side of \eqref{eq:aux-fact-in-nonmem-coef} to
\begin{align*}
\sum_{i = 0}^{m - 1}
    \sum_{\substack{\tau \in \tilde{\mathcal{A}}[m]\\ \tau = [\tau_1, \ldots, \tau_i]}} \frac{1}{\sigma(\tau)} \nabla^{i + 1} L^{(n - l)}
    \brk[\bigg]{
    \sum_{l' = 1}^l \E{\tau_1}{l'}{n},
    \ldots,
    \sum_{l' = 1}^l \E{\tau_i}{l'}{n}
  }
  +
  \beta \sum_{\tau \in \tilde{\mathcal{A}}[m]} \frac{1}{\sigma(\tau)} \E{\tau}{l + 1}{n}.
\end{align*}
Combining this with \eqref{eq:nJdufq}, we see that the left-hand side of \eqref{eq:aux-fact-in-nonmem-coef} is equal to the right-hand side of \eqref{eq:aux-fact-in-nonmem-coef}. This completes the induction step. Omitted technical details are given in \cref{sec:omitted-details-nomem-coef}.

\subsection{Connection with the Solution to the Fixed-Point Equation}\label{sec:connection-sol-fixed-point-eq}

The result in \cref{th:nomem-coef} can be connected to the invariant manifold perspective discussed in \cref{sec:intro-hb-is-gd-on-manifold,sec:full-batch-hb-is-gd}. Recall that HB on that manifold can be rewritten as \eqref{eq:qcUUpn} where $\bg_h$ satisfies the fixed point equation \eqref{eq:uruw}. Then, write $\bg_h(\btheta)$ as a formal power series \citep{butcher1972algebraic,hairer1974butcher}, sometimes called B-series, to obtain
\begin{equation*}
  h^2 \bg_h(\btheta) = \sum_{\tau \in \mathcal{A}_{\varnothing}} \frac{h^{\abs{\tau}}}{\abs{\tau}!} g(\tau) \nabla^{\tau} L(\btheta),
\end{equation*}
where $\nabla^{\tau} L$ is the elementary differential defined recursively by $\nabla^{\varnothing} L(\btheta) = \btheta$, $\nabla^{\rootedtree[]} L = \nabla L$ and $\nabla^{\tau} L = \nabla^{\ell + 1} L \brk[\big]{\nabla^{\tau_1} L, \ldots, \nabla^{\tau_{\ell}} L}$ for $\tau = [\tau_1, \ldots, \tau_{\ell}]$, $g\colon \tilde{\mathcal{A}}_{\varnothing} \to \mathbb{R}$ is the coefficient mapping (with the induced mapping $g\colon \mathcal{A}_{\varnothing} \to \mathbb{R}$ denoted by the same symbol), $g(\varnothing) = g(\rootedtree[]) = 0$.

In addition, define a mapping $a\colon \tilde{\mathcal{A}}_{\varnothing} \to \mathbb{R}$ by putting $a(\varnothing) := 1$,
$a(\rootedtree[]) = - (1 - \beta)^{-1}$, and $a(\tau) = \beta g(\tau)$ for $\abs{\tau} \geq 2$. Then, by the composition rule (e.\,g. \citet{faris2021rootedtreegraphs}), we have
\begin{equation}\label{eq:xadAIY}
  h^2 \bg_h\prn[\bigg]{\btheta - \frac{h}{1 - \beta} \nabla L(\btheta) + h^2 \beta \bg_h(\btheta)} = \sum_{\tau \in \mathcal{A}_{\varnothing}} \frac{h^{\abs{\tau}}}{\abs{\tau}!} (a * g)(\tau) \nabla^{\tau} L(\btheta),
\end{equation}
where $a * g$ is the subtree convolution, that is, $(a * g)(\varnothing) = g(\varnothing) = 0$ and
\begin{equation*}
  (a * g)(\tau) = g(\varnothing) a(\tau) + \sum_{\mrk \in \mrkset_{\tau}} g(\tau_0^{\mrk}) a(\tau_1^{\mrk}) \ldots a(\tau_{\abs{\mrk}}^{\mrk}) = \sum_{\mrk \in \mrkset_{\tau}} g(\tau_0^{\mrk}) a(\tau_1^{\mrk}) \ldots a(\tau_{\abs{\mrk}}^{\mrk}).
\end{equation*}
Similarly, by the composition rule,
\begin{equation}\label{eq:LcoekJ}
  - \frac{h}{1 - \beta} \nabla L\prn[\bigg]{\btheta - \frac{h}{1 - \beta} \nabla L(\btheta) + h^2 \beta \bg_h(\btheta)}
  = \sum_{\tau \in \mathcal{A}_{\varnothing}} \frac{h^{\abs{\tau}}}{\abs{\tau}!} (a * l)(\tau) \nabla^{\tau} L(\btheta),
\end{equation}
where $l\colon \tilde{\mathcal{A}}_{\varnothing} \to \mathbb{R}$ is defined by $l(\varnothing) = 0$, $l(\rootedtree[]) = - (1 - \beta)^{-1}$, $l(\tau) = 0$ for $\abs{\tau} \geq 2$, which means
\begin{equation*}
(a * l)(\tau) = \sum_{\mrk \in \mrkset_{\tau}} l(\tau_0^{\mrk}) a(\tau_1^{\mrk}) \ldots a(\tau_{\abs{\mrk}}^{\mrk}) = - \frac{1}{1 - \beta} a(\tau_1) \ldots a(\tau_{\ell})
\end{equation*}
for $\tau = [\tau_1, \ldots, \tau_{\ell}]$; in particular, $(a * l)(\rootedtree[]) = - (1 - \beta)^{-1}$. Combining \eqref{eq:xadAIY} and \eqref{eq:LcoekJ} gives
\begin{align*}
  &- \frac{h}{1 - \beta} \nabla L\prn[\bigg]{\btheta - \frac{h}{1 - \beta} \nabla L(\btheta) + h^2 \beta \bg_h(\btheta)} + h^2 \bg_h\prn[\bigg]{\btheta - \frac{h}{1 - \beta} \nabla L(\btheta) + h^2 \beta \bg_h(\btheta)}\\
  &\quad = \sum_{\tau \in \mathcal{A}_{\varnothing}} \frac{h^{\abs{\tau}}}{\abs{\tau}!} \crl[\big]{(a * l)(\tau) + (a * g)(\tau)} \nabla^{\tau} L(\btheta).
\end{align*}
By \eqref{eq:uruw}, this should be equal to
\begin{equation*}
- \frac{h}{1 - \beta} \nabla L(\btheta) + h^2 \beta \bg_h(\btheta) = - \frac{h}{1 - \beta} \nabla L(\btheta) + \sum_{\tau \in \mathcal{A}_{\varnothing}} \frac{h^{\abs{\tau}}}{\abs{\tau}!} \beta g(\tau) \nabla^{\tau} L(\btheta).
\end{equation*}
Matching the coefficients before equal powers of $h$ gives for $\tau = [\tau_1, \ldots, \tau_{\ell}]$ with $\abs{\tau} \geq 2$
\begin{equation*}
(a * l)(\tau) + (a * g)(\tau) = \beta g(\tau),
\end{equation*}
that is,
\begin{equation*}
- \frac{1}{1 - \beta} a(\tau_1) \ldots a(\tau_{\ell}) + \sum_{\mrk \in \mrkset_{\tau} \setminus \crl{\varnothing}} g(\tau_0^{\mrk}) a(\tau_1^{\mrk}) \ldots a(\tau_{\abs{\mrk}}^{\mrk}) + g(\tau) = \beta g(\tau).
\end{equation*}
Hence, the coefficients $g(\tau)$ satisfy the recursion
\begin{equation*}
g(\tau) = \frac{1}{(1 - \beta)^2} a(\tau_1) \ldots a(\tau_{\ell}) - \frac{1}{1 - \beta} \sum_{\mrk \in \mrkset_{\tau} \setminus \crl{\varnothing}} g(\tau_0^{\mrk}) a(\tau_1^{\mrk}) \ldots a(\tau_{\abs{\mrk}}^{\mrk})
\end{equation*}
with $g(\varnothing) = g(\rootedtree[]) = 0$; the same is rewritten in terms of only the $a$ mapping as
\begin{equation*}
a(\tau) = \frac{\beta}{(1 - \beta)^2} a(\tau_1) \ldots a(\tau_{\ell}) - \frac{1}{1 - \beta} \sum_{\substack{\mrk \in \mrkset_{\tau} \setminus \crl{\varnothing}\\\abs{\tau_0^{\mrk}} \geq 2}} a(\tau_0^{\mrk}) a(\tau_1^{\mrk}) \ldots a(\tau_{\abs{\mrk}}^{\mrk})
\end{equation*}
with $a(\varnothing) = 1$, $a(\rootedtree[]) = - \frac{1}{1 - \beta}$.
For example,
\begin{equation*}
a(\rootedtree[[]]) = - \frac{\beta}{(1 - \beta)^3},\quad a(\rootedtree[[[]]]) = a(\rootedtree[[][]]) = - \frac{\beta (1 + \beta)}{(1 - \beta)^5}.
\end{equation*}
This is a similar-looking although not quite the same characterization of $\bg_h$ as we would obtain by taking $n \to \infty$ in \eqref{eq:nJdufq} (when all losses are equal $L^{(s)} \equiv L$). Of course, they lead to the same results (despite the different recursions), for example,
\begin{align*}
  h^2 \bg_h(\btheta) = &- \frac{h^2}{(1 - \beta)^3} \nabla^2 L(\btheta) \nabla L(\btheta)\\
  &- \frac{h^3}{2} \frac{1 + \beta}{(1 - \beta)^5} \nabla^3 L(\btheta) \brk[\big]{\nabla L(\btheta), \nabla L(\btheta)} - h^3 \frac{1 + \beta}{(1 - \beta)^5} \nabla^2 L(\btheta) \nabla^2 L(\btheta) \nabla L(\btheta) + O(h^4),
\end{align*}
giving the memoryless update \eqref{eq:qcUUpn}
\begin{align*}
  &\btheta^{(n + 1)} = \btheta^{(n)} - \frac{h}{1 - \beta} \crl[\bigg]{
    \nabla L(\btheta^{(n)}) + \frac{h \beta}{(1 - \beta)^2} \nabla^2 L(\btheta^{(n)}) \nabla L(\btheta^{(n)})\\
  &\quad + \frac{h^2 \beta (1 + \beta)}{2 (1 - \beta)^4} \nabla^3 L(\btheta^{(n)}) \brk[\big]{\nabla L(\btheta^{(n)}), \nabla L(\btheta^{(n)})}\\
  &\quad + \frac{h^2 \beta (1 + \beta)}{(1 - \beta)^4} \nabla^2 L(\btheta^{(n)}) \nabla^2 L(\btheta^{(n)}) \nabla L(\btheta^{(n)}) + O(h^3)}.
\end{align*}
Here, in contrast to approximation theorems, by $O(h^3)$ we just mean terms of order $h^3$ and higher in the formal infinite sum.

\internalComment{It can be proven that the coefficient $e_{\tau}^{\infty}$ of $\E{\tau}{1}{n}$ after taking the limit $n \to \infty$ satisfies
\begin{equation*}
a(\tau) = - \beta e_{\tau}^{\infty}\quad\text{for $\abs{\tau} \geq 2$.}
\end{equation*}
}

\section{Corollaries and Implications}\label{sec:corollaries}

Our main theoretical results (\cref{th:approx} and \cref{th:nomem-coef}) can be used to obtain useful additional results for the analysis of HB and variants thereof. This section focuses on deriving continuous modified equations of arbitrary approximation order (with rigorous bounds), as well as principal iteration and principal flow approximations capturing the HB dynamics. Our results generalize the work in \citet{rosca2023on}. Due to their practical importance, we consider both full-batch and mini-batch implementations.

\subsection{Modified Equation}\label{sec:modified-equation}

The global approximation by a memoryless iteration (\cref{th:approx}) allows to prove the existence of a modified equation or, in other words, an approximation by a continuous flow. Define the \textit{BEA coefficients} $\crl[\big]{\beacoef{j}{n}(\btheta)}_{j = 1}^{\infty}$ by
\begin{equation}\label{eq:bea-coef-def}
  \beacoef{j}{n}(\btheta) = \nomemcoef{j}{n}(\btheta) - \sum_{i = 2}^j \frac{1}{i!} \sum_{\substack{k_1, \ldots, k_i \geq 1\\k_1 + \ldots + k_i = j}} (D^{(n)}_{k_1} \ldots D^{(n)}_{k_{i - 1}} \beacoef{k_i}{n})(\btheta),
\end{equation}
with the $i$th Lie derivative $D^{(n)}_i := \sum_l \beacoefsc{i}{n}{l}(\btheta) \partial_l$. This formula is standard in the literature on backward error analysis applied to numerical methods \citep{hairer2006}.

We now state the continuous approximation result.

\begin{corollary}[Modified equation]\label{cor:modified-eq}
Assume the conditions of \cref{th:approx}. Let $\btheta(t) \equiv \btheta(\ord; t)$ be the unique continuous solution to the piecewise ODE in $\mathcal{D}$
\begin{equation}\label{eq:ode-def}
    \dot{\btheta}(t) = \sum_{i = 0}^{\ord - 1} h^i f^{(n)}_{i + 1}(\btheta(t))
\end{equation}
on $t \in [t_n, t_{n + 1}]$ with the initial condition $\btheta(0) = \btheta^{(0)}$, assumed to exist, where we use the shortcut $t_n := n h$. Then for each $h \in (0, 1 / 2)$
\begin{equation*}
\sup_{n \in \range{0}{\lfloor T / h \rfloor}} \norm{\btheta^{(n)} - \btheta(t_n)} \leq \oC{mod} h^{\ord},
\end{equation*}
where $\nC{mod}$ is some constant depending on $T$.
\end{corollary}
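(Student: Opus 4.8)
The plan is to reduce \cref{cor:modified-eq} to \cref{th:approx} by showing that the piecewise ODE solution $\btheta(t_n)$ and the memoryless discrete iteration $\tilde\btheta^{(n)}$ from \eqref{eq:memoryless-approx-contrib} differ by $O(h^{\ord})$ uniformly on the relevant range of $n$; combined with \eqref{eq:global-error-bound} and the triangle inequality this yields the claim. So the real content is a standard backward-error-analysis argument: the BEA coefficients $\beacoef{j}{n}$ are \emph{defined} in \eqref{eq:bea-coef-def} precisely so that the time-$h$ flow of the (frozen-$n$) vector field $\sum_{i=0}^{\ord-1} h^i f^{(n)}_{i+1}$ matches the one-step map $\btheta \mapsto \btheta + \sum_{j=1}^{\ord} h^j \nomemcoef{j}{n}(\btheta)$ up to $O(h^{\ord+1})$.

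First I would establish existence and uniqueness of the piecewise solution: on each interval $[t_n, t_{n+1}]$ the right-hand side $\sum_{i=0}^{\ord-1} h^i f^{(n)}_{i+1}$ is a fixed smooth (indeed $C^{2\ord - \text{something}}$, enough derivatives by the hypothesis of \cref{th:approx}) vector field on $\mathcal D$ with bounded derivatives uniformly in $n$, so Picard–Lindelöf gives a unique solution as long as it stays in $\mathcal D$; one patches the pieces at the nodes $t_n$ by continuity, which is why the global trajectory is continuous but only piecewise smooth. Staying inside $\mathcal D$ for $t \in [0, T]$ follows a posteriori once the $O(h^\ord)$ proximity to $\btheta^{(n)}$ (which lies in $\mathcal D$) is in hand, so I would carry a bootstrap/continuation argument: assume the trajectory is defined and close to the iterates up to some $t_n \le T$, prove the one-step estimate, and extend. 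Then I would prove the key local estimate: fixing $n$, let $\phi_h^{(n)}$ be the time-$h$ flow map of the autonomous field $\sum_{i=0}^{\ord-1} h^i f^{(n)}_{i+1}$; a Taylor expansion of $\phi_h^{(n)}(\btheta)$ in $h$, using the iterated-Lie-derivative formula for the coefficients of the exponential of a vector field, produces exactly the series $\btheta + \sum_{j=1}^{\ord} h^j \nomemcoef{j}{n}(\btheta) + O(h^{\ord+1})$ — this is the standard computation that \eqref{eq:bea-coef-def} encodes (it is the inversion of the B-series/Lie-series relation between a one-step method and its modified field), with the $O(h^{\ord+1})$ remainder bounded uniformly in $n$ because all the $f^{(n)}_{i+1}$ and their derivatives up to the needed order are bounded uniformly in $n$.

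Next, with the local estimate $\lvert \phi_h^{(n)}(\btheta) - (\btheta + \sum_j h^j \nomemcoef{j}{n}(\btheta)) \rvert \le C h^{\ord+1}$ in hand, I would run the usual accumulation-of-local-errors argument (the same \cref{lem:global-error-bound} invoked for \cref{th:approx}): writing $e_n := \btheta(t_n) - \tilde\btheta^{(n)}$, one has $e_{n+1} = \phi_h^{(n)}(\btheta(t_n)) - \phi_h^{(n)}(\tilde\btheta^{(n)}) + (\phi_h^{(n)}(\tilde\btheta^{(n)}) - \tilde\btheta^{(n+1)})$; the first difference is $(1 + O(h))\lvert e_n\rvert$ by the Lipschitz/Gronwall bound on the flow map, the second is $O(h^{\ord+1})$ by the local estimate, so $\lvert e_n\rvert \le O(h^{\ord+1}) \cdot \sum (1+O(h))^k = O(h^\ord)$ for $n \le T/h$ via the discrete Gronwall inequality. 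Finally combine $\lvert \btheta^{(n)} - \btheta(t_n)\rvert \le \lvert \btheta^{(n)} - \tilde\btheta^{(n)}\rvert + \lvert e_n\rvert = O(h^\ord) + O(h^\ord)$.

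The main obstacle I anticipate is bookkeeping rather than conceptual: verifying cleanly that the Taylor/Lie-series expansion of the frozen-$n$ flow map truncated at order $\ord$ reproduces $\sum_{j=1}^{\ord} h^j \nomemcoef{j}{n}$ \emph{exactly} (i.e., that \eqref{eq:bea-coef-def} is the correct inversion, including the factor $1/i!$ and the ordering $D^{(n)}_{k_1}\cdots D^{(n)}_{k_{i-1}}$ acting on $\beacoef{k_i}{n}$), and controlling the remainder uniformly in $n$ despite $\mathcal D$ being merely open and convex — one must keep all intermediate points on the segment between $\tilde\btheta^{(n)}$ and nearby points inside $\mathcal D$, which is where the a-priori closeness to the genuine iterates and the bootstrap are needed. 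A secondary subtlety is the lack of smoothness of $\btheta(t)$ across nodes $t_n$: the estimate must be stated and accumulated node-to-node (over the grid $\{t_n\}$) rather than via a single global ODE comparison, exactly as in \citet{ghosh2023implicit} for $\ord = 2$.
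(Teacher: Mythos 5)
Your proposal is correct and follows essentially the same route as the paper's proof: the paper's \cref{lem:hb-no-memory-iteration} is exactly your local Taylor/Lie-series estimate that the time-$h$ piecewise-ODE step reproduces $\btheta + \sum_{j=1}^{\ord} h^j \nomemcoef{j}{n}(\btheta) + O(h^{\ord+1})$ (with uniform-in-$n$ constants supplied by \cref{lem:bea-terms-are-bounded}, which plays the role of your ``uniform boundedness of the $f^{(n)}_{i+1}$ and their derivatives''), the paper's \cref{lem:yxdKct} is your discrete-Gronwall accumulation comparing $\btheta(t_n)$ to $\tilde\btheta^{(n)}$, and the final step is the same triangle inequality with \eqref{eq:global-error-bound}. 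The only cosmetic difference is that the paper sidesteps your existence/bootstrap discussion by stipulating existence of the solution in the statement, and it organizes the error recursion by Lipschitz continuity of the discrete one-step map rather than of the flow map $\phi_h^{(n)}$ — equivalent $(1+O(h))$-type bounds.
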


The proof is by Taylor-expanding $\btheta(t)$ around $t_n$ at $t_{n + 1}$ and rearranging; this allows to get the local error bound, which is then converted into the global error bound. Full details are provided in \cref{sec:proof-of-modified-eq}.

For example, the second-order approximation is
\begin{equation}\label{eq:XbfyrE}
\dot{\btheta}(t) = - \sum_{b = 0}^n \beta^b \nabla L^{(n - b)}(\btheta(t)) + h \beacoef{2}{n}(\btheta(t))
\end{equation}
with
\begin{align*}
  \beacoef{2}{n}(\btheta) = &- \sum_{b = 1}^n \beta^b \nabla^2 L^{(n - b)}(\btheta) \sum_{l = 1}^b \sum_{b' = 0}^{n - l} \beta^{b'} \nabla L^{(n - l - b')}(\btheta)\\
  &- \frac{1}{2} \nabla \sum_{b = 0}^n \beta^b \nabla^2 L^{(n - b)} \sum_{b' = 0}^n \beta^{b'} \nabla L^{(n - b')}(\btheta).
\end{align*}

In the simpler full-batch case (where all $L^{(s)} \equiv L$) \eqref{eq:XbfyrE} is rewritten as
\begin{align*}
  \dot{\btheta}(t) &= - \frac{1 - \beta^{n + 1}}{1 - \beta} \nabla L(\btheta(t))\\
  &\quad - \frac{1 + \beta - 4 (n + 1) (\beta^{n + 1} - \beta^{n + 2}) - \beta^{2 n + 2} - \beta^{2 n + 3}}{2 (1 - \beta)^3} \nabla^2 L(\btheta(t)) \nabla L(\btheta(t)).
\end{align*}
on the segment $t \in [n h, (n + 1) h]$. As $n$ becomes large, this trajectory turns into the smooth solution to
\begin{equation}\label{eq:modified-ode-smooth}
\dot{\btheta}(t) = - \frac{1}{1 - \beta} \nabla L(\btheta(t)) - \frac{1 + \beta}{2 (1 - \beta)^3} \nabla^2 L(\btheta(t)) \nabla L(\btheta(t)).
\end{equation}
The equation found in \citet{kovachki2021continuous} (after fixing a small typo) is the same ODE after neglecting $\beta^n$,
but for \eqref{eq:modified-ode-smooth} the guarantee \eqref{eq:ode-approx-contrib} (with $\ord = 2$) is only true if the initialization happened exactly on the attractive manifold. Importantly, we did not lose any information but gained a guarantee regardless of initialization.

\begin{remark}\label{rem:not-gf}
Note that \eqref{eq:modified-ode-smooth} can be rewritten as
\begin{equation*}
  \dot{\btheta}(t) = - \frac{1}{1 - \beta} \nabla \crl[\bigg]{
    L + \frac{1 + \beta}{4 (1 - \beta)^2} \norm{\nabla L}^2
  }(\btheta(t)).
\end{equation*}
This can be seen as gradient flow with a modified loss \citep{farazmand2020multiscale,kovachki2021continuous,ghosh2023implicit}. However, for $\ord = 3$ this is already not true: $\beacoef{3}{n}(\btheta)$ becomes close to
\begin{equation*}
- \frac{1 + 4 \beta + \beta^2}{3 (1 - \beta)^5} \nabla^2 L(\btheta) \nabla^2(\btheta) \nabla L(\btheta) - \frac{1 + 10 \beta + \beta^2}{12 (1 - \beta)^5} \nabla^3 L(\btheta) \brk[\big]{\nabla L(\btheta), \nabla L(\btheta)},
\end{equation*}
which is in general not a gradient.
\end{remark}

\subsection{Principal Iteration}\label{sec:principal-iter}

Consider full-batch HB \eqref{eq:hb-iter-compact}. Let us formally take $\ord$ to infinity in \eqref{eq:memoryless-approx-contrib} and write a formal series $\sum_{m = 1}^{\infty} h^m \nomemcoef{m}{n}(\btheta)$. It is an infinite sum of terms that are of two types: terms containing only derivatives of order no higher than two of the loss, which we will call \textit{principal terms}, and the remaining terms (containing derivatives of order at least three), which we will call \textit{non-principal} ones. For example, the term in \eqref{eq:nomemcoef3first} is principal and the term in \eqref{eq:nomemcoef3second} is non-principal.

Write
\begin{equation*}
  \sum_{m = 2}^{\infty} h^m \nomemcoef{m}{n}(\btheta)
  = - \beta \sum_{m = 2}^{\infty} v_m^{(n)} h^m \crl{\nabla^2 L(\btheta)}^{m - 1} \nabla L(\btheta) + \np,
\end{equation*}
where the notation $\np$ means ``non-principal terms'', $\crl{v_m^{(n)}}$ are coefficients (not depending on $h$). \Cref{th:nomem-coef} and its proof give an easy way to write down a recursion for $v_m^{(n)}$.

\begin{corollary}[Principal iteration]\label{cor:principal-coef}
    Define $\crl{v_m^{(n)}}_{m = 1}^{\infty}$ by putting $v_1^{(n)} = \sum_{b = 0}^{n - 1} \beta^b$ and
    so that the principal part of $\nomemcoef{m}{n}(\btheta)$ is $- \beta v_m^{(n)} \crl{\nabla^2 L(\btheta)}^{m - 1} \nabla L(\btheta)$ for $m \geq 2$. Then the coefficients $v_m^{(n)}$ satisfy
    \begin{equation}
    \label{eq:vm-1-rec}
    v_m^{(n)} = v_{m - 1}^{(n)} + \beta \sum_{j = 1}^{m - 1} v_j^{(n - 1)} v_{m - j}^{(n)} + \beta v_m^{(n - 1)},\quad m \geq 2
    \end{equation}
    and there are limits $v_m^{(\infty)} := \lim_{n \to \infty} v_m^{(n)}$, which satisfy $v_1^{(\infty)} = (1 - \beta)^{-1}$ and
    \begin{equation}\label{eq:lpnIyx}
    v_m^{(\infty)} = \frac{v_{m - 1}^{(\infty)}}{1 - \beta} + \frac{\beta}{1 - \beta} \sum_{j = 1}^{m - 1} v_j^{(\infty)} v_{m - j}^{(\infty)},\quad m \geq 2.
    \end{equation}
    The generating function $g_{\beta}(x) := \sum_{m = 0}^{\infty} v_{m + 1}^{(\infty)} x^m$ is given by
    \begin{equation}\label{eq:mDeSYH}
    g_{\beta}(x) = \frac{1 - \beta - x - \sqrt{(1 - \beta - x)^2 - 4 \beta x}}{2 \beta x}.
    \end{equation}
    In particular, for $m \geq 1$
    \begin{equation*}
    v_{m + 1}^{(\infty)} = \frac{N_m(\beta)}{(1 - \beta)^{2 m + 1}},
    \end{equation*}
    where
    \begin{equation*}
    N_m(\beta) := \sum_{k = 1}^m \frac{1}{m} \binom{m}{k} \binom{m}{k - 1} \beta^{m - k},\quad m \geq 1
    \end{equation*}
    are the Narayana polynomials.
\end{corollary}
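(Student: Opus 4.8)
The plan is to isolate the \emph{principal} part of $\nomemcoef{m}{n}$ using \cref{th:nomem-coef} --- which picks out a single tree --- then convert the associated recursion into \eqref{eq:vm-1-rec}, and finally deduce the limit, the generating function and the Narayana formula by elementary power-series algebra. By \cref{th:nomem-coef}, $\nomemcoef{m}{n} = -\beta\sum_{\tau\in\tilde{\mathcal{A}}[m]}\sigma(\tau)^{-1}\E{\tau}{1}{n}$ for $m\geq 2$, and by \eqref{eq:e-tau-def} the quantity $\E{\tau}{1}{n}$ carries a factor $\nabla^{\ell+1}L$ at every vertex with $\ell$ children; hence it involves only $\nabla L$ and $\nabla^2 L$ exactly when every vertex of $\tau$ has at most one child, i.e. $\tau = \mathcal{c}_m$, which has $\sigma(\mathcal{c}_m)=1$ by \eqref{eq:recursive-formula-for-sym-coef}. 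In the full-batch case ($L^{(s)}\equiv L$) the chain recursion \eqref{eq:e-tau-def} reads $\E{\mathcal{c}_m}{l}{n} = w^{(n,l)}_m\,\crl{\nabla^2 L}^{m-1}\nabla L$ with scalars $w^{(n,l)}_1 = \sum_{b=0}^{n-l}\beta^b$ and $w^{(n,l)}_m = \sum_{b=0}^{n-l}\beta^b\sum_{l_1=1}^{l+b}w^{(n,l_1)}_{m-1}$. Setting $v^{(n)}_m := w^{(n,1)}_m$ gives both $v^{(n)}_1 = \sum_{b=0}^{n-1}\beta^b$ and the claimed principal part $-\beta v^{(n)}_m\,\crl{\nabla^2 L}^{m-1}\nabla L$ of $\nomemcoef{m}{n}$; note also $v^{(0)}_m = 0$ for all $m$ (empty sums).

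To obtain \eqref{eq:vm-1-rec} I would apply the alternative recursion from the remark following \cref{lem:kjjhhag}, $\E{\tau}{1}{n} = \nabla^{\ell+1}L^{(n-1)}\brk{\E{\tau_1}{1}{n},\dots,\E{\tau_\ell}{1}{n}} + \beta\sum_{\mrk\in\mrkset_{\tau}}\Emark{\tau}{\mrk}{1}{n - 1 \to n}$, to $\tau = \mathcal{c}_m = [\mathcal{c}_{m-1}]$. The key structural point is that in a chain any two non-root vertices are comparable, so the only markings are $\varnothing$ and the singletons $\crl{k}$ for $k\in\range{2}{m}$, and $\mrk = \crl{k}$ splits $\mathcal{c}_m$ into $\tau^{\mrk}_0 = \mathcal{c}_{k-1}$ and $\tau^{\mrk}_1 = \mathcal{c}_{m-k+1}$. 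Unwinding the definition of $\Emark{\mathcal{c}_m}{\crl{k}}{1}{n - 1 \to n}$ in the full-batch case: one raises the derivative at the deepest vertex of $\E{\mathcal{c}_{k-1}}{1}{n-1}$ from $\nabla L$ to $\nabla^2 L$ and feeds it $\E{\mathcal{c}_{m-k+1}}{1}{n}$; because this changes only tensor orders --- not the nested $\beta$-power summations --- and because $\E{\mathcal{c}_{m-k+1}}{1}{n}$ has its memory distance and iteration index frozen at $1$ and $n$, the scalar factorizes as $v^{(n-1)}_{k-1}v^{(n)}_{m-k+1}$ and the tensor part is again $\crl{\nabla^2 L}^{m-1}\nabla L$. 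Collecting scalars (the leading term $\nabla^2 L^{(n-1)}\brk{\E{\mathcal{c}_{m-1}}{1}{n}}$ contributes $v^{(n)}_{m-1}$, the empty marking contributes $v^{(n-1)}_m$) gives exactly $v^{(n)}_m = v^{(n)}_{m-1} + \beta v^{(n-1)}_m + \beta\sum_{j=1}^{m-1}v^{(n-1)}_j v^{(n)}_{m-j}$, which is \eqref{eq:vm-1-rec}. (Alternatively one can bypass $\Emark{}{}{}{}$ and get the same identity from the chain recursions for $w^{(n,l)}_m$ together with \eqref{eq:nJdufq} specialized to a chain, $w^{(n,l)}_m = \sum_{l'=1}^{l}w^{(n,l')}_{m-1} + \beta w^{(n,l+1)}_m$, at the cost of heavier index bookkeeping.)

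For the remaining assertions: writing \eqref{eq:vm-1-rec} as $v^{(n)}_m = \beta v^{(n-1)}_m + r^{(n)}_m$ with $r^{(n)}_m := v^{(n)}_{m-1} + \beta\sum_{j=1}^{m-1}v^{(n-1)}_j v^{(n)}_{m-j}$ depending only on the $v^{(\cdot)}_j$ with $j<m$, induction on $m$ shows $v^{(n)}_m = \sum_{k=0}^{n-1}\beta^k r^{(n-k)}_m$ (using $v^{(0)}_m=0$), whence $v^{(n)}_m$ converges (base case $v^{(n)}_1 = (1-\beta^n)/(1-\beta)\to(1-\beta)^{-1}$), and passing to the limit yields \eqref{eq:lpnIyx} with $v^{(\infty)}_m = (1-\beta)^{-1}r^{(\infty)}_m$. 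For the generating function, put $V^{(n)}(x):=\sum_{m\geq1}v^{(n)}_m x^m$ (so $V^{(0)}=0$); multiplying \eqref{eq:vm-1-rec} by $x^m$, summing over $m\geq2$, and using $v^{(n)}_1 - \beta v^{(n-1)}_1 = 1$ (which holds since $v^{(n)}_1=(1-\beta^n)/(1-\beta)$), one gets $V^{(n)}(x)\prn[\big]{1 - x - \beta V^{(n-1)}(x)} = x + \beta V^{(n-1)}(x)$, so in the limit $V := V^{(\infty)}$ solves $\beta V^2 + (x+\beta-1)V + x = 0$; the branch with $V(0)=0$ is $V(x) = \prn[\big]{1-\beta-x-\sqrt{(1-\beta-x)^2-4\beta x}}\big/(2\beta)$, and dividing by $x$ gives \eqref{eq:mDeSYH}. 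Finally, $P(z) := (1-\beta)g_\beta\prn[\big]{(1-\beta)^2 z} = \sum_{m\geq0}v^{(\infty)}_{m+1}(1-\beta)^{2m+1}z^m$ satisfies $\beta z P^2 + (z-\beta z-1)P + 1 = 0$ with $P(0)=1$, i.e. $P(z) = 1 + \prn[\big]{1-(1+\beta)z-\sqrt{(1-(1+\beta)z)^2-4\beta z^2}}\big/(2\beta z)$; the displayed fraction is the well-known generating function $\sum_{m\geq1}N_m(\beta)z^m$ of the Narayana polynomials (the exponent $\beta^{m-k}$ in the definition matches the standard one via the symmetry $\tfrac1m\binom mk\binom m{k-1} = \tfrac1m\binom m{m+1-k}\binom m{m-k}$), and comparing coefficients of $z^m$ gives $v^{(\infty)}_{m+1} = N_m(\beta)/(1-\beta)^{2m+1}$ (while $[z^0]$ re-confirms $v^{(\infty)}_1 = (1-\beta)^{-1}$).

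The one genuinely delicate step is the scalar factorization in the second paragraph: showing, straight from the definition of $\Emark{\cdot}{\cdot}{\cdot}{\cdot}$, that attaching $\E{\mathcal{c}_{m-k+1}}{1}{n}$ at the deepest leaf of $\E{\mathcal{c}_{k-1}}{1}{n-1}$ leaves every nested $\beta$-power summation of the latter untouched, so that the coefficient of $\crl{\nabla^2 L}^{m-1}\nabla L$ is the clean product $v^{(n-1)}_{k-1}v^{(n)}_{m-k+1}$. Everything after that --- the induction for convergence and the generating-function computation --- is routine.
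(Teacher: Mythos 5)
Your proposal is correct and follows essentially the same route as the paper's proof: identify the chain $\mathcal{c}_m$ as the only tree in \cref{th:nomem-coef} contributing principal terms, introduce the scalar shadow $v_{m,l}^{(n)}$ of $\E{\mathcal{c}_m}{l}{n}$ with $v_m^{(n)} = v_{m,1}^{(n)}$, and derive \eqref{eq:vm-1-rec} from the markings of a chain. The paper arrives at the marking sum by applying \eqref{eq:nJdufq} with $l=1$ and then \cref{lem:sum-over-markings-of-one-tree} with $l=a=1$; you instead invoke the ``alternative recursion'' packaged in the remark after \cref{lem:kjjhhag}, which is the same two steps fused, and you spell out the chain-specific structure of $\mrkset_{\mathcal{c}_m}$ (only $\varnothing$ and singletons) together with the clean product factorization $v_{k-1}^{(n-1)} v_{m-k+1}^{(n)}$ that the paper leaves implicit in the phrase ``using the definition of $\Emark{\mathcal{c}_m}{\mrk}{a}{n-1\to n}$''. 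Your worry about that factorization is unfounded: by construction of $\Emark{\cdot}{\cdot}{\cdot}{\cdot}$, the attached subtree's summation indices are disjoint from those of $\E{\tau_0^{\mrk}}{a}{n-l}$, so the scalar coefficient does split as a product. Two small deviations downstream: for convergence of $v_m^{(n)}$, the paper argues boundedness plus monotonicity in $n$ (immediate from \eqref{eq:vm-l-rec}), while you unfold \eqref{eq:vm-1-rec} into $v_m^{(n)} = \sum_{k=0}^{n-1}\beta^k r_m^{(n-k)}$ and argue by induction in $m$; both are valid, the paper's is a little shorter. For the Narayana identification, the paper matches the recurrence $N_m = (1+\beta)N_{m-1} + \beta\sum_k N_k N_{m-1-k}$ from the literature, while you rescale the generating function and match it to the known closed form $\sum_m N_m(\beta)z^m$, checking the exponent convention via the symmetry $N(m,k)=N(m,m+1-k)$; again both are fine, and your generating-function route is arguably more self-contained since it reuses the quadratic already derived.
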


The proof is given in \cref{sec:proof-of-principal-coef}.

Informally, \cref{cor:principal-coef} means
\begin{align*}
  \sum_{m = 1}^{\infty} h^m \nomemcoef{m}{n}(\btheta)
  &\approx - \frac{h}{1 - \beta} \nabla L(\btheta) - \beta \sum_{m = 1}^{\infty} v_{m + 1}^{(\infty)} h^{m + 1} \crl{\nabla^2 L(\btheta)}^{m} \nabla L(\btheta) + \np\\
  &= - h \nabla L(\btheta) - \beta h \sum_{m = 0}^{\infty} v_{m + 1}^{(\infty)} h^m \crl{\nabla^2 L(\btheta)}^{m} \nabla L(\btheta) + \np\\
  &= - h \nabla L(\btheta) - \beta h g_{\beta}(h \nabla^2 L(\btheta)) \nabla L(\btheta) + \np,
\end{align*}
where $\approx$ hides the fact that $n$ is taken to infinity. Therefore, the ``full-order'' memoryless iteration is approximately
\begin{equation}\label{eq:aIctKj}
\tilde{\btheta}^{(n + 1)} = \tilde{\btheta}^{(n)} - h \nabla L\prn[\big]{\tilde{\btheta}^{(n)}} - \beta h g_{\beta}\prn[\big]{h \nabla^2 L\prn[\big]{\tilde{\btheta}^{(n)}}} \nabla L\prn[\big]{\tilde{\btheta}^{(n)}} + \np
\end{equation}
for large $n$. The series $g_{\beta}\prn[\big]{h \nabla^2 L\prn[\big]{\tilde{\btheta}^{(n)}}}$ converges in Euclidean operator norm when $\norm[\big]{\nabla^2 L\prn[\big]{\tilde{\btheta}^{(n)}}} < R_{\beta} / h$, where $R_{\beta} = (1 - \sqrt{\beta})^2$ is the convergence radius.\sloppy

\begin{remark}
We thank Boris Hanin for the following interesting observation: $g_{\beta}(x)$ is the Stieltjes transform of the standard Marchenko-Pastur law with parameter $\beta$, which we can write as
\begin{equation*}
g_{\beta}(x) = \Expectlet_{\xi \sim \mathrm{MP}(\beta)} \brk{(\xi - x)^{-1}}.
\end{equation*}
Hence, \eqref{eq:aIctKj} becomes
\begin{equation*}
\tilde{\btheta}^{(n + 1)} = \tilde{\btheta}^{(n)} - h \Expectlet_{\xi \sim \mathrm{MP}(\beta)} \brk[\big]{\bI + \beta \prn[\big]{\xi \bI - h \nabla^2 L\prn[\big]{\tilde{\btheta}^{(n)}}}^{-1}} \nabla L\prn[\big]{\tilde{\btheta}^{(n)}} + \np
\end{equation*}
as long as $\norm{\nabla^2 L\prn[\big]{\tilde{\btheta}^{(n)}}} < (1 - \sqrt{\beta})^2 / h$ as above.
\end{remark}

\subsection{Comments on Combinatorics}\label{sec:comments-on-combinatorics}

We see from \cref{cor:principal-coef} that the coefficients corresponding to $\E{\tau}{1}{n}$, where $\tau$ is a chain with $m$ vertices (see \cref{sec:Technical Background and Notation} for the definition of a chain), are the rescaled Narayana polynomials.

\begin{remark}\label{rem:narayana}
    \Cref{cor:principal-coef} and \eqref{eq:vm-l-rec} show in particular that the Narayana polynomials can be defined as $N_m(\beta) \equiv N_{m, 1}(\beta)$ where $\crl{N_{m, l}(\beta)}$ satisfy the recursion
    \begin{equation*}
    N_{m, l}(\beta) = (1 - \beta)^2 \sum_{b = 0}^{\infty} \beta^b \sum_{l_1 = 1}^{l + b} N_{m - 1, l_1}(\beta),\quad m \geq 2, l \geq 1
    \end{equation*}
    with initial condition $N_{1, l}(\beta) = \beta + (1 - \beta) l$ for $l \geq 1$. We are not aware of this characterization in the literature.
\end{remark}

Let us now write the coefficient before another type of trees, namely, the trees consisting only of the root and a number of leaves.

\begin{corollary}\label{cor:eulerian}
    Define $q_{m, l}^{(n)}$ as such coefficients that
    \begin{equation*}
    \E{\mathcal{r}_m}{l}{n} = q_{m, l}^{(n)} \nabla^m L(\btheta) \brk[\big]{\underbrace{\nabla L(\btheta), \ldots, \nabla L(\btheta)}_{\text{$m - 1$ times}}}
    \end{equation*}
    in \eqref{eq:e-tau-def} in the full-batch case, where $\mathcal{r}_m$ consists of a root and $m - 1$ leaves. Then the limit $q_{m, 1}^{(\infty)} = \lim_{n \to \infty} q_{m, 1}^{(n)}$ satisfies
    \begin{equation*}
    q_{m + 1, 1}^{(\infty)} = \frac{1}{(1 - \beta)^{2 m + 1}} A_m(\beta),\quad m \geq 1,
    \end{equation*}
    where
    \begin{equation*}
    A_m(\beta) = (1 - \beta)^{m + 1} \sum_{j = 1}^{\infty} j^m \beta^{j - 1}
    \end{equation*}
    are the Eulerian polynomials.
\end{corollary}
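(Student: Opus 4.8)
The plan is a direct computation from the recursion \eqref{eq:e-tau-def}, using that $\mathcal{r}_m$ is a star: its root has $\ell = m-1$ children, each equal to the single-vertex tree $\rootedtree[]$. First I would record that in the full-batch case the base case is $\E{\rootedtree[]}{l}{n} = \sum_{b=0}^{n-l}\beta^b \nabla L = \frac{1-\beta^{n-l+1}}{1-\beta}\,\nabla L$, so $\sum_{l'=1}^{k}\E{\rootedtree[]}{l'}{n} = c_k^{(n)}\,\nabla L$ with $c_k^{(n)} := \sum_{l'=1}^{k}\frac{1-\beta^{n-l'+1}}{1-\beta}$. Feeding these $m-1$ identical arguments into \eqref{eq:e-tau-def} and pulling the scalars out of the multilinear form $\nabla^m L$ gives the closed form
\[
  q_{m,l}^{(n)} = \sum_{b=0}^{n-l}\beta^b\,\bigl(c_{l+b}^{(n)}\bigr)^{m-1}.
\]

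Second, I would pass to $n\to\infty$. Since $0 \le \frac{1-\beta^{n-l'+1}}{1-\beta}\le\frac{1}{1-\beta}$, the bound $c_k^{(n)}\le \frac{k}{1-\beta}$ holds uniformly in $n$, so every term $\beta^b(c_{l+b}^{(n)})^{m-1}$ is dominated by the summable sequence $\beta^b\bigl(\tfrac{l+b}{1-\beta}\bigr)^{m-1}$; since $c_k^{(n)}\to\frac{k}{1-\beta}$ for each fixed $k$, dominated convergence gives $q_{m,l}^{(\infty)} = (1-\beta)^{-(m-1)}\sum_{b\ge 0}\beta^b(l+b)^{m-1}$. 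Setting $l=1$, replacing $m$ by $m+1$, and substituting $j=b+1$ yields
\[
  q_{m+1,1}^{(\infty)} = \frac{1}{(1-\beta)^{m}}\sum_{j=1}^{\infty} j^{m}\beta^{j-1} = \frac{(1-\beta)^{m+1}\sum_{j\ge1} j^m\beta^{j-1}}{(1-\beta)^{2m+1}} = \frac{A_m(\beta)}{(1-\beta)^{2m+1}},
\]
which is the asserted identity.

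Third, I would append a short justification that $A_m(\beta) := (1-\beta)^{m+1}\sum_{j\ge1}j^m\beta^{j-1}$ is indeed a polynomial in $\beta$ (of degree $m-1$) and coincides with the classical Eulerian polynomial: this is the standard generating-function identity $\sum_{b\ge 0}(b+1)^m x^b = A_m(x)/(1-x)^{m+1}$, which follows in one line by induction from the Eulerian recursion $A_m(x) = \bigl(1+(m-1)x\bigr)A_{m-1}(x) + x(1-x)A_{m-1}'(x)$, or equivalently by repeatedly applying $x\tfrac{d}{dx}$ to the geometric series; one can also simply cite it as a known property of Eulerian numbers.

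The computation is essentially routine; the only step requiring any care is the interchange of $\lim_{n\to\infty}$ with the infinite sum over $b$, which is why isolating the uniform bound $c_k^{(n)}\le k/(1-\beta)$ is worthwhile. If one prefers to avoid dominated convergence altogether, one can instead use the explicit evaluation $c_k^{(n)} = \tfrac{k}{1-\beta} - \tfrac{\beta^{n-k+1}(1-\beta^{k})}{(1-\beta)^2}$, expand the $(m-1)$-th power with the binomial theorem, sum the resulting finitely many polynomial-times-geometric series in $b$, and only then let $n\to\infty$, picking up error terms that vanish geometrically in $n$.
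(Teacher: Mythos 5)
Your proposal is correct and follows essentially the same route as the paper's proof: compute the closed form $q_{m,l}^{(n)} = \sum_{b=0}^{n-l}\beta^b\bigl(\sum_{l'=1}^{l+b}\tfrac{1-\beta^{n-l'+1}}{1-\beta}\bigr)^{m-1}$ from \eqref{eq:e-tau-def}, pass to the limit $n\to\infty$ to get $q_{m+1,1}^{(\infty)} = (1-\beta)^{-m}\sum_{j\ge1}j^m\beta^{j-1}$, and identify the sum with $A_m(\beta)/(1-\beta)^{m+1}$. The only additions you make are the explicit dominated-convergence justification for interchanging the limit with the sum over $b$, and the short verification that the given closed form really is the Eulerian polynomial — both are fine points the paper leaves implicit.
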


The proof is given in \cref{sec:proof-of-eulerian}.

\Cref{cor:principal-coef,cor:eulerian} motivate the following definition.

\begin{definition}\label{def:e-coef}
For $\tau = [\tau_1, \ldots, \tau_{\ell}] \in \tilde{\mathcal{A}}[m]$, let $e_{\tau, l}$ be the coefficient before $\E{\tau}{l}{n}$ in \cref{th:nomem-coef} after taking $n \to \infty$ and multiplying by $(1 - \beta)^{2 m - 1}$, that is,
\begin{equation*}
  e_{\tau, l} \equiv e_{\tau, l}(\beta) = (1 - \beta)^{\ell + 1} \sum_{b = 0}^{\infty} \beta^b \sum_{l_1 = 1}^{l + b} e_{\tau_1, l_1} \ldots \sum_{l_{\ell} = 1}^{l + b} e_{\tau_{\ell}, l_{\ell}}, \quad l \in \mathbb{Z}_{\geq 1}, \quad \ell \in \mathbb{Z}_{\geq 0}.
\end{equation*}
In particular, $e_{\rootedtree[], l} = 1$.
\end{definition}

By induction, $e_{\tau, l}$ is a polynomial of degree no more than $m - 1$ in the variable $l (1 - \beta)$ with coefficients that are themselves polynomials \textit{only} of $\beta$ (not depending on $l$).\internalComment{Use that by the induction assumption $\sum_{b = 0}^{\infty} \beta^b \sum_{l_1 = 1}^{l + b} e_{\tau_1, l_1} \ldots \sum_{l_{\ell} = 1}^{l + b} e_{\tau_{\ell}, l_{\ell}}$ is $(1 - \beta)^{m - 1 - \ell} P_{m - 1}(b)$ where $P_{m - 1}(b)$ is a polynomial of degree $m - 1$ with coefficients that are polynomials only of $\beta$; here $m - 1 = \abs{\tau_1} + \ldots \abs{\tau_{\ell}}$.} In particular, $e_{\tau, l}$ is a polynomial in $\beta$ (as opposed to just a rational function).

By \cref{cor:principal-coef,cor:eulerian}, $(1 - \beta)^{2 m - 1} v_m^{(\infty)} = e_{\mathcal{c}_m, 1}$ where $\mathcal{c}_m$ is the chain with $m$ vertices; $(1 - \beta)^{2 m - 1} q_{m, l}^{(\infty)} = e_{\mathcal{r}_m, l}$ where $\mathcal{r}_m$ is the tree consisting of a root and $m - 1$ leaves, whereas
\begin{equation*}
e_{\mathcal{c}_m, 1} = N_{m - 1}(\beta),\quad e_{\mathcal{r}_m, 1} = A_{m - 1}(\beta).
\end{equation*}
To conclude, $e_{\tau, 1} \equiv e_{\tau, 1}(\beta)$ in \cref{def:e-coef}
form a rich $\tilde{\mathcal{A}}[m]$-parametrized ($m \geq 1$) family of polynomials of $\beta$,
containing both the Narayana polynomials and the Eulerian polynomials,
and many other polynomials ``in-between''. This combinatorial digression may be of independent interest.

\subsection{Principal Flow}\label{sec:principle-flow}

Using the same framework as in \cref{sec:principal-iter}, we can derive a ``full-order'' modified equation up to non-principal terms.

\begin{corollary}[Principal flow]\label{cor:principal-flow}
    Define $\crl{z_m^{(n)}}$ as such coefficients that the principal part of $\beacoef{m}{n}(\btheta)$ is $z_m^{(n)} \crl{\nabla^2 L(\btheta)}^{m - 1} \nabla L(\btheta)$. Then,
    \begin{equation}\label{eq:z-coef-rec}
        \zcoef{m}{n} := \sum_{l = 1}^m \frac{(-1)^{l + 1}}{l} \sum_{\substack{k_1, \ldots, k_l \geq 1\\k_1 + \ldots + k_l = m}} p^{(n)}_{k_1} \ldots p^{(n)}_{k_l},
    \end{equation}
    where
    \begin{equation*}
        p^{(n)}_k := \begin{cases}
        - v_k^{(n + 1)}, &\text{if } k = 1,\\
        - \beta v_k^{(n)}, &\text{if } k \geq 2.
        \end{cases}
    \end{equation*}
    Moreover, the limiting sequence $\zcoef{m}{\infty} := \lim_{n \to \infty} \zcoef{m}{n}$ admits a generating function $\bar{g}_{\beta}(x) := \sum_{k = 0}^{\infty} \zcoef{k + 1}{\infty} x^k$ given by
    \begin{equation*}
        \bar{g}_{\beta}(x) = \frac{1}{x} \ln \prn[\bigg]{\frac{1 + \beta - x + \sqrt{(1 - \beta - x)^2 - 4 \beta x}}{2}}.
    \end{equation*}
\end{corollary}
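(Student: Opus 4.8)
The plan is to reduce the backward-error-analysis recursion \eqref{eq:bea-coef-def} to its \emph{principal part}, obtaining a scalar recursion for $\zcoef{m}{n}$, to solve that recursion via an exponential/logarithm generating-function identity, and finally to let $n\to\infty$ using \cref{cor:principal-coef}. Call a term principal if it involves no derivative of $L$ of order $\geq 3$; exactly as in the proofs of \cref{th:nomem-coef} and \cref{cor:principal-coef}, the only principal elementary differential of weight $j$ is the chain $\crl{\nabla^2 L}^{j-1}\nabla L$, so every principal quantity of weight $j$ is a scalar multiple of this single tensor. First I would record that $\beacoef{1}{n}=\nomemcoef{1}{n}=-v_1^{(n+1)}\nabla L$ is already principal, so $\zcoef{1}{n}=p^{(n)}_1$, and that for $j\geq 2$ the principal part of $\nomemcoef{j}{n}$ is $p^{(n)}_j\crl{\nabla^2 L}^{j-1}\nabla L$ with $p^{(n)}_j=-\beta v_j^{(n)}$, read off from \cref{th:nomem-coef} by keeping only the chain.

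The crucial lemma is that for $k_1,\dots,k_i\geq 1$ with $k_1+\dots+k_i=j$, the principal part of $D^{(n)}_{k_1}\cdots D^{(n)}_{k_{i-1}}\beacoef{k_i}{n}$ equals $\prn[\big]{\prod_{r=1}^{i}\zcoef{k_r}{n}}\crl{\nabla^2 L}^{j-1}\nabla L$; I would prove this by induction on the number $i-1$ of Lie derivatives applied. The key facts are: (i) an expression carrying a factor $\nabla^{\geq 3}L$ remains non-principal under $\partial_l$ and under left-multiplication by $\nabla^2 L$, so the principal part of $D^{(n)}_k\psi$ is determined by the principal parts of $\beacoef{k}{n}$ and $\psi$ alone; and (ii) if $\psi$ has principal part $c\,\crl{\nabla^2 L}^{a}\nabla L$ then $\partial_l\crl{\nabla^2 L}^{a}$ produces a $\nabla^3 L$ and is non-principal, so $\partial_l\psi$ has principal part equal to $c$ times the $l$-th column of $\crl{\nabla^2 L}^{a+1}$, whence $D^{(n)}_k\psi$ has principal part $c\,\zcoef{k}{n}\crl{\nabla^2 L}^{a+k}\nabla L$. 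Substituting into \eqref{eq:bea-coef-def} and cancelling the common tensor $\crl{\nabla^2 L}^{j-1}\nabla L$ gives the scalar recursion $\zcoef{j}{n}=p^{(n)}_j-\sum_{i=2}^{j}\frac{1}{i!}\sum_{k_1+\dots+k_i=j,\,k_r\geq 1}\zcoef{k_1}{n}\cdots\zcoef{k_i}{n}$ (which is, conceptually, just the BEA recursion for a quadratic loss, where $\nabla^{\geq 3}L\equiv 0$ and everything is principal).

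To solve this, set $Z^{(n)}(x):=\sum_{j\geq 1}\zcoef{j}{n}x^j$ and $P^{(n)}(x):=\sum_{j\geq 1}p^{(n)}_j x^j$; summing the scalar recursion over $j$ and using $\sum_{i\geq 2}\frac{1}{i!}(Z^{(n)})^i=e^{Z^{(n)}}-1-Z^{(n)}$ collapses it to $e^{Z^{(n)}(x)}=1+P^{(n)}(x)$, i.e.\ $Z^{(n)}(x)=\ln\prn[\big]{1+P^{(n)}(x)}$ as a formal power series, and extracting the coefficient of $x^m$ from $\ln(1+w)=\sum_{l\geq 1}\frac{(-1)^{l+1}}{l}w^l$ is precisely \eqref{eq:z-coef-rec}. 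Letting $n\to\infty$: by \cref{cor:principal-coef}, $v_k^{(n)}\to v_k^{(\infty)}$ and $v_1^{(n+1)}\to v_1^{(\infty)}=(1-\beta)^{-1}$, so $p^{(n)}_k\to p^{(\infty)}_k$, $\zcoef{m}{n}\to\zcoef{m}{\infty}$, and $Z^{(\infty)}(x)=\ln\prn[\big]{1+P^{(\infty)}(x)}$. Using $\sum_{k\geq 1}v_k^{(\infty)}x^k=x\,g_{\beta}(x)$ and separating $k=1$ gives $P^{(\infty)}(x)=-\frac{x}{1-\beta}-\beta\prn[\big]{x\,g_{\beta}(x)-\frac{x}{1-\beta}}=-x-\beta x\,g_{\beta}(x)$; substituting \eqref{eq:mDeSYH} gives $\beta x\,g_{\beta}(x)=\frac12\prn[\big]{1-\beta-x-\sqrt{(1-\beta-x)^2-4\beta x}}$, hence $1+P^{(\infty)}(x)=\frac12\prn[\big]{1+\beta-x+\sqrt{(1-\beta-x)^2-4\beta x}}$ (with the branch of the root equal to $1-\beta$ at $x=0$, as in \eqref{eq:mDeSYH}). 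Since $\bar{g}_{\beta}(x)=\sum_{k\geq 0}\zcoef{k+1}{\infty}x^k=x^{-1}Z^{(\infty)}(x)$, this is the claimed formula; at $x=0$ the argument of the logarithm is $1$, consistent (after differentiating the logarithm once) with $\bar{g}_{\beta}(0)=\zcoef{1}{\infty}=p^{(\infty)}_1=-(1-\beta)^{-1}$.

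The main obstacle is the bookkeeping behind the principal reduction: one must set up ``principal part'' as a well-defined linear operation on the (formal) span of elementary differentials and verify that it commutes with the composition of Lie derivatives in \eqref{eq:bea-coef-def}, i.e.\ that no non-principal term ever feeds back and that each BEA term of weight $j$ really is proportional to the single tensor $\crl{\nabla^2 L}^{j-1}\nabla L$. Once that is pinned down --- most transparently by identifying the principal part with the quadratic-loss specialization --- everything else is routine power-series algebra.
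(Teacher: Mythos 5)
Your argument is correct and follows essentially the same route as the paper: the paper's \cref{lem:principal-terms-in-bea} performs the same principal reduction, and its coefficient identity $C_l = (-1)^{l+1}/l$ is established via the same $\exp\{\ln(1-x)\} = 1-x$ power-series fact that underlies your $e^{Z^{(n)}(x)} = 1 + P^{(n)}(x)$; the subsequent generating-function computation (identifying $1 + P^{(\infty)}(x)$ via \eqref{eq:mDeSYH}) matches as well. The only cosmetic difference is that you solve a scalar recursion for $\zcoef{m}{n}$ directly while the paper unrolls the BEA recursion into strings $\nabla\nomemcoef{k_1}{n}\cdots\nomemcoef{k_l}{n}$ and identifies the scalar coefficient $C_l$ afterward.
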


The proof is given in \cref{sec:proof-of-principal-flow}.

Informally, the ``full-order'' modified equation (called ``principal flow'' by \citet{rosca2023on}) is approximately
\begin{align*}
  \dot{\btheta}(t)
  &= \sum_{k = 0}^{\infty} z_{k + 1}^{(n)} h^k \crl{\nabla^2 L(\btheta(t))}^{k} \nabla L(\btheta(t)) + \np\\
  &= \bar{g}_{\beta}\prn[\big]{h \nabla^2 L(\btheta(t))} \nabla L(\btheta(t)) + \np
\end{align*}
for large $n$. Taking $\beta = 0$, we recover the result of \citet{rosca2023on} (for GD).

For example, consider the one-dimensional case and $L(\theta) = \theta^2 / 2$. The HB iteration \eqref{eq:hb} is solved by
\begin{equation*}
  \begin{pmatrix}
    \theta^{(n)}\\
    v^{(n)}
  \end{pmatrix} =
  \begin{pmatrix}
    1 - h & h \beta\\
    - 1 & \beta
  \end{pmatrix}^n
  \begin{pmatrix}
    \theta^{(0)}\\ v^{(0)}.
  \end{pmatrix}
\end{equation*}
The general solution is
\begin{align*}
  &\theta^{(n)} = \frac{((1 - h - \lambda_-) \theta^{(0)} + h \beta v^{(0)}) \lambda_+^n + ((\lambda_+ + h - 1) \theta^{(0)} - h \beta v^{(0)}) \lambda_-^n}{\lambda_+ - \lambda_-},\\
  &v^{(n)} = \frac{\prn[\big]{(\beta - \lambda_-) v^{(0)} - \theta^{(0)}} \lambda_+^n + \prn[\big]{\theta^{(0)} + (\lambda_+ - \beta) v^{(0)}} \lambda_-^n}{\lambda_+ - \lambda_-}
\end{align*}
with
\begin{equation*}
\lambda_{\pm} = \frac{1 + \beta - h \pm \sqrt{(1 - \beta - h)^2 - 4 \beta h}}{2},
\end{equation*}
taking for simplicity the case $\lambda_+ \neq \lambda_-$ and $\lambda_{\pm} \in \mathbb{R}$.
If the initialization happened on the invariant manifold we discussed above (attractive if $h$ is small enough), that is, $v^{(0)} = \theta^{(0)} (\lambda_+ + h - 1) / (h \beta)$,
then
\begin{equation}\label{eq:oMWSQv}
\theta^{(n)} = \lambda_+^n \theta^{(0)}.
\end{equation}

The solution of the principal flow $\dot{\theta}(t) = \bar{g}_{\beta}\prn[\big]{h} \theta(t)$ in this case is
\begin{equation}\label{eq:XtfJtO}
\theta(t) = \theta(0) \exp \crl[\bigg]{\frac{t}{h} \ln \prn[\bigg]{\frac{1 + \beta - h + \sqrt{(1 - \beta - h)^2 - 4 \beta h}}{2}}},
\end{equation}
coinciding with \eqref{eq:oMWSQv} at points $t = n h$.

We note in passing that for large step sizes $(1 + \sqrt{\beta})^2 < h < 2 + 2 \beta$ there is another attractive manifold of importance $v / \theta = (h + \lambda_- - 1) / (h \beta)$, where the right-hand side blows up as $h \to 0$. If the initialization happened there, the solution will be
\begin{equation*}
\theta^{(n)} = \lambda_{-}^n \theta^{(0)}.
\end{equation*}
This also coincides with principal flow \eqref{eq:XtfJtO} by choosing appropriate values of the complex square root and logarithm. We illustrate both situations in \cref{fig:pf}.

\begin{figure}[htb!]
  \centering
  \begin{subfigure}[t]{0.480\textwidth}
    \includegraphics[width=\linewidth]{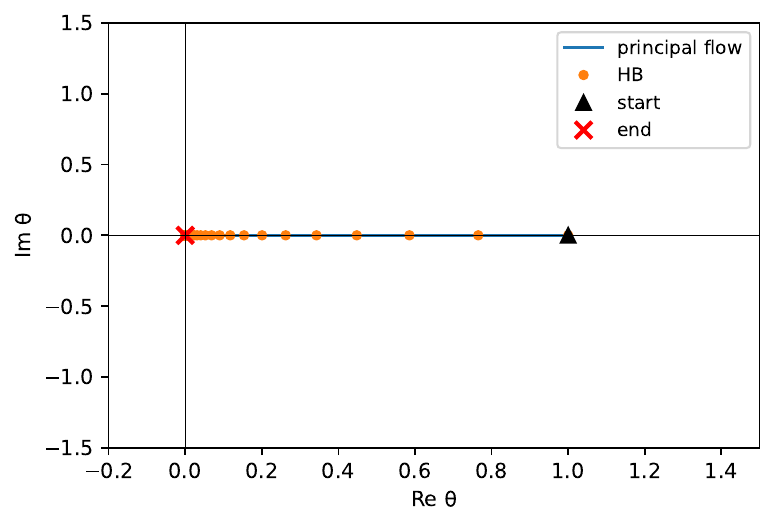}
    \caption{Small $h = 0.02$\label{subfig:real}}
  \end{subfigure}
  \hfill
  \begin{subfigure}[t]{0.480\textwidth}
    \includegraphics[width=\linewidth]{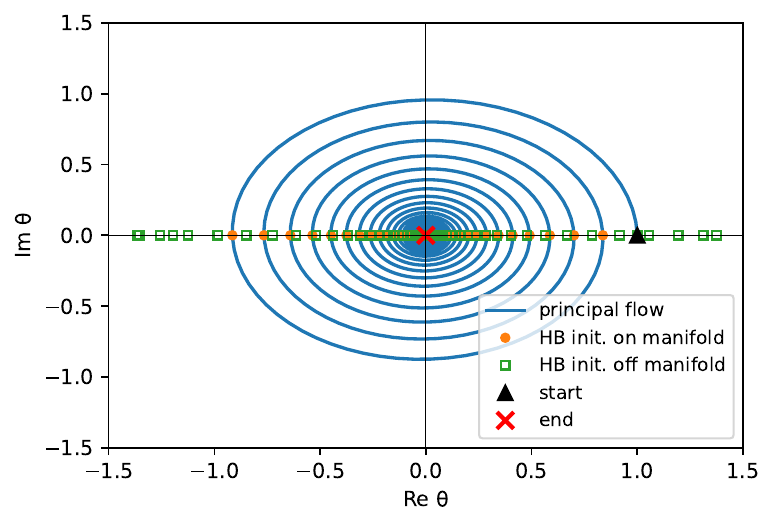}
    \caption{Large $h = 3.38$\label{subfig:complex}}
  \end{subfigure}
  \caption{Principal flow ($\beta = 0.7$, $\theta^{(0)} = 1$) corresponding to the quadratic loss $L(\theta) = \theta^2 / 2$ on the complex plane. It is purely real for small enough step size $h$. For high step sizes, it is not purely real, and the real values $\theta(n h)$ capture the oscillatory behavior of HB (with HB exactly matching the flow if initialized on the attractive invariant manifold).\label{fig:pf}}
\end{figure}

\internalComment{
The function
\begin{equation*}
g_h(\theta) = - \frac{4 \theta}{(1 - \beta) \prn[\big]{(1 - \beta + \sqrt{(1 - \beta - h)^2 - 4 \beta h})^2 - h^2}}
\end{equation*}
satisfies \eqref{eq:uruw}.
\begin{align*}
  &\zeta = \theta - \frac{h \theta}{1 - \beta} + h^2 \beta g_h(\theta) = \theta - \frac{2 h \theta}{1 - \beta + \sqrt{(1 - \beta - h)^2 - 4 \beta h} + h} = \frac{1 - \beta + \sqrt{(1 - \beta - h)^2 - 4 \beta h} - h}{1 - \beta + \sqrt{(1 - \beta - h)^2 - 4 \beta h} + h} \theta,\\
  &\zeta - \theta = - \frac{2 h \theta}{1 - \beta + \sqrt{(1 - \beta - h)^2 - 4 \beta h} + h},\\
  &\frac{\zeta - \theta}{h (1 - \beta)} + \beta g_h(\theta) = - \frac{2 \theta}{(1 - \beta) \brk[\big]{1 - \beta + \sqrt{(1 - \beta - h)^2 - 4 \beta h} + h}} - \frac{4 \beta \theta}{(1 - \beta) \prn[\big]{(1 - \beta + \sqrt{(1 - \beta - h)^2 - 4 \beta h})^2 - h^2}}\\
  &\quad = - \frac{2 \brk[\big]{1 + \beta + \sqrt{(1 - \beta - h)^2 - 4 \beta h} - h}}{(1 - \beta) \prn[\big]{(1 - \beta + \sqrt{(1 - \beta - h)^2 - 4 \beta h})^2 - h^2}} \theta.
\end{align*}
Then
\begin{align*}
  g_h(\zeta) &= - \frac{4}{(1 - \beta) \prn[\big]{(1 - \beta + \sqrt{(1 - \beta - h)^2 - 4 \beta h})^2 - h^2}} \frac{1 - \beta + \sqrt{(1 - \beta - h)^2 - 4 \beta h} - h}{1 - \beta + \sqrt{(1 - \beta - h)^2 - 4 \beta h} + h} \theta \\
  &= - \frac{4}{(1 - \beta) \prn[\big]{1 - \beta + \sqrt{(1 - \beta - h)^2 - 4 \beta h} + h}^2} \theta
\end{align*}
It is in fact equal to $- \frac{2 \brk{1 + \beta + \sqrt{(1 - \beta - h)^2 - 4 \beta h} - h}}{(1 - \beta) \prn[\big]{(1 - \beta + \sqrt{(1 - \beta - h)^2 - 4 \beta h})^2 - h^2}} \theta$.}

\section{Concluding Remarks}

We studied the theoretical properties of gradient descent with \citet{polyak1964some} heavy-ball momentum, the simplest and one of the most commonly used algorithm with memory in optimization. We established an approximation of the algorithm by a memoryless iteration with arbitrary precision. In the full-batch case, this memoryless iteration is (roughly speaking) plain gradient descent with a modified loss. This loss modification can be seen as implicit regularization by memory, and can sometimes explain good empirical performance \citep{barrett2021implicit,smith2021on,ghosh2023implicit}.
In the stochastic (mini-batch) case, additional implicit regularization by mini-batch noise can be identified. These insights can be of practical importance in machine learning, not just for understanding existing algorithms, but also for proposing new ones, e.\,g. by introducing (or strengthening) similar regularization explicitly \citep{foret2021sharpnessaware,zhao2022penalizinggradientnorm}. In addition, analyzing the terms of the memoryless iteration revealed rich combinatorics hidden inside the algorithm, which may be of independent theoretical interest.

\citet{bernstein2024old} notes that ``the precise role of EMA [exponential moving averages] is perhaps still an open problem'' in optimization. Even though we tailored the presentation to be specifically about HB, one of the strengths of our techniques is that they can be used to study other algorithms with decaying memory (defined by smooth enough functions of the parameter $\btheta$). Thus, our paper not only makes a step in the large task of theoretically understanding memory and its effects for a specific optimization algorithm, but also outlines a more general framework that can be used to analyze other algorithms. For example, our techniques could be used to study other (ubiquitous) numerical methods with memory such as Adam \citep{kingma2017adammethodstochasticoptimization} or Shampoo \citep{gupta2018shampoopreconditionedstochastic}.

\clearpage
\appendix

\section{Proof of \cref{th:manifold}}\label{sec:manifold-details}

\subsection{Constants}

It will be convenient to define
\begin{align}
  &\kf := D_1 / (1 - \beta) + h \beta \gamma,\label{eq:k1-def}\\
  &\ks := D_2 / (1 - \beta) + h \beta \delta.\label{eq:k2-def}
\end{align}

The constants $\gamma$, $\delta$ and $\lambda$ are chosen as follows: $\gamma$ is as in \cite[Lemma 10]{kovachki2021continuous}, that is, $\gamma \in [\tau_1, \infty)$ with
\begin{equation*}
\tau_1 := \frac{(1 - \beta)^{-2} D_1 D_2}{1 - \beta - h D_2 \beta / (1 - \beta)},
\end{equation*}
$\delta$ is chosen at the end of the proof of \cref{lem:t-maps-from-gamma-to-gamma}: namely, it needs to be positive and satisfy~\eqref{eq:constaint-on-delta},
and $\lambda$ is chosen to lie in $[\tau_2(\delta), \infty)$ with
\begin{align*}
  \tau_2(\delta) := &\prn[\bigg]{\frac{D_3 (1 - \beta)^{-1 } \brk{(1 - \beta)^{-1} D_2 + h \beta \delta}}{1 - h \crl{(1 - \beta)^{-1} D_2 + h \beta \delta}}\\
  &\quad + \frac{(1 - \beta)^{-1} \brk{D_4 \kf + D_3 \crl{(1 - \beta)^{-1} D_2 + h \beta \delta}}}{\prn[\big]{1 - h \crl{(1 - \beta)^{-1} D_2 + h \beta \delta}}^2}\\
  &\quad + \frac{\prn[\big]{(1 - \beta)^{-1} D_2 + h \beta \delta} \brk[\big]{(1 - \beta)^{-1} D_3}}{\prn[\big]{1 - h \crl{(1 - \beta)^{-1} D_2 + h \beta \delta}}^3}}\\
  &\times \prn[\bigg]{1 - \frac{\beta}{\prn[\big]{1 - h \crl{(1 - \beta)^{-1} D_2 + h \beta \delta}}^2} - \frac{h \beta \brk{(1 - \beta)^{-1} D_2 + h \beta \delta}}{\prn[\big]{1 - h \crl{(1 - \beta)^{-1} D_2 + h \beta \delta}}^3}}^{-1}.
\end{align*}
Notice that $\tau_2(\delta) > 0$ for small enough $h$.

\subsection{Omitted Lemmas}

\begin{lemma}[$\btheta \mapsto \bzeta$ is bijective]\label{lem:ljrhh}
For small enough $h$ and for any $\boldsymbol{g} \in \Gamma$, the function $\btheta \mapsto \bzeta = \btheta - h (1 - \beta)^{-1} \nabla L(\btheta) + h^2 \beta \boldsymbol{g}(\btheta)$ is a bijection $\mathbb{R}^d \to \mathbb{R}^d$.
\end{lemma}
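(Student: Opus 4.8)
The plan is to recognize the map
$F(\btheta) := \bzeta = \btheta - h(1-\beta)^{-1}\nabla L(\btheta) + h^2\beta\boldsymbol{g}(\btheta)$
as a Lipschitz perturbation of the identity with constant strictly below one, and then invoke the Banach fixed-point theorem. Write $F = \mathrm{id} - \Psi$ with $\Psi(\btheta) := h(1-\beta)^{-1}\nabla L(\btheta) - h^2\beta\boldsymbol{g}(\btheta)$. Since $L \in C^3(\mathbb{R}^d,\mathbb{R})$ with $\sup_{\btheta}\norm{\nabla^2 L(\btheta)} \leq D_2$, and since $\boldsymbol{g}\in\Gamma$ satisfies $\sup_{\btheta}\norm{\nabla\boldsymbol{g}(\btheta)} \leq \delta$, the Jacobian of $\Psi$ obeys $\norm{\nabla\Psi(\btheta)} \leq h(1-\beta)^{-1} D_2 + h^2\beta\delta = h\ks$ for every $\btheta$, where $\ks$ is the constant from \eqref{eq:k2-def}. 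Hence $\Psi$ is globally Lipschitz with constant $h\ks$, and since $h\ks \to 0$ as $h \to 0$, we may fix $h$ small enough that $h\ks < 1$, making $\Psi$ a contraction on $\mathbb{R}^d$.

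For injectivity, the reverse triangle inequality gives $\norm{F(\btheta_1) - F(\btheta_2)} \geq \norm{\btheta_1 - \btheta_2} - \norm{\Psi(\btheta_1) - \Psi(\btheta_2)} \geq (1 - h\ks)\norm{\btheta_1 - \btheta_2}$, which is strictly positive whenever $\btheta_1 \neq \btheta_2$. For surjectivity, fix $\bzeta \in \mathbb{R}^d$ and consider the map $\btheta \mapsto \bzeta + \Psi(\btheta)$; it is a contraction on the complete metric space $\mathbb{R}^d$ with the same constant $h\ks < 1$, so it has a (unique) fixed point $\btheta^{\star}$, and by construction $F(\btheta^{\star}) = \btheta^{\star} - \Psi(\btheta^{\star}) = \bzeta$. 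Combining the two, $F$ is a bijection $\mathbb{R}^d \to \mathbb{R}^d$. (From the same estimates one reads off that $F^{-1}$ is Lipschitz with constant $(1 - h\ks)^{-1}$, so $F$ is in fact a homeomorphism, although only the bijection is needed here and downstream in the definition \eqref{eq:def-of-t} of $T$.)

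I do not expect a substantial obstacle: this is the classical fact that a Lipschitz perturbation of the identity with Lipschitz constant below one is invertible on a Banach space. The only points requiring care are bookkeeping ones — using the \emph{global} bounds built into $\Gamma$ (on $\norm{\boldsymbol{g}}$ and $\norm{\nabla\boldsymbol{g}}$, not merely local ones) together with the uniform bound $D_2$ on $\nabla^2 L$ to obtain a global Lipschitz constant for $\Psi$, and matching that constant exactly to $h\ks$ from \eqref{eq:k2-def} so that the "small enough $h$" condition here is consistent with the one used elsewhere in the proof of \cref{th:manifold}.
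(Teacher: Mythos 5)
Your proof is correct, and it is the standard Lipschitz-perturbation-of-the-identity argument: with $\Psi(\btheta) = h(1-\beta)^{-1}\nabla L(\btheta) - h^2\beta\boldsymbol{g}(\btheta)$ you get $\norm{\nabla\Psi} \leq h\ks$ uniformly from $D_2$ and the $\delta$-bound built into $\Gamma$, hence injectivity by the reverse triangle inequality and surjectivity by Banach's fixed-point theorem applied to $\btheta \mapsto \bzeta + \Psi(\btheta)$. The paper does not spell out a proof at all — it simply cites \citet{kovachki2021continuous} for this fact — so your argument supplies the missing details and is consistent with the constant $\ks$ used elsewhere in \cref{th:manifold}.
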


\begin{proof}
This is proven in \citet{kovachki2021continuous}.
\end{proof}

\begin{lemma}[$T$ maps from $\Gamma$ to $\Gamma$]\label{lem:t-maps-from-gamma-to-gamma}
For small enough $h$, the operator $T$ defined in~\eqref{eq:def-of-t} maps from $\Gamma$ to $\Gamma$.
\end{lemma}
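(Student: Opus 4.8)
The goal is to show that for $h$ small enough, if $\boldsymbol{g} \in \Gamma$ then $T\boldsymbol{g} \in \Gamma$ as well, which means verifying four properties of $T\boldsymbol{g}$: (i) $T\boldsymbol{g} \in C^1$; (ii) $\sup_{\btheta}\norm{T\boldsymbol{g}(\btheta)} \le \gamma$; (iii) $\sup_{\btheta}\norm{\nabla(T\boldsymbol{g})(\btheta)} \le \delta$; (iv) $\nabla(T\boldsymbol{g})$ is symmetric and $\lambda$-Lipschitz. The natural order is to first set up notation: write $\bzeta = \bzeta(\btheta) = \btheta - h(1-\beta)^{-1}\nabla L(\btheta) + h^2\beta\boldsymbol{g}(\btheta)$, which by \Cref{lem:ljrhh} is a $C^1$-diffeomorphism of $\mathbb{R}^d$ for small $h$, and record that $\nabla_{\btheta}\bzeta = \mathbb{I} - h(1-\beta)^{-1}\nabla^2 L(\btheta) + h^2\beta\nabla\boldsymbol{g}(\btheta)$, so $\norm{\nabla_{\btheta}\bzeta - \mathbb{I}} \le h\ks$ with $\ks$ as in \eqref{eq:k2-def}; hence $\nabla_{\btheta}\bzeta$ is invertible with $\norm{(\nabla_{\btheta}\bzeta)^{-1}} \le (1 - h\ks)^{-1}$ for $h$ small. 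Then $T\boldsymbol{g}(\bzeta) = \frac{1}{h(1-\beta)}\{\nabla L(\bzeta) - \nabla L(\btheta)\} + \beta\boldsymbol{g}(\btheta)$, and everything is a matter of differentiating this identity in $\btheta$ and chasing bounds through the chain rule.

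For (ii), use $\nabla L(\bzeta) - \nabla L(\btheta) = \int_0^1 \nabla^2 L(\btheta + t(\bzeta-\btheta))\,\mathrm{d}t\,(\bzeta - \btheta)$ together with $\norm{\bzeta - \btheta} \le h\kf$ (with $\kf$ as in \eqref{eq:k1-def}), so that $\norm{T\boldsymbol{g}(\bzeta)} \le (1-\beta)^{-1} D_2 \kf + \beta\gamma$; this is $\le \gamma$ precisely because $\gamma \ge \tau_1$ (after absorbing the $O(h)$ slack into the choice of $\gamma$, exactly as in \citet{kovachki2021continuous}). For (iii) and the symmetry part of (iv): differentiating $h(1-\beta)(T\boldsymbol{g})(\bzeta(\btheta)) + h(1-\beta)\,[\text{const in }\btheta]\cdots$ — more precisely, applying $\nabla_{\btheta}$ to $T\boldsymbol{g}(\bzeta) = \frac{1}{h(1-\beta)}(\nabla L(\bzeta)-\nabla L(\btheta)) + \beta\boldsymbol{g}(\btheta)$ and using the chain rule $\nabla_{\btheta}[T\boldsymbol{g}(\bzeta)] = \nabla(T\boldsymbol{g})(\bzeta)\cdot\nabla_{\btheta}\bzeta$ — gives
\begin{equation*}
\nabla(T\boldsymbol{g})(\bzeta) = \Big(\tfrac{1}{h(1-\beta)}\big(\nabla^2 L(\bzeta)\,\nabla_{\btheta}\bzeta - \nabla^2 L(\btheta)\big) + \beta\nabla\boldsymbol{g}(\btheta)\Big)(\nabla_{\btheta}\bzeta)^{-1}.
\end{equation*}
Symmetry: substitute $\nabla_{\btheta}\bzeta = \mathbb{I} - h(1-\beta)^{-1}\nabla^2 L(\btheta) + h^2\beta\nabla\boldsymbol{g}(\btheta)$ into the bracket; the term $\frac{1}{h(1-\beta)}\nabla^2 L(\bzeta)\nabla_{\btheta}\bzeta$ expands as $\frac{1}{h(1-\beta)}\nabla^2 L(\bzeta) - \frac{1}{(1-\beta)^2}\nabla^2 L(\bzeta)\nabla^2 L(\btheta) + \frac{h\beta}{1-\beta}\nabla^2 L(\bzeta)\nabla\boldsymbol{g}(\btheta)$, and one checks the bracket equals a symmetric matrix times $(\nabla_{\btheta}\bzeta)^{-1}$ with $\nabla_{\btheta}\bzeta$ itself symmetric (since $\nabla^2 L$ and $\nabla\boldsymbol{g}$ are), so that $\nabla(T\boldsymbol{g})(\bzeta) = S\,(\nabla_{\btheta}\bzeta)^{-1}$ where $S = \beta\,\nabla_{\btheta}\bzeta \cdot \tfrac{1}{?}\cdots$ — the cleanest route is to verify directly that $\nabla(T\boldsymbol{g})(\bzeta)\cdot\nabla_{\btheta}\bzeta$ equals $\nabla_{\btheta}\bzeta \cdot \nabla(T\boldsymbol{g})(\bzeta)^{\!\top}$-compatible form; concretely, one shows $\nabla(T\boldsymbol{g})(\bzeta)$ is conjugate to a symmetric matrix and, because $\nabla L \in C^3$ makes $T\boldsymbol{g}\in C^1$, Clairaut's theorem forces genuine symmetry once we know $T\boldsymbol{g}$ is itself a gradient field — but the slicker argument is just to observe that $\frac{1}{h(1-\beta)}(\nabla L(\bzeta)-\nabla L(\btheta)) + \beta\boldsymbol{g}(\btheta)$, pushed through the bijection $\btheta\mapsto\bzeta$, has Jacobian equal to a product of symmetric matrices in an order that happens to be symmetric here, which we verify by the explicit expansion above. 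The bound $\norm{\nabla(T\boldsymbol{g})(\bzeta)} \le \delta$ then follows by bounding each summand: $\frac{1}{h(1-\beta)}\norm{\nabla^2 L(\bzeta)\nabla_{\btheta}\bzeta - \nabla^2 L(\btheta)} \le \frac{1}{h(1-\beta)}(\norm{\nabla^2 L(\bzeta)-\nabla^2 L(\btheta)} + \norm{\nabla^2 L(\bzeta)}\norm{\nabla_{\btheta}\bzeta - \mathbb{I}}) \le \frac{1}{h(1-\beta)}(D_3 h\kf + D_2 h\ks) = O(1)$, plus $\beta\delta$, all multiplied by $(1-h\ks)^{-1}$; collecting terms yields a constraint of the form $\delta \ge (\text{stuff}) + \frac{\beta\delta}{1-h\ks}\cdot(\text{something}<1)$, which is satisfiable for $h$ small — this is exactly the inequality labeled \eqref{eq:constaint-on-delta} that pins down $\delta$.

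For the $\lambda$-Lipschitz part of (iv): differentiate once more, or rather estimate $\norm{\nabla(T\boldsymbol{g})(\bzeta_1) - \nabla(T\boldsymbol{g})(\bzeta_2)}$ in terms of $\norm{\bzeta_1 - \bzeta_2}$. Using the explicit formula for $\nabla(T\boldsymbol{g})(\bzeta)$ above, one writes the difference as a sum of products of factors each of which is either bounded (by $D_2, D_3, \delta, (1-h\ks)^{-1}$) or Lipschitz (with constants $D_3, D_4, \lambda$, and the Lipschitz constant of $\btheta\mapsto\bzeta^{-1}$ and of $(\nabla_{\btheta}\bzeta)^{-1}$, both controlled since $\nabla_{\btheta}\bzeta$ is close to $\mathbb{I}$); the telescoping is routine but tedious. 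The resulting inequality has the shape $\lambda \ge \tau_2(\delta)$ after the geometric series from the recursive appearance of $\lambda$ (through $\beta\nabla\boldsymbol{g}$, which is $\lambda$-Lipschitz) is summed — and the displayed expression for $\tau_2(\delta)$ in the ``Constants'' subsection is precisely the fixed point of that recursion, positive for small $h$. The main obstacle is the bookkeeping in this last Lipschitz estimate: one must expand $\nabla(T\boldsymbol{g})(\bzeta_1) - \nabla(T\boldsymbol{g})(\bzeta_2)$ into enough terms, bound each using $\norm{\btheta_1 - \btheta_2} \le (1-h\ks)^{-1}\norm{\bzeta_1 - \bzeta_2}$ (or the reverse), and check that the coefficient of $\lambda$ on the right-hand side is strictly below $1$ for small $h$ so the recursion closes — everything else is an essentially mechanical repetition of the bounding technique from \citet{kovachki2021continuous} with one extra derivative carried along, and I would relegate the full computation to the appendix exactly as the authors do.
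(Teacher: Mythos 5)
Your proposal has the right skeleton — differentiate the defining identity via the chain rule, verify boundedness, a Jacobian bound, symmetry of the Jacobian, and the Lipschitz estimate, with constants chosen to close the self-referential inequalities — and the explicit formula
\begin{equation*}
\nabla(T\boldsymbol{g})(\bzeta) = \Big(\tfrac{1}{h(1-\beta)}\big(\nabla^2 L(\bzeta)\,\nabla_{\btheta}\bzeta - \nabla^2 L(\btheta)\big) + \beta\nabla\boldsymbol{g}(\btheta)\Big)(\nabla_{\btheta}\bzeta)^{-1}
\end{equation*}
is correct. The bounds $\norm{T\boldsymbol{g}}\le\gamma$ and $\norm{\nabla T\boldsymbol{g}}\le\delta$ and the Lipschitz scheme all agree with the paper.

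The real gap is the symmetry argument, which is the one genuinely new ingredient beyond \citet{kovachki2021continuous}, and it is precisely where your proposal is confused. You offer three tentative routes, none of which works as stated. ``A symmetric matrix times $(\nabla_{\btheta}\bzeta)^{-1}$ with $\nabla_{\btheta}\bzeta$ symmetric'' does not suffice: a product of two symmetric matrices is not symmetric unless they commute, and the factors here (e.g.\ $\nabla^2 L(\bzeta)$ and $\nabla^2 L(\btheta)$, evaluated at different points) do not commute. Invoking Clairaut would be circular: that $T\boldsymbol{g}$ is a gradient field is exactly what symmetry of its Jacobian is supposed to establish. What the paper actually does is a specific algebraic rearrangement: letting $A := \tfrac{h}{1-\beta}\nabla^2 L(\btheta) - h^2\beta\nabla\boldsymbol{g}(\btheta)$ so that $\nabla_{\btheta}\bzeta = I - A$, one writes
\begin{equation*}
\nabla(T\boldsymbol{g})(\bzeta) = \frac{\nabla^2 L(\bzeta) - \nabla^2 L(\btheta)}{h(1-\beta)} + \beta\nabla\boldsymbol{g}(\btheta) + \frac{1}{h^2}\,A\,\big\{I - (I-A)^{-1}\big\},
\end{equation*}
in which every summand is manifestly symmetric: the first two because $\nabla^2 L$ and $\nabla\boldsymbol{g}$ are, and the third because $A$ is symmetric and commutes with any rational function of itself, in particular with $(I-A)^{-1}$, so that $A\{I-(I-A)^{-1}\}$ equals its own transpose. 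That commutation fact is the whole point; without isolating it, the symmetry claim is not established, and without symmetry the operator $T$ does not map into $\Gamma$ at all. Pin down this rearrangement and the commuting argument, and the rest of your plan goes through as you describe.
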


\begin{proof}
Take $\boldsymbol{g} \in \Gamma$. By \cref{lem:ljrhh}, the function $\btheta \mapsto \bzeta = \btheta - h (1 - \beta)^{-1} \nabla L(\btheta) + h^2 \beta \boldsymbol{g}(\btheta)$ is a bijection. This function is continuously differentiable:
\begin{equation*}
\nabla_{\btheta} \bzeta(\btheta) = I - \frac{h}{1 - \beta} \nabla^2 L(\btheta) + h^2 \beta \nabla \boldsymbol{g}(\btheta).
\end{equation*}
For $h$ small enough, this matrix is invertible. By the inverse mapping theorem, we see that the inverse mapping is continuously differentiable with Jacobian
\begin{equation}\label{eq:ljfhhh}
\nabla_{\bzeta} \btheta(\bzeta) = \prn[\bigg]{I - \frac{h}{1 - \beta} \nabla^2 L(\btheta(\bzeta)) + h^2 \beta \nabla \boldsymbol{g}(\btheta(\bzeta))}^{-1}.
\end{equation}
Therefore, the function $T \boldsymbol{g}$ is also continuously differentiable with derivative
\begin{align*}
  &\nabla_{\bzeta} T \boldsymbol{g}(\bzeta) = \frac{1}{h (1 - \beta)} \nabla_{\bzeta} \crl[\big]{\nabla L(\bzeta) - \nabla L(\btheta(\bzeta))} + \beta \nabla_{\bzeta} \crl[\big]{\boldsymbol{g}(\btheta(\bzeta))}\\
  &\quad = \frac{1}{h (1 - \beta)} \crl[\big]{\nabla^2 L(\bzeta) - \nabla^2 L(\btheta(\bzeta)) \nabla_{\bzeta} \btheta(\bzeta)} + \beta \nabla \boldsymbol{g}(\btheta(\bzeta)) \nabla_{\bzeta} \btheta(\bzeta)\\
  &\quad = \frac{1}{h (1 - \beta)} \crl[\bigg]{\nabla^2 L(\bzeta) - \nabla^2 L(\btheta(\bzeta)) \prn[\bigg]{I - \frac{h}{1 - \beta} \nabla^2 L(\btheta(\bzeta)) + h^2 \beta \nabla \boldsymbol{g}(\btheta(\bzeta))}^{-1}}\\
  &\qquad + \beta \nabla \boldsymbol{g}(\btheta(\bzeta)) \prn[\bigg]{I - \frac{h}{1 - \beta} \nabla^2 L(\btheta(\bzeta)) + h^2 \beta \nabla \boldsymbol{g}(\btheta(\bzeta))}^{-1}\\
  &\quad = \frac{1}{h (1 - \beta)} \nabla^2 L(\bzeta)\\
  &\qquad + \frac{1}{h^2} \prn[\bigg]{- \frac{h}{1 - \beta} \nabla^2 L(\btheta(\bzeta)) + h^2 \beta \nabla \boldsymbol{g}(\btheta(\bzeta))} \prn[\bigg]{I - \frac{h}{1 - \beta} \nabla^2 L(\btheta(\bzeta)) + h^2 \beta \nabla \boldsymbol{g}(\btheta(\bzeta))}^{-1}\\
  &\quad = \frac{\nabla^2 L(\bzeta) - \nabla^2 L(\btheta(\bzeta))}{h (1 - \beta)} + \beta \nabla \boldsymbol{g}(\btheta(\bzeta)) + \frac{1}{h^2} \prn[\bigg]{\frac{h}{1 - \beta} \nabla^2 L(\btheta(\bzeta)) - h^2 \beta \nabla \boldsymbol{g}(\btheta(\bzeta))}\\
  &\qquad - \frac{1}{h^2} \prn[\bigg]{\frac{h}{1 - \beta} \nabla^2 L(\btheta(\bzeta)) - h^2 \beta \nabla \boldsymbol{g}(\btheta(\bzeta))} \prn[\bigg]{I - \frac{h}{1 - \beta} \nabla^2 L(\btheta(\bzeta)) + h^2 \beta \nabla \boldsymbol{g}(\btheta(\bzeta))}^{-1}\\
  &\quad = \frac{\nabla^2 L(\bzeta) - \nabla^2 L(\btheta(\bzeta))}{h (1 - \beta)} + \beta \nabla \boldsymbol{g}(\btheta(\bzeta))\\
  &\qquad + \frac{1}{h^2} \prn[\bigg]{\frac{h}{1 - \beta} \nabla^2 L(\btheta(\bzeta)) - h^2 \beta \nabla \boldsymbol{g}(\btheta(\bzeta))}\\
  &\hspace{10em} \times \crl[\bigg]{I - \prn[\bigg]{I - \frac{h}{1 - \beta} \nabla^2 L(\btheta(\bzeta)) + h^2 \beta \nabla \boldsymbol{g}(\btheta(\bzeta))}^{-1}}.\numberthis\label{eq:derivative-of-t}
\end{align*}
This matrix is symmetric for any $\boldsymbol{g} \in \Gamma$ because any square matrix $A$ commutes with $(I - A)^{-1}$ (provided the latter exists). We have proven that $T \boldsymbol{g}$ is a continuously differentiable function and its Jacobian is symmetric.

\paragraph{Bounding $T \boldsymbol{g}$}
We need to show that the norm of the function $T \boldsymbol{g}$ does not exceed $\gamma$ for any $g \in \Gamma$. We have for any $\bzeta \in \mathbb{R}^d$
\begin{align*}
  \norm{T \boldsymbol{g}(\bzeta)} &\leq \frac{\sup_{\btheta} \norm{\nabla^2 L(\btheta)}}{h (1 - \beta)} \norm{\bzeta - \btheta(\bzeta)} + \beta \norm{\boldsymbol{g}(\btheta(\bzeta))}\\
  \intertext{(where we used the definition of $T$ in \cref{eq:def-of-t})}
                                  &= \frac{\sup_{\btheta} \norm{\nabla^2 L(\btheta)}}{h (1 - \beta)} \norm[\bigg]{- \frac{h}{1 - \beta} \nabla L(\btheta(\bzeta)) + h^2 \beta \boldsymbol{g}(\btheta(\bzeta))} + \beta \norm{\boldsymbol{g}(\btheta(\bzeta))}\\
                                  &\leq \frac{\sup_{\btheta} \norm{\nabla^2 L(\btheta)} \sup_{\btheta} \norm{\nabla L(\btheta)}}{(1 - \beta)^2} + \prn[\bigg]{\beta + h \frac{\beta}{1 - \beta} \sup_{\btheta} \norm{\nabla^2 L(\btheta)}} \gamma\\
                                  &= \frac{D_1 D_2}{(1 - \beta)^2} + \prn[\bigg]{\beta + h \frac{\beta}{1 - \beta} D_2} \gamma\\
  &\leq \gamma,\numberthis\label{eq:ldlkjfhh}
\end{align*}
where the last inequality follows from the definition of $\tau_1$ and since $\gamma$ was taken to be at least $\tau_1$.

\paragraph{Bounding the derivative of $T \boldsymbol{g}$}
Consider the expression ending in~\eqref{eq:derivative-of-t}. We will use
\begin{align*}
  &\norm{\nabla_{\bzeta} T \boldsymbol{g}(\bzeta)} \leq \norm[\bigg]{\frac{\nabla^2 L(\bzeta) - \nabla^2 L(\btheta(\bzeta))}{h (1 - \beta)}}\\
  &\qquad + \norm[\bigg]{\frac{1}{h (1 - \beta)} \nabla^2 L(\btheta(\bzeta)) \crl[\bigg]{I - \prn[\bigg]{I - \frac{h}{1 - \beta} \nabla^2 L(\btheta(\bzeta)) + h^2 \beta \nabla \boldsymbol{g}(\btheta(\bzeta))}^{-1}}}\\
  &\qquad + \norm[\bigg]{\beta \nabla \boldsymbol{g}(\btheta(\bzeta)) \prn[\bigg]{I - \frac{h}{1 - \beta} \nabla^2 L(\btheta(\bzeta)) + h^2 \beta \nabla \boldsymbol{g}(\btheta(\bzeta))}^{-1}}
\end{align*}
and bound each of the three terms in the right-hand side separately.

The first term is easy to bound: \internalComment{$D_3$ is defined so that this is true}
\begin{align*}
  &\norm[\bigg]{\frac{\nabla^2 L(\bzeta) - \nabla^2 L(\btheta(\bzeta))}{h (1 - \beta)}}
    \leq \frac{D_3}{h (1 - \beta)} \norm{\bzeta - \btheta(\bzeta)}\\
  &\quad = \frac{D_3}{h (1 - \beta)} \norm[\bigg]{- \frac{h}{1 - \beta} \nabla L(\btheta(\bzeta)) + h^2 \beta \boldsymbol{g}(\btheta(\bzeta))}\\
  &\quad \leq \frac{D_3}{h (1 - \beta)} \crl[\bigg]{\frac{h D_1}{1 - \beta} + h^2 \beta \gamma} = \frac{D_3 \kf}{1 - \beta}.
\end{align*}

Now we bound
\begin{align*}
  &\norm[\bigg]{\frac{1}{h (1 - \beta)} \nabla^2 L(\btheta(\bzeta)) \crl[\bigg]{I - \prn[\bigg]{I - \frac{h}{1 - \beta} \nabla^2 L(\btheta(\bzeta)) + h^2 \beta \nabla \boldsymbol{g}(\btheta(\bzeta))}^{-1}}}\\
  &\quad \leq \frac{D_2}{h (1 - \beta)} \norm[\bigg]{I - \prn[\bigg]{I - \frac{h}{1 - \beta} \nabla^2 L(\btheta(\bzeta)) + h^2 \beta \nabla \boldsymbol{g}(\btheta(\bzeta))}^{-1}}.
\end{align*}
Notice that the eigenvalues of the matrix
\begin{equation*}
I - \prn[\bigg]{I - \frac{h}{1 - \beta} \nabla^2 L(\btheta(\bzeta)) + h^2 \beta \nabla \boldsymbol{g}(\btheta(\bzeta))}^{-1}
\end{equation*}
are of the form $- h \lambda_i / (1 - h \lambda_i)$ where $\crl{\lambda_i}$ are eigenvalues of the matrix
$(1 - \beta)^{-1} \nabla^2 L(\btheta(\bzeta)) - h \beta \nabla \boldsymbol{g}(\btheta(\bzeta))$.
For small enough $h$, we have $h \max_i \abs{\lambda_i} < 1$, so we can bound
\begin{align*}
  \frac{\abs{\lambda_i}}{\abs{1 - h \lambda_i}} &\leq \frac{\abs{\lambda_i}}{1 - h \abs{\lambda_i}} \leq \frac{\max_i \abs{\lambda_i}}{1 - h \max_i \abs{\lambda_i}} = \frac{\norm{(1 - \beta)^{-1} \nabla^2 L(\btheta(\bzeta)) + h \beta \nabla \boldsymbol{g}(\btheta(\bzeta))}}{1 - h \norm{(1 - \beta)^{-1} \nabla^2 L(\btheta(\bzeta)) + h \beta \nabla \boldsymbol{g}(\btheta(\bzeta))}}\\
  &\leq \frac{(1 - \beta)^{-1} D_2 + h \beta \delta}{1 - h \crl{(1 - \beta)^{-1} D_2 + h \beta \delta}}.
\end{align*}

Similarly,
\begin{align*}
\norm[\bigg]{\beta \nabla \boldsymbol{g}(\btheta(\bzeta)) \prn[\bigg]{I - \frac{h}{1 - \beta} \nabla^2 L(\btheta(\bzeta)) + h^2 \beta \nabla \boldsymbol{g}(\btheta(\bzeta))}^{-1}} \leq \frac{\beta \delta}{1 - h \crl{(1 - \beta)^{-1} D_2 + h \beta \delta}}.
\end{align*}

We conclude that
\begin{align*}
  &\norm{\nabla_{\bzeta} T \boldsymbol{g}(\bzeta)} \leq \frac{D_3 \kf}{1 - \beta} + \frac{D_2}{1 - \beta} \frac{(1 - \beta)^{-1} D_2 + h \beta \delta}{1 - h \crl{(1 - \beta)^{-1} D_2 + h \beta \delta}} + \frac{\beta \delta}{1 - h \crl{(1 - \beta)^{-1} D_2 + h \beta \delta}}\\
  &\quad = \frac{D_3 \kf \prn[\big]{1 - h \crl[\big]{\frac{D_2}{1 - \beta} + h \beta \delta}} + D_2 \prn[\big]{\frac{D_2}{1 - \beta} + h \beta \delta} + \beta \delta (1 - \beta)}{(1 - \beta) [1 - h \crl{(1 - \beta)^{-1} D_2 + h \beta \delta}]}\\
  &\quad = \frac{(D_2 - h D_3 \kf) \prn[\big]{\frac{D_2}{1 - \beta} + h \beta \delta} + D_3 \kf + \beta \delta (1 - \beta)}{(1 - \beta) [1 - h \crl{(1 - \beta)^{-1} D_2 + h \beta \delta}]}.
\end{align*}

The inequality
\begin{equation}\label{eq:constaint-on-delta}
\frac{(D_2 - h D_3 \kf) \prn[\big]{\frac{D_2}{1 - \beta} + h \beta \delta} + D_3 \kf + \beta \delta (1 - \beta)}{(1 - \beta) [1 - h \crl{(1 - \beta)^{-1} D_2 + h \beta \delta}]} \leq \delta
\end{equation}
can be rewritten as
\begin{align*}
h^2 \beta \delta^2 + \prn[\bigg]{\beta -1 + h \frac{1 + \beta}{1 - \beta} D_2 - \frac{h^2 \beta}{1 - \beta} \kf D_3} \delta + \frac{D_2 (D_2 - h D_3 \kf)}{(1 - \beta)^2} + \frac{D_3 \kf}{1 - \beta} \leq 0.
\end{align*}

For small enough $h$, we can ensure that the quadratic equation in the left-hand side has a positive root, which means we can take any $\delta > 0$ slightly less than this root, and the inequality will hold.

\paragraph{Bounding the Lipschitz constant of the derivative of $T \boldsymbol{g}$}
Consider the expression ending in~\eqref{eq:derivative-of-t}.\internalComment{If $\boldsymbol{g}$ were twice differentiable, we could find the derivative explicitly:
\begin{align*}
  &\nabla_{\bzeta} \frac{\nabla^2 L(\bzeta)}{h (1 - \beta)} - \nabla_{\bzeta} \crl[\bigg]{\prn[\bigg]{\frac{\nabla^2 L(\btheta(\bzeta))}{h (1 - \beta)} - \beta \nabla \boldsymbol{g}(\btheta(\bzeta))} \prn[\bigg]{I - \frac{h}{1 - \beta} \nabla^2 L(\btheta(\bzeta)) + h^2 \beta \nabla \boldsymbol{g}(\btheta(\bzeta))}^{-1}}\\
  &\quad = \frac{\nabla^3 L(\bzeta)}{h (1 - \beta)} - \prn[\bigg]{\frac{\nabla^3 L(\btheta(\bzeta))}{h (1 - \beta)} - \beta \nabla^2 \boldsymbol{g}(\btheta(\bzeta))} \prn[\bigg]{I - \frac{h}{1 - \beta} \nabla^2 L(\btheta(\bzeta)) + h^2 \beta \nabla \boldsymbol{g}(\btheta(\bzeta))}^{-2}\\
  &\quad + \prn[\bigg]{\frac{\nabla^2 L(\btheta(\bzeta))}{h (1 - \beta)} - \beta \nabla \boldsymbol{g}(\btheta(\bzeta))}
    \prn[\bigg]{I - \frac{h}{1 - \beta} \nabla^2 L(\btheta(\bzeta)) + h^2 \beta \nabla \boldsymbol{g}(\btheta(\bzeta))}^{-1}\\
  &\qquad \times \nabla_{\bzeta} \crl[\bigg]{- \frac{h}{1 - \beta} \nabla^2 L(\btheta(\bzeta)) + h^2 \beta \nabla \boldsymbol{g}(\btheta(\bzeta))} \prn[\bigg]{I - \frac{h}{1 - \beta} \nabla^2 L(\btheta(\bzeta)) + h^2 \beta \nabla \boldsymbol{g}(\btheta(\bzeta))}^{-1}\\
  &\quad = \frac{\nabla^3 L(\bzeta)}{h (1 - \beta)} - \prn[\bigg]{\frac{\nabla^3 L(\btheta(\bzeta))}{h (1 - \beta)} - \beta \nabla^2 \boldsymbol{g}(\btheta(\bzeta))} \prn[\bigg]{I - \frac{h}{1 - \beta} \nabla^2 L(\btheta(\bzeta)) + h^2 \beta \nabla \boldsymbol{g}(\btheta(\bzeta))}^{-2}\\
  &\quad + \prn[\bigg]{\frac{\nabla^2 L(\btheta(\bzeta))}{h (1 - \beta)} - \beta \nabla \boldsymbol{g}(\btheta(\bzeta))}
    \prn[\bigg]{I - \frac{h}{1 - \beta} \nabla^2 L(\btheta(\bzeta)) + h^2 \beta \nabla \boldsymbol{g}(\btheta(\bzeta))}^{-1}\\
  &\qquad \times \crl[\bigg]{- \frac{h}{1 - \beta} \nabla^3 L(\btheta(\bzeta)) + h^2 \beta \nabla^2 \boldsymbol{g}(\btheta(\bzeta))} \prn[\bigg]{I - \frac{h}{1 - \beta} \nabla^2 L(\btheta(\bzeta)) + h^2 \beta \nabla \boldsymbol{g}(\btheta(\bzeta))}^{-2}.
\end{align*}
Denote $A := (1 - \beta)^{-1} \nabla^2 L(\btheta(\bzeta)) - h \beta \nabla \boldsymbol{g}(\btheta(\bzeta))$. Then this equals
\begin{align*}
&\frac{\nabla^3 L(\bzeta)}{h (1 - \beta)} - \frac{1}{h} \crl[\bigg]{\frac{\nabla^3 L(\btheta(\bzeta))}{1 - \beta} - h \beta \nabla^2 \boldsymbol{g}(\btheta(\bzeta))} \prn{I - h A}^{-2}\\
&\quad - A \prn{I - h A}^{-1} \crl[\bigg]{\frac{\nabla^3 L(\btheta(\bzeta))}{1 - \beta} - h \beta \nabla^2 \boldsymbol{g}(\btheta(\bzeta))} \prn{I - h A}^{-2}.
\end{align*}

To bound this, we get rid of the two $O(h^{-1})$ terms by noticing that their difference is $O(1)$:
\begin{equation*}
\frac{\nabla^3 L(\bzeta)}{h (1 - \beta)} - \frac{\nabla^3 L(\btheta(\bzeta))}{h (1 - \beta)} (I - h A)^{-2} = \frac{O(h)}{h (1 - \beta)} = O(1).
\end{equation*}

Everything else is already $O(1)$ or smaller.
}
To declutter notation, denote $A_i := (1 - \beta)^{-1} \nabla^2 L(\btheta(\bzeta_i)) - h \beta \nabla \boldsymbol{g}(\btheta(\bzeta_i))$ for $i \in \crl{1, 2}$.
First, decompose
\begin{align*}
  &\nabla_{\bzeta} T \boldsymbol{g}(\bzeta_1) - \nabla_{\bzeta} T \boldsymbol{g}(\bzeta_2) = \frac{\nabla^2 L(\bzeta_1)}{h (1 - \beta)} - \prn[\bigg]{\frac{\nabla^2 L(\btheta(\bzeta_1))}{h (1 - \beta)} - \beta \nabla \boldsymbol{g}(\btheta(\bzeta_1))} \prn{I - h A_1}^{-1}\\
  &\quad - \frac{\nabla^2 L(\bzeta_2)}{h (1 - \beta)} + \prn[\bigg]{\frac{\nabla^2 L(\btheta(\bzeta_2))}{h (1 - \beta)} - \beta \nabla \boldsymbol{g}(\btheta(\bzeta_2))} \prn{I - h A_2}^{-1}\\
  &\quad = \frac{\nabla^2 L(\bzeta_1) - \nabla^2 L(\bzeta_2)}{h (1 - \beta)}\\
  &\qquad - \prn[\bigg]{\frac{\nabla^2 L(\btheta(\bzeta_1)) - \nabla^2 L(\btheta(\bzeta_2))}{h (1 - \beta)} - \beta \nabla \boldsymbol{g}(\btheta(\bzeta_1)) + \beta \nabla \boldsymbol{g}(\btheta(\bzeta_2))} \prn{I - h A_1}^{-1}\\
  &\qquad - \prn[\bigg]{\frac{\nabla^2 L(\btheta(\bzeta_2))}{h (1 - \beta)} - \beta \nabla \boldsymbol{g}(\btheta(\bzeta_2))} \crl[\big]{\prn{I - h A_1}^{-1} - \prn{I - h A_2}^{-1}}\\
  &\quad = T_1 - T_2 - T_3,
\end{align*}
where
\begin{align*}
  &T_1 := \frac{\nabla^2 L(\bzeta_1) - \nabla^2 L(\bzeta_2)}{h (1 - \beta)} \crl[\big]{I - (I - h A_1)^{-1}},\\
  &T_2 := \prn[\bigg]{\frac{\nabla^2 L(\btheta(\bzeta_1)) - \nabla^2 L(\bzeta_1) - \nabla^2 L(\btheta(\bzeta_2)) + \nabla^2 L(\bzeta_2)}{h (1 - \beta)} - \beta \nabla \boldsymbol{g}(\btheta(\bzeta_1)) + \beta \nabla \boldsymbol{g}(\btheta(\bzeta_2))} \times\\
  &\qquad \times \prn{I - h A_1}^{-1},\\
  &T_3 := \prn[\bigg]{\frac{\nabla^2 L(\btheta(\bzeta_2))}{h (1 - \beta)} - \beta \nabla \boldsymbol{g}(\btheta(\bzeta_2))} \crl[\big]{\prn{I - h A_1}^{-1} - \prn{I - h A_2}^{-1}}.
\end{align*}

To bound $T_1$, use $\norm{I - (I - h A_1)^{-1}} \leq h \frac{(1 - \beta)^{-1} D_2 + h \beta \delta}{1 - h \crl{(1 - \beta)^{-1} D_2 + h \beta \delta}}$ and conclude
\begin{align*}
&\norm{T_1} \leq \frac{D_3 (1 - \beta)^{-1 } \brk{(1 - \beta)^{-1} D_2 + h \beta \delta}}{1 - h \crl{(1 - \beta)^{-1} D_2 + h \beta \delta}} \norm{\bzeta_1 - \bzeta_2}.
\end{align*}

To bound $T_2$, note that
\begin{align*}
  &\nabla_{\bzeta} \crl[\big]{\nabla^2 L(\btheta(\bzeta)) - \nabla^2 L(\bzeta)} = \nabla^3 L(\btheta(\bzeta)) (I - h A)^{-1} - \nabla^3 L(\bzeta)\\
  &\quad = \prn[\big]{\nabla^3 L(\btheta(\bzeta)) - \nabla^3 L(\bzeta)} (I - h A)^{-1} + \nabla^3 L(\bzeta) \prn[\big]{(I - h A)^{-1} - I}
\end{align*}
where $A := (1 - \beta)^{-1} \nabla^2 L(\btheta(\bzeta)) - h \beta \nabla \boldsymbol{g}(\btheta(\bzeta))$.
Therefore,
\begin{align*}
  &\norm{\nabla^2 L(\btheta(\bzeta_1)) - \nabla^2 L(\bzeta_1) - \nabla^2 L(\btheta(\bzeta_2)) + \nabla^2 L(\bzeta_2)}\\
  &\quad\leq \sup_{\bzeta} \norm[\big]{\nabla_{\bzeta} \crl[\big]{\nabla^2 L(\btheta(\bzeta)) - \nabla^2 L(\bzeta)}} \norm{\bzeta_1 - \bzeta_2}\\
  &\quad \leq \frac{D_4 \sup_{\bzeta} \norm{\btheta(\bzeta) - \bzeta}}{1 - h \crl{(1 - \beta)^{-1} D_2 + h \beta \delta}} \norm{\bzeta_1 - \bzeta_2} + h \frac{D_3 \crl{(1 - \beta)^{-1} D_2 + h \beta \delta}}{1 - h \crl{(1 - \beta)^{-1} D_2 + h \beta \delta}} \norm{\bzeta_1 - \bzeta_2}\\
  &\quad \leq h \frac{D_4 \kf + D_3 \crl{(1 - \beta)^{-1} D_2 + h \beta \delta}}{1 - h \crl{(1 - \beta)^{-1} D_2 + h \beta \delta}} \norm{\bzeta_1 - \bzeta_2},
\end{align*}
where the last inequality is obtained by bounding $\sup_{\bzeta} \norm{\btheta(\bzeta) - \bzeta}$ like for \cref{eq:ldlkjfhh}. Since also
\begin{align*}
  \beta \norm{\nabla \boldsymbol{g}(\btheta(\bzeta_1)) - \nabla \boldsymbol{g}(\btheta(\bzeta_2))} &\leq \beta \lambda \norm{\btheta(\bzeta_1) - \btheta(\bzeta_2)} \leq \beta \lambda \sup_{\bzeta} \norm{\nabla_{\bzeta} \btheta(\bzeta)} \norm{\bzeta_1 - \bzeta_2}\\
  &\leq \frac{\beta \lambda}{1 - h \crl{(1 - \beta)^{-1} D_2 + h \beta \delta}} \norm{\bzeta_1 - \bzeta_2},
\end{align*}
we conclude
\begin{align*}
\norm{T_2} \leq \frac{(1 - \beta)^{-1} \brk{D_4 \kf + D_3 \crl{(1 - \beta)^{-1} D_2 + h \beta \delta}} + \beta \lambda}{\prn[\big]{1 - h \crl{(1 - \beta)^{-1} D_2 + h \beta \delta}}^2} \norm{\bzeta_1 - \bzeta_2}.
\end{align*}

To bound $T_3$, use
\begin{align*}
  &\norm{(I - h A_1)^{-1} - (I - h A_2)^{-1}} = h \norm{(I - h A_1)^{-1} \brk{A_1 - A_2} (I - h A_2)^{-1}}\\
  &\quad \leq \frac{h}{\prn[\big]{1 - h \crl{(1 - \beta)^{-1} D_2 + h \beta \delta}}^2}\\
  &\qquad \times \norm[\bigg]{\frac{\nabla^2 L(\btheta(\bzeta_1)) - \nabla^2 L(\btheta(\bzeta_2))}{1 - \beta} - h \beta \crl[\big]{\nabla \boldsymbol{g}(\btheta(\bzeta_1)) - \nabla \boldsymbol{g}(\btheta(\bzeta_2))}}\\
  &\quad \leq \frac{h}{\prn[\big]{1 - h \crl{(1 - \beta)^{-1} D_2 + h \beta \delta}}^2} \brk[\bigg]{\frac{D_3}{1 - \beta} + h \beta \lambda} \norm{\btheta(\bzeta_1) - \btheta(\bzeta_2)}\\
  &\quad \leq \frac{h \brk[\big]{(1 - \beta)^{-1} D_3 + h \beta \lambda}}{\prn[\big]{1 - h \crl{(1 - \beta)^{-1} D_2 + h \beta \delta}}^3} \norm{\bzeta_1 - \bzeta_2},
\end{align*}
and conclude
\begin{equation*}
\norm{T_3} \leq \frac{\prn[\big]{(1 - \beta)^{-1} D_2 + h \beta \delta} \brk[\big]{(1 - \beta)^{-1} D_3 + h \beta \lambda}}{\prn[\big]{1 - h \crl{(1 - \beta)^{-1} D_2 + h \beta \delta}}^3} \norm{\bzeta_1 - \bzeta_2}.
\end{equation*}

Combining, we get
\begin{align*}
  &\norm{\nabla_{\bzeta} T \boldsymbol{g}(\bzeta_1) - \nabla_{\bzeta} T \boldsymbol{g}(\bzeta_2)}\\
  &\quad \leq \prn[\bigg]{\frac{D_3 (1 - \beta)^{-1 } \brk{(1 - \beta)^{-1} D_2 + h \beta \delta}}{1 - h \crl{(1 - \beta)^{-1} D_2 + h \beta \delta}}\\
  &\qquad + \frac{(1 - \beta)^{-1} \brk{D_4 \kf + D_3 \crl{(1 - \beta)^{-1} D_2 + h \beta \delta}} + \beta \lambda}{\prn[\big]{1 - h \crl{(1 - \beta)^{-1} D_2 + h \beta \delta}}^2}\\
  &\qquad + \frac{\prn[\big]{(1 - \beta)^{-1} D_2 + h \beta \delta} \brk[\big]{(1 - \beta)^{-1} D_3 + h \beta \lambda}}{\prn[\big]{1 - h \crl{(1 - \beta)^{-1} D_2 + h \beta \delta}}^3}} \norm{\bzeta_1 - \bzeta_2}.
\end{align*}

It is left to note that
\begin{align*}
&\frac{D_3 (1 - \beta)^{-1 } \brk{(1 - \beta)^{-1} D_2 + h \beta \delta}}{1 - h \crl{(1 - \beta)^{-1} D_2 + h \beta \delta}}\\
  &\qquad + \frac{(1 - \beta)^{-1} \brk{D_4 \kf + D_3 \crl{(1 - \beta)^{-1} D_2 + h \beta \delta}} + \beta \lambda}{\prn[\big]{1 - h \crl{(1 - \beta)^{-1} D_2 + h \beta \delta}}^2}\\
  &\qquad + \frac{\prn[\big]{(1 - \beta)^{-1} D_2 + h \beta \delta} \brk[\big]{(1 - \beta)^{-1} D_3 + h \beta \lambda}}{\prn[\big]{1 - h \crl{(1 - \beta)^{-1} D_2 + h \beta \delta}}^3} \leq \lambda
\end{align*}
for small enough $h$ because $\lambda$ was chosen to be no less than $\tau_2(\delta)$.
\end{proof}

\begin{lemma}[$T$ is a contraction]\label{lem:contraction-of-t}
For any $\boldsymbol{g}_1, \boldsymbol{g}_2 \in \Gamma$, we have
\begin{equation*}
\norm{T \boldsymbol{g}_1 - T \boldsymbol{g}_2}_{\Gamma} \leq \alpha \norm{\boldsymbol{g}_1 - \boldsymbol{g}_2}_{\Gamma}
\end{equation*}
with some $\alpha < 1$.
\end{lemma}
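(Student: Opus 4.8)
The plan is to control the two ingredients of the $\Gamma$-norm in \eqref{eq:gamma-norm} — the sup of the function and the sup of its Jacobian — separately, and to show that each contracts by a factor $\beta$ plus an error of order $h$. As a preliminary step I would establish a stability estimate for the ``preimage map''. Fix $\boldsymbol{g}_1,\boldsymbol{g}_2\in\Gamma$ and a point $\bzeta$, and let $\btheta_i=\btheta_i(\bzeta)$ be the unique solution (\cref{lem:ljrhh}) of $\bzeta=\btheta_i-h(1-\beta)^{-1}\nabla L(\btheta_i)+h^2\beta\boldsymbol{g}_i(\btheta_i)$. Subtracting the two relations, writing $\boldsymbol{g}_1(\btheta_1)-\boldsymbol{g}_2(\btheta_2)=(\boldsymbol{g}_1-\boldsymbol{g}_2)(\btheta_1)+\bigl(\boldsymbol{g}_2(\btheta_1)-\boldsymbol{g}_2(\btheta_2)\bigr)$, and using $\norm{\nabla^2 L}\le D_2$ together with $\norm{\nabla\boldsymbol{g}_2}\le\delta$, one gets
\[
\bigl(1-h(1-\beta)^{-1}D_2-h^2\beta\delta\bigr)\norm{\btheta_1-\btheta_2}\le h^2\beta\,\norm{\boldsymbol{g}_1-\boldsymbol{g}_2}_\Gamma,
\]
hence $\norm{\btheta_1(\bzeta)-\btheta_2(\bzeta)}\le C_1 h^2\norm{\boldsymbol{g}_1-\boldsymbol{g}_2}_\Gamma$ uniformly in $\bzeta$ for $h$ small. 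This extra $h^2$ is what ultimately turns naive $O(1/h)$ and $O(1/h^2)$ bounds into genuine $O(h)$ errors.

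For the function part I would plug into \eqref{eq:def-of-t} to get $T\boldsymbol{g}_1(\bzeta)-T\boldsymbol{g}_2(\bzeta)=\frac{\nabla L(\btheta_2)-\nabla L(\btheta_1)}{h(1-\beta)}+\beta(\boldsymbol{g}_1-\boldsymbol{g}_2)(\btheta_1)+\beta\bigl(\boldsymbol{g}_2(\btheta_1)-\boldsymbol{g}_2(\btheta_2)\bigr)$. The first term is bounded by $\frac{D_2}{h(1-\beta)}\norm{\btheta_1-\btheta_2}=O(h)\norm{\boldsymbol{g}_1-\boldsymbol{g}_2}_\Gamma$, the third by $\beta\delta\norm{\btheta_1-\btheta_2}=O(h^2)\norm{\boldsymbol{g}_1-\boldsymbol{g}_2}_\Gamma$, and the middle by $\beta\sup_\btheta\norm{(\boldsymbol{g}_1-\boldsymbol{g}_2)(\btheta)}$, so
\[
\sup_\bzeta\norm{T\boldsymbol{g}_1(\bzeta)-T\boldsymbol{g}_2(\bzeta)}\le\beta\sup_\btheta\norm{(\boldsymbol{g}_1-\boldsymbol{g}_2)(\btheta)}+C_2 h\,\norm{\boldsymbol{g}_1-\boldsymbol{g}_2}_\Gamma.
\]
This is essentially the contraction argument of \citet{kovachki2021continuous}; the new content is the Jacobian part.

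For the Jacobian I would start from the last line of \eqref{eq:derivative-of-t}, rewritten compactly: setting $A_i:=(1-\beta)^{-1}\nabla^2 L(\btheta_i)-h\beta\nabla\boldsymbol{g}_i(\btheta_i)$ and using that $A_i$ commutes with $(I-hA_i)^{-1}$, so that $\tfrac{1}{h^2}(hA_i)\{I-(I-hA_i)^{-1}\}=-A_i^2(I-hA_i)^{-1}$, one has
\[
\nabla_\bzeta T\boldsymbol{g}_i(\bzeta)=\frac{\nabla^2 L(\bzeta)-\nabla^2 L(\btheta_i)}{h(1-\beta)}+\beta\nabla\boldsymbol{g}_i(\btheta_i)-A_i^2(I-hA_i)^{-1}.
\]
Subtracting, the $\nabla^2 L(\bzeta)$ contributions cancel and leave $\frac{\nabla^2 L(\btheta_2)-\nabla^2 L(\btheta_1)}{h(1-\beta)}=O(h)\norm{\boldsymbol{g}_1-\boldsymbol{g}_2}_\Gamma$ by the Lipschitz bound $D_3$ of $\nabla^2 L$ and the preliminary step; the $\beta\nabla\boldsymbol{g}_i$ difference splits as $\beta(\nabla\boldsymbol{g}_1-\nabla\boldsymbol{g}_2)(\btheta_1)+\beta\bigl(\nabla\boldsymbol{g}_2(\btheta_1)-\nabla\boldsymbol{g}_2(\btheta_2)\bigr)$, bounded by $\beta\sup_\btheta\norm{(\nabla\boldsymbol{g}_1-\nabla\boldsymbol{g}_2)(\btheta)}+\beta\lambda\norm{\btheta_1-\btheta_2}$ using the $\lambda$-Lipschitzness of $\nabla\boldsymbol{g}_2$; and for the resolvent term I would bound $\norm{A_i}\le\ks$ and $\norm{A_1-A_2}\le(1-\beta)^{-1}D_3\norm{\btheta_1-\btheta_2}+h\beta\bigl(\sup_\btheta\norm{(\nabla\boldsymbol{g}_1-\nabla\boldsymbol{g}_2)(\btheta)}+\lambda\norm{\btheta_1-\btheta_2}\bigr)=O(h)\norm{\boldsymbol{g}_1-\boldsymbol{g}_2}_\Gamma$, then use $\norm{A_1^2-A_2^2}\le 2\ks\norm{A_1-A_2}$ and the resolvent identity $(I-hA_1)^{-1}-(I-hA_2)^{-1}=h(I-hA_1)^{-1}(A_1-A_2)(I-hA_2)^{-1}$ to conclude the resolvent difference is $O(h)\norm{\boldsymbol{g}_1-\boldsymbol{g}_2}_\Gamma$. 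Collecting, $\sup_\bzeta\norm{\nabla_\bzeta T\boldsymbol{g}_1(\bzeta)-\nabla_\bzeta T\boldsymbol{g}_2(\bzeta)}\le\beta\sup_\btheta\norm{(\nabla\boldsymbol{g}_1-\nabla\boldsymbol{g}_2)(\btheta)}+C_3 h\,\norm{\boldsymbol{g}_1-\boldsymbol{g}_2}_\Gamma$.

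Adding the two estimates and recalling $\norm{\boldsymbol{g}_1-\boldsymbol{g}_2}_\Gamma=\sup_\btheta\norm{(\boldsymbol{g}_1-\boldsymbol{g}_2)(\btheta)}+\sup_\btheta\norm{(\nabla\boldsymbol{g}_1-\nabla\boldsymbol{g}_2)(\btheta)}$ gives $\norm{T\boldsymbol{g}_1-T\boldsymbol{g}_2}_\Gamma\le(\beta+Ch)\norm{\boldsymbol{g}_1-\boldsymbol{g}_2}_\Gamma$; since $\beta\in(0,1)$, I finish by shrinking $h$ so that $\alpha:=\beta+Ch<1$. The main obstacle is the Jacobian part: the raw formula \eqref{eq:derivative-of-t} contains blocks that look like $O(1/h)$ and $O(1/h^2)$, and one must (i) recognize the $O(1/h^2)$ block as the harmless $-A^2(I-hA)^{-1}$, and (ii) check that the \emph{difference} of two such Jacobians is genuinely $O(h)$ and not merely $O(1)$ — which works only because each dangerous factor, after differencing in $\boldsymbol{g}$, acquires the factor $\norm{\btheta_1-\btheta_2}=O(h^2)$ from the preliminary step. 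Tracking these cancellations and verifying that $C$ does not blow up as $h\to0$ is where all the care goes; the rest is triangle-inequality bookkeeping.
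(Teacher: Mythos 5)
Your proposal is correct and follows essentially the same strategy as the paper: establish the preimage stability $\norm{\btheta_1-\btheta_2}=O(h^2)\sup\norm{\boldsymbol g_1-\boldsymbol g_2}$, invoke the \citet{kovachki2021continuous} bound for the sup-norm part, and handle the Jacobian part by resolvent estimates. The only superficial difference is the algebraic grouping of the Jacobian difference — you isolate $-A_i^2(I-hA_i)^{-1}$ (which is the paper's final rewrite of $\nabla_{\bzeta}T\boldsymbol g$ in \eqref{eq:derivative-of-t}) and difference the three pieces separately, whereas the paper factors out $(I-hA_2)^{-1}$ and writes the difference as $T_1+T_2$ with a $\tfrac{1}{h}A_1$ prefactor in $T_2$ that is killed by the resolvent identity — but the ingredients, the cancellation of the $\nabla^2L(\bzeta)/(h(1-\beta))$ terms, the $\lambda$-Lipschitz bound on $\nabla\boldsymbol g_2$, and the conclusion $\alpha=\beta+Ch<1$ are identical.
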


\begin{proof}
It is proven in \cite[Lemma 16]{kovachki2021continuous} that for $h$ small enough $h \ks < 1$ and
\begin{equation}\label{eq:jjfhhbc}
\sup_{\bzeta \in \mathbb{R}^d}\norm{T \boldsymbol{g}_1(\bzeta) - T \boldsymbol{g}_2(\bzeta)} \leq \alpha_1 \sup_{\btheta \in \mathbb{R}^d} \norm{\boldsymbol{g}_1(\btheta) - \boldsymbol{g}_2(\btheta)}
\end{equation}
with
\begin{equation*}
  \alpha_1 := \beta + h \beta \frac{D_2 + h D_3 \kf}{1 - \beta} + \frac{h^2 \beta}{1 - h \ks} \prn[\Big]{\beta \delta + \frac{(D_2 + h D_3 \kf)(D_2 / (1 - \beta) + h \beta \delta) + D_3 \kf}{1 - \beta}}.
\end{equation*}

Next, fix $\bzeta \in \mathbb{R}^d$, $\boldsymbol{g}_1, \boldsymbol{g}_2 \in \Gamma$. Let $\btheta_1$ be the preimage of $\bzeta$ under the mapping $\btheta \mapsto \bzeta_1(\btheta) = \btheta - h (1 - \beta)^{-1} \nabla L(\btheta) + h^2 \beta \boldsymbol{g}_1(\btheta)$, and $\btheta_2$ the preimage of $\bzeta$ under the mapping $\btheta \mapsto \bzeta_2(\btheta) = \btheta - h (1 - \beta)^{-1} \nabla L(\btheta) + h^2 \beta \boldsymbol{g}_2(\btheta)$. Recall that the form of the derivative of $T \boldsymbol{g}_i(\bzeta)$ is given in~\eqref{eq:derivative-of-t}. Decompose
\begin{align*}
  &\nabla_{\bzeta} T \boldsymbol{g}_1(\bzeta) - \nabla_{\bzeta} T \boldsymbol{g}_2(\bzeta)\\
  &\quad = \prn[\bigg]{\frac{\nabla^2 L(\btheta_2)}{h (1 - \beta)} - \beta \nabla \boldsymbol{g}_2(\btheta_2)} (I - h A_2)^{-1} - \prn[\bigg]{\frac{\nabla^2 L(\btheta_1)}{h (1 - \beta)} - \beta \nabla \boldsymbol{g}_1(\btheta_1)} (I - h A_1)^{-1}\\
  &\quad = \underbrace{\brk[\bigg]{\frac{\nabla^2 L(\btheta_2) - \nabla^2 L(\btheta_1)}{h (1 - \beta)} - \beta \crl[\big]{\nabla \boldsymbol{g}_2(\btheta_2) - \nabla \boldsymbol{g}_1(\btheta_1)}} (I - h A_2)^{-1}}_{T_1}\\
  &\qquad + \underbrace{\brk[\bigg]{\frac{\nabla^2 L(\btheta_1)}{h (1 - \beta)} - \beta \nabla \boldsymbol{g}_1(\btheta_1)} \prn[\big]{(I - h A_2)^{-1} - (I - h A_1)^{-1}}}_{T_2},
\end{align*}
where $A_i = (1 - \beta)^{-1} \nabla^2 L(\btheta_i) - h \beta \nabla \boldsymbol{g}_i(\btheta_i)$, $i \in \crl{1, 2}$.

To bound $T_1$, first note
\begin{equation}\label{eq:lkjfhh}
  \norm[\bigg]{\frac{\nabla^2 L(\btheta_1) - \nabla^2 L(\btheta_2)}{h (1 - \beta)}} \leq \frac{D_3}{h (1 - \beta)} \norm{\btheta_1 - \btheta_2}.
\end{equation}
Since
\begin{equation*}
\bzeta = \btheta_1 - \frac{h}{1 - \beta} \nabla L(\btheta_1) + h^2 \beta \boldsymbol{g}_1(\btheta_1) = \btheta_2 - \frac{h}{1 - \beta} \nabla L(\btheta_2) + h^2 \beta \boldsymbol{g}_2(\btheta_2),
\end{equation*}
we have
\begin{align*}
  \norm{\btheta_1 - \btheta_2} &\leq \frac{h}{1 - \beta} \norm{\nabla L(\btheta_1) - \nabla L(\btheta_2)} + h^2 \beta \norm{\boldsymbol{g}_1(\btheta_1) - \boldsymbol{g}_2(\btheta_2)}\\
                               &\leq \frac{h D_2}{1 - \beta} \norm{\btheta_1 - \btheta_2} + h^2 \beta \norm{\boldsymbol{g}_1(\btheta_1) - \boldsymbol{g}_1(\btheta_2)} + h^2 \beta \norm{\boldsymbol{g}_1(\btheta_2) - \boldsymbol{g}_2(\btheta_2)}\\
  &\leq \prn[\bigg]{\frac{h D_2}{1 - \beta} + h^2 \beta \delta} \norm{\btheta_1 - \btheta_2} + h^2 \beta \norm{\boldsymbol{g}_1(\btheta_2) - \boldsymbol{g}_2(\btheta_2)}.
\end{align*}
For $h$ small enough, $\frac{h D_2}{1 - \beta} + h^2 \beta \delta < 1$, and we obtain
\begin{equation}\label{eq:jfhhhfb}
\norm{\btheta_1 - \btheta_2} \leq \frac{h^2 \beta}{1 - \frac{h D_2}{1 - \beta} - h^2 \beta \delta} \norm{\boldsymbol{g}_1(\btheta_2) - \boldsymbol{g}_2(\btheta_2)}.
\end{equation}
Continuing~\eqref{eq:lkjfhh}, we get
\begin{align*}
  \norm[\bigg]{\frac{\nabla^2 L(\btheta_1) - \nabla^2 L(\btheta_2)}{h (1 - \beta)}} &\leq \frac{D_3}{h (1 - \beta)} \frac{h^2 \beta}{1 - \frac{h D_2}{1 - \beta} - h^2 \beta \delta} \sup_{\btheta} \norm{\boldsymbol{g}_1(\btheta) - \boldsymbol{g}_2(\btheta)}\\
  &= \frac{(1 - \beta)^{-1} h \beta D_3}{1 - \frac{h D_2}{1 - \beta} - h^2 \beta \delta} \sup_{\btheta} \norm{\boldsymbol{g}_1(\btheta) - \boldsymbol{g}_2(\btheta)}.
\end{align*}
Next, use
\begin{align*}
  \norm{\nabla \boldsymbol{g}_1(\btheta_1) - \nabla \boldsymbol{g}_2(\btheta_2)} &\leq \norm{\nabla \boldsymbol{g}_1(\btheta_1) - \nabla \boldsymbol{g}_1(\btheta_2)} + \norm{\nabla \boldsymbol{g}_1(\btheta_2) - \nabla \boldsymbol{g}_2(\btheta_2)}\\
                                                                       &\leq \lambda \norm{\btheta_1 - \btheta_2} + \sup_{\btheta} \norm{\nabla \boldsymbol{g}_1(\btheta) - \nabla \boldsymbol{g}_2(\btheta)}\\
  &\leq \frac{h^2 \beta \lambda}{1 - \frac{h D_2}{1 - \beta} - h^2 \beta \delta} \sup_{\btheta} \norm{\boldsymbol{g}_1(\btheta) - \boldsymbol{g}_2(\btheta)} + \sup_{\btheta} \norm{\nabla \boldsymbol{g}_1(\btheta) - \nabla \boldsymbol{g}_2(\btheta)}
\end{align*}
with the last inequality by \cref{eq:jfhhhfb}. As previously, using $\norm{I - h A_1}^{-1} \leq \brk{1 - h \crl{(1 - \beta)^{-1} D_2 + h \beta \delta}}^{-1}$, we can now conclude
\begin{align*}
  \norm{T_1} &\leq \prn[\bigg]{\frac{(1 - \beta)^{-1} h \beta D_3}{\prn[\big]{1 - \frac{h D_2}{1 - \beta} - h^2 \beta \delta}^2} + \frac{h^2 \beta^2 \lambda}{\prn[\big]{1 - \frac{h D_2}{1 - \beta} - h^2 \beta \delta}^2}} \sup_{\btheta} \norm{\boldsymbol{g}_1(\btheta) - \boldsymbol{g}_2(\btheta)}\\
  &\quad + \frac{\beta}{1 - \frac{h D_2}{1 - \beta} - h^2 \beta \delta} \sup_{\btheta} \norm{\nabla \boldsymbol{g}_1(\btheta) - \nabla \boldsymbol{g}_2(\btheta)}.
\end{align*}

To bound $T_2$, use
\begin{align*}
  &\norm[\big]{(I - h A_2)^{-1} - (I - h A_1)^{-1}} = h \norm{(I - h A_2)^{-1} \brk{A_2 - A_1} (I - h A_1)^{-1}}\\
  &\quad \leq \frac{h}{\prn[\big]{1 - h \crl{(1 - \beta)^{-1} D_2 + h \beta \delta}}^2} \norm{A_2 - A_1}\\
  &\quad = \frac{h}{\prn[\big]{1 - h \crl{(1 - \beta)^{-1} D_2 + h \beta \delta}}^2} \norm[\bigg]{\frac{\nabla^2 L(\btheta_2) - \nabla^2 L(\btheta_1)}{1 - \beta} - h \beta \crl[\big]{\nabla \boldsymbol{g}_2(\btheta_2) - \nabla \boldsymbol{g}_1(\btheta_1)}}\\
  &\quad \leq \frac{h}{\prn[\big]{1 - h \crl{(1 - \beta)^{-1} D_2 + h \beta \delta}}^2} \brk[\bigg]{\frac{(1 - \beta)^{-1} h^2 \beta D_3 + h^3 \beta^2 \lambda}{1 - \frac{h D_2}{1 - \beta} - h^2 \beta \delta} \sup_{\btheta} \norm{\boldsymbol{g}_1(\btheta) - \boldsymbol{g}_2(\btheta)}\\
  &\quad + h \beta \sup_{\btheta} \norm{\nabla \boldsymbol{g}_1(\btheta) - \nabla \boldsymbol{g}_2(\btheta)}}\\
  &\quad \leq \frac{(1 - \beta)^{-1} h^3 \beta D_3 + h^4 \beta^2 \lambda}{\prn[\big]{1 - h \crl{(1 - \beta)^{-1} D_2 + h \beta \delta}}^3} \sup_{\btheta} \norm{\boldsymbol{g}_1(\btheta) - \boldsymbol{g}_2(\btheta)}\\
  &\quad + \frac{h^2 \beta}{\prn[\big]{1 - h \crl{(1 - \beta)^{-1} D_2 + h \beta \delta}}^2} \sup_{\btheta} \norm{\nabla \boldsymbol{g}_1(\btheta) - \nabla \boldsymbol{g}_2(\btheta)},
\end{align*}
and conclude
\begin{align*}
  \norm{T_2} &\leq \frac{\prn[\big]{(1 - \beta)^{-1} D_2 + h \beta \delta} \brk[\big]{(1 - \beta)^{-1} h^2 \beta D_3 + h^3 \beta^2 \lambda}}{\prn[\big]{1 - h \crl{(1 - \beta)^{-1} D_2 + h \beta \delta}}^3} \sup_{\btheta} \norm{\boldsymbol{g}_1(\btheta) - \boldsymbol{g}_2(\btheta)}\\
  &\quad + \frac{\prn[\big]{(1 - \beta)^{-1} D_2 + h \beta \delta} h \beta}{\prn[\big]{1 - h \crl{(1 - \beta)^{-1} D_2 + h \beta \delta}}^2} \sup_{\btheta} \norm{\nabla \boldsymbol{g}_1(\btheta) - \nabla \boldsymbol{g}_2(\btheta)}.
\end{align*}

Combining the bounds on $T_1$ and $T_2$ gives
\begin{equation*}
\norm[\big]{\nabla_{\bzeta} T \boldsymbol{g}_1(\bzeta) - \nabla_{\bzeta} T \boldsymbol{g}_2(\bzeta)} \leq \tilde{\alpha}_1 \sup_{\btheta} \norm{\boldsymbol{g}_1(\btheta) - \boldsymbol{g}_2(\btheta)} + \tilde{\alpha}_2 \sup_{\btheta} \norm{\nabla \boldsymbol{g}_1(\btheta) - \nabla \boldsymbol{g}_2(\btheta)},
\end{equation*}
where
\begin{align*}
  &\tilde{\alpha}_1 := \frac{(1 - \beta)^{-1} h \beta D_3}{\prn[\big]{1 - h \crl{(1 - \beta)^{-1} D_2 + h \beta \delta}}^2} + \frac{h^2 \beta^2 \lambda}{\prn[\big]{1 - h \crl{(1 - \beta)^{-1} D_2 + h \beta \delta}}^2}\\
  &\qquad + \frac{\prn[\big]{(1 - \beta)^{-1} D_2 + h \beta \delta} \brk[\big]{(1 - \beta)^{-1} h^2 \beta D_3 + h^3 \beta^2 \lambda}}{\prn[\big]{1 - h \crl{(1 - \beta)^{-1} D_2 + h \beta \delta}}^3},\\
  &\quad = \frac{\beta h \crl{D_3 / (1 - \beta) + h \beta \lambda}}{(1 - h \ks)^3},\\
  &\tilde{\alpha}_2 := \frac{\beta}{1 - h \crl{(1 - \beta)^{-1} D_2 + h \beta \delta}} + \frac{\prn[\big]{(1 - \beta)^{-1} D_2 + h \beta \delta} h \beta}{\prn[\big]{1 - h \crl{(1 - \beta)^{-1} D_2 + h \beta \delta}}^2} = \frac{\beta}{(1 - h \ks)^2}.
\end{align*}

Combining with \eqref{eq:jjfhhbc}, we get
\begin{align*}
  &\sup_{\bzeta} \norm{T \boldsymbol{g}_1(\bzeta) - T \boldsymbol{g}_2(\bzeta)} + \sup_{\bzeta} \norm[\big]{\nabla_{\bzeta} T \boldsymbol{g}_1(\bzeta) - \nabla_{\bzeta} T \boldsymbol{g}_2(\bzeta)}\\
  &\quad \leq (\alpha_1 + \tilde{\alpha}_1) \sup_{\btheta} \norm{\boldsymbol{g}_1(\btheta) - \boldsymbol{g}_2(\btheta)} + \tilde{\alpha}_2 \sup_{\btheta} \norm{\nabla \boldsymbol{g}_1(\btheta) - \nabla \boldsymbol{g}_2(\btheta)}.
\end{align*}

For $h$ small enough $(\alpha_1 + \tilde{\alpha}_1) \vee \tilde{\alpha}_2 < 1$.
\end{proof}

\begin{lemma}[Exponential attractivity]\label{lem:exponential-attractivity}
  \Cref{eq:hb} satisfies
  \begin{equation*}
    \norm[\bigg]{\bv^{(n)} + \frac{1}{1 - \beta} \nabla L(\btheta^{(n)}) - h \bg_h(\btheta^{(n)})} \leq (\beta + h^2 \beta \delta)^n \norm[\bigg]{\bv^{(0)} + \frac{1}{1 - \beta} \nabla L(\btheta^{(0)}) - h \bg_h(\btheta^{(0)})}.
  \end{equation*}
\end{lemma}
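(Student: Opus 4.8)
The plan is to track the ``vertical'' distance of the HB iterates from the invariant graph and show it contracts by a factor below $1$ at each step. Write $\Phi(\btheta) := -(1-\beta)^{-1}\nabla L(\btheta) + h\bg_h(\btheta)$, so that the quantity in the statement is $\boldsymbol{w}^{(n)} := \bv^{(n)} - \Phi(\btheta^{(n)})$, and set $\bzeta^{(n)} := \btheta^{(n)} - h(1-\beta)^{-1}\nabla L(\btheta^{(n)}) + h^2\beta\bg_h(\btheta^{(n)})$, the one-step image of $\btheta^{(n)}$ along the manifold. The key reformulation is that the fixed-point equation \eqref{eq:uruw} says exactly $\Phi(\bzeta^{(n)}) = \beta\Phi(\btheta^{(n)}) - \nabla L(\btheta^{(n)})$; in words, one HB step sends $(\btheta,\Phi(\btheta))$ to $(\bzeta,\Phi(\bzeta))$, which is the manifold invariance in a usable form.

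First I would plug $\bv^{(n)} = \Phi(\btheta^{(n)}) + \boldsymbol{w}^{(n)}$ into \eqref{eq:hb} and simplify with this identity. A short computation gives $\bv^{(n+1)} = \Phi(\bzeta^{(n)}) + \beta\boldsymbol{w}^{(n)}$ and $\btheta^{(n+1)} = \bzeta^{(n)} + h\beta\boldsymbol{w}^{(n)}$, and therefore, subtracting $\Phi(\btheta^{(n+1)}) = \Phi(\bzeta^{(n)} + h\beta\boldsymbol{w}^{(n)})$,
\begin{equation*}
  \boldsymbol{w}^{(n+1)} = \beta\boldsymbol{w}^{(n)} - \prn[\big]{\Phi(\bzeta^{(n)} + h\beta\boldsymbol{w}^{(n)}) - \Phi(\bzeta^{(n)})}.
\end{equation*}
Then I would bound the $\Phi$-increment by the integral mean value identity: it equals $\prn[\big]{\int_0^1 \nabla\Phi(\bzeta^{(n)} + s h\beta\boldsymbol{w}^{(n)})\,\mathrm{d}s}\, h\beta\boldsymbol{w}^{(n)}$, and since $\bg_h\in\Gamma$ satisfies $\sup_\btheta\norm{\nabla\bg_h}\le\delta$ while $\sup_\btheta\norm{\nabla^2 L}\le D_2$, the Jacobian $\nabla\Phi = -(1-\beta)^{-1}\nabla^2 L + h\nabla\bg_h$ is bounded in operator norm uniformly in $\btheta$. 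Hence $\norm{\boldsymbol{w}^{(n+1)}} \le \prn[\big]{\beta + h\beta\sup_\btheta\norm{\nabla\Phi(\btheta)}}\norm{\boldsymbol{w}^{(n)}}$, which for $h$ small enough is a number $\rho_h < 1$ of the form $\beta + O(h)$; this yields the exponential attractivity asserted (the $\delta$-dependence reflecting the $h\nabla\bg_h$ part of $\nabla\Phi$), and iterating the one-step bound from $n$ down to $0$ gives the geometric estimate.

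The main obstacle is Step 2: one must invoke \eqref{eq:uruw} at precisely the right place to rewrite $\beta\Phi(\btheta^{(n)}) - \nabla L(\btheta^{(n)})$ as $\Phi(\bzeta^{(n)})$, and then notice that the $\btheta$-coordinate has itself drifted by $h\beta\boldsymbol{w}^{(n)}$ off of $\bzeta^{(n)}$, so that the new vertical error is $\beta\boldsymbol{w}^{(n)}$ corrected by the increment of $\Phi$ across that drift --- it is this coupling between the horizontal drift and the vertical error, mediated by the fixed-point identity, that produces the clean one-step recursion. Once the recursion is in hand the rest is the routine Lipschitz-and-telescope argument, and the smallness requirement on $h$ is just what is needed to keep $\rho_h$ below $1$, on top of the smallness already imposed in \cref{lem:t-maps-from-gamma-to-gamma} and \cref{lem:contraction-of-t} so that $\bg_h\in\Gamma$ is well-defined in the first place.
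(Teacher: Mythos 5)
Your reformulation via $\Phi(\btheta) = -(1-\beta)^{-1}\nabla L(\btheta) + h\bg_h(\btheta)$, the vertical error $\boldsymbol{w}^{(n)} := \bv^{(n)} - \Phi(\btheta^{(n)})$, and the fixed-point identity $\Phi(\bzeta^{(n)}) = \beta\Phi(\btheta^{(n)}) - \nabla L(\btheta^{(n)})$ is exactly the right lens, and the one-step recursion you derive,
\begin{equation*}
  \boldsymbol{w}^{(n+1)} = \beta\boldsymbol{w}^{(n)} - \prn[\big]{\Phi(\bzeta^{(n)} + h\beta\boldsymbol{w}^{(n)}) - \Phi(\bzeta^{(n)})},
\end{equation*}
is correct (as are the auxiliary identities $\bv^{(n+1)} = \Phi(\bzeta^{(n)}) + \beta\boldsymbol{w}^{(n)}$ and $\btheta^{(n+1)} = \bzeta^{(n)} + h\beta\boldsymbol{w}^{(n)}$). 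The paper's own proof just cites \cite[Theorem 5]{kovachki2021continuous}, so this is a genuinely self-contained route.

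However, the bounding step does not deliver the constant that the lemma asserts. You bound the $\Phi$-increment by $h\beta\sup_\btheta\|\nabla\Phi\|\cdot\|\boldsymbol{w}^{(n)}\|$, giving a contraction rate $\beta + h\beta\sup_\btheta\|\nabla\Phi\|$. But $\nabla\Phi = -(1-\beta)^{-1}\nabla^2 L + h\nabla\bg_h$ is $O(1)$, not $O(h)$, because of the Hessian term; explicitly, $\sup_\btheta\|\nabla\Phi\| \le (1-\beta)^{-1}D_2 + h\delta$, so your rate is $\beta + h\beta D_2/(1-\beta) + h^2\beta\delta$, which is $\beta + O(h)$ and strictly exceeds the claimed $\beta + h^2\beta\delta$ whenever $D_2 > 0$. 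You even notice this (``$\rho_h < 1$ of the form $\beta + O(h)$''), but then conclude this ``yields the exponential attractivity asserted,'' which glosses over the order-of-$h$ gap between $\beta + O(h)$ and $\beta + O(h^2)$: qualitative exponential attractivity is established, but the specific constant in the lemma is not. Tightening the bound --- e.g., by inverting the bijection and rewriting $\boldsymbol{w}^{(n+1)} = \beta(I - hA)^{-1}\boldsymbol{w}^{(n)}$ with $A$ an averaged $(1-\beta)^{-1}\nabla^2 L - h\beta\nabla\bg_h$ --- still only gives a contraction factor of the form $\beta/(1 - h\ks)$ with $\ks$ as in \eqref{eq:k2-def}, which again equals $\beta + h\beta D_2/(1-\beta) + O(h^2)$. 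In fact the $O(h)$ term looks unavoidable: for the one-dimensional quadratic $L(\theta) = \theta^2/2$ discussed in \cref{sec:principle-flow}, the fast eigenvalue of the HB iteration matrix is $\lambda_- = \beta + h\beta/(1-\beta) + O(h^2)$, and $\boldsymbol{w}^{(n)}$ contracts at exactly that rate, which exceeds $\beta + h^2\beta\delta$ for small $h$. So your argument is sound modulo the constant, but the constant mismatch is real, and it is worth double-checking the statement of \cite[Theorem 5]{kovachki2021continuous} to see whether the $(\beta + h^2\beta\delta)$ rate is a transcription slip for something of the form $\beta/(1 - h\ks)$.
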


\begin{proof}
The same argument as in \cite[Theorem 5]{kovachki2021continuous} proves this result as well.
\end{proof}

\section{Proof of \cref{th:approx}}\label{sec:proof-of-approx-th}

In the proof of this theorem, $O(\cdot)$ will denote a term that is bounded (in absolute value or in norm) by a constant times the argument, where the constant cannot depend on $n$ or $h$ but can depend on other fixed values (like $\ord$, $T$, $\beta$).

\begin{lemma}\label{lem:liuyf}
  For all $\btheta \in \mathcal{D}$ and all $m \in \range{1}{\ord}$, we have
\begin{equation}\label{eq:nomem-coef-bounds}
\nabla^{r} \nomemcoef{m}{n}(\btheta) = O(1), \quad \nabla^{r} \histcoef{m}{n}{a}(\btheta) = O(a^m),\quad r \in \range{0}{2 \ord - m}.
\end{equation}
\end{lemma}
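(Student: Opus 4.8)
The plan is to establish both estimates in~\eqref{eq:nomem-coef-bounds} simultaneously by strong induction on $m$, working directly from the recursions~\eqref{eq:nomem-coef-def} and~\eqref{eq:jfjl}; the same induction also checks that the derivatives in question exist, since the derivative orders ever taken will not exceed the available smoothness $2\ord$ of the losses. All implicit constants depend only on $\ord$, $\beta$, and the assumed uniform bounds on the derivatives of the $L^{(n)}$ up to order $2\ord$, so in particular they are uniform in the iteration index --- which is what lets us re-use a bound ``at time $n-s$'' below. The base case $m=1$ is immediate: $\nabla^r\nomemcoef{1}{n}(\btheta)=-\sum_{k=0}^n\beta^k\nabla^{r+1}L^{(n-k)}(\btheta)$ is bounded by $(1-\beta)^{-1}$ times the uniform bound on $\nabla^{r+1}L^{(\cdot)}$, hence $O(1)$ for all $r\in\range{0}{2 \ord - 1}$, and $\histcoef{1}{n}{a}(\btheta)=-\sum_{s=1}^a\nomemcoef{1}{n-s}(\btheta)$ is a sum of $a$ terms each $O(1)$, hence $O(a)$, for $r\in\range{0}{2 \ord - 1}$.

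The feature that drives the induction is that both recursions are \emph{graded}: in each term of~\eqref{eq:nomem-coef-def} the history-term arguments $\histcoef{p_1}{n}{k},\ldots,\histcoef{p_i}{n}{k}$ of the factor $\nabla^{i+1}L^{(n-k)}[\cdots]$ have indices with $p_1+\ldots+p_i=m-1$, while in each term of~\eqref{eq:jfjl} the history-term arguments of $\nabla^i\nomemcoef{j}{n-s}[\cdots]$ have $p_1+\ldots+p_i=m-j$. Granting this, fix $m$ with $2\le m\le\ord$ and assume~\eqref{eq:nomem-coef-bounds} for all smaller indices. We bound $\nabla^r\nomemcoef{m}{n}$, $r\in\range{0}{2 \ord - m}$, first (it feeds the other estimate). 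Applying $\nabla^r$ to~\eqref{eq:nomem-coef-def} and expanding each contraction by the Leibniz rule gives finitely many terms in which $\nabla^{i+1}L^{(n-k)}$ receives at most $r$ extra derivatives --- so it is a loss derivative of order at most $i+1+r\le(m-1)+1+(2\ord-m)=2\ord$ (using $i\le p_1+\ldots+p_i=m-1$), hence $O(1)$ --- and each $\histcoef{p_q}{n}{k}$ is differentiated at most $r\le 2\ord-m\le 2\ord-p_q$ times (using $p_q\le m-1$), hence $O(k^{p_q})$ by the induction hypothesis. So each term is, at fixed $k$, a uniform constant times $O(k^{p_1+\ldots+p_i})=O(k^{m-1})$, and summing against $\beta^k$ yields $O(1)$ because $\sum_{k\ge1}\beta^k k^{D}<\infty$ for every fixed $D$.

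We now bound $\nabla^r\histcoef{m}{n}{a}$, $r\in\range{0}{2 \ord - m}$, applying $\nabla^r$ to~\eqref{eq:jfjl} and expanding by Leibniz. In a generic term the factor $\nabla^i\nomemcoef{j}{n-s}$ receives at most $r$ extra derivatives; since $i\le p_1+\ldots+p_i=m-j$, its derivative order is at most $i+r\le(m-j)+(2\ord-m)=2\ord-j$, and it is $O(1)$ --- by the induction hypothesis if $j<m$, and by the $\nomemcoef{m}{n}$ bound just proven, applied at iteration $n-s$ with its uniform constant, if $j=m$ (the case $i=l=0$). Each history factor $\histcoef{p_q}{n}{s}$ has $p_q\le m-j\le m-1$, is differentiated at most $r\le 2\ord-p_q$ times, hence is $O(s^{p_q})$ by the induction hypothesis. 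So each term is, at fixed $s$, a uniform constant times $O(s^{p_1+\ldots+p_i})=O(s^{m-j})=O(s^{m-1})$; as $\sum_{s=1}^a O(s^{m-1})=O(a^{m})$ and there are finitely many terms, $\nabla^r\histcoef{m}{n}{a}(\btheta)=O(a^{m})$, completing the induction.

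I expect the main obstacle to be the grading claim itself --- that the history-term arguments carry total index exactly $m-1$ (respectively $m-j$) --- which must be read off from the combinatorial constraints built into $\mathcal{K}_{i,l}$, not merely from the count of arguments; it is this identity together with $j\ge1$ that prevents the $s$-summation from pushing the power of $a$ past $m$. All the rest is bookkeeping to keep loss derivatives at order at most $2\ord$ and history indices strictly below $m$ so that the induction hypothesis applies, and this is exactly why the admissible range of $r$ must shrink from $\range{0}{2 \ord - 1}$ to $\range{0}{2 \ord - m}$ as $m$ increases --- and why \cref{th:approx} asks for $2\ord$ (twice the target order) derivatives of the loss.
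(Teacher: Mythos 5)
Your proof is essentially the paper's: the same induction on $m$ (bounding $\nomemcoef{m}{n}$ first via the $\beta^k$-weighted geometric series, then $\histcoef{m}{n}{a}$ via the polynomial sum over $s$), and the grading fact you flag as the potential obstacle --- that the history-term indices in each argument of $\nabla^{i+1}L^{(n-k)}$ (respectively $\nabla^i\nomemcoef{j}{n-s}$) sum to $m-1$ (respectively $m-j$) --- is exactly what the constraints defining $\mathcal{K}_{i,l}$ together with $i+l=m-1$ (respectively $i+j+l=m$) deliver, and the paper's own proof invokes the same fact, more tersely, in the step $k^{i_0 + \cdots + (l+1)i_l} \lesssim k^{i+l}$. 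Your explicit derivative-order bookkeeping (loss derivatives of order at most $2\ord$, history-term derivatives of order at most $2\ord - p_q$, whence the shrinking range $r \le 2\ord-m$) simply spells out what the paper summarizes by the remark that adding $\nabla^r$ only changes the number of terms and the constants.
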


\begin{proof}
Note that if $\histcoef{l + 1}{n}{s}(\btheta)$ actually appears in the sum on the right of \eqref{eq:jfjl}, then $l + 1$ cannot exceed $m - 1$, because $l + 1 \geq m$ would mean $i \leq 0$ and this term is actually absent (there are no history terms if $i = 0$). Therefore, $\histcoef{m}{n}{a}(\btheta)$ is defined by $\crl[\big]{\histcoef{j}{n}{s}(\btheta)}_{1 \leq j \leq m - 1, 1 \leq s \leq a}$ and $\crl[\big]{\nomemcoef{j}{n - s}(\btheta)}_{1 \leq j \leq m}$. By the same logic, from~\eqref{eq:nomem-coef-def}, $\nomemcoef{m}{n}(\btheta)$ is defined through $\crl[\big]{\histcoef{j}{n}{k}(\btheta)}_{1 \leq j \leq m - 1, 1 \leq k \leq n}$.

So, we prove~\eqref{eq:nomem-coef-bounds} by induction in $m$. For $m = 1$ it is easy to check. To prove that the statement holds for $m$ assuming that it holds for all $m' < m$, we notice that since there is exponential averaging in the definition of $\nomemcoef{m}{n}(\btheta)$ and the derivatives of the mini-batch losses are bounded, we have
\begin{align*}
  &\abs{\nomemcoef{m}{n}(\btheta)} \lesssim \sum_{k = 1}^n \beta^k \sum_{\substack{i, l \geq 0\\i + l = m - 1}} \sum_{(i_0, \ldots, i_l) \in \mathcal{K}_{i, l}} k^{i_0 + \ldots + (l + 1) i_l}\\
  &\quad \lesssim \sum_{k = 1}^n \beta^k \sum_{\substack{i, l \geq 0\\i + l = m - 1}} k^{i + l} \lesssim \sum_{k = 1}^n \beta^k k^{m - 1} = O(1).
\end{align*}
Adding $\nabla^{r}$ in front of $\nomemcoef{m}{n}(\btheta)$ influences only the number of terms in the sum (but it remains bounded) and constant coefficients in the bounds above (the induction assumption is chosen such that we can still apply it if we add derivatives to the history terms), so the statement $\nabla^{r} \nomemcoef{m}{n}(\btheta) = O(1)$ also holds.

Using this and bounds on the previous history terms (and their derivatives), we can also see
\begin{align*}
&\abs{\histcoef{m}{n}{a}(\btheta)} \lesssim \sum_{s = 1}^a \sum_{\substack{j \geq 1, i, l \geq 0\\i + j + l = m}} \sum_{(k_0, \ldots, k_l) \in \mathcal{K}_{i, l}} s^{k_0 + \ldots + (l + 1) k_l}\\
&\quad \lesssim \sum_{s = 1}^a \sum_{\substack{j \geq 1, i, l \geq 0\\i + j + l = m}} s^{i + l} \lesssim \sum_{s = 1}^a (1 + \ldots + s^{m - 1}) \lesssim \sum_{s = 1}^a s^{m - 1} \lesssim a^m.
\end{align*}
Again, adding $\nabla^{r}$ in front of $\histcoef{m}{n}{a}(\btheta)$ only influences the number of terms in the sum (though it remains bounded) and constant coefficients in these bounds.
This proves $\nabla^{r} \histcoef{m}{n}{a}(\btheta) = O(a^m)$, completing the induction step.
\end{proof}

\begin{lemma}\label{lem:qVNejQ}
We have $\tilde{\btheta}^{(n - k)} - \tilde{\btheta}^{(n)} = O(k h)$ for any $k \in \range{1}{n}$.
\end{lemma}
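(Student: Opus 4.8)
The plan is to telescope the difference $\tilde{\btheta}^{(n-k)} - \tilde{\btheta}^{(n)}$ over single steps of the memoryless iteration \eqref{eq:memoryless-approx-contrib} and bound each step uniformly. Writing the iteration as $\tilde{\btheta}^{(s+1)} - \tilde{\btheta}^{(s)} = \sum_{j=1}^{\ord} h^j \nomemcoef{j}{s}(\tilde{\btheta}^{(s)})$, I would first invoke \cref{lem:liuyf} (the $r=0$ case) to get $\nomemcoef{j}{s}(\tilde{\btheta}^{(s)}) = O(1)$ uniformly in $s \in \range{0}{\lfloor T/h\rfloor}$ and in $h \in (0,1/2)$, provided all iterates stay in $\mathcal{D}$ where the bounds hold. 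Hence each one-step increment satisfies $\norm{\tilde{\btheta}^{(s+1)} - \tilde{\btheta}^{(s)}} \leq \sum_{j=1}^{\ord} h^j \cdot O(1) \leq h \cdot O(1)$ for $h < 1/2$ (the geometric tail in $h$ is absorbed into the constant).

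Then I would sum over $s$ from $n-k$ to $n-1$:
\begin{equation*}
  \norm{\tilde{\btheta}^{(n-k)} - \tilde{\btheta}^{(n)}} \leq \sum_{s=n-k}^{n-1} \norm{\tilde{\btheta}^{(s+1)} - \tilde{\btheta}^{(s)}} \leq k \cdot h \cdot O(1) = O(kh),
\end{equation*}
which is exactly the claim. Since $k \leq n \leq \lfloor T/h \rfloor$, we have $kh \leq T$, so the increments cover a bounded region and the constant is genuinely uniform.

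The only subtlety — and the place where a bit of care is needed — is that \cref{lem:liuyf} bounds $\nomemcoef{j}{n}(\btheta)$ only for $\btheta \in \mathcal{D}$, so one must know a priori that the iterates $\tilde{\btheta}^{(s)}$ remain in $\mathcal{D}$ (or in a fixed compact subset on which the loss derivatives are bounded). In the setting of \cref{th:approx} this is part of the standing hypothesis ($\crl{\tilde{\btheta}^{(n)}} \subset \mathcal{D}$ is assumed), so the argument goes through directly; otherwise one would run a short bootstrap, noting that as long as $h$ is small the iterates cannot leave a fixed neighborhood of $\btheta^{(0)}$ within the horizon $T$. I do not expect any real obstacle here — this is a routine telescoping estimate whose entire content is the uniform one-step bound supplied by \cref{lem:liuyf}.
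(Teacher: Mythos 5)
Your proof is correct and takes essentially the same route as the paper: both rely on the $r=0$ case of \cref{lem:liuyf} to get a uniform $O(h)$ one-step bound (absorbing the geometric factor $\sum_j h^{j-1} = O(1)$ since $h < 1/2$), then telescope over $k$ steps. The remark about iterates staying in $\mathcal{D}$ is a reasonable caution, and as you note it is already part of the standing hypotheses of \cref{th:approx}.
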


\begin{proof}
This follows from the fact that the right-hand side in \eqref{eq:memoryless-approx-contrib} is $O(h)$, because $\nomemcoef{j}{n}(\btheta) = O(1)$ by \cref{lem:liuyf} and
\begin{equation*}
\sum_{j = 0}^{\ord - 1} h^j = \frac{1 - h^{\ord}}{1 - h} \leq \frac{1}{1 - h} \labrel{1}[=] O(1),
\end{equation*}
where \labrel{1} is because $h$ is bounded away from $1$.
\end{proof}

Recall that, by definition, \eqref{eq:onestep-error-bound} will follow from the following bound \eqref{eq:kwPSbq} on the error introduced by removing memory.
By Taylor's theorem, we have
\begin{align*}
  &\nabla L^{(n - k)}\prn[\big]{\tilde{\btheta}^{(n - k)}} = \sum_{i = 0}^{\ord - 1} \frac{1}{i!} \nabla^{i + 1} L^{(n - k)}\prn[\big]{\tilde{\btheta}^{(n)}} \prn[\big]{\underbrace{\tilde{\btheta}^{(n - k)} - \tilde{\btheta}^{(n)}, \ldots, \tilde{\btheta}^{(n - k)} - \tilde{\btheta}^{(n)}}_{\text{$i$ times}}} + \text{Rem}^{(n - k)},
\end{align*}
where
\begin{multline*}
  \text{Rem}^{(n - k)} = \frac{1}{(\ord - 1) !} \int_0^1 (1 - t)^{\ord - 1}\\
  \times \nabla^{\ord + 1} L^{(n - k)}\prn[\big]{\tilde{\btheta}^{(n)} + t \prn[\big]{\tilde{\btheta}^{(n - k)} - \tilde{\btheta}^{(n)}}} \prn[\big]{\underbrace{\tilde{\btheta}^{(n - k)} - \tilde{\btheta}^{(n)}, \ldots, \tilde{\btheta}^{(n - k)} - \tilde{\btheta}^{(n)}}_{\text{$\ord$ times}}}\,\mathrm{d} t.
\end{multline*}
Using the boundedness of derivatives of $L^{(n - k)}(\cdot)$ and \cref{lem:qVNejQ}, we can write
\begin{align*}
  &\nabla L^{(n - k)}\prn[\big]{\tilde{\btheta}^{(n - k)}} = \sum_{i = 0}^{\ord - 1} \frac{1}{i!} \nabla^{i + 1} L^{(n - k)}\prn[\big]{\tilde{\btheta}^{(n)}} \prn[\big]{\underbrace{\tilde{\btheta}^{(n - k)} - \tilde{\btheta}^{(n)}, \ldots, \tilde{\btheta}^{(n - k)} - \tilde{\btheta}^{(n)}}_{\text{$i$ times}}} + O(k^{\ord} h^{\ord}).\numberthis\label{eq:oiriy}
\end{align*}

We will now express the difference $\tilde{\btheta}^{(n - k)} - \tilde{\btheta}^{(n)}$ in \eqref{eq:oiriy} through $\tilde{\btheta}^{(n)}$.

\begin{proof}[Proof of \cref{lem:history-expansion}]
We induct over $r$.

For $r = 1$ this is the assertion of \cref{lem:qVNejQ}. Further, $r \geq 2$.

Now make the induction assumption that \eqref{eq:lkjru} holds with $r$ replaced by $r - j$ for any $j \in \range{1}{r - 1}$.

Like in the proof of \cref{lem:qVNejQ}, using $\nomemcoef{r}{n}(\btheta) = O(1)$ by \cref{lem:liuyf} and the fact that $h$ is bounded away from $1$, we write for $s \in \range{1}{k}$
\begin{equation*}
\tilde{\btheta}^{(n - s + 1)} = \tilde{\btheta}^{(n - s)} + \sum_{j = 1}^{r - 1} h^j \nomemcoef{j}{n - s}\prn[\big]{\tilde{\btheta}^{(n - s)}} + O(h^r).
\end{equation*}
Inserting this, we get
\begin{align*}
  &\tilde{\btheta}^{(n - k)} - \tilde{\btheta}^{(n)} = \sum_{s = 1}^k \crl[\big]{\tilde{\btheta}^{(n - s)} - \tilde{\btheta}^{(n - s + 1)}}\\
  &\quad = - \sum_{s = 1}^k \crl[\bigg]{\sum_{j = 1}^{r - 1} h^j \nomemcoef{j}{n - s}\prn[\big]{\tilde{\btheta}^{(n - s)}} + O(h^r)}\\
  &\quad = - \sum_{s = 1}^k \sum_{j = 1}^{r - 1} h^j \nomemcoef{j}{n - s}\prn[\big]{\tilde{\btheta}^{(n - s)}} + O(k h^r).\numberthis\label{eq:wpoii}
\end{align*}

By Taylor's theorem,
\begin{align*}
  \nomemcoef{j}{n - s}\prn[\big]{\tilde{\btheta}^{(n - s)}}
  &= \sum_{i = 0}^{r - 1 - j} \frac{1}{i!} \nabla^i \nomemcoef{j}{n - s}\prn[\big]{\tilde{\btheta}^{(n)}} \prn[\big]{\underbrace{\tilde{\btheta}^{(n - s)} - \tilde{\btheta}^{(n)}}_{\text{$i$ times}}}\\
  &\quad + \frac{1}{(r - 1 - j)!} \int_{0}^1 (1 - t)^{r - 1 - j} \\
  &\qquad \times \nabla^{r - j} \nomemcoef{j}{n - s}\prn[\big]{\tilde{\btheta}^{(n)} + t \prn[\big]{\tilde{\btheta}^{(n - s)} - \tilde{\btheta}^{(n)}}} \prn[\big]{\underbrace{\tilde{\btheta}^{(n - s)} - \tilde{\btheta}^{(n)}}_{\text{$r - j$ times}}}\,\mathrm{d} t\\
  &\labrel{1}[=] \sum_{i = 0}^{r - 1 - j} \frac{1}{i!} \nabla^i \nomemcoef{j}{n - s}\prn[\big]{\tilde{\btheta}^{(n)}} \prn[\big]{\underbrace{\tilde{\btheta}^{(n - s)} - \tilde{\btheta}^{(n)}}_{\text{$i$ times}}} + O\prn[\big]{s^{r - j} h^{r - j}}\\
  &\labrel{2}[=] \sum_{i = 0}^{r - 1 - j} \frac{1}{i!} \nabla^i \nomemcoef{j}{n - s}(\tilde{\btheta}^{(n)}) \prn[\big]{\underbrace{\tilde{\btheta}^{(n - s)} - \tilde{\btheta}^{(n)}}_{\text{$i$ times}}} + O\prn[\big]{k^{r - j} h^{r - j}},\numberthis\label{eq:kjlkfy}
\end{align*}
where in \labrel{1} we used that the derivatives of $\nomemcoef{j}{s}(\btheta)$ are bounded (\cref{lem:liuyf}) and \cref{lem:qVNejQ}; \labrel{2} is just because $s \leq k$.
By the induction assumption,
\begin{equation*}
\tilde{\btheta}^{(n - s)} = \tilde{\btheta}^{(n)} + \sum_{l = 1}^{r - 1 - j} h^l \histcoef{l}{n}{s}\prn[\big]{\tilde{\btheta}^{(n)}} + O(k^{r - j} h^{r - j}),
\end{equation*}
which we insert into \eqref{eq:kjlkfy}, giving
\begin{align*}
 &\nomemcoef{j}{n - s}\prn[\big]{\tilde{\btheta}^{(n - s)}}\\
 &\quad = \sum_{i = 0}^{r - 1 - j} \frac{1}{i!} \nabla^i \nomemcoef{j}{n - s}\prn[\big]{\tilde{\btheta}^{(n)}} \prn[\bigg]{\underbrace{\sum_{l = 1}^{r - 1 - j} h^l \histcoef{l}{n}{s}\prn[\big]{\tilde{\btheta}^{(n)}} + O(k^{r - j} h^{r - j})}_{\text{$i$ times}}} + O\prn[\big]{k^{r - j} h^{r - j}}\\
 &\quad \labrel{1}[=] \sum_{i = 0}^{r - 1 - j} \frac{1}{i!} \nabla^i \nomemcoef{j}{n - s}\prn[\big]{\tilde{\btheta}^{(n)}} \prn[\bigg]{\underbrace{\sum_{l = 1}^{r - 1 - j} h^l \histcoef{l}{n}{s}\prn[\big]{\tilde{\btheta}^{(n)}}}_{\text{$i$ times}}} + O\prn[\big]{k^{r - j} h^{r - j}}\\
 &\quad = \sum_{i = 0}^{r - 1 - j} \frac{h^i}{i!} \nabla^i \nomemcoef{j}{n - s}\prn[\big]{\tilde{\btheta}^{(n)}} \prn[\bigg]{\underbrace{\sum^{r - 2 - j}_{l = 0} h^l \histcoef{l + 1}{n}{s}\prn[\big]{\tilde{\btheta}^{(n)}}}_{\text{$i$ times}}} + O\prn[\big]{k^{r - j} h^{r - j}}\\
 &\quad \labrel{2}[=] \sum_{i = 0}^{r - 1 - j} \sum_{l = 0}^{r - 1 - j - i} h^{i + l}\\
 &\qquad\quad \times \sum_{(i_0, \ldots, i_l) \in \mathcal{K}_{i, l}} \frac{1}{i_0! \ldots i_l!} \nabla^i \nomemcoef{j}{n - s}\prn[\big]{\tilde{\btheta}^{(n)}} \prn[\bigg]{\underbrace{\histcoef{1}{n}{s}\prn[\big]{\tilde{\btheta}^{(n)}}}_{\text{$i_0$ times}}, \ldots, \underbrace{\histcoef{l + 1}{n}{s}\prn[\big]{\tilde{\btheta}^{(n)}}}_{\text{$i_l$ times}}}\\
 &\qquad + O\prn[\big]{k^{r - j} h^{r - j}}
\end{align*}
where in \labrel{1} we used that components of $\nabla^i \nomemcoef{j}{n - s}(\btheta)$ are bounded (\cref{lem:liuyf}) and for $i \geq 1$ we have $(k h)^{i (r - j)} = O \prn[\big]{(k h)^{r - j}}$ because $k h$ does not exceed $T$\internalComment{!!}; in \labrel{2} we also used that $\histcoef{l}{n}{s}(\btheta) = O(s^l)$ by \cref{lem:liuyf}.
Inserting this into \eqref{eq:wpoii} gives
\begin{align*}
 &\tilde{\btheta}^{(n - k)} - \tilde{\btheta}^{(n)}\\
 &\quad = - \sum_{s = 1}^k \sum_{j = 1}^{r - 1} h^j \nomemcoef{j}{n - s}\prn[\big]{\tilde{\btheta}^{(n - s)}} + O(k h^{r})\\
 &\quad = - \sum_{s = 1}^k \sum_{j = 1}^{r - 1} h^j \sum_{i = 0}^{r - 1 - j} \sum_{l = 0}^{r - 1 - j - i} h^{i + l} \times \\
 &\qquad \quad \times \sum_{(i_0, \ldots, i_l) \in \mathcal{K}_{i, l}} \frac{1}{i_0! \ldots i_l!} \nabla^i \nomemcoef{j}{n - s}\prn[\big]{\tilde{\btheta}^{(n)}} \prn[\bigg]{\underbrace{\histcoef{1}{n}{s}\prn[\big]{\tilde{\btheta}^{(n)}}}_{\text{$i_0$ times}}, \ldots, \underbrace{\histcoef{l + 1}{n}{s}\prn[\big]{\tilde{\btheta}^{(n)}}}_{\text{$i_l$ times}}}\\
 &\qquad - \sum_{s = 1}^k \sum_{j = 1}^{r - 1} h^j O\prn[\big]{k^{r - j} h^{r - j}}
   + O(k h^{r})\\
 &\quad = - \sum_{m = 1}^{r - 1} h^m \sum_{s = 1}^k \sum_{\substack{j \geq 1, i, l \geq 0\\i + j + l = m}} \sum_{(i_0, \ldots, i_l) \in \mathcal{K}_{i, l}} \frac{1}{i_0! \ldots i_l!} \nabla^i \nomemcoef{j}{n - s}\prn[\big]{\tilde{\btheta}^{(n)}} \prn[\bigg]{\underbrace{\histcoef{1}{n}{s}\prn[\big]{\tilde{\btheta}^{(n)}}}_{\text{$i_0$ times}}, \ldots, \underbrace{\histcoef{l + 1}{n}{s}\prn[\big]{\tilde{\btheta}^{(n)}}}_{\text{$i_l$ times}}}\\
 &\qquad + O(k^{r} h^{r})\\
 &\quad \labrel{1}[=] \sum_{m = 1}^{r - 1} h^m \histcoef{m}{n}{k}\prn[\big]{\tilde{\btheta}^{(n)}} + O(k^{r} h^{r}),
\end{align*}
where in \labrel{1} we used \eqref{eq:jfjl}.
We have completed the induction step for \eqref{eq:lkjru}.
\internalComment{
For example,
\begin{equation*}
\histcoef{3}{n}{k}(\btheta) = - \sum_{s = 1}^k \brk[\bigg]{\nomemcoef{3}{n - s}(\btheta) + \nabla \nomemcoef{2}{n - s}(\btheta) \histcoef{1}{n}{s}(\btheta) + \nabla \nomemcoef{1}{n - s}(\btheta) \histcoef{2}{n}{s}(\btheta) + \frac{1}{2} \nabla^2 \nomemcoef{1}{n - s}(\btheta) \prn[\big]{\histcoef{1}{n}{s}(\btheta), \histcoef{1}{n}{s}(\btheta)}}.
\end{equation*}
}
\end{proof}

We will now conclude the proof of~\eqref{eq:kwPSbq}. Inserting \eqref{eq:lkjru} into \eqref{eq:oiriy} gives
\begin{align*}
 &\nabla L^{(n - k)}\prn[\big]{\tilde{\btheta}^{(n - k)}}\\
 &\quad = \sum_{i = 0}^{\ord - 1} \frac{1}{i!} \nabla^{i + 1} L^{(n - k)}\prn[\big]{\tilde{\btheta}^{(n)}} \prn[\bigg]{\underbrace{\sum_{m = 1}^{\ord - 1} h^m \histcoef{m}{n}{k}\prn[\big]{\tilde{\btheta}^{(n)}} + O(k^{\ord} h^{\ord})}_{\text{$i$ times}}} + O(k^{\ord} h^{\ord})\\
 &\quad \labrel{1}[=] \sum_{i = 0}^{\ord - 1} \frac{1}{i!} \nabla^{i + 1} L^{(n - k)}\prn[\big]{\tilde{\btheta}^{(n)}} \prn[\bigg]{\underbrace{\sum_{m = 1}^{\ord - 1} h^m \histcoef{m}{n}{k}\prn[\big]{\tilde{\btheta}^{(n)}}}_{\text{$i$ times}}} + O(k^{\ord} h^{\ord})\\
 &\quad = \sum_{i = 0}^{\ord - 1} \frac{h^i}{i!} \nabla^{i + 1} L^{(n - k)}\prn[\big]{\tilde{\btheta}^{(n)}} \prn[\bigg]{\underbrace{\sum_{m = 0}^{\ord - 2} h^m \histcoef{m + 1}{n}{k}\prn[\big]{\tilde{\btheta}^{(n)}}}_{\text{$i$ times}}} + O(k^{\ord} h^{\ord})\\
 &\quad = \sum_{m = 0}^{\ord - 1} h^m \sum_{\substack{i, l \geq 0\\i + l = m}} \sum_{(i_0, \ldots, i_l) \in \mathcal{K}_{i, l}} \frac{1}{i_0! \ldots i_l!} \nabla^{i + 1} L^{(n - k)}\prn[\big]{\tilde{\btheta}^{(n)}} \prn[\bigg]{\underbrace{\histcoef{1}{n}{k}\prn[\big]{\tilde{\btheta}^{(n)}}}_{\text{$i_0$ times}}, \ldots, \underbrace{\histcoef{l + 1}{n}{k}\prn[\big]{\tilde{\btheta}^{(n)}}}_{\text{$i_l$ times}}}\\
  &\qquad + O(k^{\ord} h^{\ord})
\end{align*}
where in \labrel{1} we used that for $i \geq 1$ we have $(k h)^{i \ord} = O \prn[\big]{(k h)^{\ord}}$ because $k h$ does not exceed $T$.\internalComment{!!} Using exponential summation gives
\begin{align*}
 &- \sum_{k = 0}^n \beta^k \nabla L^{(n - k)}\prn[\big]{\tilde{\btheta}^{(n - k)}}\\
 &\quad = - \nabla L^{(n)}\prn[\big]{\tilde{\btheta}^{(n)}}\\
 &\qquad - \sum_{m = 0}^{\ord - 1} h^m \sum_{k = 1}^n \beta^k \sum_{\substack{i, l \geq 0\\i + l = m}} \sum_{(i_0, \ldots, i_l) \in \mathcal{K}_{i, l}} \frac{1}{i_0! \ldots i_l!} \times\\
  &\qquad \qquad \qquad \times \nabla^{i + 1} L^{(n - k)}\prn[\big]{\tilde{\btheta}^{(n)}} \prn[\bigg]{\underbrace{\histcoef{1}{n}{k}\prn[\big]{\tilde{\btheta}^{(n)}}}_{\text{$i_0$ times}}, \ldots, \underbrace{\histcoef{l + 1}{n}{k}\prn[\big]{\tilde{\btheta}^{(n)}}}_{\text{$i_l$ times}}} + O(h^{\ord})\\
 &\quad = - \sum_{k = 0}^n \beta^k \nabla L^{(n - k)}\prn[\big]{\tilde{\btheta}^{(n)}}\\
 &\qquad - \sum_{m = 1}^{\ord - 1} h^m \sum_{k = 1}^n \beta^k \sum_{\substack{i, l \geq 0\\i + l = m}} \sum_{(i_0, \ldots, i_l) \in \mathcal{K}_{i, l}} \frac{1}{i_0! \ldots i_l!} \times \\
  &\qquad \qquad \qquad \times \nabla^{i + 1} L^{(n - k)}\prn[\big]{\tilde{\btheta}^{(n)}} \prn[\bigg]{\underbrace{\histcoef{1}{n}{k}\prn[\big]{\tilde{\btheta}^{(n)}}}_{\text{$i_0$ times}}, \ldots, \underbrace{\histcoef{l + 1}{n}{k}\prn[\big]{\tilde{\btheta}^{(n)}}}_{\text{$i_l$ times}}} + O(h^{\ord})\\
 &\quad = \sum_{m = 0}^{\ord - 1} h^m \nomemcoef{m + 1}{n}\prn[\big]{\tilde{\btheta}^{(n)}} + O(h^{\ord}),
\end{align*}
as desired. Since \eqref{eq:kwPSbq} was sufficient for \eqref{eq:onestep-error-bound}, we have proven \eqref{eq:onestep-error-bound} as well.

\begin{lemma}\label{lem:global-error-bound}
\Cref{eq:global-error-bound} follows from \eqref{eq:onestep-error-bound}.
\end{lemma}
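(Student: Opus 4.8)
The plan is to run the standard local-to-global (discrete Grönwall) argument, adapted to the memory in the iteration. First I would introduce the error sequence $\btheta^{(n)} - \tilde{\btheta}^{(n)}$ and the one-step residual $\rho^{(n)} := \tilde{\btheta}^{(n+1)} - \tilde{\btheta}^{(n)} + h \sum_{k=0}^n \beta^k \nabla L^{(n-k)}\prn[\big]{\tilde{\btheta}^{(n-k)}}$, which by \eqref{eq:onestep-error-bound} satisfies $\norm{\rho^{(n)}} \le \oC{clocal} h^{\ord+1}$ for every $n \in \range{0}{\lfloor T/h \rfloor}$. Subtracting the memoryless update, rewritten as $\tilde{\btheta}^{(n+1)} = \tilde{\btheta}^{(n)} - h \sum_{k=0}^n \beta^k \nabla L^{(n-k)}(\tilde{\btheta}^{(n-k)}) + \rho^{(n)}$, from the HB update \eqref{eq:hb-iter-minibatch} gives the error recursion
\[
  \btheta^{(n+1)} - \tilde{\btheta}^{(n+1)} = \prn[\big]{\btheta^{(n)} - \tilde{\btheta}^{(n)}} - h \sum_{k=0}^n \beta^k \prn[\big]{\nabla L^{(n-k)}(\btheta^{(n-k)}) - \nabla L^{(n-k)}(\tilde{\btheta}^{(n-k)})} - \rho^{(n)} .
\]

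Next I would reduce this to a scalar recursion. Since each $L^{(n)}$ has derivatives up to order $2\ord \ge 4$ bounded uniformly in $n$ on $\mathcal{D}$, the Hessians are uniformly bounded, say by $M$; as $\mathcal{D}$ is convex and both $\btheta^{(j)}, \tilde{\btheta}^{(j)}$ lie in $\mathcal{D}$ (an assumption of \cref{th:approx}), the mean value inequality gives $\norm{\nabla L^{(n-k)}(\btheta^{(n-k)}) - \nabla L^{(n-k)}(\tilde{\btheta}^{(n-k)})} \le M \norm{\btheta^{(n-k)} - \tilde{\btheta}^{(n-k)}}$. Setting $E_n := \max_{0 \le j \le n} \norm{\btheta^{(j)} - \tilde{\btheta}^{(j)}}$ (with $E_0 = 0$ because $\tilde{\btheta}^{(0)} = \btheta^{(0)}$), the geometrically weighted history sum collapses, $\sum_{k=0}^n \beta^k \norm{\btheta^{(n-k)} - \tilde{\btheta}^{(n-k)}} \le E_n \sum_{k=0}^\infty \beta^k = E_n/(1-\beta)$, so taking norms above and then the maximum over the first $n+1$ indices yields the genuine one-step recursion $E_{n+1} \le \prn[\big]{1 + hM/(1-\beta)} E_n + \oC{clocal}\, h^{\ord+1}$.

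Finally I would solve this linear recursion: with $a := 1 + hM/(1-\beta)$ and $E_0 = 0$, iteration gives $E_n \le \oC{clocal}\, h^{\ord+1} (a^n-1)/(a-1) = \tfrac{\oC{clocal}(1-\beta)}{M}\, h^{\ord}(a^n - 1)$, and for $n \le \lfloor T/h \rfloor$ the factor $a^n \le \prn[\big]{1 + hM/(1-\beta)}^{T/h} \le e^{MT/(1-\beta)}$ is bounded by a constant depending only on $T$ (through $M$, $\beta$), which delivers $\sup_{n \in \range{0}{\lfloor T/h \rfloor}} \norm{\btheta^{(n)} - \tilde{\btheta}^{(n)}} = O(h^{\ord})$, i.e.\ \eqref{eq:global-error-bound}. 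The only point that is not entirely routine is the memory in the recursion: a priori $\btheta^{(n+1)} - \tilde{\btheta}^{(n+1)}$ depends on the whole past of the error rather than just on $\btheta^{(n)} - \tilde{\btheta}^{(n)}$, but the fixed geometric weights $\beta^k$, $\beta \in (0,1)$, let one dominate the history contribution by $\tfrac{1}{1-\beta}$ times the running maximum $E_n$, reducing everything to the classical case; one should also verify that no $n$-dependence leaks into the constants, which is precisely guaranteed by the uniform-in-$n$ boundedness of the loss derivatives.
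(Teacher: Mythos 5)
Your proposal is correct and follows essentially the same route as the paper: define the one-step residual, subtract the memoryless update from the HB update, control the Lipschitz factor via the uniformly bounded Hessians, pass to the running maximum $E_n$ to collapse the geometrically weighted history into a $1/(1-\beta)$ factor, and close with the standard discrete Grönwall bound. The paper phrases the mean-value step via the integral form of Taylor's theorem and absorbs the $1/(1-\beta)$ into a single unnamed constant, but these are only cosmetic differences from your write-up.
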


\begin{proof}
The argument is standard. Define the error at the $n$th step
\begin{equation*}
\be^{(n)} := \btheta^{(n)} - \tilde{\btheta}^{(n)}
\end{equation*}
and local error
\begin{equation*}
\bdelta^{(n)} := \tilde{\btheta}^{(n + 1)} - \tilde{\btheta}^{(n)} + h \sum_{k = 0}^n \beta^k \nabla L^{(n - k)}\prn[\big]{\tilde{\btheta}^{(n - k)}},
\end{equation*}
where $n \in \range{0}{\lfloor T / h \rfloor}$. By definition and Taylor's theorem, we have
\begin{align*}
  \be^{(n + 1)} &= \be^{(n)} - h \sum_{k = 0}^n \beta^k \prn[\Big]{
                  \nabla L^{(n - k)}\prn[\big]{\btheta^{(n - k)}} - \nabla L^{(n - k)}\prn[\big]{\tilde{\btheta}^{(n - k)}}} - \bdelta^{(n)}\\
  &= \be^{(n)} - h \sum_{k = 0}^n \beta^k \int_0^1 \nabla^2 L^{(n - k)}\prn[\big]{\tilde{\btheta}^{(n - k)} + t \prn[\big]{\btheta^{(n - k)} - \tilde{\btheta}^{(n - k)}} } \prn[\big]{\btheta^{(n - k)} - \tilde{\btheta}^{(n - k)}} \,\mathrm{d} t - \bdelta^{(n)}.
\end{align*}

Since the derivatives of the loss are bounded, this implies
\begin{equation*}
\norm{\be^{(n + 1)}} \leq \norm{\be^{(n)}} + h \oC{lglpr} \sum_{k = 0}^n \beta^k \norm{\be^{(n - k)}} + \norm{\bdelta^{(n)}}
\end{equation*}
with some constant $\nC{lglpr}$. Denoting $s^{(n)} := \max_{0 \leq k \leq n} \norm{\be^{(k)}}$, we have
\begin{equation*}
  s^{(n + 1)} \leq s^{(n)} + h \oC{lgl} s^{(n)} + \norm{\bdelta^{(n)}} = (1 + h \oC{lgl}) s^{(n)} + \norm{\bdelta^{(n)}} \labrel{1}[\leq] (1 + h \oC{lgl}) s^{(n)} + \oC{clocal} h^{\ord + 1},
\end{equation*}
with some constant $\nC{lgl}$, where \labrel{1} is by \eqref{eq:onestep-error-bound}. Applying this inequality iteratively, we obtain
\begin{align*}
  s^{(n)} &\leq (1 + h \oC{lgl})^n s^{(0)} + \frac{(1 + h \oC{lgl})^n - 1}{\oC{lgl}} \oC{clocal} h^{\ord}\\
          &= \frac{(1 + h \oC{lgl})^n - 1}{\oC{lgl}} \oC{clocal} h^{\ord} \labrel{1}[\leq] \frac{e^{\oC{lgl} n h} - 1}{\oC{lgl}} \oC{clocal} h^{\ord} \leq \oC{jjqh} e^{\oC{lgl} n h} h^{\ord} \leq \oC{jjqh} e^{\oC{lgl} T} h^{\ord},
\end{align*}
where in \labrel{1} we used the inequality $1 + x \leq e^x$ for any $x \geq 0$, $\nC{jjqh}$ is some constant. It is left to apply it with $n = \lfloor T / h \rfloor$ and put $\oC{cglobal} = \oC{jjqh} e^{\oC{lgl} T}$.
\end{proof}

We have proven both \eqref{eq:onestep-error-bound} and \eqref{eq:global-error-bound}, concluding the proof of \cref{th:approx}.

\section{Proof of \cref{th:nomem-coef}}\label{sec:omitted-details-nomem-coef}

\begin{proof}[Proof of \cref{lem:sum-over-markings-of-one-tree}]
We prove this by induction in $m$.

\paragraph{Induction base} For the tree $\tau$ with vertices $\crl{1, 2}$ where $1$ is the root, we have
\begin{align*}
  \Emark{\tau}{\crl{2}}{a}{n - l \to n} + \Emark{\tau}{\varnothing}{a}{n - l \to n}
  &= \sum_{b = 0}^{n - l - a} \beta^b \nabla^2 L^{(n - l - a - b)} \sum_{l' = 1}^l \E{\rootedtree[]}{l'}{n} + \sum_{b = 0}^{n - l - a} \beta^b \nabla^2 L^{(n - l - a - b)} \sum_{l' = 1}^{a + b} \E{\rootedtree[]}{l'}{n - l}\\
  &= \sum_{b = 0}^{n - l - a} \beta^b \nabla^2 L^{(n - l - a - b)} \sum_{l' = 1}^{l + a + b} \E{\rootedtree[]}{l'}{n} = \E{\rootedtree[[]]}{l + a}{n},
\end{align*}
where we used
\begin{align*}
  \sum_{l' = 1}^l \E{\rootedtree[]}{l'}{n} + \sum_{l' = 1}^{a + b} \E{\rootedtree[]}{l'}{n - l} &= \sum_{l' = 1}^l \sum_{b_1 = 0}^{n - l'} \beta^{b_1} \nabla L^{(n - l' - b_1)} + \sum_{l' = 1}^{a + b} \sum_{b_1 = 0}^{n - l - l'} \beta^{b_1} \nabla L^{(n - l - l' - b_1)}\\
                                                                                                &= \sum_{l' = 1}^l \sum_{b_1 = 0}^{n - l'} \beta^b \nabla L^{(n - l' - b_1)}
                                                                                                  + \sum_{l' = l + 1}^{l + a + b} \sum_{b_1 = 0}^{n - l'} \beta^b \nabla L^{(n - l' - b_1)}\\
                                                                                                &= \sum_{l' = 1}^{l + a + b} \sum_{b_1 = 0}^{n - l'} \beta^{b_1} \nabla L^{(n - l' - b_1)} = \sum_{l' = 1}^{l + a + b} \E{\rootedtree[]}{l'}{n}.
\end{align*}

\paragraph{Induction step} Let $r$ be the root of $\tau$, $v_1, \ldots, v_{\ell}$ its children, $\tau_1, \ldots, \tau_{\ell}$ the corresponding subtrees rooted at the children, $V_1, \ldots, V_{\ell}$ their vertex sets (partitioning $\range{1}{m} \setminus \crl{r}$). From the definition,
\begin{equation*}
\Emark{\tau}{\mrk}{a}{n - l \to n} = \sum_{b = 0}^{n - l - a} \beta^b \nabla^{\ell + 1} L^{(n - l - a - b)} \brk[\bigg]{\sum_{l' = 1}^l \E{\tau_s}{l'}{n} \inde{\text{$v_s \in \mrk$}} + \sum_{l' = 1}^{a + b} \Emark{\tau_s}{\mrk \cap V_s}{l'}{n - l \to n} \inde{\text{$v_s \notin \mrk$}}}_{s = 1}^{\ell},
\end{equation*}
which means
\begin{align*}
  &\sum_{\substack{\mrk \in \mrkset_{\tau}\\v_1 \notin \mrk}} \Emark{\tau}{\mrk}{a}{n - l \to n} = \sum_{\substack{\mrk \in \mrkset_{\tau}\\v_1 \notin \mrk}} \sum_{b = 0}^{n - l - a} \beta^b \times \\
  &\qquad \quad \times \nabla^{\ell + 1} L^{(n - l - a - b)} \brk[\bigg]{
  \sum_{l' = 1}^{a + b} \Emark{\tau_1}{\mrk \cap V_1}{l'}{n - l \to n},
  \ldots,
  \sum_{l' = 1}^l \E{\tau_\ell}{l'}{n} \inde{\text{$v_\ell \in \mrk$}} + \sum_{l' = 1}^{a + b} \Emark{\tau_{\ell}}{\mrk \cap V_\ell}{l'}{n - l \to n} \inde{\text{$v_\ell \notin \mrk$}}
  }\\
  &\quad =  \sum_{\mrk_1 \in \mrkset_{\tau_1}} \sum_{\mrk_2 \in \mrkset_{\tau \setminus \tau_1}} \sum_{b = 0}^{n - l - a} \beta^b \times \\
  &\qquad \quad \times \nabla^{\ell + 1} L^{(n - l - a - b)} \brk[\bigg]{
    \sum_{l' = 1}^{a + b} \Emark{\tau_1}{\mrk_1}{l'}{n - l \to n},
    \ldots,
    \sum_{l' = 1}^l \E{\tau_\ell}{l'}{n} \inde{\text{$v_\ell \in \mrk_2$}} + \sum_{l' = 1}^{a + b} \Emark{\tau_{\ell}}{\mrk_2 \cap V_{\ell}}{l'}{n - l \to n} \inde{\text{$v_\ell \notin \mrk_2$}}
    }\\
  &\quad = \sum_{\mrk_2 \in \mrkset_{\tau \setminus \tau_1}} \sum_{b = 0}^{n - l - a} \beta^b \times\\
  &\qquad \quad \times \nabla^{\ell + 1} L^{(n - l - a - b)} \brk[\bigg]{\sum_{l' = 1}^{a + b} \sum_{\mrk_1 \in \mrkset_{\tau_1}} \Emark{\tau_1}{\mrk_1}{l'}{n - l \to n}, \ldots, \sum_{l' = 1}^l \E{\tau_\ell}{l'}{n} \inde{\text{$v_\ell \in \mrk_2$}} + \sum_{l' = 1}^{a + b} \Emark{\tau_{\ell}}{\mrk_2 \cap V_\ell}{l'}{n - l \to n} \inde{\text{$v_\ell \notin \mrk_2$}}}.
\end{align*}
Here, we used the one-to-one correspondence between markings $\mrk$ not containing $v_1$ and pairs $(\mrk_1, \mrk_2)$, where $\mrk_1$ is a marking of $\tau_1$ and $\mrk_2$ is a marking of the tree $\tau \setminus \tau_1$.
Now we apply the induction hypothesis (since $\tau_1$ is a smaller tree) and replace $\sum_{\mrk_1 \in \mrkset_{\tau_1}} \Emark{\tau_1}{\mrk_1}{l'}{n - l \to n}$ with $\E{\tau_1}{l + l'}{n}$:
\begin{align*}
  &\sum_{\substack{\mrk \in \mrkset_{\tau}\\v_1 \notin \mrk}} \Emark{\tau}{\mrk}{a}{n - l \to n} = \sum_{\mrk_2 \in \mrkset_{\tau \setminus \tau_1}} \sum_{b = 0}^{n - l - a} \beta^b\\
  &\qquad \times \nabla^{\ell + 1} L^{(n - l - a - b)} \brk[\bigg]{\sum_{l' = 1}^{a + b} \E{\tau_1}{l + l'}{n}, \ldots, \sum_{l' = 1}^l \E{\tau_\ell}{l'}{n} \inde{\text{$v_\ell \in \mrk_2$}} + \sum_{l' = 1}^{a + b} \Emark{\tau_{\ell}}{\mrk_2 \cap V_{\ell}}{l'}{n - l \to n} \inde{\text{$v_\ell \notin \mrk_2$}}}.
\end{align*}

We deal with the case $v_1 \in \mrk$ similarly, obtaining
\begin{align*}
  &\sum_{\substack{\mrk \in \mrkset_{\tau}\\v_1 \in \mrk}} \Emark{\tau}{\mrk}{a}{n - l \to n} = \sum_{\mrk_2 \in \mrkset_{\tau \setminus \tau_1}} \sum_{b = 0}^{n - l - a} \beta^b\\
  &\qquad \times \nabla^{\ell + 1} L^{(n - l - a - b)} \brk[\bigg]{\sum_{l' = 1}^l \E{\tau_1}{l'}{n}, \ldots, \sum_{l' = 1}^l \E{\tau_\ell}{l'}{n} \inde{\text{$v_\ell \in \mrk_2$}} + \sum_{l' = 1}^{a + b} \Emark{\tau_{\ell}}{\mrk_2 \cap V_\ell}{l'}{n - l \to n} \inde{\text{$v_\ell \notin \mrk_2$}}}.
\end{align*}
Adding the latter two equations and using $\sum_{l' = 1}^{a + b} \E{\tau_1}{l + l'}{n} + \sum_{l' = 1}^l \E{\tau_1}{l'}{n} = \sum_{l' = 1}^{l + a + b} \E{\tau_1}{l'}{n}$ gives
\begin{align*}
  &\sum_{\substack{\mrk \in \mrkset_{\tau}}} \Emark{\tau}{\mrk}{a}{n - l \to n} = \sum_{\mrk_2 \in \mrkset_{\tau \setminus \tau_1}} \sum_{b = 0}^{n - l - a} \beta^b\\
  &\qquad \times \nabla^{\ell + 1} L^{(n - l - a - b)} \brk[\bigg]{\sum_{l' = 1}^{l + a + b} \E{\tau_1}{l'}{n}, \ldots, \sum_{l' = 1}^l \E{\tau_\ell}{l'}{n} \inde{\text{$v_\ell \in \mrk_2$}} + \sum_{l' = 1}^{a + b} \Emark{\tau_{\ell}}{\mrk_2 \cap V_\ell}{l'}{n - l \to n} \inde{\text{$v_\ell \notin \mrk_2$}}}.
\end{align*}
Continuing the same argument with $v_2, \ldots, v_{\ell}$, we will arrive at
\begin{align*}
  \sum_{\substack{\mrk \in \mrkset_{\tau}}} \Emark{\tau}{\mrk}{a}{n - l \to n} = \sum_{b = 0}^{n - l - a} \beta^b \nabla^{\ell + 1} L^{(n - l - a - b)} \brk[\bigg]{
    \sum_{l' = 1}^{l + a + b} \E{\tau_1}{l'}{n},
    \ldots,
    \sum_{l' = 1}^{l + a + b} \E{\tau_\ell}{l'}{n}
    }.
\end{align*}
By definition, the right-hand side is equal to $\E{\tau}{l + a}{n}$, completing the induction step.
\end{proof}

\begin{proof}[Proof of \cref{lem:kjjhhag}]
Consider the following operation. Choose a partition of $\range{1}{m}$ into $i + 1$ disjoint non-empty sets $(V_0, \crl{V_1, \ldots, V_i})$, where $0 \leq i \leq m - 1$ (one of the sets is privileged, but the order in the other ones does not matter); choose a labeled rooted tree $\tau_0 \in \mathcal{A}[V_0]$ with root $v_0 \in V_0$, another labeled rooted tree $\tau_1 \in \mathcal{A}[V_1]$ with root $v_1 \in V_1$, and so on, up $\tau_i \in \mathcal{A}[V_i]$ with root $v_i \in V_i$.

Now, assign to each of $v_1, \ldots, v_i$ a parent among vertices of $\tau_0$ in all possible ways, by choosing all mappings from $\crl{v_1, \ldots, v_i}$ to $V_0$. Write the following expression corresponding to the family of all such assignments:
\begin{equation}\label{eq:expr-corr-to-parent-assignm}
\nabla^i \E{\tau_0}{a}{n - l} \brk[\bigg]{\sum_{l' = 1}^l \E{\tau_1}{l'}{n}, \ldots, \sum_{l' = 1}^l \E{\tau_i}{l'}{n}}.
\end{equation}
This way, we have constructed a family of labeled rooted trees $\tau \in \mathcal{A}\range{1}{m}$ with roots of $\tau_1, \ldots, \tau_i$ marked.

For example, consider $m = 3$. When $i = 0$, there is only one partition and three corresponding marked trees (with no vertices marked):
\begin{equation*}
V_0 = \crl{1, 2, 3}\quad \bigrootedtree[1[2][3]]\quad \bigrootedtree[2[3][1]]\quad \bigrootedtree[3[1][2]]
\end{equation*}
When $i = 1$, there are six partitions listed below with corresponding marked trees
  \begin{align*}
    &V_0 = \crl{1}, V_1 = \crl{2, 3}\quad \bigrootedtree[1[2,text=red[3]]]\quad \bigrootedtree[1[3,text=red[2]]]\\
    &V_0 = \crl{2}, V_1 = \crl{1, 3}\quad \bigrootedtree[2[1,text=red[3]]]\quad \bigrootedtree[2[3,text=red[1]]]\\
    &V_0 = \crl{3}, V_1 = \crl{1, 2}\quad \bigrootedtree[3[1,text=red[2]]]\quad \bigrootedtree[3[2,text=red[1]]]\\
    &V_0 = \crl{1, 2}, V_1 = \crl{3}\quad
      \bigrootedtree[1[2][3,text=red]]\quad
      \bigrootedtree[1[2[3,text=red]]]\quad
      \bigrootedtree[2[1][3,text=red]]\quad
      \bigrootedtree[2[1[3,text=red]]]\\
    &V_0 = \crl{1, 3}, V_1 = \crl{2}\quad
      \bigrootedtree[1[3][2,text=red]]\quad
      \bigrootedtree[1[3[2,text=red]]]\quad
      \bigrootedtree[3[1][2,text=red]]\quad
      \bigrootedtree[3[1[2,text=red]]]\\
    &V_0 = \crl{2, 3}, V_1 = \crl{1}\quad
      \bigrootedtree[2[3][1,text=red]]\quad
      \bigrootedtree[2[3[1,text=red]]]\quad
      \bigrootedtree[3[2][1,text=red]]\quad
      \bigrootedtree[3[2[1,text=red]]]
  \end{align*}
When $i = 2$, there are three partitions with corresponding marked trees
\begin{align*}
  &V_0 = \crl{1}, V_1 = \crl{2}, V_2 = \crl{3}\quad \bigrootedtree[1[2,text=red][3,text=red]]\\
  &V_0 = \crl{2}, V_1 = \crl{1}, V_2 = \crl{3}\quad \bigrootedtree[2[1,text=red][3,text=red]]\\
  &V_0 = \crl{3}, V_1 = \crl{1}, V_2 = \crl{2}\quad \bigrootedtree[3[1,text=red][2,text=red]]
\end{align*}

Sum the expression \eqref{eq:expr-corr-to-parent-assignm} over the choices of a partition and the trees $\tau_0, \ldots, \tau_i$:
\begin{equation*}
\sum_{i = 0}^{m - 1} \sum_{\substack{(V_0, \crl{V_1\ldots, V_i})\\\text{partition of $\range{1}{m}$}}} \sum_{\tau_0 \in \mathcal{A}[V_0], \ldots, \tau_i \in \mathcal{A}[V_i]} \nabla^i \E{\tau_0}{a}{n - l} \brk[\bigg]{\sum_{l' = 1}^l \E{\tau_1}{l'}{n}, \ldots, \sum_{l' = 1}^l \E{\tau_i}{l'}{n}}.
\end{equation*}
Since all marked trees in $\mathcal{A}\range{1}{m}$ can be constructed this way and are counted exactly once in this sum, this equals
\begin{equation*}
\sum_{\tau \in \mathcal{A}\range{1}{m}} \sum_{\mrk \in \mrkset_{\tau}} \Emark{\tau}{\mrk}{a}{n - l \to n}.
\end{equation*}
But $\sum_{\mrk \in \mrkset_{\tau}} \Emark{\tau}{\mrk}{a}{n - l \to n}$ is $\E{\tau}{l + a}{n}$ by \eqref{eq:sum-over-markings-of-one-tree}. Hence, we have obtained
\begin{equation*}
\sum_{i = 0}^{m - 1} \sum_{\substack{(V_0, \crl{V_1\ldots, V_i})\\\text{partition of $\range{1}{m}$}}} \sum_{\tau_0 \in \mathcal{A}[V_0], \ldots, \tau_i \in \mathcal{A}[V_i]} \nabla^i \E{\tau_0}{a}{n - l} \brk[\bigg]{\sum_{l' = 1}^l \E{\tau_1}{l'}{n}, \ldots, \sum_{l' = 1}^l \E{\tau_i}{l'}{n}} = \sum_{\tau \in \mathcal{A}\range{1}{m}} \E{\tau}{l + a}{n}.
\end{equation*}
Recall that $m! / \sigma(\tau)$ labeled rooted trees correspond to the same unlabeled rooted tree.
Hence, this can be rewritten as
\begin{equation}\label{eq:ljdshh}
  \begin{aligned}
    &\sum_{i = 0}^{m - 1} \sum_{\substack{(V_0, \crl{V_1\ldots, V_i})\\\text{partition of $\range{1}{m}$}}} \sum_{\tau_0 \in \tilde{\mathcal{A}}[\abs{V_0}], \ldots, \tau_i \in \tilde{\mathcal{A}}[\abs{V_i}]} \frac{\abs{V_0}!}{\sigma(\tau_0)}\\
    &\qquad \times \nabla^i \E{\tau_0}{a}{n - l} \brk[\bigg]{\frac{\abs{V_1}!}{\sigma(\tau_1)} \sum_{l' = 1}^l \E{\tau_1}{l'}{n}, \ldots, \frac{\abs{V_i}!}{\sigma(\tau_i)} \sum_{l' = 1}^l \E{\tau_i}{l'}{n}} = \sum_{\tau \in \tilde{\mathcal{A}}[m]} \frac{m!}{\sigma(\tau)} \E{\tau}{l + a}{n}.
    \end{aligned}
\end{equation}

Let us fix $j$, $(k_0, \ldots, k_{m - j - i}) \in \mathcal{K}_{i, m - j - i}$ and count the number of partitions with $\abs{V_0} = j$, where among $\crl{V_1, \ldots, V_i}$ there are $k_0$ sets of size $1$, $\ldots$, $k_{m - j - i}$ sets of size $m - j - i + 1$. First, we choose the elements of $V_0$ in $\binom{m}{j}$ ways, then we order the remaining elements in $(m - j)!$ ways and assign the first $k_0$ elements in this ordering as singletons, the next $k_1$ pairs as two-sets, and so on. Notice that each partition of $V \setminus V_0$ will be counted $k_0! \ldots k_{m - j - i}! 1!^{k_0} \ldots (m - j - i + 1)!^{k_{m - j - i}}$ times (because the order of sets with the same length does not matter, and the order within each set does not matter). So, the required number of partitions is
\begin{align*}
  &\binom{m}{j} \frac{(m - j)!}{k_0! \ldots k_{m - j - i}! 1!^{k_0} \ldots (m - j - i + 1)!^{k_{m - j - i}}}\\
  &\quad =  \frac{m!}{j! k_0! \ldots k_{m - j - i}! 1!^{k_0} \ldots (m - j - i + 1)!^{k_{m - j - i}}}.
\end{align*}
Thus, \eqref{eq:ljdshh} can be rewritten as
\begin{align*}
  &\sum_{i = 0}^{m - 1} \sum_{j = 1}^{m - i} \sum_{(k_0, \ldots, k_{m - j - i}) \in \mathcal{K}_{i, m - j - i}} \frac{m!}{k_0! \ldots k_{m - j - i}!} \sum_{\tau_0 \in \tilde{\mathcal{A}}[j]} \frac{1}{\sigma(\tau_0)} \\
  &\qquad \times \nabla^i \E{\tau_0}{a}{n - l} \brk[\bigg]{
  \underbrace{\sum_{\tau \in \tilde{\mathcal{A}}[1]}\frac{1}{\sigma(\tau)} \sum_{l' = 1}^l \E{\tau}{l'}{n}}_{\text{$k_0$ times}},
  \ldots,
  \underbrace{\sum_{\tau \in \tilde{\mathcal{A}}[m - j - i + 1]}\frac{1}{\sigma(\tau)} \sum_{l' = 1}^l \E{\tau}{l'}{n}}_{\text{$k_{m - j - i}$ times}}
  } = \sum_{\tau \in \tilde{\mathcal{A}}[m]} \frac{m!}{\sigma(\tau)} \E{\tau}{l + a}{n},
\end{align*}
where $j! 1!^{k_0} \ldots (m - j - i + 1)!^{k_{m - j - i}}$ canceled out with $\abs{V_0}! \abs{V_1}! \ldots \abs{V_i}!$. Dividing both sides by $m!$ completes the proof.
\end{proof}

\begin{proof}[Proof of \cref{th:nomem-coef}]
We prove \eqref{eq:nomemcoef-m-n-ind} and \eqref{eq:histcoef-m-n-k-ind} by induction over $m \geq 2$.

For $m = 2$, they are already verified above. Note also that the second statement holds for $m = 1$ as well:
\begin{equation*}
\histcoef{1}{n}{k} = \sum_{l = 1}^k \E{\rootedtree[]}{l}{n}.
\end{equation*}

By definition,
\begin{align*}
  &\nomemcoef{m}{n}
  =
  - \beta \sum_{b = 0}^{n - 1} \beta^b \sum_{\ell = 0}^{m - 1} \sum_{(i_0, \ldots, i_{m - 1 - \ell}) \in \mathcal{K}_{\ell, m - 1 - \ell}} \frac{1}{i_0! \ldots i_{m - 1 - \ell}!} \times \\
  &\qquad \qquad \times \nabla^{\ell + 1} L^{(n - 1 - b)} \prn[\bigg]{\underbrace{\histcoef{1}{n}{b + 1}}_{\text{$i_0$ times}}, \ldots, \underbrace{\histcoef{m - \ell}{n}{b + 1}}_{\text{$i_{m - 1 - \ell}$ times}}}
\end{align*}
Insert the induction hypothesis (recall that \eqref{eq:histcoef-m-n-k-ind} holds for $m = 1$ too):
\begin{align*}
  &\nomemcoef{m}{n}
  =
  - \beta \sum_{b = 0}^{n - 1} \beta^b \sum_{\ell = 0}^{m - 1} \sum_{(i_0, \ldots, i_{m - 1 - \ell}) \in \mathcal{K}_{\ell, m - 1 - \ell}} \frac{1}{i_0! \ldots i_{m - 1 - \ell}!}\\
  &\qquad \times \nabla^{\ell + 1} L^{(n - 1 - b)} \prn[\bigg]{\underbrace{\sum_{l = 1}^{b + 1} \sum_{\tau \in \tilde{\mathcal{A}}[1]} \frac{1}{\sigma(\tau)} \E{\tau}{l}{n}}_{\text{$i_0$ times}}, \ldots, \underbrace{\sum_{l = 1}^{b + 1} \sum_{\tau \in \tilde{\mathcal{A}}[m - \ell]} \frac{1}{\sigma(\tau)} \E{\tau}{l}{n}}_{\text{$i_{m - 1 - \ell}$ times}}}
\end{align*}

The sum over $\tilde{\mathcal{A}}[s]$, repeated $i_{s - 1}$ times, generates a list of $i_{s - 1}$-tuples of trees with $s$ vertices. On this list, each multiset of $i_{s - 1}$ trees with multiplicities $\mu_1^s, \ldots, \mu^s_{\abs{\tilde{\mathcal{A}}[s]}}$ appears $\binom{i_{s - 1}}{\mu_1^s \ldots \mu^s_{\abs{\tilde{\mathcal{A}}[s]}}}$ times.
Therefore, in the large sum above each multiset of $\ell$ trees with the total number of vertices $m - 1$ and matching the vertex-count multiplicities $i_0, \ldots, i_{m - 1 - \ell}$ (equivalently, each tree $\tau$ with $m$ vertices whose root has $\ell$ children matching these vertex-count multiplicities) appears
\begin{equation*}
\prod_{s = 1}^{m - \ell} \frac{i_{s - 1}!}{\mu_1^s(\tau)! \ldots \mu_{\abs{\tilde{\mathcal{A}}[s]}}^s(\tau)!} = \frac{i_0! \ldots i_{m - 1 - \ell}!}{\prod_{s = 1}^{m - \ell} \prn[\big]{\mu_1^s(\tau)! \ldots \mu_{\abs{\tilde{\mathcal{A}}[s]}}^s(\tau)!}}
\end{equation*}
times, and we can rewrite
\begin{align*}
  &\nomemcoef{m}{n}
  =
    - \beta \sum_{b = 0}^{n - 1} \beta^b \sum_{\ell = 0}^{m - 1} \sum_{(i_0, \ldots, i_{m - 1 - \ell}) \in \mathcal{K}_{\ell, m - 1 - \ell}}
\sum_{\substack{\tau = [\tau_1, \ldots, \tau_{\ell}] \in \tilde{\mathcal{A}}[m]:\\ \text{$\tau$ matches $i_0, \ldots, i_{m - 1 - \ell}$}}}
    \frac{1}{\prod_{s = 1}^{m - \ell} \prn[\big]{\mu_1^s(\tau)! \ldots \mu_{\abs{\tilde{\mathcal{A}}[s]}}^s(\tau)!}}\\
  &\qquad \times \nabla^{\ell + 1} L^{(n - 1 - b)} \prn[\bigg]{\frac{1}{\sigma(\tau_1)} \sum_{l = 1}^{b + 1} \E{\tau_1}{l}{n}, \ldots, \frac{1}{\sigma(\tau_{\ell})} \sum_{l = 1}^{b + 1} \E{\tau_{\ell}}{l}{n}}.
\end{align*}
Using \eqref{eq:recursive-formula-for-sym-coef} simplifies this further:
\begin{align*}
  &\nomemcoef{m}{n}
  =
    - \beta \sum_{b = 0}^{n - 1} \beta^b \sum_{\ell = 0}^{m - 1} \sum_{(i_0, \ldots, i_{m - 1 - \ell}) \in \mathcal{K}_{\ell, m - 1 - \ell}}
\sum_{\substack{\tau = [\tau_1, \ldots, \tau_{\ell}] \in \tilde{\mathcal{A}}[m]:\\ \text{$\tau$ matches $i_0, \ldots, i_{m - 1 - \ell}$}}}
    \frac{1}{\sigma(\tau)}\\
  &\qquad \times \nabla^{\ell + 1} L^{(n - 1 - b)} \prn[\bigg]{\sum_{l = 1}^{b + 1} \E{\tau_1}{l}{n}, \ldots, \sum_{l = 1}^{b + 1} \E{\tau_{\ell}}{l}{n}}\\
  &\quad = - \beta \sum_{b = 0}^{n - 1} \beta^b \sum_{\ell = 0}^{m - 1}
    \sum_{\substack{\tau = [\tau_1, \ldots, \tau_{\ell}] \in \tilde{\mathcal{A}}[m]}} \frac{1}{\sigma(\tau)} \nabla^{\ell + 1} L^{(n - 1 - b)} \prn[\bigg]{\sum_{l = 1}^{b + 1} \E{\tau_1}{l}{n}, \ldots, \sum_{l = 1}^{b + 1} \E{\tau_{\ell}}{l}{n}}\\
  &\quad = - \beta \sum_{\tau \in \tilde{\mathcal{A}}[m]} \frac{1}{\sigma(\tau)} \E{\tau}{1}{n}.
\end{align*}

We have proven that under the induction hypothesis for smaller $m$, \eqref{eq:nomemcoef-m-n-ind} holds.

By definition of the history terms in \eqref{eq:jfjl} and the induction hypothesis, to prove \eqref{eq:histcoef-m-n-k-ind} it is enough to show \eqref{eq:aux-fact-in-nonmem-coef}.

By the induction hypothesis and \eqref{eq:nomemcoef-m-n-ind} already proven, the left-hand side of \eqref{eq:aux-fact-in-nonmem-coef} is
\begin{align}
  &\sum_{i = 0}^{m - 1}
    \sum_{(k_0, \ldots, k_{m - i - 1}) \in \mathcal{K}_{i, m - i - 1}} \frac{1}{k_0! \ldots k_{m - i - 1}!}
  \label{eq:yyhqt1}\\
  &\qquad \times \sum_{b = 0}^{n - l} \beta^b \nabla^{i + 1} L^{(n - l - b)}
  \brk[\bigg]{
    \underbrace{
      \sum_{l' = 1}^l \sum_{\tau \in \tilde{\mathcal{A}}[1]} \frac{1}{\sigma(\tau)} \E{\tau}{l'}{n}
    }_{\text{$k_0$ times}},
    \ldots,
    \underbrace{
      \sum_{l' = 1}^l \sum_{\tau \in \tilde{\mathcal{A}}[m - i]} \frac{1}{\sigma(\tau)} \E{\tau}{l'}{n}
    }_{\text{$k_{m - i - 1}$ times}}
    }
  \label{eq:yyhqt2}\\
  &\qquad + \beta
  \sum_{j = 2}^m \sum_{i = 0}^{m - j}
    \sum_{(k_0, \ldots, k_{m - i - j}) \in \mathcal{K}_{i, m - i - j}} \frac{1}{k_0! \ldots k_{m - i - j}!}
  \label{eq:yyhqt3}\\
  &\qquad \times \nabla^i \sum_{\tau_0 \in \tilde{\mathcal{A}}[j]} \frac{1}{\sigma(\tau_0)} \E{\tau_0}{1}{n - l}
  \brk[\bigg]{
    \underbrace{
      \sum_{l' = 1}^l \sum_{\tau \in \tilde{\mathcal{A}}[1]} \frac{1}{\sigma(\tau)} \E{\tau}{l'}{n}
    }_{\text{$k_0$ times}},
    \ldots,
    \underbrace{
      \sum_{l' = 1}^l \sum_{\tau \in \tilde{\mathcal{A}}[m - i - j + 1]} \frac{1}{\sigma(\tau)} \E{\tau}{l'}{n}
    }_{\text{$k_{m - i - j}$ times}}
    }
  \label{eq:yyhqt4}\\
  &\quad = \sum_{i = 0}^{m - 1}
      \sum_{(k_0, \ldots, k_{m - i - 1}) \in \mathcal{K}_{i, m - i - 1}} \frac{1}{k_0! \ldots k_{m - i - 1}!}
 \label{eq:yyhqt5}\\
  &\qquad \times \sum_{b = 0}^{n - l} \beta^b \nabla^{i + 1} L^{(n - l - b)}
  \brk[\bigg]{
    \underbrace{
      \sum_{l' = 1}^l \sum_{\tau \in \tilde{\mathcal{A}}[1]} \frac{1}{\sigma(\tau)} \E{\tau}{l'}{n}
    }_{\text{$k_0$ times}},
    \ldots,
    \underbrace{
      \sum_{l' = 1}^l \sum_{\tau \in \tilde{\mathcal{A}}[m - i]} \frac{1}{\sigma(\tau)} \E{\tau}{l'}{n}
    }_{\text{$k_{m - i - 1}$ times}}
    }
  \label{eq:yyhqt6}\\
  &\qquad - \beta
  \sum_{i = 0}^{m - 1}
    \sum_{(k_0, \ldots, k_{m - i - 1}) \in \mathcal{K}_{i, m - i - 1}} \frac{1}{k_0! \ldots k_{m - i - 1}!}
  \label{eq:yyhqt7}\\
  &\qquad \times \nabla^i \E{\rootedtree[]}{1}{n - l}
  \brk[\bigg]{
    \underbrace{
      \sum_{l' = 1}^l \sum_{\tau \in \tilde{\mathcal{A}}[1]} \frac{1}{\sigma(\tau)} \E{\tau}{l'}{n}
    }_{\text{$k_0$ times}},
    \ldots,
    \underbrace{
      \sum_{l' = 1}^l \sum_{\tau \in \tilde{\mathcal{A}}[m - i]} \frac{1}{\sigma(\tau)} \E{\tau}{l'}{n}
    }_{\text{$k_{m - i - 1}$ times}}
    }
  \label{eq:yyhqt8}\\
  &\qquad + \beta
  \sum_{j = 1}^m \sum_{i = 0}^{m - j}
    \sum_{(k_0, \ldots, k_{m - i - j}) \in \mathcal{K}_{i, m - i - j}} \frac{1}{k_0! \ldots k_{m - i - j}!}
  \label{eq:yyhqt9}\\
  &\qquad \times \sum_{\tau_0 \in \tilde{\mathcal{A}}[j]} \frac{1}{\sigma(\tau_0)} \nabla^i \E{\tau_0}{1}{n - l}
  \brk[\bigg]{
    \underbrace{
      \sum_{l' = 1}^l \sum_{\tau \in \tilde{\mathcal{A}}[1]} \frac{1}{\sigma(\tau)} \E{\tau}{l'}{n}
    }_{\text{$k_0$ times}},
    \ldots,
    \underbrace{
      \sum_{l' = 1}^l \sum_{\tau \in \tilde{\mathcal{A}}[m - i - j + 1]} \frac{1}{\sigma(\tau)} \E{\tau}{l'}{n}
    }_{\text{$k_{m - i - j}$ times}}
    }.\label{eq:yyhqt10}
\end{align}
Inserting $\E{\rootedtree[]}{1}{n - l} = \sum_{b = 0}^{n - l - 1} \beta^b \nabla L^{(n - l - 1 - b)}$, we see that the sum in \cref{eq:yyhqt5,eq:yyhqt6,eq:yyhqt7,eq:yyhqt8} evaluates to
\begin{align*}
  &\sum_{i = 0}^{m - 1}
    \sum_{(k_0, \ldots, k_{m - i - 1}) \in \mathcal{K}_{i, m - i - 1}} \frac{1}{k_0! \ldots k_{m - i - 1}!}
  \\
  &\qquad \times \nabla^{i + 1} L^{(n - l)}
    \brk[\bigg]{
    \underbrace{
    \sum_{l' = 1}^l \sum_{\tau \in \tilde{\mathcal{A}}[1]} \frac{1}{\sigma(\tau)} \E{\tau}{l'}{n}
    }_{\text{$k_0$ times}},
    \ldots,
    \underbrace{
    \sum_{l' = 1}^l \sum_{\tau \in \tilde{\mathcal{A}}[m - i]} \frac{1}{\sigma(\tau)} \E{\tau}{l'}{n}
    }_{\text{$k_{m - i - 1}$ times}}
    }\\
  &\quad = \sum_{i = 0}^{m - 1}
    \sum_{\substack{\tau = [\tau_1, \ldots, \tau_i] \in \tilde{\mathcal{A}}[m]}} \frac{1}{\sigma(\tau)} \nabla^{i + 1} L^{(n - l)}
    \brk[\bigg]{
    \sum_{l' = 1}^l \E{\tau_1}{l'}{n},
    \ldots,
    \sum_{l' = 1}^l \E{\tau_i}{l'}{n}
  },
\end{align*}
By \cref{lem:kjjhhag}, the sum in \cref{eq:yyhqt9,eq:yyhqt10} is
\begin{equation*}
\beta \sum_{\tau \in \tilde{\mathcal{A}}[m]} \frac{1}{\sigma(\tau)} \E{\tau}{l + 1}{n}.
\end{equation*}

We have obtained that the left-hand side of \eqref{eq:aux-fact-in-nonmem-coef} is equal to
\begin{align*}
\sum_{i = 0}^{m - 1}
    \sum_{\substack{\tau \in \tilde{\mathcal{A}}[m]\\ \tau = [\tau_1, \ldots, \tau_i]}} \frac{1}{\sigma(\tau)} \nabla^{i + 1} L^{(n - l)}
    \brk[\bigg]{
    \sum_{l' = 1}^l \E{\tau_1}{l'}{n},
    \ldots,
    \sum_{l' = 1}^l \E{\tau_i}{l'}{n}
  }
  +
  \beta \sum_{\tau \in \tilde{\mathcal{A}}[m]} \frac{1}{\sigma(\tau)} \E{\tau}{l + 1}{n}.
\end{align*}
Combining this with \eqref{eq:nJdufq}, we see that the left-hand side of \eqref{eq:aux-fact-in-nonmem-coef} is equal to the right-hand side of \eqref{eq:aux-fact-in-nonmem-coef}. This completes the induction step and the whole proof of
\cref{th:nomem-coef}.
\end{proof}

\section{Proof of Corollaries}

\subsection{Proof of \cref{cor:modified-eq}}\label{sec:proof-of-modified-eq}

\begin{lemma}\label{lem:bea-terms-are-bounded}
  For all $\btheta \in \mathcal{D}$ and all $m \in \range{1}{\ord}$, we have
\begin{equation*}
\nabla^{r} \beacoef{m}{n}(\btheta) = O(1),\quad r \in \range{0}{2 \ord - m}.
\end{equation*}
\end{lemma}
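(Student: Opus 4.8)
The plan is to prove the bound by induction on $m$, reducing everything to \cref{lem:liuyf} (the analogous bound for $\nomemcoef{m}{n}$) and the inductive hypothesis applied to the lower-order BEA coefficients appearing in the defining recursion \eqref{eq:bea-coef-def}. For the base case $m = 1$ the inner sum $\sum_{i=2}^{1}$ in \eqref{eq:bea-coef-def} is empty, so $\beacoef{1}{n} = \nomemcoef{1}{n}$ and the claim is exactly \cref{lem:liuyf} for $m = 1$ (valid for $r \in \range{0}{2\ord - 1}$).

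For the inductive step I would fix $m \ge 2$ and assume the bound for all $m' < m$. The term $\nomemcoef{m}{n}$ in \eqref{eq:bea-coef-def} is $O(1)$ together with its derivatives up to order $2\ord - m$ by \cref{lem:liuyf}, so it suffices to bound, for each $i \in \range{2}{m}$ and each composition $k_1 + \dots + k_i = m$ with all $k_\nu \ge 1$, the quantity $\nabla^{r}\bigl(D^{(n)}_{k_1}\cdots D^{(n)}_{k_{i-1}}\beacoef{k_i}{n}\bigr)$. Expanding each Lie derivative $D^{(n)}_k = \sum_l \beacoefsc{k}{n}{l}\,\partial_l$ and applying the Leibniz rule, this becomes a finite sum (with a number of terms bounded in terms of $\ord$ only) whose summands are products of $i$ factors of the form $\nabla^{r_1}\beacoef{k_1}{n},\dots,\nabla^{r_i}\beacoef{k_i}{n}$, where the derivative counts obey $r_\nu \le r + (\nu - 1)$ for $\nu \le i-1$ and $r_i \le r + (i-1)$: the coefficient $\beacoefsc{k_\nu}{n}{l}$ of the $\nu$-th operator can only be hit by the partials $\partial_{l_1},\dots,\partial_{l_{\nu-1}}$ coming from the operators to its left and by the outer $\nabla^{r}$, while $\beacoef{k_i}{n}$ can be hit by all of $\partial_{l_1},\dots,\partial_{l_{i-1}}$ and $\nabla^{r}$.

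The decisive bookkeeping point — and the reason \cref{lem:liuyf} and this lemma are stated with the range $r \in \range{0}{2\ord - m}$ rather than just $r=0$ — is that $k_\nu + r_\nu \le 2\ord$ for every $\nu$ as soon as $r \le 2\ord - m$. Indeed $k_\nu \le m - (i-1)$ since each of the remaining $k$'s is at least $1$, and $r_\nu \le r + (i-1)$, so $k_\nu + r_\nu \le m + r \le 2\ord$; moreover $k_\nu < m$ because $i \ge 2$. Hence each factor $\nabla^{r_\nu}\beacoef{k_\nu}{n}$ has $r_\nu \in \range{0}{2\ord - k_\nu}$ with $k_\nu < m$, so the inductive hypothesis gives $\nabla^{r_\nu}\beacoef{k_\nu}{n} = O(1)$; a product of finitely many $O(1)$ factors is $O(1)$, and there are finitely many such products, so $\nabla^{r}\beacoef{m}{n} = O(1)$, completing the induction. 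The only genuine obstacle is making the Leibniz expansion of the iterated Lie derivative precise enough to justify the inequalities $k_\nu \le m - (i-1)$ and $r_\nu \le r + (i-1)$; once those are in hand, the result follows immediately from \cref{lem:liuyf} and the inductive hypothesis.
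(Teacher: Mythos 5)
Your proof is correct and follows essentially the same idea as the paper's: both arguments reduce to \cref{lem:liuyf} via the bookkeeping invariant that the derivative count plus the coefficient index stays bounded by $m + r \leq 2\ord$. The paper states this more tersely (asserting that when $\beacoef{m}{n}$ is fully unrolled in terms of $\nomemcoef{j}{n}$, every occurring $\nabla^l \nomemcoef{j}{n}$ has $l + j \leq m$), while you make the same invariant explicit via induction on $m$ and the Leibniz expansion of the iterated Lie derivatives; the content is the same.
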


\begin{proof}
If we express $\beacoef{m}{n}(\btheta)$ through $\crl[\big]{\nomemcoef{j}{n}}$ and their derivatives, the derivatives will be of the form $\nabla^l \nomemcoef{j}{n}$ where $l + j \leq m$. Therefore, the derivatives in $\nabla^{r} \beacoef{m}{n}(\btheta)$ for $r \in \range{0}{2 \ord - m}$ will be of the form $\nabla^l \nomemcoef{j}{n}$ where $l + j \leq 2 \ord$. So the result follows immediately from \cref{lem:liuyf}.
\end{proof}

\begin{lemma}\label{lem:hb-no-memory-iteration}
For all $r \in \range{1}{\ord}$
  \begin{equation}\label{eq:hb-no-memory-iteration}
    \btheta(t_{n + 1}) = \btheta(t_n) + \sum_{j = 1}^{r} h^j \nomemcoef{j}{n}(\btheta(t_n)) + O(h^{r + 1}).
  \end{equation}
\end{lemma}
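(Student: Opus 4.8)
The plan is to prove \eqref{eq:hb-no-memory-iteration} by a one-step Taylor expansion of the continuous flow on $[t_n,t_{n+1}]$, matched term-by-term against the defining recursion \eqref{eq:bea-coef-def} of the BEA coefficients. First I would fix $n$ and observe that on $[t_n,t_{n+1}]$ the curve $\btheta(\cdot)$ solves the \emph{autonomous} ODE $\dot{\btheta}(t)=\boldsymbol{F}^{(n)}_h(\btheta(t))$ with $\boldsymbol{F}^{(n)}_h:=\sum_{j=1}^{\ord}h^{j-1}\beacoef{j}{n}$ (identifying $f^{(n)}_j\equiv\beacoef{j}{n}$ in \eqref{eq:ode-def}). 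By \cref{lem:bea-terms-are-bounded}, each $\beacoef{j}{n}$ and its derivatives up to order $2\ord-j$ are bounded uniformly in $n$ and $h$; in particular $\boldsymbol{F}^{(n)}_h$ is a $C^{\ord}$ field with uniformly bounded derivatives, so $\btheta(\cdot)\in C^{\ord+1}$ on $[t_n,t_{n+1}]$ with $\sup_t\norm{\btheta^{(k)}(t)}=O(1)$ for $k\in\range{1}{\ord+1}$. Taylor's theorem then yields, for $r\in\range{1}{\ord}$,
\[
  \btheta(t_{n+1})=\btheta(t_n)+\sum_{k=1}^{r}\frac{h^k}{k!}\,\btheta^{(k)}(t_n)+O(h^{r+1}),
\]
so it suffices to show $\sum_{k=1}^{r}\frac{h^k}{k!}\btheta^{(k)}(t_n)=\sum_{m=1}^{r}h^m\nomemcoef{m}{n}(\btheta(t_n))+O(h^{r+1})$.

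Next I would pass to Lie derivatives. Setting $D:=\sum_l\bigl(\sum_{j=1}^{\ord}h^{j-1}\beacoefsc{j}{n}{l}\bigr)\partial_l=\sum_{j=1}^{\ord}h^{j-1}D^{(n)}_j$ with $D^{(n)}_j$ as in \eqref{eq:bea-coef-def}, and using $\frac{d}{dt}\phi(\btheta(t))=(D\phi)(\btheta(t))$ for smooth $\phi$, an easy induction gives $\btheta^{(k)}(t)=\bigl(D^{k-1}\boldsymbol{F}^{(n)}_h\bigr)(\btheta(t))$. I would then expand $D^{k-1}$ and $\boldsymbol{F}^{(n)}_h$ into $h$-homogeneous parts: a summand $\frac{h^k}{k!}\,h^{(j_1-1)+\ldots+(j_k-1)}\,D^{(n)}_{j_1}\cdots D^{(n)}_{j_{k-1}}\beacoef{j_k}{n}$ (all $j_i\ge1$) has total $h$-degree $j_1+\ldots+j_k$. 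For $m\le r\le\ord$ the built-in cap $j_i\le\ord$ is never active and only $k\le m$ contributes, so the coefficient of $h^m$ equals
\[
  \sum_{k=1}^{m}\frac{1}{k!}\sum_{\substack{j_1,\ldots,j_k\ge1\\ j_1+\ldots+j_k=m}}D^{(n)}_{j_1}\cdots D^{(n)}_{j_{k-1}}\beacoef{j_k}{n}
  =\beacoef{m}{n}+\sum_{i=2}^{m}\frac{1}{i!}\sum_{\substack{k_1,\ldots,k_i\ge1\\ k_1+\ldots+k_i=m}}D^{(n)}_{k_1}\cdots D^{(n)}_{k_{i-1}}\beacoef{k_i}{n},
\]
which is precisely $\nomemcoef{m}{n}$ after rearranging \eqref{eq:bea-coef-def}.

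Finally I would reassemble: summing $\frac{h^k}{k!}\btheta^{(k)}(t_n)$ over $k\in\range{1}{r}$ produces $\sum_{m=1}^{r}h^m\nomemcoef{m}{n}(\btheta(t_n))$ plus finitely many terms of $h$-degree $m>r$, each of which is a finite sum of strings $D^{(n)}_{j_1}\cdots D^{(n)}_{j_{k-1}}\beacoef{j_k}{n}(\btheta(t_n))$ with $1\le j_i\le\ord$ and $k\le r\le\ord$; since such a string differentiates a coefficient of some order $j\ge1$ at most $k-1\le\ord-1\le2\ord-j$ times, it is $O(1)$ by \cref{lem:bea-terms-are-bounded}, so the entire degree-$(>r)$ remainder is $O(h^{r+1})$, and \eqref{eq:hb-no-memory-iteration} follows. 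I expect the main obstacle to be purely the combinatorial bookkeeping of the middle step — keeping the $h$-powers and the left-to-right ordering of the $D^{(n)}_j$ aligned so that the coefficient of $h^m$ matches \eqref{eq:bea-coef-def} \emph{verbatim} — while the uniformity of every $O(\cdot)$ reduces to the derivative-count estimate just given together with \cref{lem:liuyf} and \cref{lem:bea-terms-are-bounded}.
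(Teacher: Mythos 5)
Your proposal is correct and follows essentially the same strategy as the paper's proof: Taylor-expand the flow on $[t_n,t_{n+1}]$, express the time derivatives via the Lie-derivative strings $D^{(n)}_{j_1}\cdots D^{(n)}_{j_{k-1}}\beacoef{j_k}{n}$, match the coefficient of $h^m$ against the recursion \eqref{eq:bea-coef-def} to recover $\nomemcoef{m}{n}$, and control the remainder with the uniform bounds of \cref{lem:bea-terms-are-bounded}. The only cosmetic difference is that you Taylor-expand to order $r$ directly and bound the excess $h$-degree-$>r$ terms, while the paper expands to order $\ord$ throughout and only at the very end invokes $\nomemcoef{j}{n}=O(1)$ (from \cref{lem:liuyf}) to truncate down to $r$.
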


\begin{proof}
Differentiating, we get the exact equality on $t \in [t_n, t_{n + 1}]$
\begin{equation*}
  \frac{\mathrm{d}^i \btheta}{\mathrm{d} t^i}(t) = \sum_{k_1, \ldots, k_i = 1}^{\ord} h^{k_1 + \ldots + k_i - i}
  (D_{k_1}^{(n)} \ldots D_{k_{i - 1}}^{(n)} \beacoef{k_i}{n}) (\btheta(t)),\quad1 \leq i \leq \ord + 1.
\end{equation*}

Taylor's theorem gives
\begin{align*}
  &\btheta(t_{n + 1}) = \btheta(t_n) + \sum_{i = 1}^{\ord} \frac{h^i}{i!} \frac{\mathrm{d}^i \btheta}{\mathrm{d} t^i}(t_n^+) + \frac{h^{\ord + 1}}{(\ord + 1)!} \frac{\mathrm{d}^{\ord + 1} \btheta}{\mathrm{d} t^{\ord + 1}}(\tilde{t})\\
  &\quad = \btheta(t_n) + \sum_{i = 1}^{\ord} \frac{1}{i!} \sum_{k_1, \ldots, k_i = 1}^{\ord} h^{k_1 + \ldots + k_i}
    (D_{k_1}^{(n)} \ldots D_{k_{i - 1}}^{(n)} \beacoef{k_i}{n}) (\btheta(t_n))\\
  &\qquad+ \frac{1}{(\ord + 1)!} \sum_{k_1, \ldots, k_{\ord + 1} = 1}^{\ord} h^{k_1 + \ldots + k_{\ord + 1}} (D_{k_1}^{(n)} \ldots D_{k_{\ord}}^{(n)} \beacoef{k_{\ord + 1}}{n}) (\btheta(\tilde{t}))\\
  &\quad = \btheta(t_n) + \sum_{i = 1}^{\ord} \frac{1}{i!} \sum_{\substack{1 \leq k_1, \ldots, k_i \leq \ord\\i \leq k_1 + \ldots + k_i \leq \ord}} h^{k_1 + \ldots + k_i} (D_{k_1}^{(n)} \ldots D_{k_{i - 1}}^{(n)} \beacoef{k_i}{n}) (\btheta(t_n))\\
  &\qquad + \sum_{i = 1}^{\ord} \frac{1}{i!} \sum_{\substack{1 \leq k_1, \ldots, k_i \leq \ord\\\ord + 1 \leq k_1 + \ldots + k_i \leq i \ord}} h^{k_1 + \ldots + k_i} (D_{k_1}^{(n)} \ldots D_{k_{i - 1}}^{(n)} \beacoef{k_i}{n}) (\btheta(t_n))\\
  &\qquad + \frac{1}{(\ord + 1)!} \sum_{k_1, \ldots, k_{\ord + 1} = 1}^{\ord} h^{k_1 + \ldots + k_{\ord + 1}} (D_{k_1}^{(n)} \ldots D_{k_{\ord}}^{(n)} \beacoef{k_{\ord + 1}}{n}) (\btheta(\tilde{t}))\\
  &\quad \labrel{1}[=] \btheta(t_n) + \sum_{i = 1}^{\ord} \frac{1}{i!} \sum_{j = i}^{\ord} h^j \sum_{\substack{k_1, \ldots, k_i \geq 1\\k_1 + \ldots + k_i = j}} (D^{(n)}_{k_1} \ldots D^{(n)}_{k_{i - 1}} \beacoef{k_i}{n})(\btheta(t_n)) + O(h^{\ord + 1})\\
  &\quad = \btheta(t_n) + \sum_{j = 1}^{\ord} h^j \sum_{i = 1}^j \frac{1}{i!} \sum_{\substack{k_1, \ldots, k_i \geq 1\\k_1 + \ldots + k_i = j}} (D^{(n)}_{k_1} \ldots D^{(n)}_{k_{i - 1}} \beacoef{k_i}{n})(\btheta(t_n)) + O(h^{\ord + 1})\\
  &\quad \labrel{2}[=] \btheta(t_n) + \sum_{j = 1}^{\ord} h^j \nomemcoef{j}{n}(\btheta(t_n)) + O(h^{\ord + 1}),
\end{align*}
where $\tilde{t}$ is between $t_n$ and $t_{n + 1}$;
in \labrel{1} we used \cref{lem:bea-terms-are-bounded} and in \labrel{2} we used~\eqref{eq:bea-coef-def}.

It is left to use the boundedness of $\nomemcoef{j}{n}(\btheta)$ in the region of interest (\cref{lem:liuyf})\internalComment{and that $h$ is bounded away from $1$}.
\end{proof}

\begin{lemma}\label{lem:yxdKct}
The following global bound holds:
\begin{equation*}
\sup_{n \in \range{0}{\lfloor T / h \rfloor}} \norm[\big]{\tilde{\btheta}^{(n)} - \btheta(t_n)} \leq \oC{mod-proof-gl} h^{\ord},
\end{equation*}
where $\nC{mod-proof-gl}$ is some constant.
\end{lemma}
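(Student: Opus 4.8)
The argument is the standard conversion of a local error bound into a global one (a discrete Gr\"onwall estimate), identical in structure to the proof of \cref{lem:global-error-bound}, except that the local bound is now supplied by \cref{lem:hb-no-memory-iteration} instead of \cref{th:approx}. First I would introduce the error
\[
\be^{(n)} := \tilde{\btheta}^{(n)} - \btheta(t_n), \qquad n \in \range{0}{\lfloor T / h \rfloor},
\]
so that $\be^{(0)} = \tilde{\btheta}^{(0)} - \btheta(0) = \btheta^{(0)} - \btheta^{(0)} = \boldsymbol{0}$. Subtracting the expansion \eqref{eq:hb-no-memory-iteration} (applied with $r = \ord$) from the memoryless update \eqref{eq:memoryless-approx-contrib} gives the one-step recursion
\[
\be^{(n + 1)} = \be^{(n)} + \sum_{j = 1}^{\ord} h^j \prn[\big]{\nomemcoef{j}{n}(\tilde{\btheta}^{(n)}) - \nomemcoef{j}{n}(\btheta(t_n))} + O(h^{\ord + 1}),
\]
where all quantities are evaluated at points of $\mathcal{D}$: the discrete iterates lie in $\mathcal{D}$ by the hypotheses of \cref{th:approx}, and $\btheta(\cdot)$ takes values in $\mathcal{D}$ by assumption in \cref{cor:modified-eq}. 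Note that, in contrast to \cref{lem:global-error-bound}, this is a genuine one-step recursion (both $\tilde{\btheta}^{(n)}$ and $\btheta(t_n)$ evolve without memory), so no running-maximum trick is needed.

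Next I would use that each $\nomemcoef{j}{n}$ is Lipschitz on $\mathcal{D}$ with a constant uniform in $n$: by \cref{lem:liuyf} one has $\nabla \nomemcoef{j}{n}(\btheta) = O(1)$, hence $\norm[\big]{\nomemcoef{j}{n}(\tilde{\btheta}^{(n)}) - \nomemcoef{j}{n}(\btheta(t_n))} \le C_1 \norm{\be^{(n)}}$ for some $C_1$ not depending on $n$ or $h$. Since $\sum_{j = 1}^{\ord} h^j \le h / (1 - h) \le 2 h$ for $h \in (0, 1/2)$, the recursion above yields
\[
\norm{\be^{(n + 1)}} \le (1 + C_2 h) \norm{\be^{(n)}} + C_3 h^{\ord + 1},
\]
with $C_2, C_3$ depending only on the fixed data ($\ord$, $\beta$, the uniform-in-$n$ derivative bounds on the $L^{(n)}$, and $T$). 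Iterating from $\be^{(0)} = \boldsymbol{0}$, exactly as in the proof of \cref{lem:global-error-bound}, gives for every $n \le \lfloor T / h \rfloor$
\[
\norm{\be^{(n)}} \le \frac{(1 + C_2 h)^n - 1}{C_2} C_3 h^{\ord} \le \frac{e^{C_2 n h} - 1}{C_2} C_3 h^{\ord} \le \frac{e^{C_2 T} - 1}{C_2} C_3 h^{\ord},
\]
using $1 + x \le e^x$ and $n h \le T$; taking $\nC{mod-proof-gl} := C_3 (e^{C_2 T} - 1) / C_2$ finishes the proof.

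There is no genuine obstacle here — the local error is already delivered by \cref{lem:hb-no-memory-iteration} — so the only points to be careful about are (i) that the local expansion \eqref{eq:hb-no-memory-iteration} and the Lipschitz constants of the $\nomemcoef{j}{n}$ are uniform in $n$, which is exactly what the uniform $O(1)$ bounds of \cref{lem:liuyf} together with the uniform-in-$n$ boundedness of the loss derivatives provide, and (ii) that $\tilde{\btheta}^{(n)}$ and $\btheta(t_n)$ remain in $\mathcal{D}$ so those bounds apply, which is part of the standing hypotheses of \cref{th:approx,cor:modified-eq} (in particular the piecewise ODE solution is assumed to exist in $\mathcal{D}$). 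With these in hand the remaining work is routine bookkeeping of constants.
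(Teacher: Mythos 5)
Your proof is correct and matches the paper's intended argument: the paper's proof of this lemma consists precisely of the three observations you make explicit — zero initial error, the local bound from \cref{lem:hb-no-memory-iteration}, and uniform Lipschitzness of the $\nomemcoef{j}{n}$ via \cref{lem:liuyf} — followed by the same discrete Gr\"onwall iteration used in \cref{lem:global-error-bound}. Your side-remark that the running-maximum trick can be dropped here (since both recursions are now genuinely one-step) is a small but accurate simplification relative to a literal transcription of that earlier proof; otherwise the approach is identical.
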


\begin{proof}
Note that $\btheta(0) = \tilde{\btheta}^{(0)}$, the local error bound is already proven in \cref{lem:hb-no-memory-iteration}, and each $\nomemcoef{j}{n}(\cdot)$ is Lipschitz because their derivatives are bounded by \cref{lem:liuyf}.
So this result follows by the same standard argument as for \cref{lem:global-error-bound} with $\tilde{\btheta}^{(n)}$ replaced by $\btheta(t_n)$ and $\btheta^{(n)}$ replaced by $\tilde{\btheta}^{(n)}$.
\end{proof}

To conclude the proof of \cref{cor:modified-eq}, it is left to combine \cref{lem:yxdKct} with \eqref{eq:global-error-bound}.

\subsection{Proof of \cref{{cor:principal-coef}}}\label{sec:proof-of-principal-coef}

Define $\crl{v_{m, l}^{(n)}}$ by the recursion
  \begin{equation}\label{eq:vm-l-rec}
    \begin{aligned}
      &v_{m, l}^{(n)} = \sum_{b = 0}^{n - l} \beta^b \sum_{l_1 = 1}^{l + b} v_{m - 1, l_1}^{(n)},\quad m \in \mathbb{Z}_{\geq 2},\\
      &v_{1, l}^{(n)} = \sum_{b = 0}^{n - l} \beta^b.
    \end{aligned}
  \end{equation}
It is immediate from \cref{th:nomem-coef} that $v_m^{(n)} \equiv v_{m, 1}^{(n)}$.

Next, define $\mathcal{c}_m \in \tilde{\mathcal{A}}[m]$ the chain with $m$ vertices (defined in \cref{sec:Technical Background and Notation}). Applying \eqref{eq:nJdufq} with $l = 1$ gives
\begin{equation*}
  \nabla^2 L \E{\mathcal{c}_{m - 1}}{1}{n} + \beta \E{\mathcal{c}_m}{2}{n} = \E{\mathcal{c}_m}{1}{n}.
\end{equation*}
By \cref{lem:sum-over-markings-of-one-tree} with $l = a = 1$, we have also
\begin{equation*}
\E{\mathcal{c}_m}{2}{n} = \sum_{\mrk \in \mrkset_{\mathcal{c}_m}} \Emark{\mathcal{c}_m}{\mrk}{a}{n - 1 \to n}.
\end{equation*}
Combining and using the definition of $\Emark{\mathcal{c}_m}{\mrk}{a}{n - 1 \to n}$ gives \eqref{eq:vm-1-rec}.

It is clear from \eqref{eq:vm-1-rec} that for each fixed $m$ the sequence $v_m^{(n)}$ can be bounded by a constant not depending on $n$ (but depending on $m$), and from \eqref{eq:vm-l-rec} that each $v_{m, l}^{(n)}$ (and in particular $v_m^{(n)}$) is non-decreasing in $n$. Hence, there is a limit $v_m^{(\infty)} := \lim_{n \to \infty} v_m^{(n)}$. The initial condition is $v_1^{(\infty)} = \lim_{n \to \infty} \sum_{b = 0}^{n - 1} \beta^b = (1 - \beta)^{-1}$. \Cref{eq:lpnIyx} is now immediate from \eqref{eq:vm-1-rec} by taking the limit $n \to \infty$.

From \eqref{eq:lpnIyx}, we get that the generating function $g_{\beta}(x)$ needs to satisfy the quadratic equation $g_{\beta}(x) - (1 - \beta)^{-1} = (1 - \beta)^{-1} x g_{\beta}(x) + \beta (1 - \beta)^{-1} x g_{\beta}(x)^2$ and have $g_{\beta}(0) = (1 - \beta)^{-1}$, which gives~\eqref{eq:mDeSYH}.

The Narayana polynomials defined above satisfy the recurrence (e.\,g. \citet{coker2003enumerating})
\begin{equation*}
N_m(\beta) = (1 + \beta) N_{m - 1}(\beta) + \beta \sum_{k = 1}^{m - 2} N_k(\beta) N_{m - k - 1}(\beta)
\end{equation*}
for all $m \geq 3$, which means that $\tilde{N}_m(\beta) := N_m(\beta) / (1 - \beta)^{2 m + 1}$ satisfy
\begin{equation*}
\tilde{N}_m(\beta) = \frac{1 + \beta}{(1 - \beta)^2} \tilde{N}_{m - 1}(\beta) + \frac{\beta}{1 - \beta} \sum_{k = 1}^{m - 2} \tilde{N}_k(\beta) \tilde{N}_{m - k - 1}(\beta).
\end{equation*}
But \eqref{eq:lpnIyx} can be rewritten as
\begin{equation*}
v_{m' + 1}^{(\infty)} = \frac{1 + \beta}{(1 - \beta)^2} v_{(m' - 1) + 1}^{(\infty)} + \frac{\beta}{1 - \beta} \sum_{j' = 1}^{m' - 2} v_{j' + 1}^{(\infty)} v_{(m' - j' - 1) + 1}^{(\infty)},\quad m' \geq 3,
\end{equation*}
so $\tilde{N}_m(\beta)$ and $v_{m + 1}^{(\infty)}$ satisfy the same recurrence. It is easy to see that their elements with $m \in \crl{1, 2}$ are equal, concluding the result.

\subsection{Proof of \cref{{cor:eulerian}}}\label{sec:proof-of-eulerian}

From \cref{th:nomem-coef},
\begin{align*}
    q_{m, l}^{(n)} = \sum_{b = 0}^{n - l} \beta^b \prn[\bigg]{\sum_{l_1 = 1}^{l + b} \frac{1 - \beta^{n - l_1 + 1}}{1 - \beta}}^{m - 1},
\end{align*}
so their limit is
\begin{equation*}
    q_{m, l}^{(\infty)} = \frac{1}{(1 - \beta)^{m - 1}} \sum_{b = 0}^{\infty} \beta^b (l + b)^{m - 1}.
\end{equation*}
In particular, $q_{m + 1, 1}^{(\infty)} = \frac{1}{(1 - \beta)^m} \sum_{b = 0}^{\infty} \beta^b (1 + b)^m = \frac{1}{(1 - \beta)^m} \frac{1}{(1 - \beta)^{m + 1}} A_m(\beta)$.

\subsection{Proof of \cref{cor:principal-flow}}\label{sec:proof-of-principal-flow}

\begin{lemma}\label{lem:principal-terms-in-bea}
Neglecting non-principal terms, the following formula holds:
\begin{equation*}
\beacoef{m}{n}(\btheta) = \sum_{l = 1}^m \frac{(-1)^{l + 1}}{l} \sum_{\substack{k_1, \ldots, k_l \geq 1\\k_1 + \ldots + k_l = m}} \nabla \nomemcoef{k_1}{n} \ldots \nabla \nomemcoef{k_{l - 1}}{n} \nomemcoef{k_l}{n}(\btheta) + \np.
\end{equation*}
\end{lemma}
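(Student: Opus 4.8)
The plan is to recognise \eqref{eq:bea-coef-def} as the statement that, at the level of formal power series in $h$, the near-identity map $\Phi_h := \mathrm{id} + \sum_{j \geq 1} h^j \nomemcoef{j}{n}$ is the time-$h$ flow of the (formal) vector field $f_h := \sum_{j \geq 1} h^{j - 1} \beacoef{j}{n}$ — this is precisely what \cref{lem:hb-no-memory-iteration} establishes, order by order in $h$. Consequently the associated pullback (composition) operators satisfy $\Phi_h^{*} = \exp(h \mathcal{L}_{f_h})$, where $\Phi_h^{*} g := g \circ \Phi_h$ and $\mathcal{L}_{f_h} = \sum_{j \geq 1} h^{j - 1} D^{(n)}_j$ is the associated Lie derivative. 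Inverting this identity order by order in $h$ gives $h \mathcal{L}_{f_h} = \log \Phi_h^{*} = \sum_{l \geq 1} \frac{(-1)^{l + 1}}{l} (\Phi_h^{*} - \mathrm{id})^l$, and evaluating both sides on the coordinate function $\mathrm{id}$ (so that the left-hand side becomes $h f_h$) yields
\[
h f_h = \sum_{l \geq 1} \frac{(-1)^{l + 1}}{l} (\Phi_h^{*} - \mathrm{id})^l \, \mathrm{id}.
\]

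Next I would reduce the right-hand side modulo non-principal terms. Writing $\Psi := \Phi_h - \mathrm{id} = \sum_j h^j \nomemcoef{j}{n}$, a Taylor expansion gives, for any vector field $W$, $(\Phi_h^{*} - \mathrm{id}) W = W \circ \Phi_h - W = (\nabla W)\Psi + \sum_{k \geq 2} \tfrac{1}{k!} \nabla^k W[\Psi, \ldots, \Psi]$. By \cref{cor:principal-coef} the principal part of each $\nomemcoef{j}{n}$ is a numerical multiple of $\crl{\nabla^2 L}^{j - 1} \nabla L$; since applying $\nabla$ to such a monomial either raises the power of $\nabla^2 L$ (still principal) or creates a factor $\nabla^3 L$ (non-principal), the second and all higher derivatives of any vector field whose principal part has this shape are purely non-principal. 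Starting from $(\Phi_h^{*} - \mathrm{id})\,\mathrm{id} = \Psi$ and inducting on $l$ — verifying at each step that the quantity produced again has principal part a numerical combination of monomials $\crl{\nabla^2 L}^{a} \nabla L$, so that the next Taylor tail and the Leibniz terms in which $\nabla$ hits a Jacobian factor are non-principal — one obtains $(\Phi_h^{*} - \mathrm{id})^l \, \mathrm{id} = (\nabla \Psi)^{l - 1} \Psi + \np$, where $\nabla \Psi = \sum_j h^j \nabla \nomemcoef{j}{n}$.

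Substituting this into the displayed identity and extracting the coefficient of $h^m$ (using $[h^m]\, h f_h = \beacoef{m}{n}$ and that every index in a composition $k_1 + \cdots + k_l = m$ is $\geq 1$, so $l$ ranges only over $1, \ldots, m$) gives exactly
\[
\beacoef{m}{n}(\btheta) = \sum_{l = 1}^m \frac{(-1)^{l + 1}}{l} \sum_{\substack{k_1, \ldots, k_l \geq 1\\ k_1 + \cdots + k_l = m}} \nabla \nomemcoef{k_1}{n} \cdots \nabla \nomemcoef{k_{l - 1}}{n} \nomemcoef{k_l}{n}(\btheta) + \np,
\]
which is the claim. Equivalently, and this is probably the version I would actually write out in full, one can bypass operator logarithms and prove the formula directly by strong induction on $m$ straight from the recursion \eqref{eq:bea-coef-def}, repeatedly using $D^{(n)}_k V = (\nabla V)\beacoef{k}{n}$ together with the same observation (differentiating a Hessian factor, or a Jacobian factor, produces $\nabla^{\geq 3} L$ and hence is non-principal); the operator computation then just serves as the conceptual skeleton, and matches the $m = 2$ and $m = 3$ checks against \cref{rem:not-gf}.

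The main obstacle is making the "modulo $\np$" bookkeeping rigorous: one must check that throughout the recursion every vector or matrix field that gets differentiated has principal part of the controlled shape — a numerical scalar times a monomial in $\nabla^2 L$, applied in the vector case to $\nabla L$ — since it is exactly this shape that forces the Taylor tail of $g \circ \Phi_h - g$ beyond first order, and the Leibniz terms where $\nabla$ lands on a Jacobian, to fall into the non-principal set. What is needed is that the "non-principal" terms form an ideal stable under composition and contraction, and mapped into itself by $\nabla$; the transparent way to see this is that all the quantities involved are B-series (elementary-differential) combinations, for which "non-principal" simply means "involves a tree vertex of out-degree at least two", but the induction on $m$ avoids invoking that machinery.
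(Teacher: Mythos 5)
Your proposal is correct, and the ``written out in full'' version you describe at the end --- strong induction on $m$ from the recursion \eqref{eq:bea-coef-def}, using $D_k^{(n)} V = (\nabla V)\beacoef{k}{n} + \np$ and the observation that $\nabla^{\geq 2}$ of any quantity whose principal part is a monomial in $\nabla^2 L$ (applied, in the vector case, to $\nabla L$) is non-principal --- is precisely what the paper does. The operator-logarithm framing you lead with is a genuinely different and arguably cleaner packaging of the same combinatorics: once you recognize \eqref{eq:bea-coef-def} as the coefficient extraction of $\Phi_h^{*} = \exp(h \mathcal{L}_{f_h})$ at the level of formal power series, the coefficients $(-1)^{l+1}/l$ fall out of $\log(1 + z) = \sum_{l \geq 1} \tfrac{(-1)^{l+1}}{l} z^l$ for free, whereas the paper instead posits an ansatz with length-dependent coefficients $C_l$ and verifies the generating-function identity
\begin{equation*}
\sum_{i = 1}^{l} \frac{(-1)^i}{i!} \sum_{\substack{l_1, \ldots, l_i \geq 1 \\ l_1 + \cdots + l_i = l}} \frac{1}{l_1 \cdots l_i} = 0 \qquad (l \geq 2),
\end{equation*}
i.e.\ $\exp(\ln(1-x)) - 1 = -x$. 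The $\np$-bookkeeping is identical in both routes, and you have correctly flagged the one point that requires care: that the Taylor tail of $W \circ \Phi_h - W$ beyond first order, and the Leibniz terms where $\nabla$ lands on a Jacobian factor $\nabla \Psi$, produce $\nabla^{\geq 3} L$ and hence stay in the non-principal set, which is why $(\Phi_h^{*} - \mathrm{id})^l\,\mathrm{id} = (\nabla \Psi)^{l-1}\Psi + \np$ survives the induction on $l$.
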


\begin{proof}
In other words, we need to show
\begin{align*}
  \beacoef{m}{n}(\btheta) &= \sum_{\mathcal{s}: w(\mathcal{s}) = m} C_{l(\mathcal{s})} \mathcal{s} + \np,
\end{align*}
with
\begin{equation}\label{eq:def-of-tree-coefs}
  C_l := \begin{cases}
    1 & \text{if $l = 1$,}\\
    (-1)^{l + 1} \sum_{i = 2}^l \frac{(-1)^i}{i!} \sum_{\substack{l_1, \ldots, l_i \geq 1\\l_1 + \ldots + l_i = l}} \frac{1}{l_1 \ldots l_i} & \text{if $l \geq 2$,}
  \end{cases}
\end{equation}
where by $\mathcal{s}$ we denote expressions of the form $\nabla \nomemcoef{k_1}{n} \ldots \nabla \nomemcoef{k_{l - 1}}{n} \nomemcoef{k_l}{n}(\btheta)$, by $w(\mathcal{s})$ their weight, which is defined as $k_1 + \ldots + k_l$, and by $l(\mathcal{s})$ their length, which is the number of nodes $\nomemcoef{k_i}{n}$ ($l$ in this case). For two such expressions $\mathcal{s}_1$ and $\mathcal{s}_2$, we will write $\mathcal{s}_1 \mathcal{s}_2$ for their concatenation, for example if $\mathcal{s}_1 = \nomemcoef{3}{n}(\btheta)$ and $\mathcal{s}_2 = \nabla \nomemcoef{1}{n}(\btheta) \nomemcoef{2}{n}(\btheta)$, then
$\mathcal{s}_1 \mathcal{s}_2 = \nabla \nomemcoef{3}{n}(\btheta) \nabla \nomemcoef{1}{n}(\btheta) \nomemcoef{2}{n}(\btheta)$.

We will argue by induction over $m$. For $m = 1$ the statement is obvious. Ignoring non-principal terms, we can write
\begin{equation*}
\beacoef{m}{n}(\btheta) = \nomemcoef{m}{n}(\btheta) - \sum_{i = 2}^m \frac{1}{i!} \sum_{\substack{k_1, \ldots, k_i \geq 1\\k_1 + \ldots + k_i = m}} \nabla \beacoef{k_1}{n} \ldots \nabla \beacoef{k_{i - 1}}{n} \beacoef{k_i}{n}(\btheta) + \np.
\end{equation*}
Now using the induction assumption, we rewrite it as
\begin{align*}
  \beacoef{m}{n}(\btheta) &= \nomemcoef{m}{n}(\btheta) - \sum_{i = 2}^m \frac{1}{i!} \sum_{\substack{k_1, \ldots, k_i \geq 1\\k_1 + \ldots + k_i = m}} \sum_{\substack{\mathcal{s}_1, \ldots, \mathcal{s}_i:\\w(\mathcal{s}_1) = k_1, \ldots, w(\mathcal{s}_i) = k_i}} C_{l(\mathcal{s}_1)} \ldots C_{l(\mathcal{s}_i)} \mathcal{s}_1 \ldots \mathcal{s}_i + \np\\
                     &\labrel{1}[=] \nomemcoef{m}{n}(\btheta) - \sum_{i = 2}^m \frac{1}{i!} \sum_{\substack{k_1, \ldots, k_i \geq 1\\k_1 + \ldots + k_i = m}} \sum_{\substack{\mathcal{s}_1, \ldots, \mathcal{s}_i:\\w(\mathcal{s}_1) = k_1, \ldots, w(\mathcal{s}_i) = k_i}} \frac{(-1)^{l(\mathcal{s}_1) + \ldots + l(\mathcal{s}_i) + i}}{l(\mathcal{s}_1) \ldots l(\mathcal{s}_i)} \mathcal{s}_1 \ldots \mathcal{s}_i + \np\\
                     &= \nomemcoef{m}{n}(\btheta) -  \sum_{\mathcal{s}: w(\mathcal{s}) = m} \sum_{i = 2}^m \frac{(-1)^i}{i!} \sum_{\substack{\mathcal{s}_1, \ldots, \mathcal{s}_i\text{ non-empty}:\\\mathcal{s} = \mathcal{s}_1\ldots \mathcal{s}_i}} \frac{(-1)^{l(\mathcal{s}_1) + \ldots + l(\mathcal{s}_i)}}{l(\mathcal{s}_1) \ldots l(\mathcal{s}_i)} \mathcal{s} + \np\\
  &\labrel{2}[=] \sum_{\mathcal{s}: w(\mathcal{s}) = m} C_{l(\mathcal{s})} \mathcal{s} + \np.
\end{align*}
where in \labrel{1} we used that $C_l = (-1)^{l + 1} / l$ which is proven below, in \labrel{2} we used the definition of $C_l$ given in \eqref{eq:def-of-tree-coefs}.

It is left to prove $C_l = (-1)^{l + 1} / l$, or, equivalently, for $l \geq 2$ we have
\begin{equation}\label{eq:jhfhnbc}
\sum_{i = 1}^l \frac{(-1)^i}{i!} \sum_{\substack{l_1, \ldots, l_i \geq 1\\l_1 + \ldots + l_i = l}} \frac{1}{l_1 \ldots l_i} \overset{?}{=} 0.
\end{equation}
To do this, note that
\begin{equation*}
- \ln(1 - x) = \sum_{n = 1}^{\infty} \frac{x^n}{n},
\end{equation*}
which means that the coefficient before $x^l$ in the power series of $\brk[\big]{- \ln(1 - x)}^i$ is
\begin{equation*}
\sum_{\substack{l_1, \ldots, l_i \geq 1\\l_1 + \ldots + l_i = l}} \frac{1}{l_1 \ldots l_i},
\end{equation*}
and therefore the left-hand side of~\eqref{eq:jhfhnbc} is the coefficient before $x^l$ in the power series
\begin{equation*}
\sum_{i = 1}^{\infty} \frac{(-1)^i}{i!} \brk[\big]{- \ln(1 - x)}^i = \exp \crl{\ln(1 - x)} - 1 = -x,
\end{equation*}
which is zero for $l \geq 2$.
\end{proof}

\Cref{eq:z-coef-rec} is immediate from \cref{lem:principal-terms-in-bea} and \cref{cor:principal-coef}. Taking the limit as $n \to \infty$,
\begin{equation*}
    \zcoef{m}{\infty} := \lim_{n \to \infty} \zcoef{m}{n} = \sum_{l = 1}^m \frac{(-1)^{l + 1}}{l} \sum_{\substack{k_1, \ldots, k_l \geq 1\\k_1 + \ldots + k_l = m}} p^{(\infty)}_{k_1} \ldots p^{(\infty)}_{k_l},
\end{equation*}
where
\begin{equation*}
    p^{(\infty)}_k := \begin{cases}
        - (1 - \beta)^{-1}, &\text{if } k = 1,\\
        - \beta v_k^{(\infty)}, &\text{if } k \geq 2.
        \end{cases}
\end{equation*}

The generating function $\bar{g}_{\beta}(x) := \sum_{k = 0}^{\infty} \zcoef{k + 1}{\infty} x^k$ satisfies
\begin{equation*}
    \brk{\bar{g}_{\beta}(x)}_k = \sum_{l = 0}^k \frac{(-1)^l}{l + 1} \brk[\big]{\crl{-1 - \beta g_{\beta}(x)}^{l + 1}}_{k - l},
\end{equation*}
where $\brk{g(x)}_k$ denotes the coefficient before $x^k$ in the power series of $g(x)$. Multiplying both sides by $x^k$, summing over $k$ and changing the order of summation gives
\begin{align*}
  \sum_{k = 0}^{\infty} \brk{\bar{g}_{\beta}(x)}_k x^k &= - \sum_{k = 0}^{\infty} \sum_{l = 0}^k \frac{1}{l + 1} \brk[\big]{\crl{1 + \beta g_{\beta}(x)}^{l + 1}}_{k - l} x^k\\
                                           &= - \sum_{l = 0}^{\infty} \frac{1}{l + 1} x^l \sum_{k = l}^{\infty} \brk[\big]{\crl{1 + \beta g_{\beta}(x)}^{l + 1}}_{k - l} x^{k - l}\\
                                           &= - \sum_{l = 0}^{\infty} \frac{1}{l + 1} x^l \crl{1 + \beta g_{\beta}(x)}^{l + 1} = - \frac{1}{x} \sum_{l = 1}^{\infty} \frac{1}{l} \crl{x + \beta x g_{\beta}(x)}^l\\
  &= \frac{1}{x} \ln \prn[\bigg]{\frac{1 + \beta - x + \sqrt{(1 - \beta - x)^2 - 4 \beta x}}{2}},
\end{align*}
as desired.

\section{Averaging over Dataset Permutations}

Let $\batchcntDef := n + 1$ be the batch count.

Recall that
\begin{equation*}
L^{(k)}(\btheta) = \frac{1}{\batchsize} \sum_{r = k \batchsize + 1}^{k \batchsize + \batchsize} \ell_{\pi(r)}(\btheta),\quad k \in \range{0}{n},
\end{equation*}
where $\crl{\ell_s}_{s = 1}^{(n + 1) \batchsize}$ are per-sample losses and $\pi$ is a random permutation of $\range{1}{(n + 1) \batchsize}$, distributed uniformly over all $((n + 1) \batchsize)!$ such permutations.

Note that
\begin{align*}
  \Eletpi \nabla^2 (\ell_{\pi(1)} - L) \nabla (\ell_{\pi(2)} - L) &= - \frac{\nabla \trace \mSigma}{2 (\batchcnt \batchsize - 1)},\\
  \Eletpi \nabla^2 (\ell_{\pi(1)} - L) \nabla (\ell_{\pi(1)} - L) &= \frac{\nabla \trace \mSigma}{2},
\end{align*}
where $\mSigma$ is the empirical covariance matrix of per-sample gradients \eqref{eq:empirical-covariance}.

\begin{lemma}\label{lem:lpaUyF}
The following expressions hold:
\begin{equation*}
  \Eletpi \nabla^2 L^{(0)} \nabla L^{(1)} = \nabla^2 L \nabla L - \frac{\nabla \trace \mSigma}{2 (\batchcnt \batchsize - 1)},\quad
  \Eletpi \nabla^2 L^{(0)} \nabla L^{(0)} = \nabla^2 L \nabla L + \frac{\batchcnt - 1}{2 (\batchcnt \batchsize - 1)} \nabla \trace \mSigma.
\end{equation*}
\end{lemma}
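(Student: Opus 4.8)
The plan is to expand each mini-batch object into a per-sample average and then exploit the exchangeability of the uniform permutation $\pi$ to collapse the double sums that appear to two ``canonical'' expectations, which are precisely the two identities displayed immediately above the lemma.

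First I would write $\nabla^2 L^{(0)} = \batchsize^{-1}\sum_{r=1}^{\batchsize}\nabla^2\ell_{\pi(r)}$, $\nabla L^{(0)} = \batchsize^{-1}\sum_{r=1}^{\batchsize}\nabla\ell_{\pi(r)}$ and $\nabla L^{(1)} = \batchsize^{-1}\sum_{r=\batchsize+1}^{2\batchsize}\nabla\ell_{\pi(r)}$, noting that the summation ranges defining $L^{(0)}$ and $L^{(1)}$ are disjoint. For any two distinct positions the pair $(\pi(r),\pi(r'))$ is uniform over ordered pairs of distinct sample indices, while $(\pi(r),\pi(r))$ is uniform over $\range{1}{\batchcnt\batchsize}$; hence every off-diagonal summand has the law of $(\pi(1),\pi(2))$ and every diagonal summand the law of $(\pi(1),\pi(1))$. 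Counting the $\batchsize^2$ terms of $\nabla^2 L^{(0)}\nabla L^{(1)}$ (all off-diagonal, since the ranges are disjoint) and the $\batchsize$ diagonal plus $\batchsize^2-\batchsize$ off-diagonal terms of $\nabla^2 L^{(0)}\nabla L^{(0)}$ gives
\begin{equation*}
\Eletpi\nabla^2 L^{(0)}\nabla L^{(1)} = \Eletpi\nabla^2\ell_{\pi(1)}\nabla\ell_{\pi(2)},\qquad
\Eletpi\nabla^2 L^{(0)}\nabla L^{(0)} = \tfrac{1}{\batchsize}\Eletpi\nabla^2\ell_{\pi(1)}\nabla\ell_{\pi(1)} + \tfrac{\batchsize-1}{\batchsize}\Eletpi\nabla^2\ell_{\pi(1)}\nabla\ell_{\pi(2)}.
\end{equation*}

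Next I would center around $L$: writing $\ell_{\pi(j)} = (\ell_{\pi(j)}-L) + L$ and using $\sum_{p=1}^{\batchcnt\batchsize}(\ell_p - L)\equiv 0$, so that $\Eletpi\nabla^2(\ell_{\pi(1)}-L) = \Eletpi\nabla(\ell_{\pi(1)}-L) = 0$ and the same with $\pi(2)$, the cross terms vanish and
\begin{equation*}
\Eletpi\nabla^2\ell_{\pi(1)}\nabla\ell_{\pi(2)} = \nabla^2 L\,\nabla L + \Eletpi\nabla^2(\ell_{\pi(1)}-L)\nabla(\ell_{\pi(2)}-L),\quad
\Eletpi\nabla^2\ell_{\pi(1)}\nabla\ell_{\pi(1)} = \nabla^2 L\,\nabla L + \Eletpi\nabla^2(\ell_{\pi(1)}-L)\nabla(\ell_{\pi(1)}-L).
\end{equation*}
Substituting the two displayed facts $\Eletpi\nabla^2(\ell_{\pi(1)}-L)\nabla(\ell_{\pi(2)}-L) = -\nabla\trace\mSigma/(2(\batchcnt\batchsize-1))$ and $\Eletpi\nabla^2(\ell_{\pi(1)}-L)\nabla(\ell_{\pi(1)}-L) = \nabla\trace\mSigma/2$ yields the first claimed identity at once, and the second after collecting the coefficient of $\nabla\trace\mSigma$, namely $\tfrac{1}{2\batchsize} - \tfrac{\batchsize-1}{2\batchsize(\batchcnt\batchsize-1)} = \tfrac{(\batchcnt\batchsize-1)-(\batchsize-1)}{2\batchsize(\batchcnt\batchsize-1)} = \tfrac{\batchsize(\batchcnt-1)}{2\batchsize(\batchcnt\batchsize-1)} = \tfrac{\batchcnt-1}{2(\batchcnt\batchsize-1)}$, where I used $\batchcnt\batchsize - \batchsize = \batchsize(\batchcnt-1)$.

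The computation is essentially bookkeeping; the only step that needs a moment's care is the exchangeability reduction, i.e.\ observing that because the blocks $\range{1}{\batchsize}$ and $\range{\batchsize+1}{2\batchsize}$ are disjoint, \emph{every} cross-pair occurring in $\nabla^2 L^{(0)}\nabla L^{(1)}$ is a pair of distinct samples and hence has the law of $(\pi(1),\pi(2))$. For completeness I would also record the one-line derivations of the two displayed facts: $\Eletpi\nabla^2(\ell_{\pi(1)}-L)\nabla(\ell_{\pi(1)}-L) = \frac{1}{\batchcnt\batchsize}\sum_p\nabla^2(\ell_p-L)\nabla(\ell_p-L) = \frac{1}{2\batchcnt\batchsize}\nabla\sum_p\norm{\nabla(\ell_p-L)}^2 = \tfrac12\nabla\trace\mSigma$, and $\Eletpi\nabla^2(\ell_{\pi(1)}-L)\nabla(\ell_{\pi(2)}-L) = \frac{1}{\batchcnt\batchsize(\batchcnt\batchsize-1)}\sum_{p\ne q}\nabla^2(\ell_p-L)\nabla(\ell_q-L) = -\frac{1}{\batchcnt\batchsize(\batchcnt\batchsize-1)}\sum_p\nabla^2(\ell_p-L)\nabla(\ell_p-L) = -\nabla\trace\mSigma/(2(\batchcnt\batchsize-1))$, the middle equality holding because $\sum_q\nabla(\ell_q-L) = 0$.
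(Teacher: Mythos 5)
Your proof is correct and follows essentially the same route as the paper's: expand into per-sample averages, use exchangeability to reduce to the two canonical pair expectations, center around $L$, and substitute the two displayed identities. The only addition is that you also verify those two displayed identities explicitly (which the paper states without proof), and your verification is correct.
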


\begin{proof}
Indeed,
\begin{align*}
  &\Eletpi \nabla^2 L^{(0)} \nabla L^{(1)} = \Eletpi \frac{1}{\batchsize} \sum_{r = 1}^{\batchsize} \nabla^2 \ell_{\pi(r)} \frac{1}{\batchsize} \sum_{s = \batchsize + 1}^{2 \batchsize} \nabla \ell_{\pi(s)}\\
  &\quad = \Eletpi \nabla^2 \ell_{\pi(1)} \nabla \ell_{\pi(2)} = \nabla^2 L \nabla L + \Eletpi \nabla^2 (\ell_{\pi(1)} - L) \nabla (\ell_{\pi(2)} - L)\\
  &\quad = \nabla^2 L \nabla L - \frac{\nabla \trace \mSigma}{2 (\batchcnt \batchsize - 1)}
\end{align*}
and
\begin{align*}
  &\Eletpi \nabla^2 L^{(0)} \nabla L^{(0)} = \Eletpi \frac{1}{\batchsize} \sum_{r = 1}^{\batchsize} \nabla^2 \ell_{\pi(r)} \frac{1}{\batchsize} \sum_{s = 1}^{\batchsize} \nabla \ell_{\pi(s)}\\
  &\quad = \frac{1}{\batchsize} \Eletpi \nabla^2 \ell_{\pi(1)} \nabla \ell_{\pi(1)} + \frac{\batchsize - 1}{\batchsize} \Eletpi \nabla^2 \ell_{\pi(1)} \nabla \ell_{\pi(2)}\\
  &\quad = \frac{1}{\batchsize} \prn[\bigg]{\nabla^2 L \nabla L + \frac{\nabla \trace \mSigma}{2}} + \frac{\batchsize - 1}{\batchsize} \prn[\bigg]{\nabla^2 L \nabla L - \frac{\nabla \trace \mSigma}{2 (\batchcnt \batchsize - 1)}}\\
  &\quad = \nabla^2 L \nabla L + \frac{\batchcnt - 1}{2 (\batchcnt \batchsize - 1)} \nabla \trace \mSigma
\end{align*}
completing the proof.
\end{proof}

\begin{lemma}\label{lem:SwTWil}
We have
\begin{align*}
  \Eletpi \nomemcoef{2}{n}(\btheta) &= \prn[\big]{\twoeq(\beta) + \twoneq(\beta)} \nabla^2 L \nabla L + \prn[\bigg]{\twoeq(\beta) \frac{\batchcnt - 1}{2 (\batchcnt \batchsize - 1)} - \twoneq(\beta) \frac{1}{2 (\batchcnt \batchsize - 1)}} \nabla \trace \mSigma\\
  &= \prn[\bigg]{- \frac{\beta}{(1 - \beta)^3} + o_n(1)} \nabla^2 L \nabla L + \prn[\bigg]{- \frac{\beta}{2 (1 - \beta)^2 (1 + \beta)} + o_n(1)} \frac{\nabla \trace \mSigma}{\batchsize},
\end{align*}
where $o_n(1)$ are terms that go to zero as $n \to \infty$ (for fixed $\beta$) regardless of $\batchsize \in \range{1}{n + 1}$, and
\begin{align}
  \twoeq(\beta) &:= - \frac{\beta \brk{1 - \beta^n (1 + \beta) + \beta^{2 n + 1}}}{(1 - \beta)^2 (1 + \beta)},\label{eq:twoeq-def}\\
  \twoneq(\beta) &:= \frac{- 2 \beta^2 + 2 n (1 - \beta^2) \beta^{n + 1} + 2 \beta^{2 n + 2}}{(1 - \beta)^3 (1 + \beta)}.\label{eq:twoneq-def}
\end{align}
\end{lemma}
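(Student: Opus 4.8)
The plan is to push $\Eletpi$ through the finite sum defining $\nomemcoef{2}{n}$, split that sum according to whether the Hessian factor and the gradient factor are drawn from the \emph{same} mini-batch, and then reduce everything to elementary geometric series. Starting from the explicit form of $\nomemcoef{2}{n}$ (see \cref{sec:analyzing-coef}),
\[
\nomemcoef{2}{n}(\btheta) = - \beta \sum_{b = 0}^{n - 1} \beta^{b} \sum_{l' = 1}^{b + 1} \sum_{b' = 0}^{n - l'} \beta^{b'}\, \nabla^2 L^{(n - 1 - b)}(\btheta)\, \nabla L^{(n - l' - b')}(\btheta),
\]
I first note that, since $\pi$ is uniform over permutations of $\range{1}{\batchcnt\batchsize}$, its law is invariant under permuting the $\batchcnt$ blocks of size $\batchsize$; hence $\Eletpi\,\nabla^2 L^{(a)}(\btheta)\,\nabla L^{(c)}(\btheta)$ depends only on whether $a = c$, and by \cref{lem:lpaUyF} it equals $\nabla^2 L\nabla L + \frac{\batchcnt - 1}{2(\batchcnt\batchsize - 1)}\nabla\trace\mSigma$ when $a = c$ and $\nabla^2 L\nabla L - \frac{1}{2(\batchcnt\batchsize - 1)}\nabla\trace\mSigma$ when $a \neq c$.

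Next I split the triple sum. The two loss indices coincide, $n - 1 - b = n - l' - b'$, precisely when $b' = b + 1 - l'$; since $1 \le l' \le b + 1$ and $b \le n - 1$, this is a single admissible value $b' \in \range{0}{b}$ for each pair $(b, l')$, and for every other $b'$ the two indices differ. Taking $\Eletpi$ term by term, collecting the two tensor structures, and setting $S_{\mathrm{d}} := \sum_{b = 0}^{n - 1}\beta^b\sum_{j = 0}^{b}\beta^j$, $S_{\mathrm{tot}} := \sum_{b = 0}^{n - 1}\beta^b\sum_{l' = 1}^{b + 1}\sum_{b' = 0}^{n - l'}\beta^{b'} = v_2^{(n)}$, $\twoeq(\beta) := -\beta S_{\mathrm{d}}$, $\twoneq(\beta) := -\beta(S_{\mathrm{tot}} - S_{\mathrm{d}})$, one gets
\[
\Eletpi\,\nomemcoef{2}{n}(\btheta) = \prn[\big]{\twoeq(\beta) + \twoneq(\beta)}\nabla^2 L\nabla L + \prn[\bigg]{\twoeq(\beta)\frac{\batchcnt - 1}{2(\batchcnt\batchsize - 1)} - \twoneq(\beta)\frac{1}{2(\batchcnt\batchsize - 1)}}\nabla\trace\mSigma,
\]
which is the first displayed equality of the lemma. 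It remains to evaluate the two sums: $S_{\mathrm{d}} = \sum_{b = 0}^{n - 1}\beta^b\frac{1 - \beta^{b + 1}}{1 - \beta}$ is a geometric sum in $b$ and yields \eqref{eq:twoeq-def}, while $S_{\mathrm{tot}}$ follows from the closed forms for $\sum\beta^b$ and $\sum (b + 1)\beta^b$, after which $\twoneq(\beta) = -\beta(S_{\mathrm{tot}} - S_{\mathrm{d}})$ simplifies to \eqref{eq:twoneq-def}. This is a routine manipulation of geometric series, so I would only record the final closed forms.

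Finally I pass to $n \to \infty$ with $\beta \in (0, 1)$ fixed. From \eqref{eq:twoeq-def} and \eqref{eq:twoneq-def}, $\twoeq(\beta) \to -\frac{\beta}{(1 - \beta)^2(1 + \beta)}$ and $\twoneq(\beta) \to -\frac{2\beta^2}{(1 - \beta)^3(1 + \beta)}$ with errors of order $n\beta^n$ (not involving $\batchsize$), and their sum is $-\frac{\beta}{(1 - \beta)^3}$, giving the $\nabla^2 L\nabla L$ term. For the noise term I multiply its coefficient by $\batchsize$ and use $\frac{\batchsize(\batchcnt - 1)}{2(\batchcnt\batchsize - 1)} = \frac12 - \frac{\batchsize - 1}{2(\batchcnt\batchsize - 1)}$ together with the elementary bounds $0 \le \frac{\batchsize - 1}{\batchcnt\batchsize - 1} \le \frac{1}{\batchcnt - 1}$ and $0 \le \frac{\batchsize}{\batchcnt\batchsize - 1} \le \frac{1}{\batchcnt - 1}$, valid for every $\batchsize \in \range{1}{\batchcnt}$; hence $\batchsize$ times the noise coefficient equals $\frac{\twoeq(\beta)}{2} + o_n(1)$ uniformly in $\batchsize$, and $\frac{\twoeq(\beta)}{2} \to -\frac{\beta}{2(1 - \beta)^2(1 + \beta)}$, which is the claimed form. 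The only genuinely delicate point is this \emph{uniformity in $\batchsize$} of the $o_n(1)$ remainder in the noise term, which rests on the inequality $\batchcnt\batchsize - 1 \ge \batchsize(\batchcnt - 1)$ (i.e.\ $\batchsize \ge 1$); everything else is bookkeeping.
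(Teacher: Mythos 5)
Your proposal is correct and follows essentially the same route as the paper: split the triple sum in $\nomemcoef{2}{n}$ according to whether the Hessian's batch index $n-1-b$ equals the gradient's batch index $n-l'-b'$ (the unique diagonal choice $b'=b+1-l'$), apply \cref{lem:lpaUyF} to each case, and evaluate the resulting geometric sums to obtain $\twoeq$ and $\twoneq$. You make the uniformity-in-$\batchsize$ of the $o_n(1)$ remainder explicit, which the paper only implicitly asserts, but the decomposition and the computations are the same.
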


\begin{proof}
Using \cref{lem:lpaUyF}, we can write
\begin{align*}
  &\Eletpi \nomemcoef{2}{n}(\btheta) = - \beta \Eletpi \sum_{b = 0}^{n - 1} \beta^{b} \sum_{l' = 1}^{b + 1} \sum_{b' = 0}^{n - l'} \beta^{b'} \nabla^2 L^{(n - 1 - b)} \nabla L^{(n - l' - b')}\\
  &\quad = \twoeq(\beta) \Eletpi \nabla^2 L^{(0)} \nabla L^{(0)} + \twoneq(\beta) \Eletpi \nabla^2 L^{(0)} \nabla L^{(1)}\\
  &\quad = \twoeq(\beta) \prn[\bigg]{\nabla^2 L \nabla L + \frac{\batchcnt - 1}{2 (\batchcnt \batchsize - 1)} \nabla \trace \mSigma} + \twoneq(\beta) \prn[\bigg]{\nabla^2 L \nabla L - \frac{\nabla \trace \mSigma}{2 (\batchcnt \batchsize - 1)}}\\
  &\quad = \prn[\big]{\twoeq(\beta) + \twoneq(\beta)} \nabla^2 L \nabla L + \prn[\bigg]{\twoeq(\beta) \frac{\batchcnt - 1}{2 (\batchcnt \batchsize - 1)} - \twoneq(\beta) \frac{1}{2 (\batchcnt \batchsize - 1)}} \nabla \trace \mSigma\\
  &\quad = \prn[\bigg]{- \frac{\beta}{(1 - \beta)^3} + o_n(1)} \nabla^2 L \nabla L + \prn[\bigg]{- \frac{\beta}{2 (1 - \beta)^2 (1 + \beta)} + o_n(1)} \frac{\nabla \trace \mSigma}{\batchsize},
\end{align*}
where $\twoeq(\beta)$ and $\twoneq(\beta)$ can be calculated as
\begin{align*}
  \twoeq(\beta) &:= - \beta \sum_{b = 0}^{n - 1} \beta^b \sum_{l' = 1}^{b + 1} \beta^{b + 1 - l'} = - \frac{\beta \brk{1 - \beta^n (1 + \beta) + \beta^{2 n + 1}}}{(1 - \beta)^2 (1 + \beta)} = \underset{n \to \infty}{\longrightarrow} - \frac{\beta}{(1 - \beta)^2 (1 + \beta)},\\
  \twoneq(\beta) &:= - \beta \sum_{b = 0}^{n - 1} \beta^b \sum_{l' = 1}^{b + 1} \sum_{b' = 0}^{n - l'} \beta^{b'} - \twoeq(\beta)\\
  &= \frac{- 2 \beta^2 + 2 n (1 - \beta^2) \beta^{n + 1} + 2 \beta^{2 n + 2}}{(1 - \beta)^3 (1 + \beta)} \underset{n \to \infty}{\longrightarrow} - \frac{2 \beta^2}{(1 - \beta)^3 (1 + \beta)}.\qedhere
\end{align*}
\end{proof}

\printbibliography

\end{document}